\documentclass{article} 
\pdfoutput=1

\usepackage{tabularx}
\usepackage{graphicx}
\usepackage{fullpage}
\usepackage{hyperref}
\usepackage{natbib}
\usepackage{delarray}
\usepackage{times}
\usepackage{color}
\usepackage{amssymb}
\usepackage{amsmath}
\usepackage{amsthm}
\usepackage{mathrsfs}
\usepackage{tikz}
\usepackage{xspace}
\usepackage{enumerate}
\usepackage{algorithm}
\usepackage{algorithmic}
\usepackage{multirow}
\usepackage{nameref}
\usepackage{booktabs}
\usepackage{mdframed}
\usepackage{subcaption}
\usepackage{fmtcount}
\usepackage{footnote}
\usepackage{bbm}
\usepackage{thmtools,thm-restate}

\declaretheorem{theorem}

\newcommand{\hp}{1-\exp(-d^{\Omega(1)})}
\newcommand{\disequal}{\stackrel{d}{=}}
\newcommand{\proj}[1]{\mbox{Proj}_{#1}}
\newcommand{\sigmin}[1]{\sigma_{\min}(#1)}
\newcommand{\sigmax}[1]{\sigma_{\max}(#1)}
\newcommand{\bsigmin}[1]{\sigma_{\min}\big(#1\big)}
\newcommand{\bbsigmin}[1]{\sigma_{\min}\Big(#1\Big)}
\newcommand{\eat}[1]{}
\newcommand{\topic}[1]{\vspace{5pt}\noindent{{\bf #1:}}}

\newcommand{\R}{{\mathbb R}}

\newcommand{\E}{{\mathbb{E}}}

\newcommand{\inner}[2]{\langle #1, #2 \rangle}

\newcommand{\ns}[1]{\| #1 \|^2}

\newcommand{\n}[1]{\| #1 \|}
\newcommand{\bn}[1]{\big\| #1 \big\|}
\newcommand{\bbn}[1]{\Big\| #1 \Big\|}

\newcommand{\td}[1]{\widetilde{#1}}

\newcommand{\indic}{\mathbbm{1}\{w_i^\top x w_j^\top x\leq 0\}}
\newcommand{\indictd}{\mathbbm{1}\{\td{w}_i^\top x \td{w}_j^\top x\leq 0\}}

\newcommand{\vvv}[1]{ \mbox{vec}(#1) }

\newcommand{\matsym}[1]{ \mbox{mat}^*(#1) }
\newcommand{\mat}[1]{ \mbox{mat}(#1) }
\newcommand{\vvvsym}[1]{ \mbox{vec}^*(#1) }


%
        {\hspace*{\fill}$\Box$\par\vspace{4mm}}
\newenvironment{proofof}[1]{\smallskip\noindent{\bf Proof of #1.}}%
        {\hspace*{\fill}$\Box$\par}

\newtheorem{lemma}{Lemma}
\newtheorem{corollary}{Corollary}

\newtheorem{definition}{Definition}

\newtheorem{claim}{Claim}

\hypersetup{
    colorlinks=true,       
    linkcolor=black,    
    citecolor=black,        
    filecolor=magenta,     
    urlcolor=blue
}

\usepackage{hyperref}
\usepackage{url}

\title{Learning Two-layer Neural Networks with Symmetric Inputs}


\author{Rong Ge\thanks{Duke University. Email: rongge@cs.duke.edu} \\ \and Rohith Kuditipudi\thanks{Duke University. Email: rohith.kuditipudi@duke.edu} \\ \and  Zhize Li\thanks{Tsinghua University. Email: zz-li14@mails.tsinghua.edu.cn}\\ \and  Xiang Wang\thanks{Duke University. Email: xwang@cs.duke.edu}}

%

\begin{document}

\maketitle

\begin{abstract}
We give a new algorithm for learning a two-layer neural network under a general class of input distributions. Assuming there is a ground-truth two-layer network
$$
y = A \sigma(Wx) + \xi,
$$
where $A,W$ are weight matrices, $\xi$ represents noise, and the number of neurons in the hidden layer is no larger than the input or output, 
our algorithm is guaranteed to recover the parameters $A,W$ of the ground-truth network. The only requirement on the input $x$ is that it is symmetric, which still allows highly complicated and structured input. 

Our algorithm is based on the method-of-moments framework and extends several results in tensor decompositions. We use spectral algorithms to avoid the complicated non-convex optimization in learning neural networks. Experiments show that our algorithm can robustly learn the ground-truth neural network with a small number of samples for many symmetric input distributions.
\end{abstract}

\section{Introduction}

Deep neural networks have been extremely successful in many tasks related to images, videos and reinforcement learning. However, the success of deep learning is still far from being understood in theory. In particular, learning a neural network is a complicated non-convex optimization problem, which is hard in the worst-case. The question of whether we can efficiently learn a neural network still remains generally open, even when the data is drawn from a neural network. 
Despite a lot of recent effort, the class of neural networks that we know how to provably learn in polynomial time is still very limited, and many results require strong assumptions on the input distribution.

In this paper we design a new algorithm that is capable of learning a two-layer\footnote{There are different ways to count the number of layers. Here by two-layer network we refer to a fully-connected network with two layers of edges (two weight matrices). This is considered to be a three-layer network if one counts the number of layers for nodes (e.g. in \cite{goel2017learning}) or a one-hidden layer network if one just counts the number of hidden layers.} neural network for a general class of input distributions.
Following standard models for learning neural networks, we assume there is a ground truth neural network. The input data $(x,y)$ is generated by first sampling the input $x$ from an input distribution $\mathcal{D}$, then computing $y$ according to the ground truth network that is unknown to the learner. The learning algorithm will try to find a neural network $f$ such that $f(x)$ is as close to $y$ as possible over the input distribution $\mathcal{D}$.
Learning a neural network is known to be a hard problem even in some simple settings~\citep{goel2016reliably,brutzkus2017globally}, so we need to make assumptions on the network structure or the input distribution $\mathcal{D}$, or both. Many works have worked with a simple input distribution (such as Gaussians) and try to learn more and more complex networks~\citep{tian2017analytical, brutzkus2017globally, li2017convergence, soltanolkotabi2017learning, zhong2017recovery}. However, the input distributions in real life are distributions of very complicated objects such as texts, images or videos. These inputs are highly structured, clearly not Gaussian and do not even have a simple generative model. 

We consider a type of two-layer neural network, where the output $y$ is generated as
\begin{equation}
y = A\sigma(Wx) + \xi. \label{eq:network_intro}
\end{equation}

Here $x\in \R^d$ is the input, $W\in \R^{k\times d}$ and $A\in \R^{k\times k}$ are two weight matrices\footnote{Here we assume $A\in \R^{k\times k}$ for simplicity, our results can easily be generalized as long as the dimension of output is no smaller than the number of hidden units.}. The function $\sigma$ is the standard ReLU activation function $\sigma(x) = \max\{x,0\}$ applied entry-wise to the vector $Wx$, and $\xi$ is a noise vector that has $\E[\xi] = 0$ and is independent of $x$. 
Although the network only has two layers, learning similar networks is far from trivial: even when the input distribution is Gaussian, \cite{ge2017learning} and \cite{safran2017spurious} showed that standard optimization objective can have bad local optimal solutions. \cite{ge2017learning} gave a new and more complicated objective function that does not have bad local minima.

For the input distribution $\mathcal{D}$, our only requirement is that $\mathcal{D}$ is symmetric. That is, for any $x\in \R^d$, the probability of observing $x\sim \mathcal{D}$ is the same as the probability of observing $-x\sim \mathcal{D}$. A symmetric distribution can still be very complicated and cannot be represented by a finite number of parameters. In practice, one can often think of the symmetry requirement as a ``factor-2'' approximation to an {\em arbitrary input distribution}: if we have arbitrary training samples, it is possible to augment the input data with their negations to make the input distribution symmetric, and it should take at most twice the effort in labeling both the original and augmented data. In many cases (such as images) the augmented data can be interpreted (for images it will just be negated colors) so reasonable labels can be obtained. 

\subsection{Our Results}

When the input distribution is symmetric, we give the first algorithm that can learn a two-layer neural network. Our algorithm is based on the method-of-moments approach: first estimate some correlations between $x$ and $y$, then use these information to recover the model parameters. More precisely we have

\begin{theorem}[informal]\label{thm:exact_informal}
If the data is generated according to Equation \eqref{eq:network_intro}, and the input distribution $x\sim \mathcal{D}$ is symmetric. Given exact correlations between $x,y$ of order at most 4, as long as $A,W$ and input distribution are not degenerate, there is an algorithm that runs in $\mbox{poly}(d)$ time and outputs a network $\hat{A},\hat{W}$ of the same size that is effectively the same as the ground-truth network: for any input $x$, $\hat{A}\sigma(\hat{W}x) = A\sigma(Wx)$.
\end{theorem}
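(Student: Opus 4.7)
The plan is a method-of-moments scheme exploiting the identity $\sigma(t) = \tfrac{1}{2}(t+|t|)$, which rewrites
$y = \tfrac{1}{2}AW\,x + \tfrac{1}{2}A\,|Wx| + \xi$
(with entrywise $|\cdot|$) as an odd-in-$x$ linear piece plus an even-in-$x$ nonlinear piece plus noise. Because $\mathcal{D}$ is symmetric, these two pieces contribute to disjoint parts of the moment tensors and can be peeled off one at a time.

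First, since $|w_p^\top x|\,x_j$ is an odd function of $x$ for every $p,j$, symmetry of $\mathcal{D}$ gives
$\mathbb{E}[y x^\top] = \tfrac{1}{2}AW\,\Sigma$, where $\Sigma := \mathbb{E}[xx^\top]$, so the product $B := AW = 2\,\mathbb{E}[y x^\top]\,\Sigma^{-1}$ is recovered exactly (using invertibility of $\Sigma$, part of the ``non-degenerate input'' hypothesis). Passing to the residual $z := y - \tfrac{1}{2}Bx = \tfrac{1}{2}A\,|Wx|+\xi$ isolates the nonlinear part. To separate $A$ from $W$ I would build the order-four moment tensor
\[
R \;:=\; \mathbb{E}[z\otimes z\otimes x\otimes x] \;-\; \mathbb{E}[zz^\top]\otimes\Sigma \;-\; (\text{noise baselines from }\xi),
\]
which, using $\xi\perp x$ and $\mathbb{E}[\xi]=0$, expands to
\[
R_{ij,ab} \;=\; \tfrac{1}{4}\sum_{p,q=1}^{k} A_{ip}A_{jq}\,S_{pq,ab},\qquad
S_{pq,ab} := \mathbb{E}\big[|w_p^\top x|\,|w_q^\top x|\,(x_a x_b - \Sigma_{ab})\big].
\]
For any probe vectors $u,v\in\R^d$ the $k\times k$ contraction $R(u,v) := \sum_{a,b} u_a v_b R_{:,:,a,b}$ equals $\tfrac{1}{4}\,A\,G(u,v)\,A^\top$ for a data-dependent matrix $G(u,v)$, so I would try to recover $A$ (up to permutation and sign) by Jennrich-style simultaneous diagonalization of the pencil $R(u_1,v_1)\,R(u_2,v_2)^{-1} = A\,G(u_1,v_1)G(u_2,v_2)^{-1}\,A^{-1}$ for two generic probes; once $A$ is known, $W = A^{-1}B$ is immediate.

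The hard part will be making this Jennrich step actually pin down $A$ rather than merely its column span. The cross terms $|w_p^\top x|\,|w_q^\top x|$ genuinely couple distinct neurons, so $G(u,v)$ is not a priori diagonal, and naively the pencil only determines $A$ up to a hidden change-of-basis $U$ that simultaneously diagonalizes $G(u_1,v_1)$ and $G(u_2,v_2)$. The core technical step I anticipate is therefore an identifiability lemma showing that, under the stated non-degeneracy conditions on $A$, $W$, and $\mathcal{D}$, the family $\{G(u,v)\}_{u,v\in\R^d}$ is rich enough that two generic probes force $U$ to be a signed permutation matrix. I expect to argue this by showing that the set of probe pairs producing a repeated eigenvalue or a non-canonical simultaneous diagonalizer forms a proper algebraic subvariety of probe space, hence avoidable by generic choice, while the non-degeneracy hypotheses on $(A,W,\mathcal{D})$ rule out the pathological inputs that would blow this subvariety up to full measure. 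Once that identifiability lemma is in place, the remaining steps (noise cancellation using lower-order moments of $y$ and $x$, and computing $W = A^{-1}B$) are routine linear algebra.
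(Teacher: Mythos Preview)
Your first step (recovering $B = AW$ from $\mathbb{E}[yx^\top]$) matches the paper. The gap is in the second step: the ``identifiability lemma'' you punt on is the entire content of the theorem, and as stated it is not provable.

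Concretely, your contraction gives $R(u,v) = \tfrac14\,A\,G(u,v)\,A^\top$ with
\[
G(u,v)_{pq} \;=\; \mathbb{E}\big[|w_p^\top x|\,|w_q^\top x|\,\big((u^\top x)(v^\top x)-u^\top\Sigma v\big)\big].
\]
These matrices are \emph{not} diagonal for $p\neq q$, so Jennrich on two slices recovers $A$ times the eigenvector matrix of $G_1 G_2^{-1}$, which is generically a full matrix, not a signed permutation. Passing to the whole family $\{G(u,v)\}_{u,v}$ does not help either: the map $(u,v)\mapsto G(u,v)$ is linear in $u\otimes v$, and since $d^2 \ge \binom{k+1}{2}$ its image will generically be all of $\R_{sym}^{k\times k}$. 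In that case the span of your $R(u,v)$'s is all symmetric $k\times k$ matrices and carries no information about $A$ whatsoever. There is no algebraic constraint on $G(u,v)$ that singles out the standard basis, so no ``generic probe'' argument can force the simultaneous diagonalizer to be a signed permutation.

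The paper avoids this by engineering a different fourth-order combination. Instead of your cumulant-style $R$, it forms
\[
f(u) \;=\; 2\,\mathbb{E}\!\big[(u^\top y)\,\big(\mathbb{E}[(u^\top y)x^\top]\Sigma^{-1}x\big)\,(x\otimes x)\big] \;-\; \mathbb{E}\!\big[(u^\top y)^2\,(x\otimes x)\big],
\]
and proves (using the identity $\tfrac12(w_i^\top x)(w_j^\top x)-\sigma(w_i^\top x)\sigma(w_j^\top x)=\tfrac12(w_i^\top x)(w_j^\top x)\mathbb{1}\{w_i^\top x\,w_j^\top x\le 0\}$) that
\[
f(u) \;=\; \sum_{i<j} (A^\top u)_i(A^\top u)_j\,N_{ij},\qquad N_{ij}:=\mathbb{E}\big[(w_i^\top x)(w_j^\top x)(x\otimes x)\mathbb{1}\{w_i^\top x\,w_j^\top x\le 0\}\big].
\]
The non-degeneracy hypothesis is precisely that the $d^2\times\binom{k}{2}$ matrix $N$ with columns $N_{ij}$ has full column rank. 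Under that assumption $f(u)=0$ forces all products $(A^\top u)_i(A^\top u)_j$ to vanish, i.e.\ $A^\top u$ has at most one nonzero entry, so $u$ is a (scaled) row of $A^{-1}$. Linearizing $f$ in $uu^\top$ gives a matrix $T$ whose nullspace is exactly $\mathrm{span}\{z_iz_i^\top\}$, and simultaneous diagonalization on that span is now the trivial FOOBI step because the rank-one structure is already built in. The key idea you are missing is this ``pure neuron detector'': a moment combination whose cross terms are not merely present but are provably linearly independent vectors in $\R^{d^2}$, so that their vanishing characterizes the rows of $A^{-1}$ exactly.
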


Of course, in practice we only have samples of $(x,y)$ and cannot get the exact correlations. However, our algorithm is robust to perturbations, and in particular can work with polynomially many samples.

\begin{theorem}[informal]\label{thm:finite_informal} If the data is generated according to Equation \eqref{eq:network_intro}, and the input distribution $x\sim \mathcal{D}$ is symmetric. As long as the weight matrices $A,W$ and input distributions are not degenerate, there is an algorithm that uses $\mbox{poly}(d,1/\epsilon)$ time and number of samples and outputs a network $\hat{A}, \hat{W}$ of the same size that computes an $\epsilon$-approximation function to the ground-truth network: for any input $x$, $\|\hat{A}\sigma(\hat{W}x) - A\sigma(Wx)\|^2 \le \epsilon$.
\end{theorem}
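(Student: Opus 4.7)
The plan is to lift Theorem~\ref{thm:exact_informal} to the finite-sample setting by combining (i) concentration of the moments accessed by the exact algorithm and (ii) a perturbation analysis showing that every step of the recovery procedure is stable. Together these yield parameter estimates $\hat A,\hat W$ close to the true $A,W$ after $\mathrm{poly}(d,1/\epsilon)$ samples, and a short Lipschitz argument then converts this into the stated function-approximation guarantee.

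First I would replace each population expectation accessed by the exact algorithm with its empirical average over $N$ i.i.d.\ samples of $(x,y)$. All quantities used are correlations of order at most four in $(x,y)$, so under the boundedness or sub-Gaussianity implicit in the non-degeneracy assumption on $\mathcal{D}$, matrix and tensor Bernstein inequalities give a spectral-norm error $\eta = O(\mathrm{poly}(d)/\sqrt{N})$ with high probability. The noise term $\xi$ is harmless: because $\xi$ has mean zero and is independent of $x$, every cross-moment involving $\xi$ vanishes in expectation, and its variance contribution concentrates at the same rate.

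Next I would go through each step of the exact algorithm and quantify how an $\eta$-perturbation in its input propagates. The recovery procedure is a sequence of spectral/linear-algebraic primitives (extracting column and row spaces of $W$ and $A$) followed by a tensor-decomposition step (to disentangle individual neurons). For the spectral pieces I would invoke Weyl's inequality, Davis--Kahan, and Wedin's theorem, so that subspaces are recovered up to $O(\eta/\gamma)$ error, where $\gamma$ is the smallest relevant singular-value gap. For the tensor step I would appeal to a robust tensor-decomposition guarantee (in the style of Goyal--Vempala--Xiao or Kuleshov--Chaganty--Liang), which requires quantitative non-degeneracy of $A$, $W$, and of the higher-order moments of $\mathcal{D}$; these non-degeneracy parameters are exactly what the informal theorem absorbs into the $\mathrm{poly}(d,1/\epsilon)$ bound.

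Finally, from parameter errors $\|\hat A - A\|,\|\hat W - W\| = O(\epsilon')$ I would bound the output error via the triangle inequality
\[
\|\hat A \sigma(\hat W x) - A \sigma(Wx)\| \le \|\hat A - A\|\cdot\|\sigma(\hat W x)\| + \|A\|\cdot\|\sigma(\hat W x) - \sigma(Wx)\|,
\]
using that $\sigma$ is $1$-Lipschitz, so that the second term is at most $\|A\|\,\|\hat W - W\|\,\|x\|$. Under the boundedness implicit in the non-degeneracy assumption on $\mathcal{D}$, choosing $\epsilon' = \mathrm{poly}(\epsilon,1/d)$ gives $\|\hat A\sigma(\hat W x) - A\sigma(Wx)\|^2 \le \epsilon$. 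The main obstacle I anticipate is the tensor-decomposition step: unlike the linear-algebraic pieces, its stability is governed by non-degeneracy parameters (singular-value gaps, column incoherence of $W$, moment ratios of $\mathcal{D}$) that are only alluded to in the informal statement, and most of the technical work lies in tracking these quantities explicitly through the final bound on $N$.
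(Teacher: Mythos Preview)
Your three-phase outline (moment concentration via matrix Bernstein, step-by-step perturbation analysis, final Lipschitz bound on the output) is exactly the skeleton the paper uses in its formal version (Theorem~\ref{thm:robust_2layer}). Where you diverge is in the description of the algorithm itself and hence in what quantities govern stability. The recovery procedure is \emph{not} ``extract row/column spaces of $W,A$ then run a generic robust tensor decomposition''; it is (i) form the pure-neuron-detector matrix $T$ and take the span of its $k$ smallest right singular vectors, (ii) run simultaneous diagonalization on two random elements of that span to extract the $z_i$'s, (iii) solve $k$ single-layer problems. Accordingly, the paper's perturbation analysis hinges on the $k_2$-th singular value of $T$, which it lower-bounds by $\sigma_{\min}(M)\cdot\sigma_{\min}(A)^2$ through an explicit factorization $T=MBCF$ (Lemma~\ref{T_singular_value}); Wedin's theorem then controls the null-space perturbation (Lemma~\ref{lm:robust_nullspace}). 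The ``tensor step'' is handled not by the Goyal--Vempala--Xiao or Kuleshov--Chaganty--Liang guarantees you cite, but by the eigenvector perturbation bounds for simultaneous diagonalization from \cite{bhaskara2014smoothed} (Lemma~\ref{lm:robust_getZ}), which require a separation bound on the random eigenvalues that is obtained via Gaussian anti-concentration.

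So the non-degeneracy parameters that actually appear in the polynomial are $\sigma_{\min}(M)$ (the augmented distinguishing matrix), $\sigma_{\min}(A)$, $\sigma_{\min}(\E[xx^\top])$, and norm bounds on $x,\xi,A$---not column incoherence of $W$ or moment ratios as you conjectured. Your final Lipschitz step is essentially what the paper does, with the minor caveat that the recovered parameters are $\hat Z^{-1}$ and $\hat V$ (equal to $A,W$ only up to a diagonal rescaling), so the comparison is to $Z^{-1}\sigma(Vx)=A\sigma(Wx)$ rather than directly to $A,W$. None of this is a fatal gap in your plan, but if you carried it out as written you would be analyzing a different algorithm than the one the paper actually runs.
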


In fact, the algorithm recovers the original parameters $A, W$ up to scaling and permutations. Here when we say weight matrices are not degenerate, we mean that the matrices $A,W$ should be full rank, and in addition a certain distinguishing matrix that we define later in Section~\ref{sec:prelim} is also full rank. We justify these assumptions using the {\em smoothed analysis} framework~\citep{spielman2004smoothed}.

In smoothed analysis, the input is not purely controlled by an adversary. Instead, the adversary can first generate an arbitrary instance (in our case, arbitrary weight matrices $W,A$ and symmetric input distribution $\mathcal{D}$), and the parameters for this instance will be randomly perturbed to yield a perturbed instance. The algorithm only needs to work with high probability on the perturbed instance. This limits the power of the adversary and prevents it from creating highly degenerate cases (e.g. choosing the weight matrices to be much lower rank than $k$). Roughly speaking, we show

\begin{theorem}[informal]\label{thm:smooth_informal}
There is a simple way to perturb the input distribution, $W$ and $A$ such that with high probability, the distance between the perturbed instance and original instance is at most $\lambda$, and our algorithm outputs an $\epsilon$-approximation to the perturbed network with $\mbox{poly}(d,1/\lambda,1/\epsilon)$ time and number of samples.
\end{theorem}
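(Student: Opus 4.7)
The plan is to specify a concrete perturbation, argue that all quantitative non-degeneracy conditions required by Theorem~\ref{thm:finite_informal} hold with inverse-polynomial margin on the perturbed instance, and then invoke the finite-sample guarantee with those margins.

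First I would fix the perturbation. For the weight matrices, I would add independent Gaussian entries of standard deviation $\Theta(\lambda/\sqrt{kd})$ to $W$ and analogously to $A$; this is the standard smoothed model, and with probability $\hp$ it moves each matrix by at most $\lambda$ in spectral norm. For the input distribution, since we must keep $\mathcal{D}$ symmetric, I would take the mixture $\mathcal{D}' = (1-\lambda)\mathcal{D} + \lambda\,\mathcal{N}(0,I_d)$ (equivalently, with probability $\lambda$ replace a sample by a fresh isotropic Gaussian, or alternatively convolve with a small symmetric Gaussian of variance $\lambda^2$). Either choice preserves symmetry, moves $\mathcal{D}$ by $O(\lambda)$ in total variation, and crucially injects a genuine Gaussian component into the input that will be the source of anti-concentration below.

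Second, I would check the non-degeneracy of $W'$ and $A'$ on the perturbed instance. Classical smoothed analysis of Gaussian-perturbed rectangular matrices (via Sankar--Spielman--Teng style bounds) gives $\sigmin{W'},\,\sigmin{A'} \ge \mathrm{poly}(\lambda/d)$ with probability $\hp$, so both weight matrices are quantitatively well-conditioned.

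Third, and this is where the main difficulty lies, I would lower-bound the smallest singular value of the "distinguishing matrix" from Section~\ref{sec:prelim} by $\mathrm{poly}(\lambda/d)$. This matrix is built from expectations of the nonlinear output against $\mathcal{D}'$, so its entries are not polynomial in $W'$ in general. The key leverage is that $\mathcal{D}'$ contains an $\lambda$-mass isotropic Gaussian piece: restricted to that component, the expectations involving $\sigma(W'x)$ admit closed-form expressions that are real-analytic in the rows of $W'$ (arising from Hermite expansions of ReLU against Gaussian marginals). I would then argue that the determinant (or an appropriate square minor) of this distinguishing matrix is a nonzero analytic function of the entries of $W'$, and apply a Carbery--Wright-type anti-concentration bound for polynomial/analytic functions of Gaussian random variables to the Gaussian smoothing noise added to $W$. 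This yields the required inverse-polynomial lower bound with high probability. Non-vanishing of this analytic function can be established by exhibiting a single instance of $W$ (e.g. rows proportional to standard basis vectors) for which the Gaussian-only computation gives a generically full-rank distinguishing matrix.

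Finally, with all three polynomial margins in hand and $\mathcal{D}'$ being $\lambda$-close to $\mathcal{D}$, I would plug them into the quantitative version of Theorem~\ref{thm:finite_informal}; its $\mathrm{poly}(d)$ dependence on the inverse of each non-degeneracy parameter immediately gives the advertised $\mathrm{poly}(d,1/\lambda,1/\epsilon)$ time and sample bounds on the perturbed instance. The main obstacle is unquestionably the third step: the distinguishing matrix depends on the ReLU nonlinearity through the unknown component of $\mathcal{D}$, and one must simultaneously exploit the injected Gaussian piece of $\mathcal{D}'$ and the Gaussian smoothing of $W$ to certify that this matrix is not close to singular.
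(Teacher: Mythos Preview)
Your overall architecture---perturb the weights, inject a small Gaussian component into $\mathcal{D}$, and argue that the distinguishing matrix becomes quantitatively full rank---matches the paper's, but the mechanism you propose for the crucial third step would not yield polynomial bounds, and differs substantially from what the paper does.

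The first problem is the appeal to Carbery--Wright. The entries of the distinguishing matrix under a Gaussian input are \emph{not} polynomial in the rows of $W'$: the closed form (the paper's Lemma~\ref{lm:Mij_char}) involves the angles $\phi_{ij}$ between rows and trigonometric functions thereof. Carbery--Wright is a statement about polynomials of Gaussians; extending it to real-analytic functions is neither standard nor known to give the quantitative $\epsilon^{1/\mathrm{deg}}$ decay you need. Even if one bypassed this by treating the entries as approximate polynomials, the second and more fundamental issue is the degree blow-up: to control $\sigmin{M}$ via a determinant or a $(k_2+1)\times(k_2+1)$ minor, you are looking at a function of degree at least $\Omega(k^2)$ in the (already non-polynomial) entries. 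Carbery--Wright then gives anti-concentration only at scale $\epsilon^{1/k^2}$, which translates to an $\exp(-\Omega(k^2))$ lower bound on $\sigmin{M}$---exponentially small, not $\mathrm{poly}(\lambda/d)$.

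The paper circumvents both obstacles by never touching a determinant. It lower-bounds $\sigmin{M}$ through the \emph{leave-one-out distance}: for each column $M_{ij}$ separately, it identifies (via the explicit Gaussian formula) a term of the form $\td{w}_{i,L_1}\otimes\td{w}_{j,L_2}$, constructs a containing subspace $\hat V_{ij}$ that is independent of this term, and then applies a tensor anti-concentration lemma (Lemma~\ref{lm:largeprojection}) of constant order. This keeps every anti-concentration step at degree $O(1)$ and produces genuine $\mathrm{poly}(1/d,\rho)$ bounds. A further refinement (Lemma~\ref{lem:strongerleaveoneout}) is needed because the augmented matrix has one deterministic column $\vvv{I_d}$.

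Finally, your choice of mixing with the fixed $\mathcal{N}(0,I_d)$ loses a degree of freedom the paper exploits. The paper mixes with $\mathcal{N}(0,QQ^\top)$ for a \emph{random} Gaussian $Q$, so that after conditioning on $WQ$ (which alone determines the Gaussian-component contribution to $M$), the residual randomness $\bar Q=\mathrm{Proj}_{W^\perp}Q$ remains available to certify that the arbitrary $\mathcal{D}$-component cannot cancel the Gaussian piece. This ``subspace decoupling'' step (Theorem~\ref{thm:generalsmooth}) is how the paper handles the unknown part of $\mathcal{D}$; your plan does not have an analogue, and the concern you flag in your last sentence is exactly the gap.
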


In the rest of the paper, we will first review related works. Then in Section~\ref{sec:prelim} we formally define the network and introduce some notations. Our algorithm is given in Section~\ref{sec:alg}. Finally in Section~\ref{sec:experiments} we run experiments to show that the algorithm can indeed learn the two-layer network efficiently and robustly. The experiments show that our algorithm works robustly with reasonable number of samples for different (symmetric) input distributions and weight matrices. Due to space constraints, the proof for polynomial number of samples (Theorem~\ref{thm:finite_informal}) and smoothed analysis (Theorem~\ref{thm:smooth_informal}) are deferred to the appendix.



\subsection{Related Work}

There are many works in learning neural networks, and they come in many different styles.

\paragraph{Non-standard Networks} Some works focus on networks that do not use standard activation functions. \cite{arora2014provable} gave an algorithm that learns a network with discrete variables. \cite{livni2014computational} and follow-up works learn neural networks with polynomial activation functions.
\cite{oymak2018end} used the rank-1 tensor decomposition for learning a non-overlapping convolutional neural network with differentiable and smooth activation and Gaussian input.

\vspace*{-0.1in}
\paragraph{ReLU network, Gaussian input} When the input is Gaussian, \cite{ge2017learning} showed that for a two-layer neural network, although the standard objective does have bad local optimal solutions, one can construct a new objective whose local optima are all globally optimal. Several other works~\citep{tian2017analytical,du2017gradient, brutzkus2017globally, li2017convergence, soltanolkotabi2017learning, zhong2017recovery} extend this to different settings. 

\vspace*{-0.1in}
\paragraph{General input with score functions} A closely related work \citep{janzamin2015beating} does not require the input distribution to be Gaussian, but still relies on knowing the score function of the input distribution (which in general cannot be estimated efficiently from samples).
Recently, \cite{gao2018learning} gave a way to design loss functions with desired properties for one-hidden-layer neural networks with general input distributions based on a new proposed local likelihood score function estimator. For general distributions (including symmetric ones) their estimator can still require number of samples that is exponential in dimension $d$ (as in Assumption 1(d)). 

\vspace*{-0.1in}
\paragraph{General input distributions} There are several lines of work that try to extend the learning results to more general distributions. \cite{du2017convolutional} showed how to learn a single neuron or a single convolutional filter under some conditions for the input distribution. \cite{daniely2016toward, zhang2016l1, zhang2017learnability, goel2017learning,du2018improved} used kernel methods to learn neural networks when the norm of the weights and input distributions are both bounded (and in general the running time and sample complexity in this line of work depend exponentially on the norms of weights/input). Recently, \cite{du2018gradient} showed that gradient descent minimizes the training error in an over-parameterized two-layer neural network. They only consider training error while our results also apply to testing error.
The work that is most similar to our setting is \cite{goel2018learning}, where they showed how to learn a single neuron (or a single convolutional filter) for any symmetric input distribution. Our two-layer neural network model is much more complicated.

\paragraph{Method-of-Moments and Tensor Decomposition}
Our work uses method-of-moments, which has already been applied to learn many latent variable models (see \cite{anandkumar2014tensor} and references there). The particular algorithm that we use is inspired by an over-complete tensor decomposition algorithm FOOBI~\citep{de2007fourth}. Our smoothed analysis results are inspired by \cite{bhaskara2014smoothed} and \cite{ma2016polynomial}, although our setting is more complicated and we need several new ideas.

\section{Preliminaries}\label{sec:prelim}

In this section, we first describe the neural network model that we learn, and then introduce notations related to matrices and tensors. Finally we will define distinguishing matrix, which is a central object in our analysis.

\subsection{Network Model}\label{sec:model}
We consider two-layer neural networks with $d$-dimensional input, $k$ hidden units and $k$-dimensional output, as shown in Figure~\ref{fig:model}. We assume that $k\leq d$. The input of the neural network is denoted by $x\in \R^d$. Assume that the input $x$ is i.i.d. drawn from a symmetric distribution $\mathcal{D}$\footnote{Suppose the density function of distribution $\mathcal{D}$ is $p(\cdot)$, we assume $p(x)=p(-x)$ for any $x\in\R^d$}. Let the two weight matrices in the neural network be $W\in \R^{k\times d}$ and $A\in \R^{k\times k}$.
The output $y\in \R^k$ is generated as follows:
\begin{equation}
\label{eq:label_gene}
y=A\sigma(Wx)+\xi,
\end{equation}
where $\sigma(\cdot)$ is the element-wise ReLU function and $\xi\in \R^k$ is zero-mean random noise, which is independent with input $x$. Let the value of hidden units be $h\in \R^k$, which is equal to $\sigma(Wx)$.
Denote $i$-th row of matrix $W$ as $w_i^\top$ $(i=1,2,...,k)$. Also, let $i$-th column of matrix $A$ be $a_i$ ($i=1,2,...,k$). By property of ReLU activations, for any constant $c > 0$, scaling the $i$-th row of $W$ by $c$ while scaling the $i$-th column of $A$ by $1/c$ does not change the function computed by the network. Therefore without loss of generality, we assume every row vector of $W$ has unit norm.

\begin{figure}[h]
    \centering
    \includegraphics[width=0.45\textwidth]{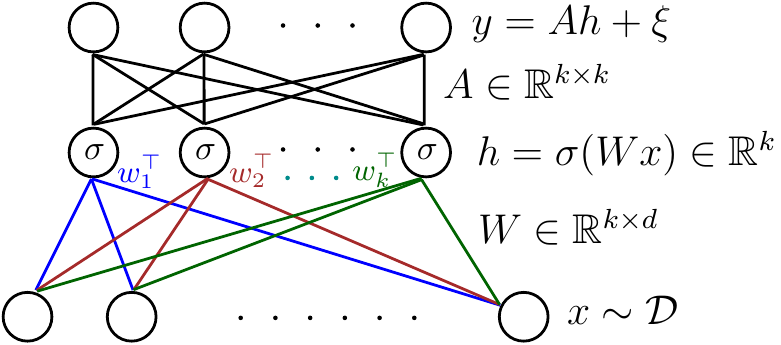}
    \caption{Network model.}
    \label{fig:model}
\end{figure}

\subsection{Notations}
We use $[n]$ to denote the set $\{1,2,\cdots,n\}$. For two random variables $X$ and $Y$, we say $X\disequal Y$ if they come from the same distribution.

In the vector space $\R^n$, we use $\inner{\cdot}{\cdot}$ to denote the inner product of two vectors, and use $\n{\cdot}$ to denote the Euclidean norm. We use $e_i$ to denote the $i$-th standard basis vector. For a matrix $A\in \R^{m\times n}$, let $A_{[i,:]}$ denote its $i$-th row vector, and let $A_{[:,j]}$ denote its $j$-th column vector. Let $A$'s singular values be $\sigma_1(A)\geq \sigma_2(A)\geq \cdots \geq \sigma_{\min(m,n)}(A)$, and denote the smallest singular value be $\sigma_{\min}(A)=\sigma_{\min(m,n)}(A)$. The condition number of matrix $A$ is defined as $\kappa(A):=\sigma_1 (A)/\sigmin{A}$. We use $I_n$ to denote the identity matrix with dimension $n\times n$. The spectral norm of a matrix is denoted as $\n{\cdot}$, and the Frobenius norm as $\n{\cdot }_F.$

We represent a $d$-dimensional linear subspace $\mathcal{S}$ by a matrix $S\in \R^{n\times d}$, whose columns form an orthonormal basis for subspace $\mathcal{S}$. The projection matrix onto the subspace $\mathcal{S}$ is denoted by $\mbox{Proj}_S=SS^\top,$ and the projection matrix onto the orthogonal subspace of $\mathcal{S}$ is denoted by $\mbox{Proj}_{S^{\perp}}=I_n-SS^\top.$

For matrix $A\in \R^{m_1\times n_1}, C\in R^{m_2\times n_2}$, let the Kronecker product of $A$ and $C$ be $A\otimes C\in \R^{m_1m_2\times n_1 n_2}$, which is defined as $(A\otimes C)_{(i_1,i_2), (j_2,j_2)} = A_{i_1,i_2}C_{j_1,j_2}$. For a vector $x\in \R^d$, the Kronecker product $x\otimes x$ has dimension $d^2$. We denote the $p$-fold Kronecker product of $x$ as $x^{\otimes p}$, which has dimension $d^p$.

We often need to convert between vectors and matrices. For a matrix $A\in \R^{m\times n}$, let $\vvv{A}\in\R^{mn}$ be the vector obtained by stacking all the columns of $A$. For a vector $a\in \R^{m^2}$, let $\mat{x}\in \R^{m\times m}$ denote the inverse mapping such that $\vvv{\mat{a}}=a$. Let $\R_{sym}^{k\times k}$ be the space of all $k\times k$ symmetric matrices, which has dimension ${k \choose 2}+k$. For convenience, we denote $k_2={k\choose 2}.$ For a symmetric matrix $B\in \R_{sym}^{k\times k}$, we denote $\vvvsym{B}\in \R^{k_2+k}$ as the vector obtained by stacking all the upper triangular entries (including diagonal entries) of $B$. Note that $\vvv{B}$ still has dimension $k^2$. For a vector $b\in \R^{k_2+k}$, let $\matsym{b}\in \R_{sym}^{k\times k}$ denote the inverse mapping of $\vvvsym{\cdot}$ such that $\vvvsym{\matsym{B}}=b$.

\newcommand{\dm}{distinguishing matrix }
\newcommand{\adm}{augmented distinguishing matrix }
\newcommand{\detector}{\E\big[(u^\top y)\cdot  (\E[(u^\top y)x^\top] \E[xx^\top]^{-1}x)\cdot (x\otimes x)\big]}
\newcommand{\empdetector}{\hat{\E}\big[(u^\top y)\cdot  (\hat{\E}[(u^\top y)x^\top] \hat{\E}[xx^\top]^{-1}x)\cdot (x\otimes x)\big]}

\subsection{Distinguishing Matrix} A central object in our analysis is a large matrix whose columns are closely related to pairs of hidden variables. We call this the {\em distinguishing matrix} and define it below:

\begin{definition}\label{def:dm} Given a weight matrix $W$ of the first layer, and the input distribution $\mathcal{D}$, the distinguishing matrix $N^\mathcal{D} \in \R^{d^2 \times k_2}$ is a matrix whose columns are indexed by $ij$ where $1\le i < j \le k$, and
\[
N^\mathcal{D}_{ij} = \E_{x\sim \mathcal{D}}\big[(w_i^\top x)(w_j^\top x)(x\otimes x) \indic\big].
\]
Another related concept is the \adm $M$, which is a $d^2\times \left(k_2+1\right)$ matrix whose first $k_2$ columns are exactly the same as \dm $N$, and the last column (indexed by $0$) is defined as
\[
M^\mathcal{D}_{0} = \E_{x\sim \mathcal{D}}\big[x\otimes x\big].
\]
For both matrices, when the input distribution is clear from context we use $N$ or $M$ and omit the superscript.
\end{definition}

The exact reason for these definitions will only be clear after we explain the algorithm in Section~\ref{sec:alg}. Our algorithm will require that these matrices are robustly full rank, in the sense that $\sigma_{min}(M)$ is lowerbounded. Intuitively, every column $N_{ij}^{\mathcal{D}}$ looks at the expectation over samples that have opposite signs for weights $w_i,w_j$ ($w_i^\top x w_j^\top x \le 0$, hence the name distinguishing matrix). 

Requiring $M$ and $N$ to be full rank prevents several degenerate cases. For example, if two hidden units are perfectly correlated and always share the same sign for {\em every} input, this is very unnatural and requiring the \dm to be full rank prevents such cases. Later in Section~\ref{sec:smooth} we will also show that requiring a lowerbound on $\sigma_{min}(M)$ is not unreasonable: in the smoothed analysis setting where the nature can make a small perturbation on the input distribution $\mathcal{D}$, we show that for any input distribution $\mathcal{D}$, there exists simple perturbations $\mathcal{D'}$ that are arbitrarily close to $\mathcal{D}$ such that  $\sigma_{min}(M^{D'})$ is lowerbounded.

\section{Our Algorithm}
\label{sec:alg}
In this section, we describe our algorithm for learning the two-layer networks defined in Section~\ref{sec:model}. As a warm-up, we will first consider a single-layer neural network and recover the results in~\cite{goel2018learning} using method-of-moments. This will also be used as a crucial step in our algorithm. Due to space constraints we will only introduce algorithm and proof ideas, the detailed proof is deferred to Section~\ref{sec:exact} in appendix. Throughout this section, when we use $\E[\cdot]$ without further specification the expectation is over the randomness $x\sim \mathcal{D}$ and the noise $\xi$. 

\subsection{Warm-up: Learning Single-layer Networks}
\label{sec:1layer}
We will first give a simple algorithm for learning a single-layer neural network. More precisely, suppose we are given samples $(x_1,y_1),...,(x_n, y_n)$ where $x_i \sim \mathcal{D}$ comes from a symmetric distribution, and the output $y_i$ is computed by
\begin{equation}
y_i = \sigma(w^\top x_i)+\xi_i. \label{eq:1layer}
\end{equation}
Here $\xi_i$'s are i.i.d. noises that satisfy $\E[\xi_i] = 0$. Noise $\xi_i$ is also assumed to be independent with input $x_i$. The goal is to learn the weight vector $w$.

 \begin{algorithm}[!htb]
	\begin{algorithmic}[1]
		\REQUIRE 
		Samples $(x_1,y_1),...,(x_n,y_n)$ generated according to Equation~\eqref{eq:1layer}.
		\ENSURE Estimate of weight vector $w$.
		\STATE Estimate $v = \frac{1}{n} \sum_{i=1}^n y_i x_i$.
		\STATE Estimate $C = \frac{1}{n} \sum_{i=1}^n x_ix_i^\top$
		\RETURN $2C^{-1} v$.
    \end{algorithmic}
    \caption{Learning Single-layer Neural Networks}	\label{alg:1layer}
\end{algorithm}

The idea of the algorithm is simple: we will estimate the correlations between $x$ and $y$ and the covariance of $x$, and then recover the hidden vector $w$ using these two estimates. The main challenge here is that $y$ is not a linear function on $x$. \cite{goel2018learning} gave a crucial observation that allows us to deal with the non-linearity:

\begin{lemma}\label{lem:symmetricmoment}
Suppose $x\sim \mathcal{D}$ comes from a symmetric distribution and $y$ is computed as in \eqref{eq:1layer}, then
\[
\E[yx] = \frac{1}{2} \E[xx^\top] w.
\]
\end{lemma}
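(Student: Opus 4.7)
The plan is to use the symmetry of $\mathcal{D}$ together with a simple identity for the ReLU, namely $\sigma(t)-\sigma(-t)=t$, to kill the nonlinearity in $y$.

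First I would peel off the noise: since $\xi$ is independent of $x$ and $\E[\xi]=0$, we have $\E[yx] = \E[\sigma(w^\top x)x] + \E[\xi]\E[x] = \E[\sigma(w^\top x)x]$. So the problem reduces to computing $\E[\sigma(w^\top x)x]$ in terms of the second moment of $x$.

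Next I would apply the symmetrization trick that drives essentially every argument in this paper. Because $x \disequal -x$, the random variable $\sigma(w^\top x)x$ has the same distribution as $\sigma(-w^\top x)(-x)$, so
\begin{align*}
\E[\sigma(w^\top x)x]
&= \tfrac{1}{2}\E[\sigma(w^\top x)x] + \tfrac{1}{2}\E[\sigma(-w^\top x)(-x)] \\
&= \tfrac{1}{2}\E\big[(\sigma(w^\top x) - \sigma(-w^\top x))\,x\big].
\end{align*}
Now I would use the ReLU identity $\sigma(t)-\sigma(-t)=\max(t,0)-\max(-t,0)=t$, valid for every scalar $t$, to collapse the bracket to $w^\top x$. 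This yields
\[
\E[\sigma(w^\top x)x] = \tfrac{1}{2}\E\big[(w^\top x)\,x\big] = \tfrac{1}{2}\E[xx^\top]\,w,
\]
where in the last step I pull the deterministic vector $w$ out of the expectation using $(w^\top x)x = xx^\top w$. Combined with the first step, this gives $\E[yx] = \tfrac{1}{2}\E[xx^\top]w$, as desired.

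There is no real obstacle here; the only thing one needs to be careful about is justifying each use of independence/linearity of expectation and making sure the ReLU identity is applied to a scalar quantity before multiplying by the vector $x$. This lemma is essentially the justification for Algorithm~\ref{alg:1layer}: it says that the empirical estimates $v \approx \tfrac{1}{2}\E[xx^\top]w$ and $C \approx \E[xx^\top]$ let us recover $w$ as $2C^{-1}v$.
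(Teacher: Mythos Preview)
Your proof is correct and is essentially the same approach as the paper's: the paper actually proves the more general identity $\E[(\sigma(a^\top x))^p x^{\otimes q}]=\tfrac12\E[(a^\top x)^p x^{\otimes q}]$ for $p+q$ even (Lemma~\ref{lm:symproperty}) and then notes that Lemma~\ref{lem:symmetricmoment} is the special case $a=w$, $p=q=1$, but the core argument---average the law of $x$ with that of $-x$ and use $\sigma(t)-\sigma(-t)=t$---is exactly yours.
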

Importantly, the right hand side of Lemma~\ref{lem:symmetricmoment} does not contain the ReLU function $\sigma$. This is true because if $x$ comes from a symmetric distribution, averaging between $x$ and $-x$ can get rid of non-linearities like ReLU or leaky-ReLU. Later we will prove a more general version of this lemma (Lemma~\ref{lm:symproperty}).

Using this lemma, it is immediate to get a method-of-moments algorithm for learning $w$: we just need to estimate $\E[yx]$ and $\E[xx^\top]$, then we know $w = 2(\E[xx^\top])^{-1} \E[yx]$. This is summarized in Algorithm~\ref{alg:1layer}.

\subsection{Learning Two-layer Networks}

In order to learn the weights of the network defined in Section~\ref{sec:model}, a crucial observation is that we have $k$ outputs as well as $k$ hidden-units. This gives a possible way to reduce the two-layer problem to the single-layer problem. For simplicity, we will consider the noiseless case in this section, where
\begin{equation}
y = A\sigma(Wx). \label{eq:2layer_nonoise}
\end{equation}

Let $u \in \R^k$ be a vector and consider $u^\top y$, it is clear that $u^\top y = (u^\top A) \sigma(Wx)$. Let $z_i$ be the normalized version $i$-th row of $A^{-1}$, then we know $z_i$ has the property that $z_i^\top A = \lambda_i e_i^\top$ where $\lambda_i > 0$ is a constant and $e_i$ is a basis vector.

The key observation here is that if $u = z_i$, then $u^\top A = \lambda_i e_i^\top$. As a result, $u^\top y = \lambda_i e_i^\top \sigma(Wx) = \sigma(\lambda_i w_i^\top x)$ is the output of a single-layer neural network with weight equal to $\lambda_i w_i$. If we know all the vectors $\{z_1,...,z_k\}$, the input/output pairs $(x,z_i^\top y)$ correspond to single-layer networks with weight vectors $\{\lambda_i w_i\}$. We can then apply the algorithm in Section~\ref{sec:1layer} (or the algorithm in \cite{goel2018learning}) to learn the weight vectors.

When $u^\top A = \lambda_i e_i$, we say that $u^\top y$ is a pure neuron. Next we will design an algorithm that can find all vectors $\{z_i\}$'s that generate pure neurons, and therefore reduce the problem of learning a two-layer network to learning a single-layer network.

\paragraph{Pure Neuron Detector} In order to find the vector $u$ that generates a pure neuron, we will try to find some property that is true if and only if the output can be represented by a single neuron. 

Intuitively, using ideas similar to Lemma~\ref{lem:symmetricmoment} we can get a property that  holds for all pure neurons: 

\begin{lemma}\label{lem:quadraticmoment}
Suppose $\hat{y} = \sigma(w^\top x)$, then $\E[\hat{y}^2] = \frac{1}{2} w^\top \E[xx^\top] w$, and $\E[\hat{y}x] = \frac{1}{2} \E[xx^\top] w$. As a result we have
\[
\E[\hat{y}^2] = 2 \E[\hat{y}x^\top] \E[xx^\top]^{-1} \E[\hat{y}x].
\]
\end{lemma}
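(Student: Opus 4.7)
The plan is to establish the two individual moment identities first, then combine them algebraically. The second identity, $\E[\hat{y}x] = \tfrac12 \E[xx^\top] w$, is precisely Lemma~\ref{lem:symmetricmoment} applied to the noiseless case $\xi \equiv 0$, so it requires no further work beyond citation.

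For the first identity, $\E[\hat{y}^2] = \tfrac12 w^\top \E[xx^\top] w$, I would exploit symmetry of $\mathcal{D}$ in exactly the same way as in the proof of Lemma~\ref{lem:symmetricmoment}. The key pointwise observation is that for the ReLU, $\sigma(t)^2 + \sigma(-t)^2 = t^2$ for every $t \in \R$, since exactly one of $t,-t$ is nonnegative (and if $t=0$ both sides vanish). Pairing each input $x$ with $-x$ (which occurs with the same density by symmetry),
\[
\E[\hat{y}^2] \;=\; \tfrac12 \E\!\left[\sigma(w^\top x)^2 + \sigma(-w^\top x)^2\right] \;=\; \tfrac12 \E[(w^\top x)^2] \;=\; \tfrac12\, w^\top \E[xx^\top] w.
\]

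Finally, the claimed identity follows by a one-line substitution. Plugging in,
\[
2\,\E[\hat{y} x^\top]\,\E[xx^\top]^{-1}\,\E[\hat{y} x]
\;=\; 2 \cdot \tfrac12 w^\top \E[xx^\top] \cdot \E[xx^\top]^{-1} \cdot \tfrac12 \E[xx^\top] w
\;=\; \tfrac12 w^\top \E[xx^\top] w
\;=\; \E[\hat{y}^2].
\]

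There is no real obstacle; the only subtlety is that the combined identity implicitly requires $\E[xx^\top]$ to be invertible so that $\E[xx^\top]^{-1}$ is defined, which is a standard non-degeneracy assumption on the input distribution $\mathcal{D}$ (and is also needed for Algorithm~\ref{alg:1layer} to make sense). The symmetry argument is essentially the same trick used earlier, just applied to the quadratic $\sigma(\cdot)^2$ rather than the linear-times-ReLU product.
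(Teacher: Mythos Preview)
Your proof is correct and follows essentially the same approach as the paper: both identities are obtained by the symmetry-pairing trick (the paper invokes its general Lemma~\ref{lm:symproperty} with $(p,q)=(2,0)$ and $(p,q)=(1,1)$, while you re-derive these special cases directly), and the final identity is the same one-line algebraic substitution.
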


As before, the ReLU activation does not appear because of the symmetric input distribution.
For $\hat{y} = u^\top y$, we can estimate all of these moments ($\E[\hat{y}^2], \E[\hat{y}x^\top],\E[xx^\top]$) using samples and check whether this condition is satisfied. However, the problem with this property is that even if $z = u^\top y$ is not pure, it may still satisfy the property. More precisely, if $\hat{y}= \sum_{i=1}^k c_i \sigma(w_i^\top x)$, then we have
\[
2 \E[\hat{y}x^\top] \E[xx^\top]^{-1} \E[\hat{y}x]-\E[\hat{y}^2]  = \sum_{1 \le i < j\le k} c_ic_j\E\big[(w_i^\top x)(w_j^\top x)\indic\big]
\]

The additional terms may accidentally cancel each other which leads to a false positive. To address this problem, we consider a higher order moment:

\begin{lemma}\label{lem:4thmoment}
Suppose $\hat{y} = \sigma(w^\top x)$, then
\[
\E[\hat{y}^2(x\otimes x)] = 2 \E\left[\hat{y}\cdot  (\E[\hat{y}x^\top] \E[xx^\top]^{-1}x)\cdot (x\otimes x)\right].
\]
Moreover, if $\hat{y} = \sum_{i=1}^k c_i \sigma(w_i^\top x)$ where $c \in \R^k$ is a $k$-dimensional vector, we have
\[
2 \E\left[\hat{y}\cdot  (\E[\hat{y}x^\top] \E[xx^\top]^{-1}x)\cdot (x\otimes x)\right]-\E[\hat{y}^2(x\otimes x)]  = \sum_{1\le i < j\le k} c_ic_j N_{ij}.
\]
Here $N_{ij}$'s are columns of the \dm defined in Definition~\ref{def:dm}.
\end{lemma}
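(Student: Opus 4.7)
The plan is to reduce everything to the warm-up symmetric-moment identity (Lemma~\ref{lem:symmetricmoment}) and then use elementary ReLU algebra to recognize the columns $N_{ij}$ of the distinguishing matrix. I will first prove the more general statement (the second identity) and obtain the pure-neuron identity as the $k{=}1$ special case, since then the index set $\{(i,j):i<j\}$ is empty so the right-hand side vanishes automatically.

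First I would compute the ``regression coefficient'' $\E[\hat{y}x^\top]\E[xx^\top]^{-1}x$ in closed form. Applying Lemma~\ref{lem:symmetricmoment} to each neuron $c_i\sigma(w_i^\top x)$ separately and using linearity of expectation gives $\E[\hat{y}x] = \tfrac{1}{2}\E[xx^\top]\sum_i c_i w_i$. Multiplying by $\E[xx^\top]^{-1}$ from the right collapses the matrix to $\tfrac{1}{2}\sum_i c_i w_i^\top$, so $\E[\hat{y}x^\top]\E[xx^\top]^{-1}x = \tfrac{1}{2}\sum_i c_i(w_i^\top x)$. This is the crucial ``linearization'' step that the symmetry of $\mathcal{D}$ buys us: the inner factor becomes a genuine linear functional of $x$ with no remaining ReLU.

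Second I would substitute and take the difference termwise. Expanding $\hat{y}=\sum_i c_i\sigma(w_i^\top x)$ gives
\[
2\,\E\!\left[\hat{y}\cdot\tfrac{1}{2}\sum_j c_j(w_j^\top x)\cdot(x\otimes x)\right] - \E[\hat{y}^2(x\otimes x)]
= \sum_{i,j} c_i c_j\,\E\!\left[\sigma(w_i^\top x)\bigl((w_j^\top x)-\sigma(w_j^\top x)\bigr)(x\otimes x)\right].
\]
Now apply the elementary ReLU identity $t - \sigma(t) = -\sigma(-t)$ to the second factor. The diagonal terms $i=j$ vanish because $\sigma(t)\sigma(-t)\equiv 0$, so only $i\neq j$ contribute. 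Pairing the $(i,j)$ and $(j,i)$ terms, both are supported on $\{w_i^\top x\cdot w_j^\top x\le 0\}$, and on that event $\sigma(w_i^\top x)\sigma(-w_j^\top x)+\sigma(w_j^\top x)\sigma(-w_i^\top x) = -(w_i^\top x)(w_j^\top x)$. The combined contribution from the pair $(i,j)$ is therefore exactly $c_i c_j\,N_{ij}$ as defined in Definition~\ref{def:dm}, giving the claimed formula.

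The only step that requires any care is the symmetrization in the last paragraph, since the indicator $\indic$ emerges only after adding the $(i,j)$ and $(j,i)$ contributions; each individual term is supported on a half-plane, not on the ``opposite sign'' region. I do not expect any substantive obstacle here beyond bookkeeping: all manipulations are pointwise in $x$ (so the symmetry of $\mathcal{D}$ is only used once, in the linearization step), and no regularity on $\mathcal{D}$ beyond existence of the fourth-order moments needed to even state the lemma is required.
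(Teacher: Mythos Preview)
Your argument is correct. Both you and the paper begin the same way---using Lemma~\ref{lem:symmetricmoment} to linearize $\E[\hat y x^\top]\E[xx^\top]^{-1}$ to $\tfrac12\sum_i c_i w_i^\top$---but diverge after that. The paper applies the symmetry of $\mathcal D$ a \emph{second} time (Lemma~\ref{lm:symproperty}) to replace $\sigma(w_i^\top x)$ by $\tfrac12 w_i^\top x$ in the first term, expands both terms as quadratic forms so that the diagonal $\E[(w_i^\top x)^2(x\otimes x)]$ terms cancel, and then invokes a separate symmetry identity (Lemma~\ref{lm:sym2}) to turn the surviving cross terms into $N_{ij}$. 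Your route instead keeps $\sigma(w_i^\top x)$ and uses only the pointwise identities $t-\sigma(t)=-\sigma(-t)$, $\sigma(t)\sigma(-t)\equiv 0$, and $\sigma(a)\sigma(-b)+\sigma(b)\sigma(-a)=-ab\,\mathbbm{1}\{ab\le0\}$, so symmetry of $\mathcal D$ is invoked exactly once. This is a genuinely more elementary derivation: it bypasses Lemmas~\ref{lm:symproperty} and~\ref{lm:sym2} entirely and makes transparent that the indicator $\indic$ arises from ReLU algebra alone rather than from the distributional symmetry. The paper's route, on the other hand, generalizes more directly to the noisy setting (Lemma~\ref{lem:4thmoment_noise}), where the two ``$y$'' factors are no longer identical and the asymmetric factoring $\sigma(w_i^\top x)\bigl((w_j^\top x)-\sigma(w_j^\top x)\bigr)$ is less natural.
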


The important observation here is that there are $k_2 = {k \choose 2}$ extra terms in $2 \E\left[\hat{y}\cdot  (\E[\hat{y}x^\top] \E[xx^\top]^{-1}x)\cdot (x\otimes x)\right]-\E[\hat{y}^2(x\otimes x)]$ that are multiples of $N_{ij}$, which are $d^2$ (or ${d+1 \choose 2}$ considering their symmetry) dimensional objects. When the \dm is full rank, we know its columns $N_{ij}$ are linearly independent. In that case, if the sum of the extra terms is 0, then the coefficient in front of each $N_{ij}$ must also be 0. The coefficients are $c_ic_j$ which will be non-zero if and only if both $c_i,c_j$ are non-zero, therefore to make all the coefficients 0 at most one of $\{c_i\}$ can be non-zero. This is summarized in the following Corollary:

\begin{corollary}[Pure Neuron Detector]\label{cor:puretermdetector}
Define $f(u) := 2\E\left[(u^\top y)\cdot  (\E[(u^\top y)x^\top] \E[xx^\top]^{-1}x)\cdot (x\otimes x)\right]\\-\E[(u^\top y)^2(x\otimes x)]$. Suppose the \dm is full rank, if $f(u) = 0$ for unit vector $u$, then $u$ must be equal to one of $\pm z_i$.
\end{corollary}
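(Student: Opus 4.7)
The plan is to reduce $u^\top y$ to a weighted sum of the hidden ReLU units so that Lemma~\ref{lem:4thmoment} applies directly, and then to read off the structure of $u$ from the hypothesis $f(u)=0$. In the noiseless setting $y = A\sigma(Wx)$ of this subsection, I define $c := A^\top u \in \R^k$, so that
\[
u^\top y \;=\; c^\top \sigma(Wx) \;=\; \sum_{i=1}^k c_i\,\sigma(w_i^\top x).
\]
With $\hat y := u^\top y$ this is exactly the form required by the second part of Lemma~\ref{lem:4thmoment}. Substituting into the definition of $f(u)$ and invoking that lemma yields
\[
f(u) \;=\; \sum_{1 \le i < j \le k} c_i c_j\, N_{ij}.
\]

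The second step uses the full column rank hypothesis on the distinguishing matrix $N \in \R^{d^2 \times k_2}$: its $k_2 = \binom{k}{2}$ columns $N_{ij}$ are linearly independent, so the identity $\sum_{i<j} c_i c_j N_{ij} = 0$ forces $c_i c_j = 0$ for every pair $i<j$. Equivalently, at most one coordinate of $c$ is nonzero.

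The third step translates this back to a statement about $u$. Since $A$ is invertible by the non-degeneracy assumption on the network, $A^\top$ is injective, so $c = A^\top u = 0$ would contradict $\|u\|=1$; hence exactly one coordinate $c_i$ is nonzero and $c = \lambda e_i$ for some $i \in [k]$ and some $\lambda \ne 0$. Inverting, $u = \lambda (A^\top)^{-1} e_i$ is a nonzero scalar multiple of the $i$-th row of $A^{-1}$ (viewed as a column vector). The unit-norm constraint on $u$ then pins down $\lambda = \pm 1/\|A^{-1}_{[i,:]}\|$, giving $u = \pm z_i$ as claimed.

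Once Lemma~\ref{lem:4thmoment} and the full-rank hypothesis are in hand, the argument above is a short bookkeeping exercise; the real content lies outside the corollary itself. The nontrivial prerequisite is Lemma~\ref{lem:4thmoment}, whose fourth-order structure is exactly what is needed to avoid the accidental cancellations that made the second-order detector of Lemma~\ref{lem:quadraticmoment} insufficient. The truly delicate step --- and what I expect to be the main obstacle downstream --- is justifying that the distinguishing matrix has full column rank in the first place, since this is the condition being invoked in the second step and must be established quantitatively in the later smoothed-analysis argument.
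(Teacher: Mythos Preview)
Your proof is correct and follows essentially the same approach as the paper: the paragraph preceding the corollary sketches exactly this argument (apply Lemma~\ref{lem:4thmoment} with $c = A^\top u$, use linear independence of the $N_{ij}$ to force all $c_ic_j = 0$, conclude that $c$ has at most one nonzero entry, and then invert $A^\top$). Your explicit handling of the case $c=0$ via invertibility of $A$ is a small but welcome addition that the paper leaves implicit.
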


We will call the function $f(u)$ a {\em pure neuron detector}, as $u^\top y$ is a pure neuron if and only if $f(u) = 0$. Therefore, to finish the algorithm we just need to find all solutions for $f(u) = 0$.

\paragraph{Linearization} The main obstacle in solving the system of equations $f(u) = 0$ is that every entry of $f(u)$ is a quadratic function in $u$. The system of equations $f(u) = 0$ is therefore a system of quadratic equations. 
Solving a generic system of quadratic equations is NP-hard. However, in our case this can be solved by a technique that is very similar to the FOOBI algorithm for tensor decomposition~\citep{de2007fourth}. The key idea is to linearize the function by thinking of each degree 2 monomial $u_iu_j$ as a separate variable. Now the number of variables is $k_2+k = {k\choose 2}+k$ and $f$ is linear in this space. In other words, there exists a matrix $T \in \R^{d^2 \times (k_2+k)}$ such that $T\vvvsym{uu^\top} = f(u)$. Clearly, if $u^\top y$ is a pure neuron, then $T\vvvsym{uu^\top} = f(u) = 0$. That is, $\{\vvvsym{z_i z_i^\top}\}$ are all in the nullspace of $T$. Later in Section~\ref{sec:exact} we will prove that the nullspace of $T$ consists of exactly these vectors (and their combinations):

\begin{lemma} \label{lem:recoverspan}
Let $T$ be the unique $\R^{d^2 \times (k_2+k)}$ matrix that satisfies $T\vvvsym{uu^\top} = f(u)$ (where $f(u)$ is defined as in Corollary~\ref{cor:puretermdetector}), suppose the \dm is full rank, then the nullspace of $T$ is exactly the span of $\{\vvvsym{z_iz_i^\top}\}$.
\end{lemma}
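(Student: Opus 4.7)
The plan is a dimension count via two inclusions. First I would pin down the matrix $T$ more explicitly. By Lemma~\ref{lem:4thmoment}, with $c_i = a_i^\top u$,
\[
f(u) \;=\; \sum_{1\le i<j\le k}(a_i^\top u)(a_j^\top u)\,N_{ij},
\]
and each coefficient $(a_i^\top u)(a_j^\top u) = a_i^\top(uu^\top)a_j$ is a linear functional of the symmetric matrix $uu^\top$. Since rank-one symmetric matrices span $\R_{sym}^{k\times k}$, this extends by linearity to a unique $T:\R^{k_2+k}\to\R^{d^2}$ satisfying
\[
T\vvvsym{B} \;=\; \sum_{1\le i<j\le k}(a_i^\top B a_j)\,N_{ij}\qquad\text{for every symmetric }B.\qquad(\star)
\]

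For the "easy" inclusion, the definition of $z_i$ gives $z_i^\top A = \lambda_i e_i^\top$ with $\lambda_i>0$, hence $a_j^\top z_p = \lambda_p\mathbbm{1}\{j=p\}$; so $(a_i^\top z_p)(a_j^\top z_p)=0$ for all $i<j$, giving $f(z_p)=0$, i.e., $\vvvsym{z_pz_p^\top}\in\ker T$. These $k$ vectors are also linearly independent: if $\sum_p\alpha_p z_pz_p^\top=0$, sandwiching by $a_j$ on both sides yields $\alpha_j\lambda_j^2=0$, so $\alpha_j=0$.

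For the reverse (main) inclusion $\ker T\subseteq\mathrm{span}\{\vvvsym{z_iz_i^\top}\}$, take any $b\in\ker T$ and set $B=\matsym{b}$. Formula $(\star)$ gives $\sum_{i<j}(a_i^\top B a_j)N_{ij}=0$, and since the distinguishing matrix $N$ has full column rank by assumption, its columns $\{N_{ij}\}_{i<j}$ are linearly independent, forcing $a_i^\top B a_j = 0$ for all $i<j$; symmetry of $B$ extends this to all $i\ne j$. Thus $A^\top B A$ is diagonal, say $\mathrm{diag}(d_1,\ldots,d_k)$, and
\[
B \;=\; A^{-\top}\mathrm{diag}(d_1,\ldots,d_k)A^{-1} \;=\; \sum_{i=1}^k d_i\,(A^{-\top}e_i)(A^{-\top}e_i)^\top.
\]
Each $A^{-\top}e_i$ is the $i$-th row of $A^{-1}$ written as a column, which is a positive scalar multiple of $z_i$; hence $B\in\mathrm{span}\{z_iz_i^\top\}$ and $b\in\mathrm{span}\{\vvvsym{z_iz_i^\top}\}$. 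The main obstacle in this plan is step $(\star)$ — bootstrapping from the rank-one identity produced by Lemma~\ref{lem:4thmoment} to the general quadratic form on $\R_{sym}^{k\times k}$; once that is in hand, the rest reduces to linear algebra using only the full-rank assumption on $N$ and the invertibility of $A$.
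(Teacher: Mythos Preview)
Your proposal is correct and follows essentially the same route as the paper: both arguments rest on the identity $T\vvvsym{U}=\sum_{i<j}(A^\top U A)_{ij}N_{ij}$ for general symmetric $U$, use the full rank of $N$ to force $A^\top U A$ diagonal, and then identify the diagonal-producing $U$'s with $\mathrm{span}\{z_iz_i^\top\}$ via $U=A^{-\top}DA^{-1}$. Your exposition is slightly more detailed (you explicitly verify linear independence of the $z_iz_i^\top$'s and spell out why $A^{-\top}e_i$ is a scalar multiple of $z_i$), and your flagged ``obstacle'' $(\star)$ is in fact routine---the paper simply asserts it as equation~(\ref{eq:funcU}) by linearity---so you can relax about that step.
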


Based on Lemma~\ref{lem:recoverspan}, we can just estimate the tensor $T$ from the samples we are given, and its smallest singular directions would give us the span of $\{\vvvsym{z_iz_i^\top}\}$.

\paragraph{Finding $z_i$'s from span of $z_iz_i^\top$'s} In order to reduce the problem to a single-layer problem, the final step is to find $z_i$'s from span of $z_iz_i^\top$'s. This is also a step that has appeared in FOOBI and more generally other tensor decomposition algorithms, and can be solved by a simultaneous diagonalization. Let $Z$ be the matrix whose rows are $z_i$'s, which means $Z=\mbox{diag}(\lambda)A^{-1}$. Let $X = Z^\top D_X Z$ and $Y = Z^\top D_Y Z$ be two random elements in the span of $z_iz_i^\top$, where $D_X$ and $D_Y$ are two random diagonal matrices. Both matrices $X$ and $Y$ can be diagonalized by matrix $Z$. In this case, if we compute $XY^{-1} = Z^\top D_XD_Y^{-1}(Z^\top)^{-1}$, since $z_i$ is a column of $Z^\top$, we know
\[
XY^{-1} z_i = Z^\top D_XD_Y^{-1}(Z^\top)^{-1} z_i = Z^\top D_XD_Y^{-1}e_i = \frac{D_X(i,i)}{D_Y(i,i)} Z^\top e_i = \frac{D_X(i,i)}{D_Y(i,i)} z_i.
\]
That is, $z_i$ is an eigenvector of $XY^{-1}$! The matrix $XY^{-1}$ can have at most $k$ eigenvectors and there are $k$ $z_i$'s, therefore the $z_i$'s are the only eigenvectors of $XY^{-1}$.

\begin{lemma}\label{lem:simdiag} Given the span of $z_iz_i^\top$'s, let $X,Y$ be two random matrices in this span, with probability 1 the $z_i$'s are the only eigenvectors of $XY^{-1}$.
\end{lemma}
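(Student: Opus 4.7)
\medskip
\noindent\textbf{Proof proposal.} The plan is to first parametrize the span of $\{z_iz_i^\top\}_{i=1}^k$ explicitly, then reduce the eigenvector question to a diagonalization of a $k\times k$ matrix with generically distinct eigenvalues, and finally argue that the ``bad'' coefficient configurations form a measure-zero set. Let $Z\in\R^{k\times k}$ be the matrix whose $i$-th row is $z_i^\top$, so that the columns of $Z^\top$ are exactly $z_1,\dots,z_k$. Since $Z=\mathrm{diag}(\lambda)A^{-1}$ with $A$ invertible and $\lambda_i>0$, the matrix $Z$ is invertible. A direct computation shows that for any coefficient vector $\alpha\in\R^k$,
\[
\sum_{i=1}^k \alpha_i\, z_i z_i^\top \;=\; Z^\top D_\alpha Z,
\]
where $D_\alpha=\mathrm{diag}(\alpha)$. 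Hence a ``random matrix in the span'' can be written as $X=Z^\top D_\alpha Z$, $Y=Z^\top D_\beta Z$ for random coefficient vectors $\alpha,\beta\in\R^k$ drawn from any distribution absolutely continuous with respect to Lebesgue measure.

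\smallskip
\noindent Next, I would show $Y$ is invertible and $XY^{-1}$ has a clean eigendecomposition. Since $\det(Y)=\det(Z)^2\prod_i\beta_i$ and $\det(Z)\neq 0$, the event $\{Y \text{ singular}\}=\{\exists i:\beta_i=0\}$ has measure zero. On the complementary full-measure event,
\[
XY^{-1} \;=\; Z^\top D_\alpha Z \cdot Z^{-1} D_\beta^{-1} Z^{-\top} \;=\; Z^\top \bigl(D_\alpha D_\beta^{-1}\bigr) Z^{-\top},
\]
which is a similarity to the diagonal matrix $D_\alpha D_\beta^{-1}$ with entries $\alpha_i/\beta_i$. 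In particular, each column $z_i$ of $Z^\top$ is an eigenvector of $XY^{-1}$ with eigenvalue $\alpha_i/\beta_i$, reproducing the calculation already given before the lemma statement.

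\smallskip
\noindent The remaining step is to argue that these are the \emph{only} eigenvectors (up to scalar multiples). Since $XY^{-1}$ is a $k\times k$ matrix, and it has $k$ linearly independent eigenvectors $z_1,\dots,z_k$ (they form the columns of the invertible matrix $Z^\top$), each one-dimensional eigenspace of $XY^{-1}$ is automatically spanned by exactly one $z_i$ \emph{provided} the eigenvalues $\alpha_i/\beta_i$ are pairwise distinct. The ``bad'' set where two eigenvalues coincide is the union over pairs $i\neq j$ of $\{\alpha_i\beta_j-\alpha_j\beta_i=0\}$; each summand is the zero set of a nonzero polynomial in $(\alpha,\beta)\in\R^{2k}$, hence of Lebesgue measure zero, and the finite union is still of measure zero. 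Combining with the earlier null-measure event $\{Y\text{ singular}\}$, with probability $1$ the matrix $XY^{-1}$ has $k$ distinct eigenvalues whose eigenspaces are exactly $\mathrm{span}(z_1),\dots,\mathrm{span}(z_k)$, which is the claim.

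\smallskip
\noindent I don't expect a genuine obstacle here: the argument is really just linear algebra plus a standard ``generic coefficients avoid an algebraic hypersurface'' observation. The only detail worth being precise about is what ``random'' means in the statement --- any continuous distribution suffices --- and checking that $\det(Z)\neq 0$ so that $Y$ is almost surely invertible, both of which follow immediately from the non-degeneracy of $A$ assumed throughout the paper.
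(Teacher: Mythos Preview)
Your proposal is correct and essentially mirrors the paper's own proof: the paper also writes $XY^{-1}=Z^\top D_X D_Y^{-1}(Z^\top)^{-1}$ and then, in a separate lemma, shows that with probability $1$ the diagonal entries of $D_Y$ are nonzero and those of $D_X D_Y^{-1}$ are pairwise distinct by the same ``zero set of a nonzero polynomial has measure zero'' argument. The only cosmetic difference is that the paper parametrizes the random coefficients through the specific orthonormal basis $S$ produced by the algorithm (so the diagonal vectors are $Q\zeta_1,Q\zeta_2$ for an invertible $Q$ and Gaussian $\zeta$'s), whereas you work directly with arbitrary absolutely continuous coefficients $\alpha,\beta$; both routes yield the same conclusion.
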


Using this procedure we can find all the $z_i$'s (up to permutations and sign flip). Without loss of generality we assume $z_i^\top A = \lambda_i e_i^\top$. The only remaining problem is that $\lambda_i$ might be negative. However, this is easily fixable by checking $\E[z_i^\top y] = \lambda_i \E[\sigma(w_i^\top x)]$. Since $\sigma(w_i^\top x)$ is always nonnegative, $\E[z_i^\top y]$ has the same sign as $\lambda_i$, and we can flip $z_i$ if $\E[z_i^\top y]$ is negative.

\subsection{Detailed Algorithm and Guarantees}
\label{sec:detailedalg}
We can now give the full algorithm, see Algorithm~\ref{alg:2layer_simple}. The main steps of this algorithm is as explained in the previous section. Steps 2 - 5 constructs the pure neuron detector and finds the span of $\vvvsym{z_iz_i^\top}$ (as in Corollary~\ref{cor:puretermdetector}); Steps 7 - 9 performs simultaneous diagonalization to get all the $z_i$'s; Steps 11, 12 calls Algorithm~\ref{alg:1layer} to solve the single-layer problem and outputs the correct result.

\begin{algorithm}[!htb]
	\begin{algorithmic}[1]
		\REQUIRE 
		Samples $(x_1,y_1),...,(x_n,y_n)$ generated according to Equation~\eqref{eq:2layer_nonoise}
		\ENSURE Weight matrices $W$ and $A$.
		\STATE \COMMENT{Finding span of $\vvv{z_iz_i^\top}$'s}
		\STATE Estimate empirical moments $\hat{\E}[xx^\top]$, $\hat{\E}[y x^\top]$, $\hat{\E}[y\otimes x^{\otimes 3}]$ and $\hat{\E}[y\otimes y\otimes (x\otimes x)]$.
		\STATE Compute the vector $f(u) = 2 \hat{\E}\big[(u^\top y)\cdot  (\hat{\E}[(u^\top y)x^\top] \hat{\E}[xx^\top]^{-1}x)\cdot (x\otimes x)\big]-\hat{\E}\big[(u^\top  y)^2 (x\otimes x)\big]$ where each entry is expressed as a degree-2 polynomial over $u$.
		\STATE Construct matrix $T\in \R^{d^2\times (k_2+k)}$ such that, $T\vvvsym{uu^\top} = f(u)$.
		\STATE Compute the $k$-least right singular vectors of $T$, denoted by $\vvvsym{U_1},\vvvsym{U_2},\cdots,\vvvsym{U_k}$. Let $S$ be a $k_2+k$ by $k$ matrix, where the $i$-th column of $S$ is vector $\vvvsym{U_i}$. \COMMENT{Here span $S$ is equal to span of $\{\vvvsym{z_iz_i^\top}\}$.}
		\STATE \COMMENT{Finding $z_i$'s from span}
		\STATE Let $X=\matsym{S\zeta_1},\ Y=\matsym{S\zeta_2}$, where $\zeta_1$ and $\zeta_2$ are two independently sampled $k$-dimensional standard Gaussian vectors.
		\STATE Let $z_1,...,z_k$ be eigenvectors of $X Y^{-1}$.
		\STATE For each $z_i$, if $\hat{\E}[z_i^\top y] < 0$ let $z_i \leftarrow -z_i$. \COMMENT{Here $z_i$'s are normalized rows of $A^{-1}$.}
		\STATE \COMMENT{Reduce to 1-Layer Problem}
		\STATE For each $z_i$, let $v_i$ be the output of Algorithm~\ref{alg:1layer} with input $(x_1,z_i^\top y_1),...,(x_n,z_i^\top y_n)$.
		\STATE Let $Z$ be the matrix whose rows are $z_i^\top$'s, $V$ be the matrix whose rows are $v_i^\top$'s.
		\RETURN $V$, $Z^{-1}$.
    \end{algorithmic}
    \caption{Learning Two-layer Neural Networks}
    \label{alg:2layer_simple}
\end{algorithm}

We are now ready to state a formal version of Theorem~\ref{thm:exact_informal}:

\begin{theorem}\label{thm:simple2layer}
Suppose $A, W, \E[xx^\top]$ and the \dm $N$ are all full rank, and Algorithm~\ref{alg:2layer_simple} has access to the exact moments, then the network returned by the algorithm computes exactly the same function as the original neural network.
\end{theorem}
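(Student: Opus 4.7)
The plan is to verify that each stage of Algorithm~\ref{alg:2layer_simple} does exactly what the preceding lemmas claim, and then track how the scale/sign/permutation ambiguities introduced at each stage cancel in the final output by positive homogeneity of ReLU. The argument is almost entirely bookkeeping; all the real content is packaged into Lemmas~\ref{lem:recoverspan} and~\ref{lem:simdiag} and Corollary~\ref{cor:puretermdetector}.

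First I would verify the span-recovery (Steps 2--5). Since $A$ is full rank, the rows $z_i$ of the (normalized) $A^{-1}$ are linearly independent, and an elementary argument shows the symmetric rank-one matrices $\{z_iz_i^\top\}_{i=1}^k$ are themselves linearly independent, so $\text{span}\{\vvvsym{z_iz_i^\top}\}$ has dimension exactly $k$. By Lemma~\ref{lem:recoverspan}, the nullspace of the matrix $T\in\R^{d^2\times(k_2+k)}$ satisfying $T\vvvsym{uu^\top}=f(u)$ equals this span. Therefore the $k$ least right singular vectors of $T$ all have singular value $0$ and their span is precisely $\text{span}\{\vvvsym{z_iz_i^\top}\}$; the matrix $S$ formed in Step 5 is a basis in $\vvvsym{\cdot}$-coordinates. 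Next, for the simultaneous diagonalization (Steps 7--9), every element of this span can be written as $Z^\top D Z$ with $Z$ the matrix of rows $z_i^\top$ and $D$ diagonal, so Step 7 yields $X=Z^\top D_X Z$, $Y=Z^\top D_Y Z$ where the diagonal entries of $D_X,D_Y$ are independent (nondegenerate) Gaussian linear combinations of $\zeta_1,\zeta_2$. With probability $1$, $D_Y$ is invertible and $D_XD_Y^{-1}$ has distinct diagonal entries, so by Lemma~\ref{lem:simdiag} the $z_i$'s are the unique eigenvectors of $XY^{-1}$, recovered up to permutation and a sign.

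Next I would handle the sign fix (Step 9). After the eigendecomposition we have $z_i^\top A = \lambda_i e_{\pi(i)}^\top$ for some permutation $\pi$ and nonzero scalars $\lambda_i$, so $\E[z_i^\top y] = \lambda_i \E[\sigma(w_{\pi(i)}^\top x)]$. Because $w_{\pi(i)}$ is a unit vector and $\E[xx^\top]$ is full rank, $w_{\pi(i)}^\top x$ is not identically zero; by symmetry of $\mathcal{D}$ the random variable $w_{\pi(i)}^\top x$ is symmetric and nontrivial, so $\E[\sigma(w_{\pi(i)}^\top x)]>0$. Hence $\text{sign}(\E[z_i^\top y]) = \text{sign}(\lambda_i)$, and after Step 9 we may assume $\lambda_i>0$ for every $i$.

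Finally I would verify the reduction (Steps 11--13). With $\lambda_i>0$, positive homogeneity of ReLU gives $z_i^\top y = \lambda_i \sigma(w_{\pi(i)}^\top x) = \sigma(\lambda_i w_{\pi(i)}^\top x)$, so $(x, z_i^\top y)$ is exactly a single-layer instance (Equation~\eqref{eq:1layer}) with weight vector $\lambda_i w_{\pi(i)}$. By Lemma~\ref{lem:symmetricmoment} and the correctness of Algorithm~\ref{alg:1layer} on exact moments, $v_i = \lambda_i w_{\pi(i)}$. Writing $\Lambda = \text{diag}(\lambda)$ and letting $P$ be the permutation matrix with $(P)_{i,\pi(i)}=1$, we then have $Z A = \Lambda P$ and $V = \Lambda P W$, so $Z^{-1} = A P^\top \Lambda^{-1}$. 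The returned network therefore evaluates to
\[
Z^{-1}\sigma(Vx) \;=\; A P^\top \Lambda^{-1} \sigma(\Lambda P W x) \;=\; A P^\top \Lambda^{-1} \Lambda \sigma(PWx) \;=\; A P^\top P \sigma(Wx) \;=\; A\sigma(Wx),
\]
using positive homogeneity of ReLU in the second step and $P^\top P=I$ in the fourth.

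The only real conceptual obstacle is already discharged by Lemma~\ref{lem:recoverspan} (controlling the dimension of the nullspace of $T$ via full-rankness of the distinguishing matrix); the main care point in the remaining argument is the bookkeeping of the permutation and positive scalars in the last step, which is why I would carry the factor $P^\top \Lambda^{-1}\,\Lambda\,P$ explicitly through the ReLU rather than absorb it silently.
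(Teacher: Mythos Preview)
Your proposal is correct and follows essentially the same approach as the paper's own proof: invoke Lemma~\ref{lem:recoverspan} for span recovery, Lemma~\ref{lem:simdiag} for the eigenvector step, fix the sign via $\E[z_i^\top y]$, and then cancel the scaling through positive homogeneity of ReLU. The only difference is that the paper absorbs the permutation by a ``without loss of generality'' at the outset, whereas you carry the permutation matrix $P$ explicitly through the final computation; your treatment of why $\E[\sigma(w_{\pi(i)}^\top x)]>0$ (via full-rankness of $\E[xx^\top]$) is also slightly more careful than the paper's one-line justification.
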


It is easy to prove this theorem using the lemmas we have.

\begin{proof}
By Corollary~\ref{cor:puretermdetector}, we know that after Step 5 of Algorithm~\ref{alg:2layer_simple}, the span of columns of $S$ is exactly equal to the span of $\{\vvvsym{z_iz_i^\top}\}$. By Lemma~\ref{lem:simdiag}, we know the eigenvectors of $XY^{-1}$ at Step 8 are exactly the normalized version of rows of $A^{-1}$. Without loss of generality, we will fix the permutation and assume $z_i^\top A = \lambda_i e_i^\top$. In Step 9, we use the fact that $\E[z_i^\top y] = \lambda_i \E[\sigma(w_i^\top x)]$ where $\E[\sigma(w_i^\top x)]$ is always positive because $\sigma$ is the ReLU function. Therefore, after Step 9 we can assume all the $\lambda_i$'s are positive.

Now the output $z_i^\top y = \lambda_i\sigma(w_i^\top x) = \sigma(\lambda_i w_i^\top x)$ (again by property of ReLU function $\sigma$), by the design of Algorithm~\ref{alg:1layer} we know $v_i = \lambda_i w_i$. We also know that $Z = \mbox{diag}(\lambda)A^{-1}$, therefore $Z^{-1} = A\mbox{diag}(\lambda)^{-1}$. Notice that $Z^{-1}\sigma(Vx)=A\mbox{diag}(\lambda)^{-1}\sigma(\mbox{diag}(\lambda)Wx)=A\sigma(Wx).$ These two scaling factors cancel each other, so the two networks compute the same function.
\end{proof} 


\section{Experiments}\label{sec:experiments}
In this section, we provide experimental results to validate the robustness of our algorithm for both Gaussian input distributions as well as more general symmetric distributions such as symmetric mixtures of Gaussians.

There are two important ways in which our implementation differs from our description in Section~\ref{sec:detailedalg}. First, our description of the simultaneous diagonalization step in our algorithm is mostly for simplicity of both stating and proving the algorithm. In practice we find it is more robust to draw $10k$ random samples from the subspace spanned by the last $k$ right-singular vectors of $T$ and compute the CP decomposition of all the samples (reshaped as matrices and stacked together as a tensor) via alternating least squares~\citep{comon2009tensor}. As alternating least squares can also be unstable we repeat this step 10 times and select the best one.
Second, once we have recovered and fixed $A$ we use gradient descent to learn $W$, which compared to Algorithm 1 does a better job of ensuring the overall error will not explode even if there is significant error in recovering $A$. Crucially, these modifications are not necessary when the number of samples is large enough. For example, given 10,000 input samples drawn from a spherical Gaussian and $A$ and $W$ drawn as random $10 \times 10$ orthogonal matrices, our implementation of the original formulation of the algorithm was still able to recover both $A$ and $W$ with an average error of approximately $0.15$ and achieve close to zero mean square error across 10 random trials. 


\subsection{Sample Efficiency}
\begin{figure}
\begin{subfigure}{.33\textwidth}
  \centering
  \includegraphics[width=\textwidth]{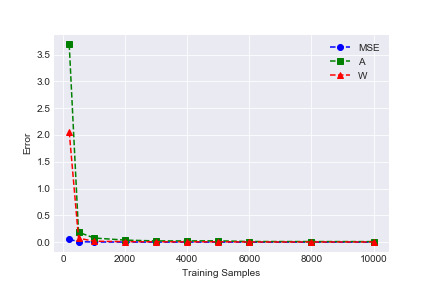}
\end{subfigure}
\begin{subfigure}{.33\textwidth}
  \centering
  \includegraphics[width=\textwidth]{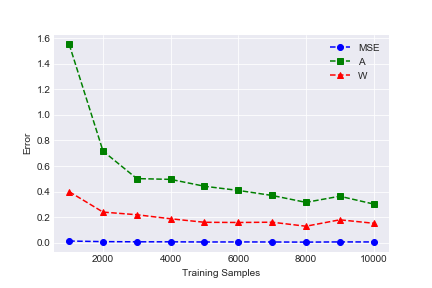}
\end{subfigure}
\begin{subfigure}{.33\textwidth}
  \centering
  \includegraphics[width=\textwidth]{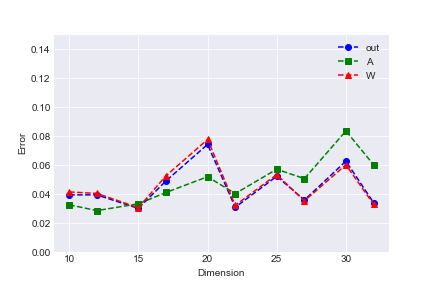}
\end{subfigure}
\caption{Error in recovering $W$, $A$ and outputs (``MSE'') for different numbers of training samples and different dimensions of $W$ and $A$. Each point is the result of averaging across five trials, where on the left $W$ and $A$ are both drawn as random $10\times 10$ orthonormal matrices and in the center as $32\times 32$ orthonormal matrices. On the right, given $10,000$ training samples we plot the square root of the algorithm's error normalized by the dimension of $W$ and $A$, which are again drawn as random orthonormal matrices.  The input distribution is a spherical Gaussian.}
\label{fig:trainingsample}
\end{figure}

First we show that our algorithm does not require a large number of samples when the matrices are not degenerate. In particular, we generate random orthonormal matrices $A$ and $W$ as the ground truth, and use our algorithm to learn the neural network. As illustrated by Figure~\ref{fig:trainingsample}, regardless of the size of $W$ and $A$ our algorithm is able to recover both weight matrices with minimal error so long as the number of samples is a few times of the number of parameters. To measure the error in recovering $A$ and $W$, we first normalize the columns of $A$ and rows of $W$ for both our learned parameters and the ground truth, pair corresponding columns and rows together, and then compute the squared distance between learned and ground truth parameters. Note in the rightmost plot of Figure~\ref{fig:trainingsample}, in order to compare the performance between different dimensions, we further normalize the recovering error by the dimension of $W$ and $A$. It shows that the squared root of normalized error remains stable as the dimension of $A$ and $W$ grows from $10$ to $32$. In Figure~\ref{fig:trainingsample}, we also show the overall mean square error--averaged over all output units--achieved by our learned parameters.

\subsection{Robustness to Noise}
\begin{figure}
\begin{subfigure}{.45\textwidth}
  \centering
  \includegraphics[width=\textwidth]{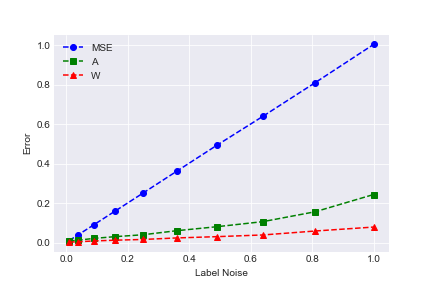}
\end{subfigure}
\begin{subfigure}{.45\textwidth}
  \centering
  \includegraphics[width=\textwidth]{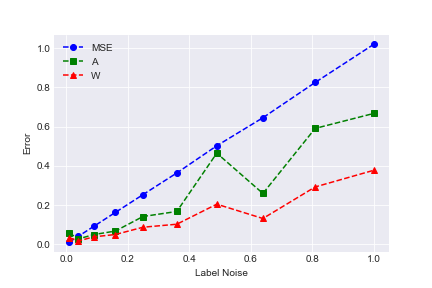}
\end{subfigure}
\caption{Error in recovering $W$, $A$ and outputs (``MSE'') for different amounts of label noise. Each point is the result of averaging across five trials with 10,000 training samples, where for each trial $W$ and $A$ are both drawn as $10 \times 10$ orthonormal matrices. The input distribution on the left is a spherical Gaussian and on the right a mixture of two Gaussians with one component based at the all-ones vector and the other component at its reflection.}
\label{fig:noise}
\end{figure}

Figure~\ref{fig:noise} demonstrates the robustness of our algorithm to label noise $\xi$ for Gaussian and symmetric mixture of Gaussians input distributions. In this experiment, we fix the size of both $A$ and $W$ to be $10 \times 10$ and again generate both parameters as random orthonormal matrices. The overall mean square error achieved by our algorithm grows almost perfectly in step with the amount of label noise, indicating that our algorithm recovers the globally optimal solution regardless of the choice of input distribution.

\subsection{Robustness to Condition Number}
\begin{figure}
\begin{subfigure}{.45\textwidth}
  \centering
  \includegraphics[width=\textwidth]{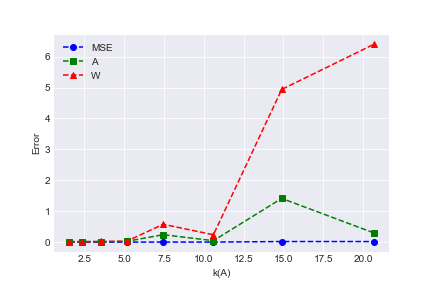}
\end{subfigure}
\begin{subfigure}{.45\textwidth}
  \centering
  \includegraphics[width=\textwidth]{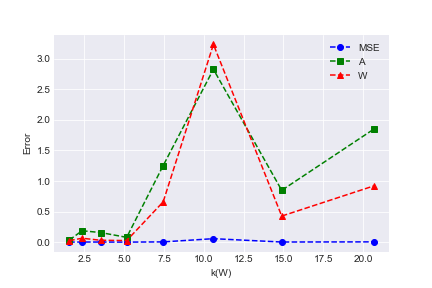}
\end{subfigure}
\caption{Error in recovering $W$, $A$ and outputs (``MSE''), on the left for different levels of conditioning of $W$ and on the right for $A$. Each point is the result of averaging across five trials with 20,000 training samples, where for each trial one parameter is drawn as a random orthonormal matrix while the other as described in Section~\ref{sec:exp:conditioning}. The input distribution is a mixture of Gaussians with two components, one based at the all-ones vector and the other at its reflection.}
\label{fig:conditioning}
\end{figure}
\label{sec:exp:conditioning}
We've already shown that our algorithm continues to perform well across a range of input distributions and even when $A$ and $W$ are high-dimensional. In all previous experiments however, we sampled $A$ and $W$ as random orthonormal matrices so as to control for their conditioning. In this experiment, we take the input distribution to be a random symmetric mixture of two Gaussians and vary the condition number of either $A$ or $W$ by sampling singular value decompositions $U \Sigma V^\top$ such that $U$ and $V$ are random orthonormal matrices and $\Sigma_{ii} = \lambda^{-i}$, where $\lambda$ is chosen based on the desired condition number. Figure~\ref{fig:conditioning} respectively demonstrate that the performance of our algorithm remains steady so long as $A$ and $W$ are reasonably well-conditioned before eventually fluctuating. Moreover, even with these fluctuations the algorithm still recovers $A$ and $W$ with sufficient accuracy to keep the overall mean square error low.

%
%
%
%
%
%

\section{Conclusion}

Optimizing the parameters of a neural network is a difficult problem, especially since the objective function depends on the input distribution which is often unknown and can be very complicated. In this paper, we design a new algorithm using method-of-moments and spectral techniques to avoid the complicated non-convex optimization for neural networks. Our algorithm can learn a network that is of similar complexity as the previous works, while allowing much more general input distributions. 

There are still many open problems. The current result requires output to have the same (or higher) dimension than the hidden layer, and the hidden layer does not have a bias term. Removing these constraints are are immediate directions for future work.  Besides the obvious ones of extending our results to more general distributions and more complicated networks, we are also interested in the relations to optimization landscape for neural networks. In particular, our algorithm shows there is a way to find the global optimal network in polynomial time, does that imply anything about the optimization landscape of the standard objective functions for learning such a neural network, or does it imply there exists an alternative objective function that does not have any local minima? We hope this work can lead to new insights for optimizing a neural network.

\clearpage

\bibliography{dist}
\bibliographystyle{apalike}

\newpage
\appendix

\section{Details of Exact Analysis}
\label{sec:exact}
In this section, we first provide the missing proofs for the lemmas appeared in Section \ref{sec:alg}.
Then we discuss how to handle the noise case (i.e. $y=\sigma(Wx)+\xi$)
and give the corresponding algorithm (Algorithm \ref{alg:2layer_general}). At the end we also briefly discuss how to handle the case when the matrix $A$ has more rows than columns (more outputs than hidden units).

Again, throughout the section when we write $\E[\cdot]$, the expectation is taken over the randomness of $x\sim \mathcal{D}$ and noise $\xi$.

\subsection{Missing Proofs for Section \ref{sec:alg}}
\topic{Single-layer}
To get rid of the non-linearities like ReLU,
we use the property of the symmetric distribution (similar to \citep{goel2018learning}).
Here we provide a more general version (Lemma \ref{lm:symproperty}) instead of proving the specific Lemma \ref{lem:symmetricmoment}.
Note that Lemma 1 is the special case when $a=w$ and $p=q=1$ (here $\xi$ does not affect the result since it has zero mean and is independent with $x$, thus $\E[yx]=\E[\sigma(w^\top x)x]$).
\begin{lemma}\label{lm:symproperty}
Suppose input $x$ comes from a symmetric distribution, for any vector $a\in \R^d$ and any non-negative integers $p$ and $q$ satisfying that $p+q$ is an even number, we have
$$\E\big[\big(\sigma(a^\top x)\big)^p x^{\otimes q}\big]=\frac{1}{2}\E[(a^\top x)^p x^{\otimes q}],$$
where the expectation is taken over the input distribution.
\end{lemma}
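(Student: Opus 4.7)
The plan is to exploit the symmetry of $\mathcal{D}$ by averaging the integrand at $x$ and $-x$, and then to reduce the identity to a pointwise algebraic statement about ReLU that uses the parity assumption $p+q$ even.

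Concretely, I would first invoke $x \disequal -x$ to write
\[
\E\big[(\sigma(a^\top x))^p x^{\otimes q}\big] \;=\; \tfrac{1}{2}\,\E\big[(\sigma(a^\top x))^p x^{\otimes q}\big] + \tfrac{1}{2}\,\E\big[(\sigma(-a^\top x))^p (-x)^{\otimes q}\big].
\]
Using $(-x)^{\otimes q} = (-1)^q\, x^{\otimes q}$, this becomes
\[
\tfrac{1}{2}\,\E\!\left[\big(\sigma(a^\top x)^p + (-1)^q \sigma(-a^\top x)^p\big)\, x^{\otimes q}\right].
\]
So it suffices to show the pointwise identity $\sigma(t)^p + (-1)^q \sigma(-t)^p = t^p$ for all $t \in \R$, where $t := a^\top x$.

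Next I would verify this identity by cases. If $t \geq 0$, then $\sigma(t)=t$ and $\sigma(-t)=0$, giving $t^p$ directly. If $t<0$, then $\sigma(t)=0$ and $\sigma(-t)=-t>0$, giving $(-1)^q(-t)^p = (-1)^{p+q} t^p$. Since $p+q$ is even by hypothesis, this is also $t^p$. Substituting back yields
\[
\E\big[(\sigma(a^\top x))^p x^{\otimes q}\big] = \tfrac{1}{2}\,\E\big[(a^\top x)^p x^{\otimes q}\big],
\]
as desired. Lemma \ref{lem:symmetricmoment} follows as the special case $p=q=1$ (here the $\xi$ term in $y$ drops out because $\xi$ has zero mean and is independent of $x$).

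The argument has no real obstacle — the only thing to be careful about is the parity bookkeeping in the case $t<0$, which is exactly where the hypothesis that $p+q$ is even is used. A minor technical point is ensuring that the moments $\E[|a^\top x|^p \|x\|^q]$ are finite so that splitting expectations is justified, but this is a standard regularity assumption implicit in the method-of-moments framework used throughout the paper.
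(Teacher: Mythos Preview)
Your proof is correct and follows the same symmetrization idea as the paper's: average the integrand at $x$ and $-x$ using $x \disequal -x$, then verify a pointwise identity for $\sigma$. Your organization is slightly cleaner---you extract the factor $(-1)^q$ and handle both parity cases at once via $(-1)^{p+q}=1$, whereas the paper splits explicitly into the two sub-cases $p,q$ both even versus both odd---but the argument is essentially identical.
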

\begin{proof}
Since input $x$ comes from a symmetric distribution, we know that $\E\big[\big(\sigma(a^\top x)\big)^p x^{\otimes q}\big]=\E\big[\big(\sigma(-a^\top x)\big)^p (-x)^{\otimes q}\big]$.
Thus, we have
$$\E\big[\big(\sigma(a^\top x)\big)^p x^{\otimes q}\big]=\frac{1}{2}\Big(\E\big[\big(\sigma(-a^\top x)\big)^p (-x)^{\otimes q}\big]+\E\big[\big(\sigma(a^\top x)\big)^p x^{\otimes q}\big]\Big).$$
There are two cases to consider: $p$ and $q$ are both even numbers or both odd numbers.
\begin{enumerate}
  \item For the case where $p$ and $q$ are even numbers, we have
    \begin{align*}
        &\frac{1}{2}\big(\E\big[\big(\sigma(-a^\top x)\big)^p (-x)^{\otimes q}\big]+\E\big[\big(\sigma(a^\top x)\big)^p x^{\otimes q}\big]\big)\\
        =&\frac{1}{2}\big(\E\big[\big(\sigma(-a^\top x)\big)^p x^{\otimes q}\big]+\E\big[\big(\sigma(a^\top x)\big)^p x^{\otimes q}\big]\big)\\
        =&\frac{1}{2}\E\Big[\Big(\big(\sigma(-a^\top x)\big)^p +\big(\sigma(a^\top x)\big)^p\Big) x^{\otimes q}\Big].
    \end{align*}
    If $(a^\top x)\leq 0$, we know $\big(\sigma(-a^\top x)\big)^p +\big(\sigma(a^\top x)\big)^p= (a^\top x)^p +0 = (a^\top x)^p.$ Otherwise, we have $\big(\sigma(-a^\top x)\big)^p +\big(\sigma(a^\top x)\big)^p= 0+(a^\top x)^p = (a^\top x)^p.$ Thus,
    \begin{align*}
        \E\big[\big(\sigma(a^\top x)\big)^p x^{\otimes q}\big]
        &=\frac{1}{2}\E\Big[\Big(\big(\sigma(-a^\top x)\big)^p +\big(\sigma(a^\top x)\big)^p\Big) x^{\otimes q}\Big]\\
        &=\frac{1}{2}\E[(a^\top x)^p x^{\otimes q}].
    \end{align*}
  \item For the other case where $p$ and $q$ are odd numbers, we have
    \begin{align*}
        &\frac{1}{2}\big(\E\big[\big(\sigma(-a^\top x)\big)^p (-x)^{\otimes q}\big]+\E\big[\big(\sigma(a^\top x)\big)^p x^{\otimes q}\big]\big)\\
        =&\frac{1}{2}\E\Big[\Big(-\big(\sigma(-a^\top x)\big)^p +\big(\sigma(a^\top x)\big)^p\Big) x^{\otimes q}\Big].
    \end{align*}
    Similarly, if $(a^\top x)\leq 0$, we know $-\big(\sigma(-a^\top x)\big)^p +\big(\sigma(a^\top x)\big)^p= -(-a^\top x)^p+0 =(a^\top x)^p$. Otherwise, we have $-\big(\sigma(-a^\top x)\big)^p +\big(\sigma(a^\top x)\big)^p= 0+(a^\top x)^p =(a^\top x)^p$. Thus,
    \begin{align*}
        \E\big[\big(\sigma(a^\top x)\big)^p x^{\otimes q}\big]
        &=\frac{1}{2}\E\Big[\Big(-\big(\sigma(-a^\top x)\big)^p +\big(\sigma(a^\top x)\big)^p\Big) x^{\otimes q}\Big]\\
        &=\frac{1}{2}\E[(a^\top x)^p x^{\otimes q}].
    \end{align*}
\end{enumerate}
\end{proof}

\topic{Pure neuron detector} The first step in our algorithm is to construct a pure neuron detector based on Lemma~\ref{lem:quadraticmoment} and Lemma~\ref{lem:4thmoment}. We will provide proofs for these two lemmas here.

\begin{proofof}{Lemma \ref{lem:quadraticmoment}}
This proof easily follows from Lemma \ref{lm:symproperty}. Setting $a=w$, $p=2$ and $q=0$ in Lemma~\ref{lm:symproperty}, we have $\E[\hat{y}^2] = \frac{1}{2} w^\top \E[xx^\top] w$. Similarly, we also know
$\E[\hat{y}x^\top]=\frac{1}{2} w^\top \E[xx^\top]$ and $\E[\hat{y}x]=\frac{1}{2} \E[xx^\top] w$.
Thus, we have
$\E[\hat{y}^2] = 2 \E[\hat{y}x^\top] \E[xx^\top]^{-1} \E[\hat{y}x]$.
\end{proofof}

\begin{proofof}{Lemma \ref{lem:4thmoment}}
Here, we only prove the second equation, since the first equation is just a special case of the second equation. 
First, we rewrite $\hat{y} = \sum_{i=1}^k c_i \sigma(w_i^\top x)=u^\top y$ by letting $u^\top A=c^\top$.
Then we transform these two terms in the LHS as follows. Let's look at $\detector$ first. For $\E[(u^\top y)x^\top ]\E[xx^\top]^{-1}$, we have 
\begin{align*}
&\E\big[(u^\top  y)x^\top \big]\E\big[xx^\top \big]^{-1} \\
=& \E\big[(u^\top  A\sigma(Wx))x^\top \big]\E\big[xx^\top \big]^{-1}\\
=& \frac{1}{2}u^\top AW\E\big[xx^\top\big]\E\big[xx^\top \big]^{-1}\\
=& \frac{1}{2}u^\top AW
\end{align*}

For any vector $q\in \R^d$, consider $2\E\big[(u^\top y)\cdot  (q^\top x)\cdot (x\otimes x)\big]$. We have
\begin{align*}
&2\E\big[(u^\top y)\cdot  (q^\top x)\cdot (x\otimes x)\big]\\
=& 2\E[(u^\top A\sigma(Wx))(q^\top x)(x\otimes x)]\\
=& \E\big[(u^\top AWx)(q^\top x) (x\otimes x)\big]
\end{align*}

Let $q^\top =\E\big[(u^\top  y)x^\top \big]\E\big[xx^\top \big]^{-1}=\frac{1}{2}u^\top AW$, we have 
\begin{align}
&2\detector \notag \\
=& \frac{1}{2}\E\big[(u^\top AWx)^2 (x\otimes x)\big]\notag\\
=& \frac{1}{2}\E\big[\inner{A^\top u}{Wx}^2 (x\otimes x)\big] \notag\\
=&\frac{1}{2}\sum_{1\leq i\leq k}(A^\top u)_i^2\E\big[(w_i^\top x)^2 (x\otimes x)\big]
+\sum_{1\leq i<j\leq k}(A^\top u)_i(A^\top u)_j\E\big[(w_i^\top x)(w_j^\top x)  (x\otimes x)\big]\notag\\
=&\frac{1}{2}\sum_{1\leq i\leq k}(A^\top uu^\top A)_{ii}\E\big[(w_i^\top x)^2 (x\otimes x)\big]
+\sum_{1\leq i<j\leq k}(A^\top uu^\top A)_{ij}\E\big[(w_i^\top x)(w_j^\top x)  (x\otimes x)\big]. \label{eq:1stterm}
\end{align}
where the second equality holds due to Lemma~\ref{lm:symproperty}.

Now, let's look at the second term $\E\big[(u^\top y)^2 (x\otimes x)\big]$.
\begin{align}
&\E\big[(u^\top y)^2 (x\otimes x)\big] \notag\\
=&\E\big[(u^\top A\sigma(Wx))^2 (x\otimes x)\big] \notag\\
=&\E\big[\inner{A^\top u}{\sigma(Wx)}^2 (x\otimes x) \big] \notag\\
=&\sum_{1\leq i\leq k}(A^\top u)_i^2\E\big[\sigma(w_i^\top x)^2 (x\otimes x) \big]
+2\sum_{1\leq i<j\leq k}(A^\top u)_i(A^\top u)_j\E\big[\sigma(w_i^\top x)\sigma(w_j^\top x) (x\otimes x) \big] \notag\\
=&\sum_{1\leq i\leq k}(A^\top  uu^\top A)_{ii}\E\big[\sigma(w_i^\top x)^2 (x\otimes x) \big]
+2\sum_{1\leq i<j\leq k}(A^\top uu^\top A)_{ij}\E\big[\sigma(w_i^\top x)\sigma(w_j^\top x) (x\otimes x) \big] \notag\\
=&\frac{1}{2}\sum_{1\leq i\leq k}(A^\top uu^\top A)_{ii}\E\big[(w_i^\top x)^2 (x\otimes x)\big]
+2\sum_{1\leq i<j\leq k}(A^\top uu^\top A)_{ij}\E\big[\sigma(w_i^\top x)\sigma(w_j^\top x) (x\otimes x) \big].\label{eq:2ndterm}
\end{align}

Now, we subtract (\ref{eq:1stterm}) by (\ref{eq:2ndterm}) to obtain
\begin{align}
&2\detector-\E\big[(u^\top y)^2 (x\otimes x)\big] \notag\\
=&\sum_{1\leq i<j\leq k}(A^\top uu^\top A)_{ij}\E\big[(w_i^\top x)(w_j^\top x)  (x\otimes x)\big]
-2\sum_{1\leq i<j\leq k}(A^\top uu^\top A)_{ij}\E\big[\sigma(w_i^\top x)\sigma(w_j^\top x) (x\otimes x) \big] \notag\\
=&\sum_{1\leq i<j\leq k}(A^\top uu^\top A)_{ij}\E[\big[(w_i^\top x)(w_j^\top x)  (x\otimes x)\indic\big]  \label{eq:usediff}\\
=&\sum_{1\leq i<j\leq k}(A^\top uu^\top A)_{ij}N_{ij}, \label{eq:usematdef}
\end{align}
where (\ref{eq:usediff}) uses (\ref{eq:diffneuron}) of the following Lemma \ref{lm:sym2},
and (\ref{eq:usematdef}) uses the definition of \dm $N$ (Definition \ref{def:dm}).
\end{proofof}

\begin{lemma}\label{lm:sym2}
Given input $x$ coming from a symmetric distribution, for any vector $a, b\in \R^d$, we have
\begin{equation*}
\frac{1}{2}\E\big[(a^\top x)(b^\top x)\big]-\E\big[\sigma(a^\top x)\sigma(b^\top x)\big]=\frac{1}{2}\E\big[(a^\top x)(b^\top x)\mathbbm{1}\{a^\top x b^\top x\leq 0\}\big]
\end{equation*}
and
\begin{equation}\label{eq:diffneuron}
\frac{1}{2}\E\big[(a^\top x)(b^\top x) (x\otimes x)\big]-\E\big[\sigma(a^\top x)\sigma(b^\top x) (x\otimes x)\big]=\frac{1}{2}\E\big[(a^\top x)(b^\top x) (x\otimes x)\mathbbm{1}\{a^\top x b^\top x\leq 0\}\big],
\end{equation}
where the expectation is taken over the input distribution.
\end{lemma}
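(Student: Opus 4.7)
The plan is to exploit the symmetry of $\mathcal{D}$ to rewrite $\E[\sigma(a^\top x)\sigma(b^\top x)]$ as the average of itself and a copy with $x$ replaced by $-x$, and then observe that this pointwise sum collapses to an indicator of the ``same sign'' event. Concretely, because $x \disequal -x$, I would first write
\[
\E\bigl[\sigma(a^\top x)\sigma(b^\top x)\bigr] \;=\; \tfrac{1}{2}\,\E\bigl[\sigma(a^\top x)\sigma(b^\top x) \,+\, \sigma(-a^\top x)\sigma(-b^\top x)\bigr].
\]
This is the only place the symmetry assumption enters; the rest is a deterministic pointwise identity.

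The key pointwise step is a case analysis on the signs of $a^\top x$ and $b^\top x$. If both are positive, the first term equals $(a^\top x)(b^\top x)$ and the second vanishes; if both are negative, the first vanishes and the second equals $(-a^\top x)(-b^\top x) = (a^\top x)(b^\top x)$; if the two have opposite (strict) signs, both terms vanish; if either factor is zero, both sides are zero. In all cases,
\[
\sigma(a^\top x)\sigma(b^\top x) + \sigma(-a^\top x)\sigma(-b^\top x) \;=\; (a^\top x)(b^\top x)\,\mathbbm{1}\{a^\top x\, b^\top x > 0\}.
\]
Since the product $(a^\top x)(b^\top x)$ vanishes on the boundary event $\{a^\top x\,b^\top x = 0\}$, I can freely replace the strict inequality with $\ge 0$ (and hence write the complementary indicator as $1 - \mathbbm{1}\{a^\top x\, b^\top x \le 0\}$) without changing the expectation.

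Plugging back in yields
\[
\E\bigl[\sigma(a^\top x)\sigma(b^\top x)\bigr] \;=\; \tfrac{1}{2}\E\bigl[(a^\top x)(b^\top x)\bigr] \;-\; \tfrac{1}{2}\E\bigl[(a^\top x)(b^\top x)\mathbbm{1}\{a^\top x\, b^\top x \le 0\}\bigr],
\]
which is exactly the first identity after rearranging. For the tensor version, the same argument works verbatim once I insert the even factor $x^{\otimes 2}$ into every expectation: symmetrizing requires $x \mapsto -x$, and $(-x)\otimes(-x) = x\otimes x$, so $x^{\otimes 2}$ simply passes through the averaging step untouched. I expect no genuine obstacle here; the only thing to be careful about is the boundary case $a^\top x\, b^\top x = 0$, which is handled for free because $(a^\top x)(b^\top x)$ already vanishes there, so the choice between $\mathbbm{1}\{\cdot > 0\}$ and $\mathbbm{1}\{\cdot \ge 0\}$ (equivalently, between $\le 0$ and $< 0$ on the complement) is immaterial.
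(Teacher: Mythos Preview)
Your proposal is correct and follows essentially the same approach as the paper: symmetrize $\E[\sigma(a^\top x)\sigma(b^\top x)]$ under $x\mapsto -x$ and use the pointwise identity $\sigma(u)\sigma(v)+\sigma(-u)\sigma(-v)=uv\,\mathbbm{1}\{uv>0\}$, then handle the boundary and rearrange. The paper's write-up inserts the indicator $\{a^\top x\,b^\top x>0\}$ into the $\sigma\sigma$ expectation before symmetrizing rather than after, but the underlying argument is identical.
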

\begin{proof}
Here we just prove the first identity, because the proof of the second one is almost identical. First, we rewrite $\frac{1}{2}\E\big[(a^\top x)(b^\top x)\big]$ as follows
$$\frac{1}{2}\E\big[(a^\top x)(b^\top x)\big]= \frac{1}{2}\E\big[(a^\top x)(b^\top x)\mathbbm{1}\{a^\top x b^\top x\leq 0\}\big]+\frac{1}{2}\E\big[(a^\top x)(b^\top x)\mathbbm{1}\{a^\top x b^\top x> 0\}\big].$$
Thus, we only need to show that $\frac{1}{2}\E\big[(a^\top x)(b^\top x)\mathbbm{1}\{a^\top x b^\top x> 0\}\big]=\E\big[\sigma(a^\top x)\sigma(b^\top x)\big]$. It's clear that
$$\E\big[\sigma(a^\top x)\sigma(b^\top x)\big]=\E\big[\sigma(a^\top x)\sigma(b^\top x)\mathbbm{1}\{a^\top x b^\top x> 0\}\big].$$
Since input $x$ comes from a symmetric distribution, we have
\begin{align*}
\E\big[\sigma(a^\top x)\sigma(b^\top x)\mathbbm{1}\{a^\top x b^\top x> 0\}\big]
=&\E\big[\sigma(-a^\top x)\sigma(-b^\top x)\mathbbm{1}\{a^\top x b^\top x> 0\}\big]\\
=&\frac{1}{2}\Big(\E\big[\sigma(a^\top x)\sigma(b^\top x)\mathbbm{1}\{a^\top x b^\top x> 0\}\big]\\
&+\E\big[\sigma(-a^\top x)\sigma(-b^\top x)\mathbbm{1}\{a^\top x b^\top x> 0\}\big]\Big)\\
=&\frac{1}{2}\E\Big[\big(\sigma(a^\top x)\sigma(b^\top x)+\sigma(-a^\top x)\sigma(-b^\top x)\big) \mathbbm{1}\{a^\top x b^\top x> 0\}\Big]\\
=&\frac{1}{2}\E\big[(a^\top x)(b^\top x)\mathbbm{1}\{a^\top x b^\top x> 0\}\big].
\end{align*}
When $a^\top x b^\top x>0$, we know $a^\top x$ and $b^\top x$ are both positive or both negative. In either case, we know that $\sigma(a^\top x)\sigma(b^\top x)+\sigma(-a^\top x)\sigma(-b^\top x)=(a^\top x)(b^\top x)$. Thus, we have
\begin{align*}
\E\big[\sigma(a^\top x)\sigma(b^\top x)\big]=&\E\big[\sigma(a^\top x)\sigma(b^\top x)\mathbbm{1}\{a^\top x b^\top x> 0\}\big]\\
=&\frac{1}{2}\E\big[(a^\top x)(b^\top x)\mathbbm{1}\{a^\top x b^\top x> 0\}\big],
\end{align*}
which finished our proof.
\end{proof}

\topic{Finding span} Now, we find the span of $\{\vvvsym{z_iz_i^\top}\}$.
First, we recall that $f(u)=2\detector-\E\big[(u^\top y)^2 (x\otimes x)\big]$.
Then, according to (\ref{eq:usematdef}), we have
\begin{equation*}
f(u)=2\detector-\E\big[(u^\top y)^2 (x\otimes x)\big] =\sum_{1\leq i<j\leq k}(A^\top uu^\top A)_{ij}N_{ij}.
\end{equation*}
It is not hard to verify that $u^\top y$ is a pure neuron if and only if $f(u)=0$.
Note that $f(u)=0$ is a system of quadratic equations.
So we linearize it by increasing the dimension (i.e., consider $u_iu_j$ as a single variable) similar to the FOOBI algorithm.
Thus the number of variable is $\binom{k}{2} +k=k_2+k$, i.e.,
\begin{equation}\label{eq:funcU}
\exists T \in \R^{d^2 \times (k_2+k)} \mathrm{~such~ that~} T\vvvsym{U} = f(U)= \sum_{1\leq i<j\leq k}(A^\top UA)_{ij}N_{ij}.
\end{equation}
Now, we prove the Lemma \ref{lem:recoverspan} which shows the null space of $T$ is exactly
the span of $\{\vvvsym{z_iz_i^\top}\}$.

\begin{proofof}{Lemma \ref{lem:recoverspan}}
We divide the proof to the following two cases:
\begin{enumerate}
  \item For any vector $\vvvsym{U}$ belongs to the null space of $T$, we have $T\vvvsym{U}=0$.
    Note that the RHS of (\ref{eq:funcU}) equals to 0 if and only if $A^\top UA$ is a diagonal matrix
    since the \dm $N$ is full column rank and $A^\top UA$ is symmetric.
    Thus $\vvvsym{U}$ belongs to the span of $\{\vvvsym{z_iz_i^\top}\}$
    since $U=Z^\top D Z$ for some diagonal matrix $D$.

  \item For any vector $\vvvsym{U}$ belonging to the span of $\{\vvvsym{z_iz_i^\top}\}$,
    $U$ is a linear combination of $z_iz_i^\top$'s.
    Furthermore, $T \vvvsym{U}$ is a linear combination of $T \vvvsym{z_iz_i^\top}$.
    Note that $A^\top z_i$ only has one non-zero entry due to the definition of $z_i$, for any $i\in[k]$. Thus all coefficients in the RHS of (\ref{eq:funcU}) are 0.
    We get $T\vvvsym{U}=0$.
\end{enumerate}
\end{proofof}

\topic{Finding $z_i$'s}
Now, we prove the final Lemma \ref{lem:simdiag} which finds all $z_i$'s from the span of $\{\vvvsym{z_iz_i^\top}\}$ by using simultaneous diagonalization.
Given all $z_i$'s, this two-layer network can be reduced to a single-layer one.
Then one can use Algorithm \ref{alg:1layer} to recover the first layer parameters $w_i$'s.

\begin{proofof}{Lemma \ref{lem:simdiag}}
As we discussed before this lemma, we have $XY^{-1} = Z^\top D_XD_Y^{-1}(Z^\top)^{-1}$.
According to the following Lemma \ref{lm:distinct} (i.e., all diagonal elements of $D_x D_y^{-1}$ are non-zero and distinct), the matrix $XY^{-1}$ have $k$ eigenvectors and there are $k$ $z_i$'s, therefore $z_i$'s are the only eigenvectors of $XY^{-1}$.
\end{proofof}

\begin{lemma}\label{lm:distinct}
With probability $1$, all diagonal elements of $D_X$ and $D_Y$ are non-zero and all diagonal elements of $D_X D_Y^{-1}$ are distinct, where $X = Z^\top D_X Z$ and $Y = Z^\top D_Y Z$ are defined in Line 7 of Algorithm \ref{alg:2layer_simple}.
\end{lemma}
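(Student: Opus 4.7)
}
The plan is to express the diagonal entries of $D_X$ and $D_Y$ as linear functionals of the Gaussian vectors $\zeta_1$ and $\zeta_2$, and then argue that each ``bad'' event (a zero diagonal entry, or two equal ratios) is the vanishing locus of a nontrivial polynomial in the $\zeta$'s, hence has Lebesgue measure zero.

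First I would set up coordinates. Since $A^{-1}$ is full rank the vectors $z_1,\ldots,z_k$ are linearly independent, hence so are the symmetric rank-one matrices $z_1z_1^\top,\ldots,z_kz_k^\top$ (they form a basis of the $k$-dimensional subspace spanned by the $\vvvsym{z_iz_i^\top}$). By Lemma~\ref{lem:recoverspan} the columns of $S$ span this same subspace, so there exists an invertible matrix $R\in\R^{k\times k}$ with
\[
S \;=\; \bigl[\vvvsym{z_1z_1^\top},\ldots,\vvvsym{z_kz_k^\top}\bigr]\,R.
\]
Consequently $\matsym{S\zeta_1}=\sum_{i=1}^k (R\zeta_1)_i\, z_iz_i^\top = Z^\top D_X Z$, so $(D_X)_{ii}=(R\zeta_1)_i$, and similarly $(D_Y)_{ii}=(R\zeta_2)_i$.

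Next I would handle the two claims separately. For the nonzero claim, each $(D_X)_{ii}=(R\zeta_1)_i$ is a nontrivial linear functional of $\zeta_1$ (the $i$-th row of $R$ is nonzero because $R$ is invertible), so under the Gaussian law of $\zeta_1$ it vanishes with probability zero; the same holds for $(D_Y)_{ii}$. For the distinctness claim, I would observe that $(D_X)_{ii}/(D_Y)_{ii}\neq(D_X)_{jj}/(D_Y)_{jj}$ is equivalent to
\[
p_{ij}(\zeta_1,\zeta_2)\;:=\;(R\zeta_1)_i\,(R\zeta_2)_j-(R\zeta_1)_j\,(R\zeta_2)_i\;\neq\;0,
\]
which is a bilinear polynomial in $(\zeta_1,\zeta_2)$. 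I would verify $p_{ij}$ is not identically zero by choosing standard basis inputs $\zeta_1=e_s,\zeta_2=e_t$: this yields $R_{i,s}R_{j,t}-R_{j,s}R_{i,t}$, which cannot vanish for all $(s,t)$ since rows $i$ and $j$ of the invertible matrix $R$ are linearly independent.

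Finally I would conclude by the standard fact that the zero set of a nonzero multivariate polynomial has Lebesgue measure zero, so each event $\{(D_X)_{ii}=0\}$, $\{(D_Y)_{ii}=0\}$, and $\{p_{ij}=0\}$ has probability $0$ under the independent Gaussian measure on $(\zeta_1,\zeta_2)$. A union bound over the $O(k^2)$ such events yields the conclusion with probability $1$. The only substantive step is reducing everything to the invertible change of basis $R$; once that is in place, the probabilistic part is essentially a one-line genericity argument.
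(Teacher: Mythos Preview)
Your proposal is correct and follows essentially the same approach as the paper: the paper introduces a matrix $Q$ whose $i$-th column records the diagonal of $D_i$ (where $U_i=Z^\top D_i Z$), observes $D_X=\mbox{diag}(Q\zeta_1)$ and $D_Y=\mbox{diag}(Q\zeta_2)$, and then argues each bad event cuts out a lower-dimensional set of measure zero. Your change-of-basis matrix $R$ is exactly the paper's $Q$, and your explicit verification that $p_{ij}$ is not the zero polynomial (via linear independence of rows of $R$) is a slightly more careful version of the paper's ``lower-dimensional manifold'' remark.
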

\begin{proof}
First, we know there exist diagonal matrices $\{D_i: i\in[k]\}$ such that $\matsym{\vvvsym{U_i}}=Z^{\top}D_iZ$ for all $i\in[k]$,
where $\{\vvvsym{U_i}: i\in[k]\}$ are the $k$-least right singular vectors of $T$ (see Line 5 of Algorithm \ref{alg:2layer_simple}).
Then, let the vector $d_i\in \R^k$ be the diagonal elements of $D_i$, for all $i\in k$.
Let matrix $Q\in \R^{k\times k}$ be a matrix where its $i$-th column is $d_i$.
Then $D_X=\mbox{diag}(Q\zeta_1)$ and $D_Y=\mbox{diag}(Q\zeta_2)$,
where $\zeta_1$ and $\zeta_2$ are two random $k$-dimensional standard Gaussian vectors
(see Line 7 of Algorithm \ref{alg:2layer_simple}).

Since $\{\vvvsym{U_i}: i\in[k]\}$ are singular vectors of $T$, we know $d_1, d_2,\cdots, d_k$ are independent.
Thus, $Q$ has full rank and none of its rows are zero vectors.
Let $i$-th row of $Q$ be $q_i^\top$. Let's consider $D_X$ first.
In order for $i$-th diagonal element of $D_X$ to be zero, we need $q_i^\top \zeta_1=0$.
Since $q_i$ is not a zero vector, we know the solution space of $q_i \zeta_1=0$ is a lower-dimension manifold in $\R^k$, which has zero measure.
Since finite union of zero-measure sets still has measure zero, the event that zero valued elements exist in the diagonal of $D_X$ or $D_Y$ happens with probability zero.

If $i$-th and $j$-th diagonal elements of $D_X D_Y^{-1}$ have same value, we have $q_i^\top \zeta_1 (q_i^\top \zeta_2)^{-1}=q_j^\top \zeta_1 (q_j^\top \zeta_2)^{-1}$.
Again, we know the solution space is a lower-dimensional manifold in $\R^{2k}$ space, with measure zero. Since finite union of zero-measure sets still has measure zero, the event that duplicated diagonal elements exist in $D_X D_Y^{-1}$ happens with probability zero.
\end{proof}

\subsection{Noisy Case}

Now, we discuss how to handle the noisy case (i.e. $y=\sigma(Wx)+\xi$).
The corresponding algorithm is described in Algorithm \ref{alg:2layer_general}.
Note that the noise $\xi$ only affects the first two steps, i.e., pure neuron detector (Lemma \ref{lem:4thmoment}) and finding span of $\vvvsym{z_iz_i^\top}$ (Lemma \ref{lem:recoverspan}).
It does not affect the last two steps, i.e., finding $z_i$'s from the span (Lemma \ref{lem:simdiag}) and learning the reduced single-layer network.
Because Lemma \ref{lem:simdiag} is independent of the model and Lemma \ref{lem:symmetricmoment} is linear wrt. noise $\xi$, which has zero mean and is independent of input $x$.

Many of the steps in Algorithm~\ref{alg:2layer_general} are designed with the robustness of the algorithm in mind. For example, in step 5 for the exact case we just need to compute the null space of $T$. However if we use the empirical moments the null space might be perturbed so that it has small singular values. The separation of the input samples into two halves is also to avoid correlations between the steps, and is not necessary if we have the exact moments.

\begin{algorithm}[!tb]
	\begin{algorithmic}[1]
		\REQUIRE 
		Samples $(x_1,y_1),...,(x_n,y_n)$ generated according to Equation~\eqref{eq:label_gene}
		\ENSURE Weight matrices $W$ and $A$.
		\STATE \COMMENT{\emph{Finding span of $\vvvsym{z_iz_i^\top}$}}
		\STATE Use first half of samples (i.e. $\{(x_i,y_i)\}_{i=1}^{n/2}$) to estimate empirical moments $\hat{\E}[xx^\top]$, $\hat{\E}[y x^\top]$, $\hat{\E}[yy^\top]$, $\hat{\E}[y\otimes x^{\otimes 3}]$ and $\hat{\E}[y\otimes y\otimes (x\otimes x)]$.
		\STATE Compute the vector $f(u) = 2\hat{\E}\big[(u^\top y)\cdot  (\hat{\E}[(u^\top y)x^\top] \hat{\E}[xx^\top]^{-1}x)\cdot (x\otimes x)\big]-\hat{\E}\big[(u^\top  y)^2 (x\otimes x)\big]
+\Big(\hat{\E}\big[(u^\top y)^2\big]-2\hat{\E}\big[(u^\top y)x^\top\big]\hat{\E}\big[xx^\top\big]^{-1}\hat{\E}\big[(u^\top y)x\big]\Big)\hat{\E}[x\otimes x]$ where each entry is expressed as a degree-2 polynomial over $u$.
		\STATE Construct matrix $T\in \R^{d^2\times (k_2+k)}$ such that, $T\vvvsym{uu^\top} = f(u)$.
		\STATE Compute the $k$-least right singular vectors of $T$, denoted by $\vvvsym{U_1},\vvvsym{U_2},\cdots,\vvvsym{U_k}$. Let $S$ be a $k_2+k$ by $k$ matrix, where the $i$-th column of $S$ is vector $\vvvsym{U_i}$. \COMMENT{Here span $S$ is equal to span of $\{\vvvsym{z_iz_i^\top}\}$.}
		\STATE \COMMENT{\emph{Finding $z_i$'s from span}}
		\STATE Let $X=\matsym{S\zeta_1},\ Y=\matsym{S\zeta_2}$, where $\zeta_1$ and $\zeta_2$ are two independently sampled $k$-dimensional standard Gaussian vectors.
		\STATE Let $z_1,...,z_k$ be eigenvectors of $X Y^{-1}$.
		\STATE For each $z_i$, use the second half of samples $\{(x_i,y_i)\}_{i=n/2+1}^{n}$ to estimate $\hat{\E}[z_i^\top y]$. If $\hat{\E}[z_i^\top y] < 0$ let $z_i \leftarrow -z_i$. \COMMENT{Here $z_i$'s are normalized rows of $A^{-1}$.}
		\STATE \COMMENT{\emph{Reduce to 1-Layer Problem}}
		\STATE For each $z_i$, let $v_i$ be the output of Algorithm~\ref{alg:1layer} with input $\{(x_j,z_i^\top y_j)\}_{j=n/2+1}^{n}$.
		\STATE Let $Z$ be the matrix whose rows are $z_i^\top$'s, $V$ be the matrix whose rows are $v_i^\top$'s.
		\RETURN $V$, $Z^{-1}$.
    \end{algorithmic}
    \caption{Learning Two-layer Neural Networks with Noise}
    \label{alg:2layer_general}
\end{algorithm}

\topic{Modification for pure neuron detector}
Recall that in the noiseless case, our pure neuron detector contains a term $\E[(u^\top y)^2 (x\otimes x)]$, which causes a noise square term in the noisy case. Here, we modify our pure neuron detector to cancel the extra noise square term. In the following lemma, we state our modified pure neuron detector in Equation~\ref{eq:modiffunc}, and give it a characterization. 
\begin{lemma}\label{lem:4thmoment_noise}
Suppose $y = A\sigma(Wx)+\xi$, for any $u\in\R^k$, we have 
\begin{align}
f(u)&:=2\detector-\E\big[(u^\top y)^2 (x\otimes x)\big] \notag \\
&\qquad\qquad +\Big(\E\big[(u^\top y)^2\big]-2\E\big[(u^\top y)x^\top\big]\E\big[xx^\top\big]^{-1}\E\big[(u^\top y)x\big]\Big)\E[x\otimes x] \label{eq:modiffunc}\\
&=\sum_{1\leq i<j\leq k}(A^\top uu^\top A)_{ij}N_{ij}
-\Big(\sum_{1\leq i<j\leq k}(A^\top uu^\top A)_{ij}m_{ij}\Big)\E[x\otimes x], \label{eq:addextracol}
\end{align}
where $N_{ij}$'s are columns of the \dm (Definition~\ref{def:dm}), and $m_{ij}:=\E\big[(w_i^\top x)(w_j^\top x)\indic\big]$.
\end{lemma}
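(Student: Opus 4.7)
The key structural observation is that if we write $c = A^\top u$ and split $u^\top y = \hat{y} + \eta$ where $\hat{y} = c^\top \sigma(Wx)$ and $\eta = u^\top \xi$, then $\eta$ has mean zero and is independent of $x$. Any odd moment of $\eta$ therefore drops out, and $\eta$ contributes only through $\E[\eta^2]$. The whole proof boils down to showing that the added correction term $(\E[(u^\top y)^2] - 2\E[(u^\top y)x^\top]\E[xx^\top]^{-1}\E[(u^\top y)x])\E[x\otimes x]$ is designed precisely to cancel the spurious $\E[\eta^2]\E[x\otimes x]$ term produced by $\E[(u^\top y)^2(x\otimes x)]$, while also producing the $-\sum_{i<j}(A^\top uu^\top A)_{ij} m_{ij}\E[x\otimes x]$ piece appearing on the RHS.

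First I would handle the two ``quartic-in-$x$'' terms. Using independence and $\E[\eta]=0$, one checks $\E[(u^\top y)x^\top] = \E[\hat y x^\top] = \tfrac12 c^\top W\E[xx^\top]$, so $\E[(u^\top y)x^\top]\E[xx^\top]^{-1} = \tfrac12 c^\top W$, exactly as in the noiseless case. Then a direct repetition of the argument in the proof of Lemma~\ref{lem:4thmoment}, invoking Lemma~\ref{lm:symproperty} once and Lemma~\ref{lm:sym2} (the $x\otimes x$ version, equation~\eqref{eq:diffneuron}) once, gives
\begin{equation*}
2\detector - \E\big[\hat y^2(x\otimes x)\big] \;=\; \sum_{1\le i<j\le k}(A^\top uu^\top A)_{ij} N_{ij}.
\end{equation*}
Since $\E[(u^\top y)^2(x\otimes x)] = \E[\hat y^2(x\otimes x)] + 2\E[\hat y\eta(x\otimes x)] + \E[\eta^2(x\otimes x)]$ and the cross term vanishes while the last term equals $\E[\eta^2]\E[x\otimes x]$ by independence, replacing $\E[\hat y^2(x\otimes x)]$ with $\E[(u^\top y)^2(x\otimes x)]$ introduces exactly one extra summand $-\E[\eta^2]\E[x\otimes x]$.

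Next I would compute the scalar correction factor $\E[(u^\top y)^2] - 2\E[(u^\top y)x^\top]\E[xx^\top]^{-1}\E[(u^\top y)x]$. The second piece is $\tfrac12 c^\top W\E[xx^\top]W^\top c$ as above. For the first piece, write $\E[(u^\top y)^2] = \E[\hat y^2] + \E[\eta^2]$ and expand $\E[\hat y^2] = \sum_{i,j} c_ic_j\E[\sigma(w_i^\top x)\sigma(w_j^\top x)]$; the diagonal terms yield $\tfrac12\sum_i c_i^2\E[(w_i^\top x)^2]$ by Lemma~\ref{lm:symproperty}, while the off-diagonal terms produce $\sum_{i<j}c_ic_j\E[(w_i^\top x)(w_j^\top x)] - \sum_{i<j}c_ic_j m_{ij}$ by the scalar version of Lemma~\ref{lm:sym2}. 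Summing gives $\E[\hat y^2] = \tfrac12 c^\top W\E[xx^\top]W^\top c - \sum_{i<j}(A^\top uu^\top A)_{ij}m_{ij}$, so the correction factor collapses to $\E[\eta^2] - \sum_{i<j}(A^\top uu^\top A)_{ij}m_{ij}$.

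Finally I would assemble the three pieces in the definition of $f(u)$. The $\E[\eta^2]\E[x\otimes x]$ produced by $\E[(u^\top y)^2(x\otimes x)]$ cancels exactly against the $\E[\eta^2]\E[x\otimes x]$ coming from the correction factor, and what survives is $\sum_{i<j}(A^\top uu^\top A)_{ij}N_{ij} - \big(\sum_{i<j}(A^\top uu^\top A)_{ij}m_{ij}\big)\E[x\otimes x]$, which is \eqref{eq:addextracol}. The only nontrivial step — and the part I expect to require the most care — is the $\eta^2$ bookkeeping: one must verify that the noise variance enters additively in both $\E[(u^\top y)^2(x\otimes x)]$ and $\E[(u^\top y)^2]$ with matching signs so that the cancellation works for \emph{every} $u$, not just in expectation over some auxiliary variable. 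Everything else is a mechanical rerun of the noiseless Lemma~\ref{lem:4thmoment} computation.
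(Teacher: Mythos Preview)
Your proposal is correct and follows essentially the same route as the paper's own proof: split off the noise contribution $\eta=u^\top\xi$, observe that it only enters through $\E[\eta^2]$ and that this appears with matching sign in $\E[(u^\top y)^2(x\otimes x)]$ and in the scalar correction factor, then reduce the remaining computation to the noiseless Lemma~\ref{lem:4thmoment} combined with the scalar identity of Lemma~\ref{lm:sym2} to produce the $m_{ij}$ term. The paper organizes the same steps by expanding each of the three summands of $f(u)$ separately (equations~\eqref{eq:use1st}--\eqref{eq:3rdterm}) rather than invoking Lemma~\ref{lem:4thmoment} as a black box, but the underlying identities and the $\E[(u^\top\xi)^2]$ cancellation are identical.
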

We defer the proof of this lemma to the end of this section.
Recall that the augmented \dm $M$ consists of the \dm $N$ plus column $\E[x\otimes x]$.
Now, we need to assume the augmented \dm $M$ is full rank.

\topic{Modification for finding span}
For Lemma \ref{lem:recoverspan}, as we discussed above, here we assume the augmented \dm $M$ is full rank. 
The corresponding lemma is stated as follows (the proof is exactly the same as previous Lemma \ref{lem:recoverspan}):
\begin{lemma} \label{lem:recoverspan_noise}
Let $T$ be the unique $\R^{d^2 \times (k_2+k)}$ matrix that satisfies $T\vvvsym{uu^\top} = f(u)$ (defined in Equation~\ref{eq:modiffunc}), suppose the \adm is full rank, then the nullspace of $T$ is exactly the span of $\{\vvvsym{z_iz_i^\top}\}$.
\end{lemma}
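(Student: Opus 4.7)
The plan is to follow the same template as the proof of Lemma~\ref{lem:recoverspan}, but with the augmented distinguishing matrix $M$ playing the role that $N$ played there. The key observation is that Equation~\ref{eq:addextracol} rewrites $f(u)$ as a linear combination of the columns of $M$: the coefficient on column $N_{ij}$ is $(A^\top uu^\top A)_{ij}$, and the coefficient on the extra column $\E[x\otimes x]$ is $-\sum_{i<j}(A^\top uu^\top A)_{ij} m_{ij}$. Since $\{\vvvsym{uu^\top}\}$ spans all of $\R^{k_2+k}$ (symmetric rank-one matrices span the symmetric matrices), this identity extends by linearity: for any symmetric $U$,
\[
T\vvvsym{U} \;=\; \sum_{1\le i<j\le k}(A^\top U A)_{ij}\, N_{ij} \;-\; \Big(\sum_{1\le i<j\le k}(A^\top U A)_{ij}\, m_{ij}\Big)\E[x\otimes x].
\]

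For the first direction, suppose $\vvvsym{U}$ lies in the nullspace of $T$. Then the right-hand side above is zero, which is a linear combination of the columns of $M$. By the assumption that $M$ has full column rank, every coefficient must vanish. In particular, $(A^\top U A)_{ij}=0$ for all $i<j$, and combined with symmetry this forces $A^\top U A$ to be diagonal (the coefficient in front of $\E[x\otimes x]$ then vanishes automatically). Since $A$ is invertible, $U = A^{-\top} D A^{-1}$ for a diagonal $D$, and recalling $z_i = \lambda_i (A^{-\top})_{[:,i]}$ this exhibits $U$ as a linear combination of $\{z_iz_i^\top\}$.

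For the reverse containment, each $z_i$ satisfies $A^\top z_i = \lambda_i e_i$, so $A^\top (z_iz_i^\top) A = \lambda_i^2 e_ie_i^\top$, which is diagonal with all off-diagonal entries vanishing; plugging into the displayed identity, every column of $M$ appears with coefficient zero, so $T\vvvsym{z_iz_i^\top}=0$. Linearity then gives the whole span.

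The only real content beyond the proof of Lemma~\ref{lem:recoverspan} is the bookkeeping observation that the added correction term in \ref{eq:modiffunc} is exactly a scalar multiple of the augmented column $\E[x\otimes x]$ of $M$, so invoking full column rank of $M$ (rather than of $N$) is all that changes. There is no real obstacle: the augmentation of $N$ by $\E[x\otimes x]$ was engineered to absorb precisely this noise-induced term.
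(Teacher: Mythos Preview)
Your proposal is correct and follows essentially the same approach as the paper, which simply remarks that the proof is exactly the same as that of Lemma~\ref{lem:recoverspan} with the augmented distinguishing matrix $M$ replacing $N$. You have spelled out precisely the adaptation needed: the extra term in Equation~\ref{eq:addextracol} lands on the additional column $\E[x\otimes x]$ of $M$, so full column rank of $M$ forces all the off-diagonal coefficients $(A^\top U A)_{ij}$ to vanish, and the rest is identical.
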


Similar to Theorem \ref{thm:simple2layer}, we provide the following theorem for the noisy case.
The proof is almost the same as Theorem \ref{thm:simple2layer} by using the noisy version lemmas (Lemmas \ref{lem:4thmoment_noise} and \ref{lem:recoverspan_noise}).

\begin{theorem}\label{thm:general2layer}
Suppose $\E[xx^\top], A, W$ and the \adm $M$ are all full rank, and Algorithm~\ref{alg:2layer_general} has access to the exact moments, then the network returned by the algorithm computes exactly the same function as the original neural network.
\end{theorem}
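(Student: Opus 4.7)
My plan is to mirror the proof of Theorem~\ref{thm:simple2layer} step by step, substituting the noisy-case lemmas for their noiseless counterparts and checking that the zero-mean, $x$-independent noise $\xi$ does not disturb anything else. The noise enters the analysis only through moments of $y$, so I would trace which steps of Algorithm~\ref{alg:2layer_general} are affected and verify that each is either corrected by the modified detector or unaffected by construction.

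First, I would invoke Lemma~\ref{lem:recoverspan_noise} to conclude that after Step 5, the column span of $S$ equals the span of $\{\vvvsym{z_iz_i^\top}\}_{i=1}^k$. This uses the modified pure neuron detector defined in Equation~\eqref{eq:modiffunc}, together with the assumption that the augmented distinguishing matrix $M$ is full rank. The substantive point, made precise by Lemma~\ref{lem:4thmoment_noise}, is that the correction term $(\E[(u^\top y)^2]-2\E[(u^\top y)x^\top]\E[xx^\top]^{-1}\E[(u^\top y)x])\E[x\otimes x]$ plays two roles: it cancels the spurious contribution proportional to $\E[(u^\top\xi)^2]\E[x\otimes x]$ that the noise would otherwise inject into the original noiseless detector (cross terms in $\xi$ vanish because $\E[\xi]=0$ and $\xi$ is independent of $x$), and it produces the extra $\E[x\otimes x]$ column which explains precisely why we now require the augmented matrix $M$ rather than just $N$ to be full rank.

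Next, Lemma~\ref{lem:simdiag} is a purely linear-algebraic statement about simultaneous diagonalization and makes no reference to noise, so Steps 7--8 recover the normalized rows of $A^{-1}$ up to permutation and sign with probability one. For the sign-flip step I would use that $\xi$ is zero-mean and independent of $x$ to write $\E[z_i^\top y]=\lambda_i\E[\sigma(w_i^\top x)]+z_i^\top\E[\xi]=\lambda_i\E[\sigma(w_i^\top x)]$; since $W$ is full rank we have $w_i\ne 0$, so $\E[\sigma(w_i^\top x)]>0$ and $\mathrm{sign}(\E[z_i^\top y])=\mathrm{sign}(\lambda_i)$. Hence after Step 9 all $\lambda_i$ are positive, as in the noiseless proof.

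Finally, for the reduction to the single-layer problem, observe that $z_i^\top y=\sigma(\lambda_i w_i^\top x)+z_i^\top\xi$, where $\tilde\xi_i:=z_i^\top\xi$ is a scalar, zero-mean, and independent of $x$; this matches Equation~\eqref{eq:1layer} exactly. The proof of Lemma~\ref{lem:symmetricmoment} used only these two properties of the noise, so Algorithm~\ref{alg:1layer} still returns $v_i=\lambda_i w_i$, and the cancellation $Z^{-1}\sigma(Vx)=A\,\mathrm{diag}(\lambda)^{-1}\sigma(\mathrm{diag}(\lambda)Wx)=A\sigma(Wx)$ from the noiseless proof goes through verbatim. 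The main obstacle of the whole argument is isolated into Lemma~\ref{lem:4thmoment_noise}; once that identity is in hand, every remaining step either is identical to the noiseless case or relies only on the zero-mean, $x$-independent structure of $\xi$.
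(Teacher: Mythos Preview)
Your proposal is correct and follows exactly the approach the paper takes: it explicitly says the proof of Theorem~\ref{thm:general2layer} is ``almost the same as Theorem~\ref{thm:simple2layer} by using the noisy version lemmas (Lemmas~\ref{lem:4thmoment_noise} and~\ref{lem:recoverspan_noise}).'' If anything, you have spelled out more carefully than the paper does why the sign-flip step and the single-layer reduction are unaffected by the zero-mean, $x$-independent noise.
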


Now, we only need to prove Lemma \ref{lem:4thmoment_noise} to finish this noise case.

\begin{proofof}{Lemma \ref{lem:4thmoment_noise}}
Similar to (\ref{eq:1stterm}) and (\ref{eq:2ndterm}), we deduce these three terms in RHS of (\ref{eq:modiffunc}) one by one as follows.
For the first term, it is exactly the same as (\ref{eq:1stterm}) since the expectation is linear wrt. $\xi$. Thus, we have
\begin{align}
&2\detector \notag\\
=&\frac{1}{2}\sum_{1\leq i\leq k}(A^\top uu^\top A)_{ii}\E\big[(w_i^\top x)^2 (x\otimes x)\big]
+\sum_{1\leq i<j\leq k}(A^\top uu^\top A)_{ij}\E\big[(w_i^\top x)(w_j^\top x)  (x\otimes x)\big]. \label{eq:use1st}
\end{align}

Now, let's look at the second term $\E\big[(u^\top y)^2 (x\otimes x)\big]$ which is slightly different from (\ref{eq:2ndterm}) due to the noise $\xi$. Particularly, we add the third term to cancel this extra noise square term later.
\begin{align}
&\E\big[(u^\top y)^2 (x\otimes x)\big] \notag \\
=&\E\big[(u^\top (A\sigma(Wx)+\xi))^2 (x\otimes x)\big] \notag\\
=&\E\big[(u^\top A\sigma(Wx))^2 (x\otimes x)\big]+\E\big[(u^\top \xi)^2 (x\otimes x)\big] \notag\\
=&\frac{1}{2}\sum_{1\leq i\leq k}(A^\top uu^\top A)_{ii}\E\big[(w_i^\top x)^2 (x\otimes x)\big]
+2\sum_{1\leq i<j\leq k}(A^\top uu^\top A)_{ij}\E\big[\sigma(w_i^\top x)\sigma(w_j^\top x) (x\otimes x) \big]\\
&+\E\big[(u^\top \xi)^2 (x\otimes x)\big],\label{eq:use2nd}
\end{align}
where (\ref{eq:use2nd}) uses (\ref{eq:2ndterm}).

For the third term, we have
\begin{align}
&\E\big[(u^\top y)^2\big]-2\E\big[(u^\top y)x^\top\big]\E\big[xx^\top\big]^{-1}\E\big[(u^\top y)x\big] \notag\\
=& \E\big[(u^\top A\sigma(Wx))^2\big]+\E\big[(u^\top \xi)^2\big]-\frac{1}{2}\E\big[\inner{A^\top u}{Wx}^2\big] \notag\\
=&\sum_{1\leq i\leq k}(A^\top  uu^\top A)_{ii}\E\big[\sigma(w_i^\top  x)^2\big]-\frac{1}{2}\sum_{1\leq i\leq k}(A^\top  uu^\top A)_{ii}\E\big[(w_i^\top  x)^2\big] \notag \\
&\qquad +2\sum_{1\leq i<j\leq k}(A^\top  uu^\top A)_{ij}\E\big[\sigma(w_i^\top  x)\sigma(w_j^\top  x) \big]-\sum_{1\leq i<j\leq k}(A^\top  uu^\top A)_{ij}\E\big[(w_i^\top  x)(w_j^\top  x)\big] \notag \\
&\qquad +\E\big[(u^\top \xi)^2\big] \notag\\
=& -\sum_{1\leq i<j\leq k}(A^\top  uu^\top A)_{ij}\E\big[(w_i^\top  x)(w_j^\top  x)\indic\big]+\E\big[(u^\top \xi)^2\big] \notag\\
=& -\sum_{1\leq i<j\leq k}(A^\top  uu^\top A)_{ij}m_{ij}+\E\big[(u^\top \xi)^2\big], \label{eq:3rdterm}
\end{align}
where the third equality holds due to Lemma~\ref{lm:symproperty} and Lemma~\ref{lm:sym2}, and (\ref{eq:3rdterm}) uses the definition of $m_{ij}$.

Finally, we combine these three terms (\ref{eq:use1st}--\ref{eq:3rdterm}) as follows:
\begin{align}
f(u)&=2\detector-\E\big[(u^\top y)^2 (x\otimes x)\big] \notag \\
&\qquad\qquad +\Big(\E\big[(u^\top y)^2\big]-2\E\big[(u^\top y)x^\top\big]\E\big[xx^\top\big]^{-1}\E\big[(u^\top y)x\big]\Big)\E[x\otimes x] \notag\\
&=\sum_{1\leq i<j\leq k}(A^\top  uu^\top A)_{ij}\E\big[(w_i^\top  x)(w_j^\top  x)  (x\otimes x)\big] \notag\\
&\qquad \qquad -2\sum_{1\leq i<j\leq k}(A^\top  uu^\top A)_{ij}\E\big[\sigma(w_i^\top  x)\sigma(w_j^\top  x) (x\otimes x)\big] -\E\big[(u^\top \xi)^2\big]\E\big[x\otimes x\big] \notag\\
&\qquad \qquad +\Big(-\sum_{1\leq i<j\leq k}(A^\top  uu^\top A)_{ij}m_{ij}+\E\big[(u^\top \xi)^2\big]\Big)\E\big[x\otimes x\big] \notag\\
&=\sum_{1\leq i<j\leq k}(A^\top uu^\top A)_{ij}N_{ij}
-\Big(\sum_{1\leq i<j\leq k}(A^\top uu^\top A)_{ij}m_{ij}\Big)\E[x\otimes x], \label{eq:useprelems}
\end{align}
where (\ref{eq:useprelems}) uses (\ref{eq:diffneuron}) (same as (\ref{eq:usediff})).
\end{proofof}

\subsection{Extension to Non-square $A$}
In this paper, for simplicity, we have assumed that the dimension of output equals the number of hidden units and thus $A$ is a $k\times k$ square matrix. Actually, our algorithm can be easily extended to the case where the dimension of output is at least the number of hidden units. In this section, we give an algorithm for this general case, by reducing it to the case where $A$ is square. The pseudo-code is given in Algorithm~\ref{alg:2layer_general_nonsquare}.

\begin{algorithm}[!tb]
\begin{algorithmic}[1]
		\REQUIRE 
		Samples $(x_1,y_1),...,(x_n,y_n)$ generated according to Equation~\eqref{eq:label_gene}.
		\ENSURE Weight matrices $W\in \R^{k\times d}$ and $A\in^{l\times k}$.
		\STATE Using half samples (i.e. $\{(x_i,y_i)\}_{i=1}^{n/2}$) to estimate empirical moments $\hat{\E}[yx^\top]$.
		\STATE Let $P$ be a $l\times k$ matrix, which columns are left singular vectors of $\hat{\E}[yx^\top]$.
		\STATE Run Algorithm~\ref{alg:2layer_general} on samples $\{(x_i,P^\top y_i)\}_{i=n/2}^{n}$. Let the output of Algorithm~\ref{alg:2layer_general} be $V, Z^{-1}$.
		\RETURN $V$, $PZ^{-1}$.
    \end{algorithmic}
    \caption{Learning Two-layer Neural Networks with Non-square $A$}
    \label{alg:2layer_general_nonsquare}
\end{algorithm}

\begin{theorem}\label{thm:general2layer_nonsqare}
Suppose $\E[xx^\top], W, A$ and the \adm $M$ are all full rank, and Algorithm~\ref{alg:2layer_general_nonsquare} has access to the exact moments, then the network returned by the algorithm computes exactly the same function as the original neural network.
\end{theorem}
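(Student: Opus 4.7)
\begin{proofof}{Theorem~\ref{thm:general2layer_nonsqare} (plan)}
The idea is to reduce the non-square case to the square case already handled by Theorem~\ref{thm:general2layer}. The projection matrix $P$ is designed so that $P^\top y$ is a $k$-dimensional output of an equivalent two-layer network with a \emph{square} top weight matrix $A' := P^\top A$, after which we can simply invoke the previous analysis as a black box.

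The first step is to identify the column space of $\E[yx^\top]$. Using Lemma~\ref{lm:symproperty} row-by-row on $\E[\sigma(Wx)x^\top]$, together with $\E[\xi x^\top]=0$, we get
\[
\E[yx^\top] \;=\; \tfrac{1}{2}\, A\, W\, \E[xx^\top].
\]
Since $\E[xx^\top]$ and $W$ are assumed to be full rank, $W\E[xx^\top]$ has rank $k$, and hence the column space of $\E[yx^\top]$ coincides exactly with the column space of $A$. Thus the matrix $P\in\R^{l\times k}$ built from the top-$k$ left singular vectors of $\E[yx^\top]$ is an orthonormal basis of the column space of $A$, which in particular implies $PP^\top A = A$.

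The second step is to recast the projected samples as a genuine square-output instance. Define $\tilde y := P^\top y$, $A' := P^\top A$, and $\tilde\xi := P^\top \xi$. Then
\[
\tilde y \;=\; A'\sigma(Wx) + \tilde\xi,
\]
where $A'\in \R^{k\times k}$ is full rank (because $P$ spans the column space of the rank-$k$ matrix $A$), $\tilde\xi$ is zero mean and independent of $x$, and the input distribution and first-layer weights $W$ are unchanged. The \adm $M$ only depends on $W$ and $\mathcal{D}$ (see Definition~\ref{def:dm}), so the hypothesis that $M$ is full rank carries over verbatim, as does the full-rankness of $\E[xx^\top]$ and $W$. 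Hence all assumptions of Theorem~\ref{thm:general2layer} are satisfied for the reduced instance.

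Applying Theorem~\ref{thm:general2layer} to $\{(x_i,P^\top y_i)\}$ then yields $V$ and $Z^{-1}$ whose associated network exactly computes the reduced target, i.e.\ $Z^{-1}\sigma(Vx) = A'\sigma(Wx) = P^\top A\sigma(Wx)$ for every $x$. Multiplying both sides by $P$ and using $PP^\top A = A$ from the first step gives
\[
(PZ^{-1})\sigma(Vx) \;=\; PP^\top A\sigma(Wx) \;=\; A\sigma(Wx),
\]
which is precisely what the algorithm outputs and what the theorem claims. The only genuinely new content beyond Theorem~\ref{thm:general2layer} is the column-space identification in the first step; every other step is either a direct reduction or a reuse of the earlier result, so I do not expect any serious obstacle.
\end{proofof}
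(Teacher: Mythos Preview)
Your proposal is correct and follows essentially the same route as the paper: identify the column space of $A$ via $\E[yx^\top]=\tfrac{1}{2}AW\E[xx^\top]$, reduce to a square instance with top weight $P^\top A$, invoke Theorem~\ref{thm:general2layer}, and then use $PP^\top A=A$ to lift back. You are in fact slightly more careful than the paper in explicitly verifying that all hypotheses of Theorem~\ref{thm:general2layer} (full rank of $A'=P^\top A$, independence and zero mean of $\tilde\xi$, unchanged $M$) carry over to the reduced instance.
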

\begin{proof}
Let the ground truth parameters be $A\in \R^{l\times k}$ and $W\in\R^{k\times d}$. The samples are generated by $y=A\sigma(Wx)+\xi$, where the noise $\xi$ is independent with input $x$. We have $\E[yx^\top]=\frac{1}{2}AW\E[xx^\top]$. Since both $W$ and $\E[xx^\top]$ are full-rank, we know the column span of $\E[yx^\top]$ are exactly the column span of $A$. Furthermore, we know the columns of $P$ is a set of orthonormal basis for the column span of $A$.

For a ground truth neural network with weight matrices $W$ and $P^\top A$, the generated sample will just be $(x,P^\top y)$. According to Theorem~\ref{thm:general2layer}, we know for any input $x$, we have $Z^{-1}\sigma(Vx)=P^\top A\sigma(Wx)$. Thus, we have
$$P Z^{-1}\sigma(Vx)= PP^\top A\sigma(Wx)=A\sigma(Wx),$$
where the second equality holds since $PP^\top$ is just the projection matrix to the column span of $A$.
\end{proof}
\newpage
\newcommand{\qpp}{\Gamma P_1 \sqrt{k} +P_2}
\newcommand{\alg}{alg:2layer_general}


\section{Robustness of Main Algorithm}

In this section we will show that even if we do not have access to the exact moments, as long as the empirical moments are estimated with enough (polynomially many) samples, Algorithm~\ref{alg:2layer_simple} and Algorithm~\ref{\alg} can still learn the parameters robustly. We will focus on Algorithm~\ref{\alg} as it is more general, the result for Algorithm~\ref{alg:2layer_simple} can be viewed as a corollary when the noise $\xi = 0$. Throughout this section, we will use $\hat{V},\hat{Z}^{-1}$ to denote the results of Algorithm~\ref{\alg} with empirical moments, and use $V,Z^{-1}$ for the results when the algorithm has access to exact moments, similarly for other intermediate results. 
For the robustness of Algorithm~\ref{\alg}, we prove the following theorem.

\begin{theorem}\label{thm:robust_2layer}
Assume that the norms of $x,\xi,A$ are bounded by $\n{x}\leq \Gamma,\n{\xi}\leq P_2, \n{A}\leq P_1$, the covariance matrix and the weight matrix are robustly full rank: $\sigma_{\min}(\E[xx^\top])\geq \gamma, \sigma_{\min}(A)\geq \beta$. Further assume that the \adm has smallest singular values $\sigma_{\min}(M)\geq \alpha$. For any small enough $\epsilon$, for any $\delta<1$, given $\mbox{poly}\big(\Gamma ,P_1,P_2,d,1/\epsilon,1/\gamma,1/\alpha,1/\beta,1/\delta\big)$ number of i.i.d. samples, let the output of Algorithm~\ref{alg:2layer_general} be $\hat{V},\hat{Z}^{-1}$, we know with probability at least $1-\delta$,
$$\n{A\sigma(Wx)-\hat{Z}^{-1}\sigma(\hat{V}x)}\leq \epsilon,$$
for any input $x$. 
\end{theorem}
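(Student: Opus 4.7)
The plan is to propagate polynomial error bounds through each of the four algorithmic stages (construct $T$, extract its $k$-dimensional right-nullspace, simultaneously diagonalize, solve the reduced single-layer problem) and then convert parameter error into function error via $1$-Lipschitzness of $\sigma$. Throughout, $\epsilon'$ denotes the per-stage target moment error, which will be chosen as a sufficiently small polynomial in $\epsilon,\delta$ and the structural parameters at the end.

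\textbf{Moment concentration and closeness of $\hat T$.} All empirical moments used in Step 2 of Algorithm~\ref{alg:2layer_general} are sample averages of bounded tensors: $\|y\|\le P_1\Gamma+P_2$, so each entry of $\hat{\E}[xx^\top],\hat{\E}[yx^\top],\hat{\E}[yy^\top],\hat{\E}[y\otimes x^{\otimes 3}],\hat{\E}[y\otimes y\otimes(x\otimes x)]$ is bounded, and standard matrix Bernstein/Hoeffding arguments give $\epsilon'$-closeness in Frobenius norm with $\mathrm{poly}(\Gamma,P_1,P_2,d,1/\epsilon',\log(1/\delta))$ samples. Since $\sigma_{\min}(\E[xx^\top])\ge\gamma$, for $\epsilon'\ll\gamma$ the empirical inverse $\hat{\E}[xx^\top]^{-1}$ is well-defined and $O(\epsilon'/\gamma^2)$-close to $\E[xx^\top]^{-1}$ via the identity $A^{-1}-B^{-1}=A^{-1}(B-A)B^{-1}$. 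Writing $f(u)$ as a quadratic form in $u$ and reading off the coefficient matrix $T$ gives $\|\hat T-T\|_F\le\mathrm{poly}(\Gamma,P_1,P_2,1/\gamma)\cdot\epsilon'$.

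\textbf{Subspace recovery.} Lemma~\ref{lem:recoverspan_noise} shows the nullspace of $T$ is exactly $\mathrm{span}\{\vvvsym{z_iz_i^\top}\}$. To invoke Wedin's $\sin\Theta$ theorem I need a polynomial lower bound on the $(k+1)$-th smallest singular value of $T$. The identity in Lemma~\ref{lem:4thmoment_noise} expresses $T\vvvsym{uu^\top}$ as $M$ applied to a vector whose entries are the off-diagonal entries of $A^\top uu^\top A$ together with the scalar $\sum_{i<j}(A^\top uu^\top A)_{ij}m_{ij}$. Restricting to the orthogonal complement of the nullspace, this map has smallest singular value $\ge\mathrm{poly}(\alpha,\beta)$ since $\sigma_{\min}(M)\ge\alpha$ and $\sigma_{\min}(A)\ge\beta$. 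Wedin then yields $\|\sin\Theta(\hat S,S)\|\le\mathrm{poly}(\Gamma,P_1,P_2,1/\gamma,1/\alpha,1/\beta)\cdot\epsilon'$, i.e., $\hat S$ is close to $\mathrm{span}\{\vvvsym{z_iz_i^\top}\}$ after orthonormal alignment.

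\textbf{Simultaneous diagonalization (main obstacle).} This is the most delicate step. The perturbed matrices $\hat X,\hat Y$ are close to $X=Z^\top D_X Z,Y=Z^\top D_Y Z$, but to turn closeness of $\hat X\hat Y^{-1}$ into closeness of eigenvectors I need (i) $\sigma_{\min}(Y)$ bounded away from $0$, (ii) the eigenvalue spacings of $XY^{-1}=Z^\top(D_XD_Y^{-1})(Z^\top)^{-1}$ bounded away from $0$, and (iii) $\kappa(Z)$ bounded. Point (iii) follows from $Z=\mathrm{diag}(\lambda)A^{-1}$, $\sigma_{\min}(A)\ge\beta$ and $\|z_i\|=1$. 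For (i) and (ii), recall from the proof of Lemma~\ref{lm:distinct} that $(D_X)_{ii}=q_i^\top\zeta_1$, $(D_Y)_{ii}=q_i^\top\zeta_2$ for fixed nonzero rows $q_i$ of a full-rank matrix $Q$; Gaussian anti-concentration applied to $q_i^\top\zeta_2$ and to the ratios $q_i^\top\zeta_1/q_i^\top\zeta_2$ gives, with probability $\ge 1-\delta/3$, polynomial lower bounds $\min_i|(D_Y)_{ii}|,\min_{i\ne j}|(D_XD_Y^{-1})_{ii}-(D_XD_Y^{-1})_{jj}|\ge\mathrm{poly}(\delta/k,1/\|Q\|)$. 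Here $\|Q\|$ and $\sigma_{\min}(Q)$ are themselves controlled polynomially once $\hat S$ is close to $S$ (up to a small additional perturbation from the subspace error). A standard non-Hermitian eigenvector perturbation bound for diagonalizable matrices, in the style used for robust simultaneous diagonalization by \cite{bhaskara2014smoothed,ma2016polynomial}, then yields $\|\hat z_i-z_i\|\le\mathrm{poly}(\cdots)\cdot\epsilon'$ after fixing a permutation. The sign ambiguity is resolved in Step 9: $\E[z_i^\top y]=\lambda_i\E[\sigma(w_i^\top x)]$ has magnitude bounded below by a polynomial in $\beta,\gamma,\Gamma$, so a concentration bound on the independent second half of samples picks the correct sign with high probability.

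\textbf{Single-layer stage and final function error.} With $\hat z_i$ close to $z_i$, Algorithm~\ref{alg:1layer} applied to $\{(x_j,\hat z_i^\top y_j)\}$ returns $\hat v_i=2\hat C^{-1}\hat{\E}[(\hat z_i^\top y)x]$, which by the same concentration plus matrix-inverse stability argument as in the first step is polynomially close to $v_i=\lambda_i w_i$; in particular the bias incurred from $\hat z_i\ne z_i$ is linear in $\|\hat z_i-z_i\|$. The rows of $\hat Z$ inherit the conditioning of $Z$, so $\hat Z^{-1}$ is polynomially close to $Z^{-1}$. Finally, writing
\[
\|A\sigma(Wx)-\hat Z^{-1}\sigma(\hat V x)\|\le\|Z^{-1}-\hat Z^{-1}\|\cdot\|\sigma(Vx)\|+\|\hat Z^{-1}\|\cdot\|\sigma(\hat V x)-\sigma(Vx)\|,
\]
using $1$-Lipschitzness of $\sigma$ and $\|x\|\le\Gamma$, both terms are at most $\mathrm{poly}(\Gamma,P_1,P_2,d,1/\gamma,1/\alpha,1/\beta,1/\delta)\cdot\epsilon'$. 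Choosing $\epsilon'$ to be a sufficiently small polynomial in $\epsilon$ and taking a union bound over the $O(1)$ high-probability events completes the proof. The main technical weight lies in the third step, whose analysis combines Gaussian anti-concentration with a quantitative non-Hermitian eigenvector perturbation bound.
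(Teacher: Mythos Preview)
Your proposal is correct and follows essentially the same approach as the paper: the paper's proof likewise decomposes into (i) concentration of moments giving $\|\hat T-T\|_F$ small (Lemma~\ref{T_estimate}), (ii) a spectral-gap bound $\sigma_{k_2}(T)\ge\alpha\beta^2$ obtained via a factorization $T=MBCF$ combined with Wedin to control the nullspace (Lemmas~\ref{T_singular_value},~\ref{lm:robust_nullspace}), (iii) robust simultaneous diagonalization using Gaussian anti-concentration for eigenvalue separation and the Bhaskara et al.\ eigenvector perturbation bound (Lemmas~\ref{lm:sep},~\ref{lm:robust_getZ}), and (iv) the single-layer robustness plus a Lipschitz bound on $\sigma$ (Lemma~\ref{lm:robust_recoverW} and Section~\ref{sec:robust_proofmaintheorem}). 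The only places where the paper supplies slightly more detail than you do are the explicit factorization of $T$ for the spectral gap and the use of an alignment lemma (Lemma~\ref{lm:alignment}) to pass from $\|SS^\top-\hat S\hat S^\top\|$ to $\|\hat S-SR\|$ before analyzing $Q$, but your outline correctly identifies all of these ingredients.
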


In order to prove the above Theorem, we need to show that each step of Algorithm~\ref{\alg} is robust. We can divide Algorithm~\ref{\alg} into three steps: finding the span of $\vvvsym{z_i z_i^\top}$'s; finding $z_i$'s from the span of $\vvvsym{z_i z_i^\top}$'s; recovering first layer using Algorithm~\ref{alg:1layer}. We will first state the key lemmas that prove every step is robust to noise, and finally combine them to show our main theorem. 

First, we show that with polynomial number of samples, we can approximate the span of $\vvvsym{z_i z_i^\top}$'s in arbitrary accuracy. Let $\hat{T}$ be the empirical estimate of $T$, which is the pure neuron detector matrix as defined in Algorithm~\ref{\alg}. As shown in Lemma~\ref{lem:recoverspan_noise}, the null space of $T$ is exactly the span of $\vvvsym{z_i z_i^\top}$'s. We use standard matrix perturbation theory (see Section~\ref{sec:tool_matrix_perturbe}) to show that the null space of $T$ is robust to small perturbations. More precisely, in Lemma~\ref{lm:robust_nullspace}, we show that with polynomial number of samples, the span of $k$ least singular vectors of $\hat{T}$ is close to the null space of $T$.

\begin{restatable}{lemma}{robustnullspace}
\label{lm:robust_nullspace}
Under the same assumptions as in Theorem~\ref{thm:robust_2layer}, let $S\in \R^{(k_2+k)\times k}$ be the matrix whose $k$ columns are the $k$ least right singular vectors of $T$. Similarly define $\hat{S}\in \R^{(k_2+k)\times k}$ for empirical estimate $\hat{T}$. Then for any $\epsilon\leq \gamma/2$, for any $\delta<1$, given $O(\frac{d^3\Gamma ^{14}(\qpp)^6\log(\frac{d}{\delta})}{\gamma^4\epsilon^2})$ number of i.i.d. samples, we know with probability at least $1-\delta$,
$$\n{SS^\top - \hat{S}\hat{S}^\top}\leq \frac{\sqrt{2}\epsilon}{\alpha\beta^2}.$$
\end{restatable}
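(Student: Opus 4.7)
The plan is to establish Lemma \ref{lm:robust_nullspace} in three steps: (i) prove a singular-value gap for $T$, namely $\sigma_{k_2}(T) \gtrsim \alpha\beta^2$; (ii) bound the perturbation $\|\hat T - T\|$ by combining concentration of the empirical moments with a perturbation analysis of the inverse $\hat\E[xx^\top]^{-1}$ appearing in $f(u)$; (iii) invoke Wedin's $\sin\Theta$ theorem to convert (i) and (ii) into the stated bound on $\|SS^\top-\hat S\hat S^\top\|$.

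For step (i), I will use the factorization $T\vvvsym{U}=M\,L(U)$ implicit in Lemma \ref{lem:4thmoment_noise}, where $L$ sends a symmetric $U\in\R^{k\times k}_{sym}$ to the $(k_2{+}1)$-vector consisting of the off-diagonal entries of $A^\top UA$ together with the scalar $-\sum_{i<j} m_{ij}(A^\top UA)_{ij}$. Since $\sigma_{\min}(A)\ge\beta$, the map $U\mapsto A^\top UA$ has smallest singular value at least $\beta^2$, and composing with ``extract off-diagonals'' gives a map whose kernel is exactly $\mbox{span}\{\vvvsym{z_iz_i^\top}\}$. On the orthogonal complement of this kernel, $L$ therefore has smallest singular value $\Omega(\beta^2)$; composing with $M$ (with $\sigma_{\min}(M)\ge\alpha$ by hypothesis) gives $\sigma_{k_2}(T)\gtrsim\alpha\beta^2$, the desired gap between the bottom $k$ (zero) singular values of $T$ and the remaining ones.

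For step (ii), each of the empirical moments in the definition of $\hat T$, namely $\hat\E[xx^\top]$, $\hat\E[yx^\top]$, $\hat\E[yy^\top]$, $\hat\E[y\otimes x^{\otimes 3}]$ and $\hat\E[y\otimes y\otimes (x\otimes x)]$, is an average of i.i.d.\ bounded random tensors: the bounds $\|x\|\le\Gamma$ and $\|y\|\le \qpp$ (with the latter following from $\|W\|\le\sqrt k$, $\|A\|\le P_1$ and $\|\xi\|\le P_2$) let me apply vector/matrix Bernstein to obtain spectral concentration at rate $\tilde O(n^{-1/2})$. I then push these bounds through the explicit formula for $f(u)$, which is multilinear in the moments apart from the factor $\hat\E[xx^\top]^{-1}$. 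That factor is controlled by the identity $B^{-1}-\hat B^{-1}=B^{-1}(\hat B-B)\hat B^{-1}$: under $\epsilon\le\gamma/2$, the concentration event gives $\sigma_{\min}(\hat\E[xx^\top])\ge\gamma/2$, so the empirical inverse is well defined and $O(\epsilon/\gamma^2)$-close to its population analogue. Reading off the coefficients of the resulting degree-$2$ polynomial in $u$ yields the entrywise, and hence spectral, bound $\|\hat T - T\|\le O(\epsilon)$, with the polynomial sample complexity tracking worst-case powers of $\Gamma$, $\qpp$, $1/\gamma$ and $d$ through the products.

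For step (iii), Wedin's theorem applied to $T$ and $\hat T$ gives $\|SS^\top-\hat S\hat S^\top\|\le \sqrt 2\,\|\hat T - T\|/(\sigma_{k_2}(T)-\|\hat T - T\|)$, since $S$ and $\hat S$ span the bottom-$k$ right singular subspaces. Combining with the gap from step (i) and choosing the sample size so that $\|\hat T - T\|\le \alpha\beta^2/2$ collapses this to $\sqrt 2\,\epsilon/(\alpha\beta^2)$ after rescaling the target error. The main obstacle is the quantitative form of step (i): while Lemma \ref{lem:recoverspan_noise} identifies $\mathrm{null}(T)$ with $\mbox{span}\{\vvvsym{z_iz_i^\top}\}$, extracting the clean $\alpha\beta^2$ dependence of the smallest nonzero singular value requires choosing an appropriate orthonormal basis of the complement on which $L$ is well conditioned. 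The concentration arguments in step (ii) are conceptually routine, but the bookkeeping of error propagation through the products of moments (with the inverse) is the source of the somewhat large polynomial in the stated sample complexity.
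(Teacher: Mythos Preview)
Your three-step plan---singular-value gap for $T$, concentration bound on $\hat T-T$, then Wedin---is exactly the paper's approach; steps (ii) and (iii) match Lemma~\ref{T_estimate} and Lemma~\ref{lm:perturbenullspace} respectively. For the obstacle you flag in step (i), the paper resolves it by writing $T=MBCF$ explicitly (Lemma~\ref{T_singular_value}), where $F$ sends $\vvvsym{U}$ to $\vvv{U}$, $C$ consists of the off-diagonal rows of $A^\top\otimes A^\top$ (so $\sigma_{\min}(CF)\ge\beta^2$), and $B$ has an identity block on top (so $\sigma_{\min}(B)\ge 1$); since $MB$ has full column rank and $CF$ full row rank, the product bound $\sigma_{k_2}(T)=\sigma_{\min}^+(MBCF)\ge\sigma_{\min}(M)\sigma_{\min}(B)\sigma_{\min}(CF)\ge\alpha\beta^2$ follows directly, avoiding any need to argue on the orthogonal complement of the kernel.
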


The proof of the above lemma is in Section~\ref{sec:robust_nullspace}. Basically, we need to lowerbound the spectral gap ($k_2$-th singular value of $T$) and to upperbound the Frobenius norm of $T-\hat{T}$. Standard matrix perturbation bound shows that if the perturbation is much smaller than the spectral gap, then the null space is preserved.

Next, we show that we can robustly find $z_i$'s from the span of $\vvvsym{z_i z_i^\top}$'s. 
Since this step of the algorithm is the same as the simultaneous diagonalization algorithm for tensor decompositions, we use the robustness of simultaneous diagonalization~\citep{bhaskara2014smoothed} to show that we can find $z_i$'s robustly. The detailed proof is in Section~\ref{sec:robust_findZ}.

\begin{restatable}{lemma}{robustgetZ}
\label{lm:robust_getZ}
Suppose that $\n{SS^\top - \hat{S}\hat{S}^\top}_F\leq \epsilon, \n{A}\leq  P_1 ,\n{\xi}\leq P_2, \sigmin{\E[xx^\top]}\geq \gamma, \sigmin{A}\geq \beta$. Let $\hat{X}=\matsym{\hat{S}\zeta_1}, \hat{Y}=\matsym{\hat{S}\zeta_2},$ where $\zeta_1$ and $\zeta_2$ are two independent standard Gaussian vectors. Let $z_1, \cdots z_k$ be the normalized row vectors of $A^{-1}$. Let $\hat{z}_1,...,\hat{z}_k$ be the eigenvectors of $\hat{X}\hat{Y}^{-1}$ (after sign flip). For any $\delta > 0$ and small enough $\epsilon$, with
$O(\frac{(\qpp)^2\log(d/\delta)}{\epsilon^2})$ number of i.i.d. samples in $\hat{\E}[\hat{z}_i^\top y]$, with probability at least $1 - \delta$ over the randomness of $\zeta_1, \zeta_2$ and i.i.d. samples, there exists a permutation $\pi(i)\in[k]$ such that
$$\n{\hat{z}_i-z_{\pi[i]}}\leq \mbox{poly}(d, P_1 ,1/\beta, \epsilon,1/\delta),$$ 
for any $1 \leq i\leq k$.
\end{restatable}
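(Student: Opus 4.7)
The plan is to reduce the problem to a robust version of simultaneous diagonalization, invoking the perturbation analysis from \cite{bhaskara2014smoothed}. The exact version (Lemma~\ref{lem:simdiag}) already shows that $z_i$'s are the eigenvectors of $XY^{-1}$ when $X,Y$ are two random elements of the true span; I only need to control how perturbing the span and the sign-flip step affect the recovery.

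\textbf{Step 1: Coupling the empirical and exact random matrices.} First I would show that $\hat X$ and $\hat Y$ can be identified, up to rotation of the randomness, with $X + \Delta_X$ and $Y + \Delta_Y$ for small $\Delta_X, \Delta_Y$. Since $\|SS^\top - \hat S\hat S^\top\|_F \le \epsilon$ with $S,\hat S$ having orthonormal columns, the polar decomposition of $S^\top \hat S$ yields an orthogonal matrix $R \in \R^{k\times k}$ with $\|\hat S - SR\|_F = O(\epsilon)$. Because $\zeta_1$ is a standard Gaussian, $R\zeta_1$ has the same distribution, so by coupling we may assume $\hat S\zeta_1 = SR\zeta_1 + \eta$ with $\|\eta\| = O(\epsilon\sqrt{k})$ w.h.p., and likewise for $\zeta_2$. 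Applying $\mbox{mat}^*$ converts this into $\|\hat X - X\|_F, \|\hat Y - Y\|_F \le O(\epsilon\sqrt k)$ for the identified $X,Y$.

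\textbf{Step 2: Eigenvalue gap and conditioning.} Next I would lower-bound $\sigma_{\min}(Y)$ and the gap between eigenvalues of $XY^{-1}$. Writing $X = Z^\top D_X Z$, $Y = Z^\top D_Y Z$ with $D_X = \mbox{diag}(Q\zeta_1)$, $D_Y = \mbox{diag}(Q\zeta_2)$ and $Q$ invertible (Lemma~\ref{lm:distinct}), the eigenvalues of $XY^{-1}$ are $(Q\zeta_1)_i / (Q\zeta_2)_i$. I would use Gaussian anti-concentration to show that, for any $\delta>0$, with probability $1-\delta$ over $\zeta_1,\zeta_2$, every $|(Q\zeta_2)_i|$ is at least $\mathrm{poly}(\delta/(d\,\kappa(Q)))$ and every pairwise difference $|(Q\zeta_1)_i/(Q\zeta_2)_i - (Q\zeta_1)_j/(Q\zeta_2)_j|$ is similarly lower-bounded. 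Combined with $\|Z\|, \|Z^{-1}\| \le \mathrm{poly}(P_1, 1/\beta)$ coming from $\sigma_{\min}(A)\ge \beta$ and $\|A\|\le P_1$, this yields quantitative bounds on $\sigma_{\min}(Y)$ and the eigengap of $XY^{-1}$.

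\textbf{Step 3: Robust simultaneous diagonalization.} I then apply the robust eigenvector perturbation result of \cite{bhaskara2014smoothed} (or standard Bauer--Fike/sin-$\Theta$-style arguments for diagonalizable matrices) to the pair $(XY^{-1}, \hat X\hat Y^{-1})$. The bounds from Step~1 control $\|\hat X\hat Y^{-1} - XY^{-1}\|$, while Steps~2 control the eigengap and the conditioning of the eigenvector matrix $Z^\top$. This yields, after matching via some permutation $\pi$, a bound $\|\hat z_i - \pm z_{\pi(i)}\| \le \mathrm{poly}(d, P_1, 1/\beta, \epsilon, 1/\delta)$.

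\textbf{Step 4: Sign flip.} To resolve the $\pm$ ambiguity, note that $\E[z_i^\top y] = \lambda_i \E[\sigma(w_i^\top x)]$, whose absolute value is bounded below by a polynomial in the problem parameters (using $\sigma_{\min}(\E[xx^\top])\ge \gamma$, $\|w_i\|=1$, and $\lambda_i$ being a normalizing constant controlled by $\sigma_{\min}(A)$). Hoeffding-type concentration on the bounded random variable $\hat z_i^\top y$ (bounded by $(\qpp)\cdot \|\hat z_i\|$) together with the proximity of $\hat z_i$ to $\pm z_{\pi(i)}$ guarantees that $\hat \E[\hat z_i^\top y]$ has the correct sign once the sample size exceeds the stated $O((\qpp)^2 \log(d/\delta)/\epsilon^2)$.

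\textbf{Main obstacle.} The hardest part is Step~2: producing polynomial (not exponential) lower bounds on both $\sigma_{\min}(Y)$ and the eigengap, with explicit dependence on $1/\delta$. The anti-concentration for linear forms $(Q\zeta_j)_i$ is standard, but controlling ratios $(Q\zeta_1)_i/(Q\zeta_2)_i$ and ensuring they are separated requires a careful union bound over the $\binom{k}{2}$ differences together with a bound on $\kappa(Q)$, which in turn requires understanding the right singular vectors of $T$ corresponding to the null space --- namely that any orthonormal basis of $\mbox{span}\{\vvvsym{z_i z_i^\top}\}$ produces a $Q$ with condition number polynomially bounded in $\kappa(A)$ and $\kappa(Z)$.
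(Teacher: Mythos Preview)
Your proposal is correct and follows essentially the same approach as the paper: alignment of $\hat S$ with $SR$ via Lemma~\ref{lm:alignment}, anti-concentration to control $\sigma_{\min}(Y)$ and the eigenvalue separation (the paper's Lemma~\ref{lm:sep} does exactly the $\kappa(Q)$ computation you flag as the main obstacle, obtaining $\sigma_{\min}(Q)\ge\beta^2$ and $\|Q\|\le\sqrt{2}P_1^2$), followed by the eigenvector perturbation lemma of \cite{bhaskara2014smoothed} and the sign-flip robustness argument (Lemma~\ref{lm:robust_signflip}). The only cosmetic difference is that the paper does not phrase Step~1 as a coupling but simply defines $X=\matsym{SR\zeta_1}$ using the same $\zeta_1$ as $\hat X$, which is equivalent to your coupling since $R\zeta_1\disequal\zeta_1$.
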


Finally, given $\hat{z}_i$'s, the problem reduces to a one-layer problem. We will first give an analysis for Algorithm~\ref{alg:1layer} as a warm-up. When we call Algorithm~\ref{alg:1layer} from Algorithm~\ref{\alg}, the situation is slightly different. 
Note we reserve fresh samples for this step, so that the samples used by Algorithm~\ref{alg:1layer} are still independent with the estimate $\hat{z}_i$ (learned using the other set of samples). However, since $\hat{z}_i$ is not equal to $z_i$, this introduces an additional error term $(\hat{z}_i-z_i)^\top y$ which is not independent of $x$ and cannot be captured by $\xi$. We modify the proof for Algorithm~\ref{alg:1layer} to show that the algorithm is still robust as long as $\n{\hat{z}_i-z_i}$ is small enough. 

\begin{restatable}{lemma}{robustrecoverW}
\label{lm:robust_recoverW}
Assume that $\n{x}\leq \Gamma , \n{A}\leq  P_1 , \n{\xi}\leq  P_2$ and $\sigmin{\E[xx^\top]}\geq \gamma$.
Suppose that for each $1\leq i\leq k$, $\n{\hat{z_i}-z_i}\leq \tau$. Then for any $\epsilon\leq \gamma/2$ and $\delta<1,$ given $O(\frac{(  \Gamma ^2 +  P_2 \Gamma )^4\log(\frac{d}{\delta})}{\gamma^4\epsilon^2})$ number of samples for Algorithm~\ref{alg:1layer}, we know with probability at least $1-\delta$,
$$\n{v_i-\hat{v}_i}\leq \frac{2\tau(\qpp)\Gamma }{\gamma}+2\epsilon,$$ 
for each $1\leq i\leq k$.
\end{restatable}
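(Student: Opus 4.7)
The plan is to decompose $v_i-\hat{v}_i$ into a contribution from the perturbation $\hat{z}_i - z_i$ and a contribution from replacing exact moments by empirical ones, and then to bound each piece using standard matrix/vector concentration together with matrix perturbation bounds. Recall that Algorithm~\ref{alg:1layer} produces $v_i = 2C^{-1}\mu_i$ and $\hat{v}_i = 2\hat{C}^{-1}\hat{\mu}_i$, where $C=\E[xx^\top]$, $\hat{C}=\hat{\E}[xx^\top]$, $\mu_i=\E[(z_i^\top y)\,x]$, and $\hat{\mu}_i=\hat{\E}[(\hat{z}_i^\top y)\,x]$. The resolvent identity $C^{-1}-\hat{C}^{-1}=\hat{C}^{-1}(\hat{C}-C)C^{-1}$ together with the triangle inequality gives
\begin{equation*}
\tfrac{1}{2}\|v_i-\hat{v}_i\| \;\le\; \|C^{-1}\|\cdot\|\mu_i-\hat{\mu}_i\| \;+\; \|\hat{C}^{-1}\|\,\|\hat{C}-C\|\,\|C^{-1}\|\cdot\|\hat{\mu}_i\|.
\end{equation*}
The assumption $\sigmin{\E[xx^\top]}\ge\gamma$ yields $\|C^{-1}\|\le 1/\gamma$, and standard matrix concentration applied to $\frac{1}{n}\sum x_j x_j^\top$ with $\|x\|\le\Gamma$ gives $\|\hat{C}-C\|\le\epsilon'$ with $\mathrm{poly}(\Gamma,d,1/\epsilon',\log(1/\delta))$ samples; choosing $\epsilon'\le\gamma/2$ also yields $\|\hat{C}^{-1}\|\le 2/\gamma$.

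Next I would bound $\mu_i-\hat{\mu}_i$ by splitting it into a statistical and a perturbation term,
\begin{equation*}
\hat{\mu}_i-\mu_i \;=\; \bigl(\hat{\E}-\E\bigr)[(z_i^\top y)\,x] \;+\; \hat{\E}\bigl[((\hat{z}_i-z_i)^\top y)\,x\bigr].
\end{equation*}
Since $\|z_i\|=1$, $\|x\|\le\Gamma$, and $\|y\|\le\|A\|\,\|\sigma(Wx)\|+\|\xi\|\le P_1\sqrt{k}\,\Gamma+P_2=\qpp$ (using that the rows of $W$ have unit norm), the random vector $(z_i^\top y)\,x$ is bounded in norm by $\Gamma(\qpp)$, so a vector Hoeffding/Bernstein inequality controls the statistical term by an arbitrarily small error given polynomially many fresh samples. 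The perturbation term instead admits the \emph{deterministic} bound
\begin{equation*}
\bigl\|\hat{\E}[((\hat{z}_i-z_i)^\top y)\,x]\bigr\| \;\le\; \tau\,(\qpp)\,\Gamma,
\end{equation*}
obtained by pulling the sup-norm bounds of $y$ and $x$ out of the empirical average. It is essential here that Algorithm~\ref{alg:2layer_general} uses a fresh half of the samples for this step, so that $\hat{z}_i$ (learned from the first half) is independent of the samples entering the empirical moments; otherwise the concentration inequalities for the statistical term would not apply conditionally on $\hat{z}_i$.

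Assembling the two contributions, the perturbation piece combined with $\|C^{-1}\|\le 1/\gamma$ produces the main term $\tfrac{2\tau(\qpp)\Gamma}{\gamma}$, while the statistical error and the matrix-inversion error $\|\hat{C}^{-1}\|\,\|\hat{C}-C\|\,\|C^{-1}\|\cdot\|\hat{\mu}_i\|\lesssim \tfrac{\epsilon'\,\Gamma(\qpp)}{\gamma^2}$ can both be pushed below the slack $\epsilon$ by choosing $\epsilon'$ polynomially smaller than $\epsilon$. Scaling the concentration threshold accordingly produces the stated $O\bigl((\Gamma^2+P_2\Gamma)^4\log(d/\delta)/(\gamma^4\epsilon^2)\bigr)$ sample complexity, and a union bound over the two concentration events yields the claim with probability at least $1-\delta$. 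The main obstacle is not any single step but the careful bookkeeping around independence: one must (i) ensure the perturbation bound on $\hat{\mu}_i$ is purely deterministic in $\hat{z}_i$ so that correlations with the empirical average are harmless, (ii) apply the concentration inequalities \emph{conditionally} on the first sample-split, and (iii) verify that all lower-order $\epsilon'$ terms can be absorbed inside the advertised $2\epsilon$ slack once the concentration threshold is tuned.
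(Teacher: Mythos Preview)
Your proposal is correct and follows essentially the same approach as the paper: split $v_i-\hat v_i$ into a perturbation piece coming from $\hat z_i - z_i$ (bounded deterministically by $\tau(\qpp)\Gamma/\gamma$) and a statistical piece handled by matrix concentration plus the inverse perturbation bound on $\hat C^{-1}$. The only cosmetic difference is the choice of pivot in the triangle inequality: the paper inserts $C^{-1}\E[\hat z_i^\top y\,x]$ (so its statistical term involves $\hat z_i$ and the fresh-sample split is genuinely needed for concentration), whereas your decomposition puts $z_i$ in the statistical term $(\hat\E-\E)[(z_i^\top y)x]$, which actually makes your independence remark unnecessary for that particular step---but this does not affect the validity or the final bound.
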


Combining the above three lemmas, we prove Theorem~\ref{thm:robust_2layer} in Section~\ref{sec:robust_proofmaintheorem}.

\subsection{Robust Analysis for Finding the Span of $\{\vvvsym{z_i z_j^\top}\}$'s}\label{sec:robust_nullspace}
We first prove that the step of finding the span of $\{\vvvsym{z_i z_j^\top}\}$ is robust. The main idea is based on standard matrix perturbation bounds (see Section~\ref{sec:tool_matrix_perturbe}). We first give a lowerbound on the $k_2$-th singular value of $T$, giving a spectral gap between the smallest non-zero singular value and the null space. See the lemma below. The proof is given in Section~\ref{sec:matrixT}.  
\begin{restatable}{lemma}{tsingularvalue}
\label{T_singular_value}
Suppose $\sigma_{\min}(M)\geq \alpha, \sigma_{\min}(A)\geq \beta$, we know that matrix $T$ has rank $k_2$ and the  $k_2$-th singular value of $T$ is lower bounded by $\alpha\beta^2$.  
\end{restatable}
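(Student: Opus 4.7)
The plan is to factor $T$ as a product of three matrices whose ranks and singular values can be analyzed separately. From Equation~\eqref{eq:addextracol}, for any symmetric $U\in\R_{sym}^{k\times k}$ set $b = \Psi(A^\top U A)\in\R^{k_2}$, where $\Psi$ extracts the strictly upper-triangular entries (so $\Psi(V)_{ij}=V_{ij}$ for $i<j$). Then
\[
T\vvvsym{U} \;=\; Nb - (m^\top b)\,\E[x\otimes x] \;=\; M\begin{pmatrix}b \\ -m^\top b\end{pmatrix} \;=\; MCF\vvvsym{U},
\]
where $C = \begin{pmatrix}I_{k_2}\\ -m^\top\end{pmatrix}\in\R^{(k_2+1)\times k_2}$, the vector $m\in\R^{k_2}$ collects the entries $m_{ij}$, and $F:\R^{k_2+k}\to\R^{k_2}$ is the linear map $\vvvsym{U}\mapsto \Psi(A^\top U A)$.

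I would next verify that $T$ has rank exactly $k_2$. Because $\sigma_{\min}(A)\ge\beta>0$, the matrix $A$ is invertible and $U\mapsto A^\top U A$ is a bijection on $\R_{sym}^{k\times k}$; composing with $\Psi$ (whose kernel is the $k$-dimensional subspace of diagonal matrices) shows $F$ has rank $k_2$ with a $k$-dimensional null space equal to $\mathrm{span}\{\vvvsym{z_iz_i^\top}\}$, consistent with Lemma~\ref{lem:recoverspan_noise}. The matrix $C$ is full column rank since its top block is $I_{k_2}$, and $M$ has full column rank $k_2+1$ by the hypothesis $\sigma_{\min}(M)\ge\alpha>0$. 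Hence $MC$ is injective of rank $k_2$, so $T=MCF$ has rank $k_2$. This factorization also gives the inequality $\sigma_{k_2}(T)\ge \sigma_{\min}(MC)\cdot \sigma_{k_2}(F)$; the first factor is immediate, since $C^\top C = I_{k_2}+mm^\top \succeq I_{k_2}$ yields $\sigma_{\min}(C)\ge 1$ and hence $\sigma_{\min}(MC)\ge \sigma_{\min}(M)\sigma_{\min}(C)\ge\alpha$.

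The main obstacle is bounding $\sigma_{k_2}(F)\gtrsim \beta^2$, which I would do by controlling the Moore--Penrose pseudo-inverse: since $\sigma_{k_2}(F) = 1/\|F^\dagger\|$ and $F^\dagger b$ is the minimum-$\|\vvvsym{\cdot}\|$ preimage of $b$ under $F$, it suffices to exhibit \emph{some} preimage with small norm. For any $b\in\R^{k_2}$, let $\tilde V$ be the symmetric matrix with $\tilde V_{ij}=\tilde V_{ji}=b_{ij}$ for $i<j$ and zero diagonal, and set $U = A^{-\top}\tilde V A^{-1}$. By construction $F\vvvsym{U} = \Psi(\tilde V) = b$, and
\[
\|F^\dagger b\| \;\le\; \|\vvvsym{U}\| \;\le\; \|U\|_F \;\le\; \|A^{-1}\|^2\|\tilde V\|_F \;\le\; \frac{\sqrt{2}\,\|b\|}{\beta^2},
\]
using $\|\tilde V\|_F^2 = 2\|b\|^2$ and $\|A^{-1}\|\le 1/\beta$. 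This yields $\sigma_{k_2}(F)\gtrsim \beta^2$, and combining the three estimates gives $\sigma_{k_2}(T)\gtrsim \alpha\beta^2$, matching the stated bound up to an absolute constant; the mild slack comes from the crude inequality $\|\vvvsym{U}\|\le \|U\|_F$, which could potentially be tightened by optimizing the diagonal of $\tilde V$ to reduce $\|\vvvsym{U}\|$.
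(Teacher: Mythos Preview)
Your factorization $T=MCF$ is essentially the paper's $T=MBCF$ with a different grouping: your $C$ is the paper's $B$, and your single map $F:\vvvsym{U}\mapsto\Psi(A^\top UA)$ is the paper's composition of (i) the ``duplicate off-diagonals'' map $\vvvsym{U}\mapsto\vvv{U}$ and (ii) the row-selection of $A^\top\otimes A^\top$. The rank argument and the bounds $\sigma_{\min}(M)\ge\alpha$, $\sigma_{\min}(C)\ge 1$ are identical in both.

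The only substantive difference is how the $\beta^2$ factor is obtained. Your pseudo-inverse construction $U=A^{-\top}\tilde V A^{-1}$ with zero diagonal is clean but loses the factor $\sqrt{2}$ from $\|\tilde V\|_F=\sqrt{2}\|b\|$, so you end up with $\sigma_{k_2}(T)\ge \alpha\beta^2/\sqrt{2}$ rather than the stated $\alpha\beta^2$. The paper instead works from the left: writing their $C$ as a $k_2\times k^2$ matrix of selected rows of $A^\top\otimes A^\top$ and their $F$ as the duplication map, they observe that for any $u\in\R^{k_2}$ the row vector $u^\top C$ is symmetric in its $(i,j)$ and $(j,i)$ coordinates, so passing through $F$ only \emph{doubles} the off-diagonal entries. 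This gives $\|u^\top CF\|\ge\|u^\top C\|$ directly, hence $\sigma_{\min}(CF)\ge\sigma_{\min}(C)\ge\beta^2$ with no constant loss. Your suggestion of optimizing the diagonal of $\tilde V$ would not obviously recover the constant; the paper's left-action argument exploiting the symmetry of $A^\top\otimes A^\top$ is the sharper route here.
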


Then we show that with enough samples the estimate $\hat{T}$ is close enough to $T$, so Wedin's Theorem (Lemma~\ref{lm:wedin}) implies the subspace found is also close to the true nullspace of $T$. The proof is deferred to Section~\ref{subsubsec:T}.

\begin{restatable}{lemma}{testimate}
\label{T_estimate}
Assume that $\n{x}\leq \Gamma , \n{A}\leq  P_1 , \n{\xi}\leq  P_2$ and $\sigma_{\min}(\E[xx^\top])\geq \gamma>0$, then for any $\epsilon\leq \gamma/2$, for any $1>\delta>0$, given $O(\frac{d^3\Gamma ^{14}(\qpp)^6\log(\frac{d}{\delta})}{\gamma^4\epsilon^2})$ number of i.i.d. samples, we know 
$$\n{\hat{T}-T}_F\leq \epsilon,$$
with probability at least $1-\delta$.
\end{restatable}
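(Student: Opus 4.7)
The plan is to write the population matrix $T$ as an explicit algebraic function of six raw moments of $(x,y)$, bound the perturbation of each moment by tensor concentration, and push the perturbation through the algebraic expression. By Lemma~\ref{lem:4thmoment_noise} (and the formula for $f(u)$ used to build $T$ in Algorithm~\ref{alg:2layer_general}), each entry of $f(u)=T\,\vvvsym{uu^\top}$ is a quadratic form in $u$ whose coefficients involve the six population moments
\begin{equation*}
m_1=\E[xx^\top],\ m_2=\E[yx^\top],\ m_3=\E[yy^\top],\ m_4=\E[x\otimes x],\ m_5=\E[y\otimes x^{\otimes 3}],\ m_6=\E[y\otimes y\otimes(x\otimes x)],
\end{equation*}
together with applications of the inverse $m_1^{-1}$ (one inside each of the first and third terms of $f$, in each case through the contraction $m_2 m_1^{-1}$). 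Collecting these pieces, $T$ admits a fixed multilinear expression $T=\Phi(m_1^{-1},m_2,m_3,m_4,m_5,m_6)$ of constant total degree, so controlling $\n{\hat T-T}_F$ reduces to controlling each $\n{\hat m_i-m_i}_F$ and then propagating through $\Phi$.

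\textbf{Concentration of the raw moments.} The bounds $\n{x}\le \Gamma$ and $\n{y}\le \n{A}\n{x}+\n{\xi}\le \Gamma P_1+P_2$ imply that every summand defining $m_i$ is a bounded random tensor whose Frobenius norm is at most $\mathrm{poly}(\Gamma,P_1,P_2)$. Applying a standard scalar Hoeffding bound entry by entry (or equivalently a matrix/tensor Bernstein inequality directly in Frobenius norm) and union-bounding over the $O(d^4)$ entries of the largest tensor $m_6$, I get
\begin{equation*}
\n{\hat m_i-m_i}_F\le \epsilon' \quad \text{for every } i\in[6]
\end{equation*}
with probability at least $1-\delta$, provided the sample size satisfies $n\ge C\,d^a(\Gamma P_1+P_2)^{b}\Gamma^{c}\log(d/\delta)/\epsilon'^2$ for universal constants $C,a,b,c$ that are read off from the largest summand $y\otimes y\otimes (x\otimes x)$.

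\textbf{Propagation and main obstacle.} To handle $m_1^{-1}$, I choose $\epsilon'\le \gamma/2$; then Weyl's inequality gives $\sigma_{\min}(\hat m_1)\ge \gamma/2$, and the standard inverse perturbation bound yields $\n{\hat m_1^{-1}-m_1^{-1}}\le 2\epsilon'/\gamma^2$. Since $\Phi$ is polynomial of constant degree and the non-inverse arguments have Frobenius norms bounded by $\mathrm{poly}(\Gamma,P_1,P_2,d)$, an elementary product-rule expansion (replace one factor at a time by its perturbation, bound the rest by their norms) gives
\begin{equation*}
\n{\hat T-T}_F\le L\cdot \epsilon', \qquad L=\mathrm{poly}(\Gamma,P_1,P_2,d,1/\gamma),
\end{equation*}
so setting $\epsilon'=\epsilon/L$ and back-substituting into the concentration bound produces the claimed sample complexity. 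The main obstacle, and essentially the only nontrivial part, is the careful bookkeeping of these polynomial prefactors: I must track Frobenius versus operator norms when propagating through $\Phi$, account for the dimensions of each tensor (the order-$4$ tensor $m_6\in\R^{k^2d^2}$ is what makes the factor $d^3$ appear in the numerator, paired with $(\Gamma P_1+P_2)^6$), and verify that the only source of a $1/\gamma$ factor is the single inverse perturbation of $m_1^{-1}$, whose squared Lipschitz constant $4/\gamma^4$ is what multiplies $1/\epsilon^2$ in the final sample count.
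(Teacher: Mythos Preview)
Your proposal is correct and follows essentially the same strategy as the paper: bound the empirical deviations of the raw moments by Bernstein/Hoeffding, handle $m_1^{-1}$ via an inverse-perturbation bound using $\sigma_{\min}(\E[xx^\top])\ge\gamma$, and push the errors through the fixed algebraic expression defining $T$. The paper organizes the bookkeeping slightly differently: rather than writing $T$ explicitly as $\Phi(m_1^{-1},\dots,m_6)$, it bounds $\|T-\hat T\|$ by reducing to $\max_{\|u\|\le 2}\|f(u)-\hat f(u)\|$ via the eigendecomposition $U=\sum_i\lambda_i u^{(i)}(u^{(i)})^\top$ of a symmetric test matrix, then bounds each of the four pieces of $f(u)-\hat f(u)$ exactly as you describe, and finally passes to Frobenius norm via $\|T-\hat T\|_F\le\sqrt{k_2+k}\,\|T-\hat T\|$; this last step is where the $d^3$ factor enters (as $k\sqrt{2k}\le d\sqrt{2d}$), not from the dimension of $m_6$ per se. One small correction to your bound on $\|y\|$: since $\sigma(Wx)$ has norm at most $\|W\|\|x\|\le\sqrt{k}\,\Gamma$, you get $\|y\|\le \Gamma P_1\sqrt{k}+P_2$, which is the quantity $\qpp$ appearing in the statement.
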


Finally we combine the above two lemmas and show that the span of the least $k$ right singular vectors of $\hat{T}$ is close to the null space of $T$.

\robustnullspace*
\begin{proof}
According to Lemma~\ref{T_estimate}, given $O(\frac{d^3\Gamma ^{14}(\qpp)^6\log(\frac{d}{\delta})}{\gamma^4\epsilon^2})$ number of i.i.d. samples, we know with probability at least $1-\delta$,
$$\n{\hat{T}-T}_F\leq \epsilon.$$ 
According to Lemma~\ref{T_singular_value}, we know $\sigma_{k_2}(T)\geq \alpha\beta^2.$ Then, due to Lemma~\ref{lm:perturbenullspace}, we have
\begin{align*}
\n{SS^\top -\hat{S}\hat{S}^\top}\leq &\frac{\sqrt{2}\n{T-\hat{T}}_F}{\sigma_{k_2}(T)}\\
\leq &\frac{\sqrt{2}\epsilon}{\alpha\beta^2}.
\end{align*}
\end{proof}

\subsubsection{Lowerbounding $k_2$-th Singular Value of $T$}\label{sec:matrixT}

\begin{figure}[h]
    \centering
    \includegraphics[width=0.5\textwidth]{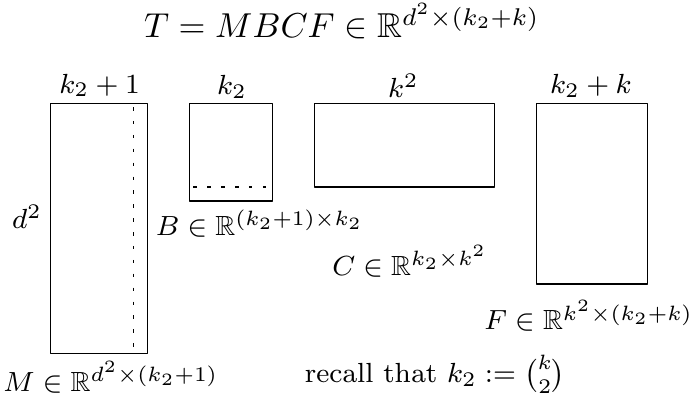}
    \caption{Characterize T as the product of four matrices.}
    \label{fig:matrixT}
\end{figure}

In order to lowerbound the $k_2$-th singular value of $T$, we first express $T$ as the product of four simpler matrices, $T=MBCF$, as illustrated in Figure~\ref{fig:matrixT}. The definitions of these four matrices $M\in \R^{d^2\times (k_2+1)}, B\in \R^{(k_2+1)\times k_2}, C\in \R^{(k_2\times k^2)}, F\in \R^{k^2\times (k_2+k)}$ will be introduced later as we explain their effects. From Lemma~\ref{lem:4thmoment_noise}, we know that 
$$T \vvvsym{U}= \sum_{1\leq i<j\leq k}(A^\top U A)_{ij}M_{ij}-\big(\sum_{1\leq i<j\leq k}(A^\top U A)_{ij}m_{ij}\big)\E[x\otimes x],$$ 
for any symmetric $k\times k$ matrix $U$.

Note that $\vvvsym{U}$ is a $(k_2+k)$-dimensional vector. For convenience, we first use matrix $F$ to transform $\vvvsym{U}$ to $\vvv{U}$, which has $k^2$ dimensions. Matrix $F$ is defined such that $F\vvvsym{U}=\vvv{U}$, for any $k\times k$ symmetric matrix $U$. Note that this is very easy as we just need to duplicate all the non-diagonal entries.

Second, we hope to get the coefficients $(A^\top U A)_{ij}$'s. Notice that 
$$\vvv{A^\top U A}=A^\top \otimes A^\top  \vvv{U}= A^\top \otimes A^\top  F \vvvsym{U}.$$
Since we only care about the elements of $A^\top U A$ at the $ij$-th position for $1\leq i<j\leq k$, we just pick corresponding rows of $A^\top \otimes A^\top$ to construct our matrix $C$, which has dimension $k_2\times k^2$. 

The first matrix $M$ is the augmented distinguishing matrix (see Definition~\ref{def:dm}). In order to better understand the reason that we need matrix $B$, let's first re-write $T \vvvsym{U}$ in the following way:
$$T \vvvsym{U}= \sum_{1\leq i<j\leq k}(A^\top U A)_{ij}\Big(M_{ij}-m_{ij}\E[x\otimes x]\Big).$$ 
Thus, $T \vvvsym{U}$ is just a linear combination of $(M_{ij}-m_{ij}\E[x\otimes x])$'s with coefficients equal to $(A^\top U A)_{ij}$'s. We have already expressed coefficients $(A^\top U A)_{ij}$'s using $CF\vvvsym{U}$. Now, we just need to use matrix $B$ to transform the \adm $M$ to a $d^2\times k_2$ matrix, with each column equal to $(M_{ij}-m_{ij}\E[x\otimes x])$. In order to achieve this, the first $k_2$ rows of $B$ is just the identity matrix $I_{k_2}$, and the last row of $B$ is $[-m_{12}, -m_{13},\cdots, -m_{1k},\\ -m_{23},-m_{24},\cdots]^\top$. 

With above characterization of $T$, we are ready to show that the $k_2$-th singular value of $T$ is lower bounded. 

\tsingularvalue*

\begin{proof}
Since matrix $C$ has dimension $k_2\times k^2$, it's clear that the rank of $T$ is at most $k_2$. We first prove that the rank of $T$ is exactly $k_2$. 

Since the first $k_2$ rows of $B$ constitute the identity matrix $I_{k_2}$, we know $B$ is a full-column rank matrix with rank equal to $k_2$. We also know that matrix $M$ is a full column rank matrix with rank $k_2+1$. Thus, the product matrix $MB$ is still a full-column rank matrix with rank $k_2$. If we can prove that the product matrix $CF$ has full-row rank equal to $k_2$. It's clear that $T=MBCF$ also has rank $k_2$. Next, we prove that $CF$ has full-row rank. 

Since $\sigmin{A}\geq \beta$, we know $A^\top \otimes A^\top$ is full rank, and a subset of its rows $C$ has full row rank. For the sake of contradiction, suppose that there exists non-zero vector $a\in \R^{k_2}$, such that $\sum_{l=1}^{k_2}a_l (CF)_{[l,:]}=0$. Note that for any $1\leq l\leq k_2$, $(CF)_{[l,ii]}=C_{[l,ii]}$ for $1\leq i\leq k$ and $(CF)_{[l,ij]}=C_{[l,ij]}+C_{[l,ji]}$ for $1\leq i<j\leq k$. Since $C$ consists of a subset of rows of $A^\top \otimes A^\top$, we know $C_{[l,ij]}=C_{[l,ji]}$ for any $l$ and any $i<j$. Thus, $\sum_{l=1}^{k_2}a_l (CF)_{[l,:]}=0$ simply implies $\sum_{l=1}^{k_2}a_l C_{[l,:]}=0$, which breaks the fact that $C$ is full-row rank. Thus, the assumption is false and $CF$ has full-row rank.

Now, let's prove that the $k_2$-th singular value of $T$ is lower bounded. We first show that in the product characterization of $T$, the smallest singular value of each individual matrix is lower bounded. According to the assumption, we know the smallest singular value of $M$ is lower bounded by $\alpha$. Since the first $k_2$ rows of matrix $B$ constitute a $k_2\times k_2$ identity matrix, we know 
$$\sigma_{\min}(B):= \min_{u:\n{u}\leq 1}\n{Bu}\geq \min_{u:\n{u}\leq 1}\n{I_{k_2\times k_2}u}=:\sigma_{\min}(I_{k_2\times k_2})=1,$$
where $u$ is any $k_2$-dimensional vector. 

Since $\sigma_{\min}(A)\geq \beta$, we know $\sigma_{\min}(A^\top \otimes A^\top)\geq \beta^2$. According to the construction of $C$, we know $C$ consists a subset of rows of $A^\top\otimes A^\top$. Denote the indices of the row not picked as $S$. We have 
\begin{align*}
\sigma_{\min}(C):=& \min_{u:\n{u}\leq 1}\n{u^\top C}\\
=& \min_{v:\n{v}\leq 1\ \wedge\ (v_i=0, \forall i\in S)} \n{v^\top (A^\top \otimes A^\top)}\\
\geq &\min_{v:\n{v}\leq 1} \n{v^\top (A^\top \otimes A^\top)}\\
=: &\ \sigma_{\min}(A^\top \otimes A^\top)\\
\geq &\beta^2,
\end{align*}
where $u$ has dimension $k_2$ and $v$ has dimension $k^2$.

We lowerbound the smallest singular value of $CF$ by showing that $\sigmin{CF}\geq \sigmin{C}$. For any unit vector $u\in \R^{k_2}$, we know $[u^\top CF]_{ii}=[u^\top C]_{ii}$ for any $i$ and $[u^\top CF]_{ij}=[u^\top C]_{ij}+[u^\top C]_{ji}$ for any $i<j$. We also know $[u^\top C]_{ij}=[u^\top C]_{ji}$ for $i<j$. Thus, we know for any unit vector $u$, $\n{u^\top CF}\geq \n{u^\top C}$, which implies $\sigmin{CF}\geq \sigmin{C}$. 

Finally, since in the beginning we have proved that matrix $T$ has rank $k_2$, the $k_2$-th singular value is exactly the smallest non-zero singular value of $T$. Denote the smallest non-zero singular of $T$ as $\sigma_{\min}^+(T)$, we have 
\begin{align*}
\sigma_{k_2}(T)=&\sigma_{\min}^+(T)\\
\geq &\sigma_{\min}(M) \sigma_{\min}(B)\sigma_{\min}(CF)\\
\geq &\alpha \beta^2,
\end{align*}
where the first inequality holds because both $M$ and $B$ has full column rank. 
\end{proof}

\subsubsection{Upperbounding $\n{\hat{T}-T}_F$}
\label{subsubsec:T}
In this section, we prove that given polynomial number of samples, $\n{\hat{T}-T}_F$ is small with high probability. We do this by standard matrix concentration inequalities. Note that our requirements on the norm of $x$ is just for convenience, and the same proof works as long as $x$ has reasonable tail-behavior (e.g. sub-Gaussian).

\testimate*
\begin{proof}
In order to get an upper bound for $\n{\hat{T}-T}_F$, we first show that $\n{\hat{T}-T}_2$ is upper bounded. We know
\begin{align*}
\n{T-\hat{T}}=& \max_{v\in \R^{k_2+k}:\n{v}\leq 1}\n{(T-\hat{T})v}\\
\leq & \max_{U\in \R^{k\times k}_{sym}:\n{U}_F\leq \sqrt{2}}\n{(T-\hat{T}) \vvvsym{U}}. 
\end{align*}
For any $k\times k$ symmetric matrix $U$ with eigenvalue decomposition $U=\sum_{i=1}^k \lambda_i u^{(i)}(u^{(i)})^\top$, according to the definition of $T$, we know 
\begin{align*}
\max_{U\in \R^{k\times k}_{sym}:\n{U}_F\leq \sqrt{2}}\n{(T-\hat{T}) \vvvsym{U}}
=& \max_{U\in \R^{k\times k}_{sym}:\n{U}_F\leq \sqrt{2}}\n{T \vvvsym{U}-\hat{T} \vvvsym{U}}\\
\leq & \max_{u^{(i)}:\n{u^{(i)}}\leq \sqrt[4]{2}} \n{\sum_{i=1}^k \lambda_i (f(u^{(i)})-\hat{f}(u^{(i)}))}\\
\leq & \max_{u^{(i)}:\n{u^{(i)}}\leq \sqrt[4]{2}} \sum_{i=1}^{k}|\lambda_i| \n{f(u^{(i)})-\hat{f}(u^{(i)}}\\
\leq & \Big(\sum_{i=1}^{k}|\lambda_i|\Big) \max_{u:\n{u}\leq \sqrt[4]{2}} \n{f(u)-\hat{f}(u)}\\
\leq & \sqrt{k} \sqrt{\sum_{i=1}^{k}\lambda_i^2} \max_{u:\n{u}\leq \sqrt[4]{2}} \n{f(u)-\hat{f}(u)}\\
= & \sqrt{k} \n{U}_F \max_{u:\n{u}\leq \sqrt[4]{2}} \n{f(u)-\hat{f}(u)}\\
\leq &\sqrt{2k} \max_{u:\n{u}\leq 2} \n{f(u)-\hat{f}(u)}，
\end{align*}
where $\hat{f}(u)=\hat{T}\vvvsym{uu^\top}$ and the fourth inequality uses the Cauchy-Schwarz inequality. Next, we only need to upper bound $\max_{u:\n{u}\leq 2} \n{f(u)-\hat{f}(u)}$. Recall that 
\begin{align*}
f(u)=&2\detector-\E\big[(u^\top  y)^2 (x\otimes x)\big]\\
&+\Big(\E\big[(u^\top y)^2\big]-2\E\big[(u^\top y)x^\top\big]\E\big[xx^\top\big]^{-1}\E\big[(u^\top y)x\big]\Big)\E[x\otimes x],
\end{align*}
and 
\begin{align*}
\hat{f}(u)=&2\empdetector-\hat{\E}\big[(u^\top  y)^2 (x\otimes x)\big]\\
&+\Big(\hat{\E}\big[(u^\top y)^2\big]-2\hat{\E}\big[(u^\top y)x^\top\big]\hat{\E}\big[xx^\top\big]^{-1}\hat{\E}\big[(u^\top y)x\big]\Big)\hat{\E}[x\otimes x]. 
\end{align*}
Notice that 
$$\Big(\detector\Big)^\top =\E\big[(u^\top  y)x^\top \big]\E\big[xx^\top \big]^{-1}\E\big[(u^\top  y)x (x\otimes x)^\top \big].$$
We first show that given polynomial number of samples, 
$$\left \lVert 2\hat{\E}\big[(u^\top  y)x^\top \big]\hat{\E}\big[xx^\top \big]^{-1}\hat{\E}\big[(u^\top  y)x (x\otimes x)^\top \big]-2\E\big[(u^\top  y)x^\top \big]\E\big[xx^\top \big]^{-1}\E\big[(u^\top  y)x (x\otimes x)^\top \big]\right \rVert$$
is upper bounded with high probability. 

Since each row of $W$ has unit norm, we have $\n{W}\leq \sqrt{k}$.
Due to the assumption that $\n{x}\leq \Gamma , \n{A}\leq  P_1 , \n{\xi}\leq  P_2$, we have 
\begin{align*}
\n{(u^\top y)x^\top } \leq& \n{u}\n{A\sigma(Wx)+\xi}\n{x}\\
\leq& \n{u}(\n{A}\n{W}\n{x}+\n{\xi})\n{x}\\
\leq& 2\Gamma (\qpp).
\end{align*}
According to Lemma~\ref{lm:bernstein}, we know given $O(\frac{\Gamma ^2 (\qpp)^2\log(\frac{d}{\delta})}{\epsilon^2})$ number of samples,
$$\Big\lVert \hat{\E}\big[(u^\top  y)x^\top \big]-\E\big[(u^\top  y)x^\top \big] \Big\rVert\leq \epsilon,$$
with probability at least $1-\delta$. 

Similarly, we can show that given $O(\frac{\Gamma ^6 (\qpp)^2\log(\frac{d}{\delta})}{\epsilon^2})$ number of samples,
$$\Big\lVert \hat{\E}\big[(u^\top  y)x (x\otimes x)^\top \big]-\E\big[(u^\top  y)x (x\otimes  x)^\top \big] \Big\rVert\leq \epsilon,$$
with probability at least $1-\delta$. 

Since $\n{xx^\top}\leq \Gamma ^2$, we know that given $O(\frac{\Gamma ^4\log(\frac{d}{\delta})}{\epsilon^2})$ number of samples, 
$$\Big\lVert \hat{\E}\big[xx^\top \big]-\E\big[xx^\top \big] \Big\rVert\leq \epsilon,$$
with probability at least $1-\delta$. Suppose that $\epsilon\leq \gamma/2\leq \sigma_{\min}(\E[xx^\top])/2,$ we know $\hat{\E}[xx^\top]$ has full rank. According to Lemma~\ref{lm:inverse_perturb}, we have 
$$\Big\lVert \hat{\E}\big[xx^\top \big]^{-1}-\E\big[xx^\top \big]^{-1} \Big\rVert
\leq 2\sqrt{2}\frac{\bn{\hat{\E}\big[xx^\top \big]-\E\big[xx^\top \big]}}{\sigma_{\min}^2(\E[xx^\top])}
\leq 2\sqrt{2}\epsilon/\gamma^2,$$
with probability at least $1-\delta$. 

By union bound, we know for any $\epsilon<\gamma/2,$ given $O(\frac{\Gamma ^6 (\qpp)^2\log(\frac{d}{\delta})}{\epsilon^2})$ number of samples, with probability at least $1-\delta$, we have 
\begin{align*}
&\Big\lVert \hat{\E}\big[(u^\top  y)x^\top \big]-\E\big[(u^\top  y)x^\top \big] \Big\rVert\leq \epsilon,\\
&\Big\lVert \hat{\E}\big[xx^\top \big]^{-1}-\E\big[xx^\top \big]^{-1} \Big\rVert\leq 2\sqrt{2}\epsilon/\gamma^2,\\
&\Big\lVert \hat{\E}\big[(u^\top  y)x (x\otimes x)^\top \big]-\E\big[(u^\top  y)x (x\otimes x)^\top \big] \Big\rVert\leq \epsilon.
\end{align*}
Define
\begin{align*}
&E_1:=\hat{\E}\big[(u^\top  y)x^\top \big]-\E\big[(u^\top  y)x^\top \big],\\
&E_2:=\hat{\E}\big[xx^\top \big]^{-1}-\E\big[xx^\top \big]^{-1},\\
&E_3:=\hat{\E}\big[(u^\top  y)x (x\otimes x)^\top \big]-\E\big[(u^\top  y)x (x\otimes x)^\top \big].
\end{align*}
Then, we have 
\begin{align*}
&\left \lVert 2\hat{\E}\big[(u^\top  y)x^\top \big]\hat{\E}\big[xx^\top \big]^{-1}\hat{\E}\big[(u^\top  y)x (x\otimes x)^\top \big]-2\E\big[(u^\top  y)x^\top \big]\E\big[xx^\top \big]^{-1}\E\big[(u^\top  y)x (x\otimes x)^\top \big]\right \rVert\\
\leq & 2\bn{E_1}\bn{E_2}\bn{E_3}
+ 2\bn{E_1}\bn{\E\big[xx^\top \big]^{-1}}\bn{\E\big[(u^\top  y)x (x\otimes x)^\top \big]}\\
&+2\bn{\E\big[(u^\top  y)x^\top \big]}\bn{E_2}\bn{\E\big[(u^\top  y)x (x\otimes x)^\top \big]}
+2\bn{\E\big[(u^\top  y)x^\top \big]}\bn{\E\big[xx^\top \big]^{-1}}\bn{E_3}\\
&+2\bn{E_1}\bn{E_2}\bn{\E\big[(u^\top  y)x (x\otimes x)^\top \big]}+2\bn{\E\big[(u^\top  y)x^\top \big]}\bn{E_2}\bn{E_3}+2\bn{E_1}\bn{\E\big[xx^\top \big]^{-1}}\bn{E_3}\\
\leq& 2\Big(\frac{2\sqrt{2}\epsilon^3}{\gamma^2}+\frac{2\Gamma ^3 (\qpp) \epsilon}{\gamma}+\frac{8\sqrt{2}\Gamma ^4 (\qpp)^2\epsilon}{\gamma^2}+\frac{2\Gamma (\qpp) \epsilon}{\gamma}\\ 
&+\frac{4\sqrt{2}\Gamma ^3 (\qpp) \epsilon^2}{\gamma^2}
 +\frac{4\sqrt{2}\Gamma (\qpp) \epsilon^2}{\gamma^2} +\frac{\epsilon^2}{\gamma}\Big) \\
= & O(\frac{\Gamma ^4(\qpp)^2 \epsilon}{\gamma^2}).
\end{align*}
Thus, given $O(\frac{\Gamma ^{14} (\qpp)^6\log(\frac{d}{\delta})}{\gamma^4\epsilon^2})$ number of samples, we know 
\begin{align*}
&\left \lVert 2\empdetector-2\detector \right \rVert \\
=& \left \lVert 2\hat{\E}\big[(u^\top  y)x^\top \big]\hat{\E}\big[xx^\top \big]^{-1}\hat{\E}\big[(u^\top  y)x (x\otimes x)^\top \big]-2\E\big[(u^\top  y)x^\top \big]\E\big[xx^\top \big]^{-1}\E\big[(u^\top  y)x (x\otimes x)^\top \big]\right \rVert\\
\leq& \epsilon,
\end{align*}
with probability at least $1-\delta.$

Now, let's consider the second term
$$\bbn{\hat{\E}\big[(u^\top  y)^2 (x\otimes x)\big]-\E\big[(u^\top  y)^2 (x\otimes x)\big] }.$$
Since $\bn{(u^\top  y)^2 (x\otimes x)}\leq 4\Gamma ^2 (\qpp)^2,$ according to Lemma~\ref{lm:bernstein}, we know given $O(\frac{\Gamma ^4 (\qpp)^4 \log(\frac{d}{\delta})}{\epsilon^2}),$
$$\bbn{\hat{\E}\big[(u^\top  y)^2 (x\otimes x)\big]-\E\big[(u^\top  y)^2 (x\otimes x)\big] }\leq \epsilon,$$
with probability at least $1-\delta.$

Next, let's look at the third term 
$$\bbn{\hat{\E}\big[(u^\top y)^2\big]\hat{\E}[x\otimes x]-\E\big[(u^\top y)^2\big]\E[x \otimes x]}.$$
Again, using Lemma~\ref{lm:bernstein} and union bound, we know given $O(\frac{\Gamma ^2 (\qpp)^2\log(\frac{d}{\delta})}{\epsilon^2})$ number of samples, we have 
\begin{align*}
&\bbn{\hat{\E}\big[(u^\top y)^2\big]-\E\big[(u^\top y)^2\big]}\leq \epsilon,\\
&\bbn{\hat{\E}[x\otimes x]-\E[x\otimes x]}\leq \epsilon.
\end{align*}
Thus,
Define
\begin{align*}
&E_4:=\hat{\E}\big[(u^\top y)^2\big]-\E\big[(u^\top y)^2\big]\\
&E_5:=\hat{\E}[x\otimes x]-\E[x\otimes x].
\end{align*}
Then, we have 
\begin{align*}
\bbn{\hat{\E}\big[(u^\top y)^2\big]\hat{\E}[x\otimes x]-\E\big[(u^\top y)^2\big]\E[x \otimes x]}
\leq& \bn{E_4}\bn{E_5}+\bn{E_4}\bn{\E[x\otimes x]}+\bn{\E\big[(u^\top y)^2\big]}\bn{E_5}\\
\leq& \epsilon^2 + \Gamma ^2\epsilon + 4(\qpp)^2 \epsilon\\
=& O((\qpp)^2 \epsilon).
\end{align*}
Thus, we know that given $O(\frac{\Gamma ^2 (\qpp)^6\log (\frac{d}{\delta})}{\epsilon^2 })$ number of samples, we know 
$$\bbn{\hat{\E}\big[(u^\top y)^2\big]\hat{\E}[x\otimes x]-\E\big[(u^\top y)^2\big]\E[x \otimes x]}\leq \epsilon,$$
with probability at least $1-\delta.$

Now, let's bound the last term,
$$\bbn{2\Big(\hat{\E}\big[(u^\top y)x^\top\big]\hat{\E}\big[xx^\top\big]^{-1}\hat{\E}\big[(u^\top y)x\big]\Big)\hat{\E}[x\otimes x]-2\Big(\E\big[(u^\top y)x^\top\big]\E\big[xx^\top\big]^{-1}\E\big[(u^\top y)x\big]\Big)\E[x\otimes x] }.$$
Similar as the first term, we can show that given $O(\frac{\Gamma ^{10}(\qpp)^6 \log(\frac{d}{\delta})}{\gamma^4\epsilon^2})$ number of samples, we have 
$$\bbn{2\Big(\hat{\E}\big[(u^\top y)x^\top\big]\hat{\E}\big[xx^\top\big]^{-1}\hat{\E}\big[(u^\top y)x\big]\Big)\hat{\E}[x\otimes x]-2\Big(\E\big[(u^\top y)x^\top\big]\E\big[xx^\top\big]^{-1}\E\big[(u^\top y)x\big]\Big)\E[x\otimes x] }\leq \epsilon$$
with probability at least $1-\delta.$

Now, we are ready to combine our bound for each of four terms. By union bound, we know given \\$O(\frac{\Gamma ^{14}(\qpp)^6 \log(\frac{d}{\delta})}{\gamma^4\epsilon^2})$ number of samples,
\begin{align*}
&\bbn{2\empdetector - 2\detector}\leq \epsilon\\
&\bbn{\hat{\E}\big[(u^\top  y)^2 (x\otimes x)\big]-\E\big[(u^\top  y)^2 (x\otimes x)\big] }\leq \epsilon,\\
&\bbn{\hat{\E}\big[(u^\top y)^2\big]\hat{\E}[x\otimes x]-\E\big[(u^\top y)^2\big]\E[x \otimes x]}\leq \epsilon,\\
&\bbn{2\Big(\hat{\E}\big[(u^\top y)x^\top\big]\hat{\E}\big[xx^\top\big]^{-1}\hat{\E}\big[(u^\top y)x\big]\Big)\hat{\E}[x\otimes x]-2\Big(\E\big[(u^\top y)x^\top\big]\E\big[xx^\top\big]^{-1}\E\big[(u^\top y)x\big]\Big)\E[x\otimes x] }\leq \epsilon,
\end{align*}
hold with probability at least $1-\delta$. Thus, we know
$$\max_{u:\n{u}\leq 2}\n{f(u)-\hat{f}(u)}\leq \epsilon+\epsilon+\epsilon+\epsilon=4\epsilon$$
with probability at least $1-\delta.$

Recall that 
\begin{align*}
\n{\hat{T}-T}_F\leq& \sqrt{k_2+k}\n{\hat{T}-T}\\
\leq& k\sqrt{2k}\max_{u:\n{u}\leq 2}\n{f(u)-\hat{f}(u)},
\end{align*}
where the second inequality holds since $\n{\hat{T}-T}\leq \sqrt{2k}\max_{u:\n{u}\leq 2}\n{f(u)-\hat{f}(u)}.$

Thus, we know given $O(\frac{\Gamma ^{14}(\qpp)^6 \log(\frac{d}{\delta})}{\gamma^4\epsilon^2})$ number of samples, 
$$\n{\hat{T}-T}_F\leq k\sqrt{2k}\epsilon\leq d\sqrt{2d}\epsilon$$
with probability at least $1-\delta$. Thus, given $O(\frac{d^3\Gamma ^{14}(\qpp)^6 \log(\frac{d}{\delta})}{\gamma^4\epsilon^2})$ number of samples, 
$$\n{\hat{T}-T}_F\leq \epsilon$$
with probability at least $1-\delta$.
\end{proof}

\subsection{Robust Analysis for Simultaneous Diagonalization}\label{sec:robust_findZ}

In this section, we will show that the simultaneous diagonalization step in our algorithm is robust. Let $S$ and $\hat{S}$ be two $(k_2+k)$ by $k$ matrices, whose columns consist of the least $k$ right singular vectors of $T$ and $\hat{T}$ respectively.

According to Lemma~\ref{lm:robust_nullspace}, we know with polynomial number of samples, the Frobenius norm of $SS^\top -\hat{S}\hat{S}^\perp$ is well bounded. However, due to the rotation issue of subspace basis, we cannot conclude that $\n{S-\hat{S}}_F$ is small. Only after appropriate alignment, the difference between $S$ and $\hat{S}$ becomes small. 

\begin{lemma}\label{lm:alignment}
Let $S$ and $\hat{S}$ be two $(k_2+k)$ by $k$ matrices, whose columns consist of the least $k$ right singular vectors of $T$ and $\hat{T}$ respectively. If $\n{SS^\top -\hat{S}\hat{S}^\top}_F\leq \epsilon$, there exists an rotation matrix $R\in \R^{k\times k}$ satisfying $RR^\top = R^\top R= I_k$, such that
$$\n{\hat{S}-SR}_F\leq 2\epsilon.$$
\end{lemma}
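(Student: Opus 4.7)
The plan is to construct $R$ via the standard orthogonal Procrustes trick and then compare the two Frobenius norms through the singular values of $S^\top \hat{S}$. Since both $S$ and $\hat{S}$ have orthonormal columns (they are $k$ right singular vectors), $SS^\top$ and $\hat{S}\hat{S}^\top$ are the orthogonal projectors onto their respective $k$-dimensional column spans, and it is classical that the Procrustes distance and the projector distance are both controlled by the principal angles between these subspaces.

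Concretely, I would let $S^\top \hat{S} = U\Sigma V^\top$ be a singular value decomposition with singular values $\sigma_1,\dots,\sigma_k$, and define $R := UV^\top$, which is orthogonal (this is the classical minimizer of $\|\hat{S}-SR'\|_F$ over orthogonal $R'$). Using $S^\top S = \hat{S}^\top\hat{S} = I_k$, a direct expansion gives
\begin{equation*}
\|\hat{S}-SR\|_F^2 \;=\; 2k - 2\,\mathrm{tr}(V^\top \hat{S}^\top S U) \;=\; 2k - 2\sum_{i=1}^k \sigma_i,
\end{equation*}
while an analogous expansion of the projector difference yields
\begin{equation*}
\|SS^\top - \hat{S}\hat{S}^\top\|_F^2 \;=\; 2k - 2\,\mathrm{tr}(SS^\top \hat{S}\hat{S}^\top) \;=\; 2k - 2\|S^\top \hat{S}\|_F^2 \;=\; 2k - 2\sum_{i=1}^k \sigma_i^2.
\end{equation*}

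The final step is to compare the two. Because $S^\top \hat{S}$ has operator norm at most $\|S\|\|\hat{S}\| = 1$, each $\sigma_i$ lies in $[0,1]$, so $1-\sigma_i \leq (1-\sigma_i)(1+\sigma_i) = 1-\sigma_i^2$ termwise. Summing over $i$ gives
\begin{equation*}
\|\hat{S}-SR\|_F^2 \;=\; 2\sum_{i=1}^k (1-\sigma_i) \;\leq\; 2\sum_{i=1}^k (1-\sigma_i^2) \;=\; \|SS^\top - \hat{S}\hat{S}^\top\|_F^2 \;\leq\; \epsilon^2,
\end{equation*}
which in particular implies $\|\hat{S}-SR\|_F \leq \epsilon \leq 2\epsilon$, as claimed. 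There is no real obstacle here, only bookkeeping; the only subtlety worth checking is the orthonormality of the columns of $S$ and $\hat{S}$ (so that $S^\top S = \hat{S}^\top \hat{S} = I_k$ and the singular values of $S^\top \hat{S}$ are bounded by $1$), which is automatic from the SVD convention used to define $S$ and $\hat{S}$ in Algorithm~\ref{alg:2layer_general}.
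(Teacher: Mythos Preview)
Your argument is correct and in fact yields the sharper bound $\|\hat{S}-SR\|_F\le \epsilon$, not merely $2\epsilon$; every step checks out once one uses $S^\top S=\hat{S}^\top\hat{S}=I_k$. The paper does not carry out this computation: it simply invokes an external alignment lemma (stated there as Lemma~\ref{lm:alignment_tool}, from \cite{ge2017no}), which for general $S,\hat{S}$ gives
\[
\min_{Z^\top Z=ZZ^\top=I_k}\|\hat{S}-SZ\|_F^2 \;\le\; \frac{\|SS^\top-\hat{S}\hat{S}^\top\|_F^2}{2(\sqrt{2}-1)\,\sigma_k(SS^\top)},
\]
and then plugs in $\sigma_k(SS^\top)=1$ to obtain $\|\hat{S}-SR\|_F\le \epsilon/\sqrt{2(\sqrt{2}-1)}\le 2\epsilon$. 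Your direct Procrustes/principal-angle argument is more elementary, self-contained, and delivers a better constant; the paper's route has the advantage of being a one-line citation and of applying verbatim when the columns of $S,\hat{S}$ are not exactly orthonormal (only $\sigma_k(SS^\top)$ enters), which is not needed here.
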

\begin{proof}
Since $S$ has orthonormal columns, we have $\sigma_{k}(SS^\top)=1$. Then, according to Lemma~\ref{lm:alignment_tool}, we know there exists rotation matrix $R$ such that
\begin{align*}
\n{\hat{S}-SR}_F\leq& \frac{\n{SS^\top -\hat{S}\hat{S}^\top}_F}{\sqrt{2(\sqrt{2}-1)}\sqrt{\sigma_k(SS^\top)}}\leq  2\epsilon.
\end{align*}
\end{proof}

Let the $k$ columns of $S$ be $\vvvsym{U_1}, \vvvsym{U_2}, \cdots, \vvvsym{U_k}$. Note each $U_i$ can be expressed as $A^{-\top} D_i A^{-1}$, where $D_i$ is a diagonal matrix. Let $Q$ be a $k\times k$ matrix, whose $i$-th column consists of the diagonal elements of $D_i$, such that $Q_{ij}$ equals the $j$-th diagonal element of $D_i$. Let $\vvvsym{X}= S R\zeta_1, \vvvsym{Y}=SR\zeta_2$, where $R$ is the rotation matrix in Lemma~\ref{lm:alignment} and $\zeta_1, \zeta_2$ are two independent standard Gaussian vectors. Let $D_X=\mbox{diag}(QR\zeta_1)$ and $D_Y=\mbox{diag}(QR\zeta_2)$. It's not hard to check that $X= A^{-\top} D_X A^{-1}$ and $Y= A^{-\top} D_Y A^{-1}$. Furthermore, we have $XY^{-1}= A^{-\top} D_X D_Y^{-1} A^\top$. Next, we show that the diagonal elements of $D_X D_Y^{-1}$ are well separated.
\begin{lemma}\label{lm:sep}
Assume that $\n{A}\leq  P_1 , \sigmin{A}\geq \beta$. Then for any $\delta > 0$ we know with probability at least $1-\delta$, we have 
$$\mbox{sep}(D_X D_Y^{-1})\geq \mbox{poly}(1/d, 1/ P_1 , \beta, \delta),$$
where $\mbox{sep}(D_X D_Y^{-1}):=\min_{i \neq j} | (D_X D_Y^{-1})_{ii} - (D_X D_Y^{-1})_{jj} |$.
\end{lemma}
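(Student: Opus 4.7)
The plan is to reduce the separation to a bilinear form in the independent Gaussians $\zeta_1,\zeta_2$ and then apply Gaussian anti-concentration. Write $p_i^\top$ for the $i$-th row of $P := QR$. Then $(D_X D_Y^{-1})_{ii} = (p_i^\top\zeta_1)/(p_i^\top\zeta_2)$, so for $i\neq j$,
$$\left|(D_X D_Y^{-1})_{ii}-(D_X D_Y^{-1})_{jj}\right| = \frac{\bigl|(p_i^\top\zeta_1)(p_j^\top\zeta_2)-(p_j^\top\zeta_1)(p_i^\top\zeta_2)\bigr|}{|p_i^\top\zeta_2|\,|p_j^\top\zeta_2|}.$$
I would lower bound the numerator and upper bound the denominator for each of the $\binom{k}{2}$ index pairs, each with failure probability at most $\delta/k^2$, and then union bound.

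The first step is to bound the conditioning of $Q$ in terms of $\beta$ and $P_1$. Recall each column of $S$ is $\vvvsym{U_i}$ with $U_i = \sum_j c_{ij} z_j z_j^\top$ and $Q_{ji} = c_{ij}\lambda_j^2$, where $z_j = \lambda_j A^{-\top}e_j$. The normalization $\|z_j\|=1$ gives $\lambda_j = 1/\|A^{-\top}e_j\| \in [\beta, P_1]$. Orthonormality of $S$'s columns forces the change-of-basis matrix $C=(c_{ij})$ to inherit conditioning from the $(k_2+k)\times k$ matrix $\Phi$ whose columns are $\vvvsym{z_j z_j^\top}$, which is itself well-conditioned in terms of $A$ by standard Khatri-Rao-type arguments. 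Combined with the bounds on $\lambda_j$, this yields $\sigma_{\min}(Q)$ and $1/\sigma_{\max}(Q)$ both at least $\mathrm{poly}(\beta,1/P_1)$. Because $R$ is orthogonal, $p_i = R^\top q_i$ has the same norm and pairwise inner products as $q_i$. For any unit $(s,t)\in\R^2$, $s q_i + t q_j = Q^\top(s e_i + t e_j)$, so $\|s p_i + t p_j\| = \|s q_i + t q_j\| \geq \sigma_{\min}(Q)$; hence the Gram matrix $G_{ij}$ of $(p_i, p_j)$ satisfies $\lambda_{\min}(G_{ij})\geq \sigma_{\min}(Q)^2$, giving $\sqrt{\det G_{ij}} \geq \sigma_{\min}(Q)^2$.

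For the numerator, applying the Cholesky factor $L$ of $G_{ij}$ to the Gaussians rewrites
$$(p_i^\top\zeta_1)(p_j^\top\zeta_2)-(p_j^\top\zeta_1)(p_i^\top\zeta_2) = \sqrt{\det G_{ij}}\cdot (\xi_1\eta_2-\xi_2\eta_1),$$
where $\xi_1,\xi_2,\eta_1,\eta_2$ are i.i.d.\ standard Gaussians. Anti-concentration of $|\xi_1\eta_2-\xi_2\eta_1|$ follows by conditioning twice: first, $|\eta_2|\geq \tau$ with probability $1-O(\tau)$; then, conditional on $(\xi_2,\eta_1,\eta_2)$, the expression is Gaussian in $\xi_1$ with standard deviation $|\eta_2|\geq\tau$, so $|\xi_1\eta_2-\xi_2\eta_1|\geq t$ with probability $1-O(t/\tau)$. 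Optimizing in $\tau$ gives $|\xi_1\eta_2-\xi_2\eta_1|\geq \mathrm{poly}(\delta/k^2)$ except with probability $\delta/(2k^2)$. For the denominator, $p_i^\top\zeta_2$ is Gaussian with standard deviation $\|p_i\|\leq\sigma_{\max}(Q)$, so $|p_i^\top\zeta_2|\leq \sigma_{\max}(Q)\sqrt{O(\log(k/\delta))}$ except with probability $\delta/(2k^2)$. Union bounding over pairs and combining with $\sqrt{\det G_{ij}}\geq \sigma_{\min}(Q)^2$ and the polynomial bounds on $\sigma_{\min}(Q), 1/\sigma_{\max}(Q)$ yields the claimed $\mathrm{poly}(1/d,1/P_1,\beta,\delta)$ lower bound on $\mathrm{sep}(D_X D_Y^{-1})$.

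The main obstacle is the quantitative conditioning of $Q$, equivalently a polynomial lower bound on $\sigma_{\min}(\Phi)$ for the $(k_2+k)\times k$ matrix $\Phi$ whose columns are $\vvvsym{z_j z_j^\top}$. This is a Khatri-Rao-style statement relating the conditioning of a second-order-tensor matrix to that of its rank-one factors $z_j$ (and hence to $A$), and can be imported from the standard tensor-decomposition literature. Everything else is a careful but routine combination of Gaussian anti-concentration and concentration with a union bound over the $\binom{k}{2}$ pairs.
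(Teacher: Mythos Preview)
Your proposal is correct and follows the same two-step skeleton as the paper: first control the conditioning of $Q$, then use Gaussian anti-concentration on the resulting bilinear form. The implementations differ in two places, and it is worth seeing the paper's versions because they are shorter.

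For the conditioning of $Q$, you route through the matrix $\Phi$ of columns $\vvvsym{z_jz_j^\top}$ and appeal to Khatri--Rao results. This works, but the paper's argument is more direct and completely self-contained: writing the $i$-th column of $S$ as $\vvvsym{U_i}$ with $U_i=A^{-\top}D_iA^{-1}$ gives $\vvv{U_i}=(A^{-\top}\otimes A^{-\top})\vvv{D_i}$, so the $k^2\times k$ matrix $U$ whose columns are $\vvv{U_i}$ satisfies $U=(A^{-\top}\otimes A^{-\top})\bar Q$, where $\bar Q$ has columns $\vvv{D_i}$ (i.e.\ $Q$ embedded in $\R^{k^2}$). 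Since a subset of rows of $U$ is the orthonormal matrix $S$, one gets $\sigmin{U}\ge 1$ and $\|U\|\le\sqrt 2$, and then $\sigmin{Q}=\sigmin{\bar Q}\ge\sigmin{U}/\|A^{-\top}\otimes A^{-\top}\|\ge\beta^2$ and $\|Q\|\le\sqrt 2 P_1^2$ follow immediately. So what you flag as the ``main obstacle'' is in fact a three-line computation once you use the Kronecker identity; no external Khatri--Rao lemma is needed.

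For the separation itself, your Cholesky reduction to $\sqrt{\det G_{ij}}\,|\xi_1\eta_2-\xi_2\eta_1|$ is a clean alternative to the paper's approach. The paper instead conditions on $\zeta_2$, writes the difference as $c_i\inner{q_i}{R\zeta_1}-c_j\inner{q_j}{R\zeta_1}$ with $c_i,c_j$ now fixed, decomposes $q_i=q_{i,j}^\perp+\lambda_{i,j}q_j$ with $q_{i,j}^\perp=\proj{q_j^\perp}q_i$, and uses that $\inner{q_{i,j}^\perp}{R\zeta_1}$ is independent of $\inner{q_j}{R\zeta_1}$ to apply one-dimensional Gaussian anti-concentration. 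Both arguments exploit $\|\proj{q_j^\perp}q_i\|\ge\sigmin{Q}$ (equivalently $\lambda_{\min}(G_{ij})\ge\sigmin{Q}^2$), so they are essentially the same decoupling in different coordinates; your version has the minor advantage of treating $\zeta_1,\zeta_2$ symmetrically.
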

\begin{proof}
We first show that matrix $Q$ is well-conditioned. Since $U_i = A^{-\top}  D_i A^{-1}$, we have $\vvv{U_i}=A^{-\top} \otimes A^{-\top} \vvv{D_i}.$ Let $U$ be a $k^2\times k$ matrix whose columns consist of $\vvv{U_i}$'s. Also define $\bar{Q}$ as a $k^2\times k$ matrix whose columns are $\vvv{D_i}$'s. Note that matrix $\bar{Q}$ only has $k$ non-zero rows, which are exactly matrix $Q$. With the above definition, we have $U=A^{-\top} \otimes A^{-\top}  \bar{Q}$. Since $\sigmin{U}\leq  \n{A^{-\top} \otimes A^{-\top} }\sigmin{\bar{Q}}$, we have
\begin{align*}
\sigmin{\bar{Q}}\geq \frac{\sigmin{U}}{\n{A^{-\top} \otimes A^{-\top} }}.
\end{align*}

Notice that a subset of rows of $U$ constitute matrix $S$, which is an orthonormal matrix. Thus, we have $\sigmin{U}\geq \sigmin{S}=1$. Since we assume $\sigmin{A}\geq \beta$, we have 
\begin{align*}
\n{A^{-\top} \otimes A^{-\top} }= \ns{A^{-\top} }= \frac{1}{\sigmin{A}^2} \leq  \frac{1}{\beta^2}.
\end{align*}
Thus, we have $\sigmin{\bar{Q}}\geq \beta^2$, which implies $\sigmin{Q}\geq \beta^2$.

We also know $\n{U}\geq \sigmin{A^{-\top} \otimes A^{-\top} }\n{\bar{Q}}$, thus 
$$\n{\bar{Q}}\leq \frac{\n{U}}{\sigmin{A^{-\top} \otimes A^{-\top} }}.$$
Since $\n{S}=1$, we know $\n{U}\leq \sqrt{2}.$ For the smallest singular value of $A^{-\top} \otimes A^{-\top} $, we have
$$\sigmin{A^{-\top} \otimes A^{-\top} }=\sigma_{\min}^2(A^{-1})=\frac{1}{\ns{A}}\geq \frac{1}{ P_1 ^2}.$$
Thus, we have $\n{\bar{Q}}\leq \sqrt{2} P_1 ^2$, which implies $\n{Q}\leq \sqrt{2} P_1 ^2$.

Now, let's prove that the diagonal elements of $D_X D_Y^{-1}$ are well-separated. Let $q_i^\top$ be the $i$-th row vector of $Q$. Then we know the $i$-th diagonal element of $D_X D_Y^{-1}$ is $\frac{\inner{q_i}{R\zeta_1}}{\inner{q_i}{R\zeta_2}}$. Since $\n{Q}\leq \sqrt{2} P_1 ^2$, we have $\n{q_i}\leq \sqrt{2}P_1^2$ for every row vector.

It's not hard to show that with probability at least $\hp$, we have $|{\inner{q_i}{R\zeta_2}}|\leq \mbox{poly}(d, P_1)$ for each $i$.
Now given $\zeta_2$ for which this happens, we have $\frac{\inner{q_i}{R\zeta_1}}{\inner{q_i}{R\zeta_2}}-\frac{\inner{q_j}{R\zeta_1}}{\inner{q_j}{R\zeta_2}}=c_i\inner{q_i}{R\zeta_1}-c_j\inner{q_j}{R\zeta_1}$, where $c_i,c_j$ have magnitude as least $\mbox{poly}(1/d,1/P_1)$. Since $\sigma_{min}(Q) \ge \beta^2$, we know $\|\mbox{Proj}_{q_j^\perp}q_i\| \ge \beta^2$ (because otherwise there exists $\lambda$ such that $\|(e_i+\lambda e_j)\|\sigma_{min}(Q) \le \|(e_i+\lambda e_j)^\top Q\| = \|\mbox{Proj}_{q_j^\perp}q_i\| < \beta^2$, which is a contradiction). Let $q_{i,j}^\perp = \mbox{Proj}_{q_j^\perp}q_i = q_i - \lambda_{i,j}q_j$, we can rewrite this as 
$$
\frac{\inner{q_i}{R\zeta_1}}{\inner{q_i}{R\zeta_2}}-\frac{\inner{q_j}{R\zeta_1}}{\inner{q_j}{R\zeta_2}} = c_i\inner{q_i}{R\zeta_1}-c_j\inner{q_j}{R\zeta_1} = c_i\inner{q_{i,j}^\perp}{R\zeta_1}-(c_j+\lambda_{i,j}c_i)\inner{q_j}{R\zeta_1}.
$$

By properties of Gaussians, we know $\inner{q_{i,j}^\perp}{R\zeta_1}$ is independent of $\inner{q_j}{R\zeta_1}$, so we can first fix $\inner{q_j}{R\zeta_1}$ and apply anti-concentration of Gaussians (see Lemma~\ref{lm:anti-concentration}) to $\inner{q_{i,j}^\perp}{R\zeta_1}$. As a result we know with probability at least $1-\delta/k^2$:


$$\Big|\frac{\inner{q_i}{R\zeta_1}}{\inner{q_i}{R\zeta_2}}-\frac{\inner{q_j}{R\zeta_1}}{\inner{q_j}{R\zeta_2}}\Big|\geq \mbox{poly}(1/d,1/ P_1,\beta,\delta).$$

By union bound, we know with probability at least $1-\delta,$
$$\mbox{sep}(D_X D_Y^{-1})\geq \mbox{poly}(1/d,1/ P_1,\beta,\delta).$$
\end{proof}

Let $\hat{X}=\hat{S}\zeta_1$ and $\hat{Y}=\hat{S}\zeta_2$. Next, we prove that the eigenvectors of $\hat{X}\hat{Y}^{-1}$ are close to the eigenvectors of $X Y^{-1}$.

\robustgetZ*

\begin{proof}
Let $z_1',\cdots,z_k'$ be the eigenvectors of $XY^{-1}$ (before sign flip step). Similarly define $\hat{z}_1',\cdots,\hat{z}_k'$ for $\hat{X}\hat{Y}^{-1}$. We first prove that the eigenvectors of $\hat{X}\hat{Y}^{-1}$ are close to the eigenvectors of $XY^{-1}$. 

Let $\hat{X}= X+E_X$ and $\hat{Y}=Y+E_Y$. Then we have 
$$\hat{X}\hat{Y}^{-1} = XY^{-1}(I+F)+G,$$
where $F=-E_Y(I+ Y^{-1}E_Y)^{-1}Y^{-1}$ and $G=E_X \hat{Y}^{-1}.$ According to Lemma~\ref{lm:FG}, we have $\n{F}\leq \frac{\n{E_Y}}{\sigmin{Y}-\n{E_Y}}$ and $\n{G}\leq \frac{\n{E_X}}{\sigmin{\hat{Y}}}$. In order to bound the perturbation matrices $\n{F}$ and $\n{G}$, we need to first bound $\n{E_X}, \n{E_Y}$ and $\sigmin{Y},\sigmin{\hat{Y}}$. 

As we know, $E_X = \hat{X}-X=(\hat{S}-SR)\zeta_1$. According to Lemma~\ref{lm:alignment}, we have $\n{\hat{S}-SR}\leq \n{\hat{S}-SR}_F\leq 2\epsilon.$ We also know with probability at least $\hp,$ $\n{\zeta_1}\leq \mbox{poly}(d).$ Thus, we have 
\begin{align*}
\n{E_X}= \n{(\hat{S}-SR)\zeta_1}
\leq \n{\hat{S}-SR}\n{\zeta_1}
\leq \mbox{poly}(\epsilon, d).
\end{align*}
Similarly, with probability at least $\hp$, we also know $\n{E_Y}\leq \mbox{poly}(\epsilon, d).$ 

Now, we lower bound the smallest singular value of $X$ and $Y$. Since $X=A^{-\top} D_X A^{-1}$, we have 
\begin{align*}
\sigmin{X}\geq& \sigma_{\min}^2(A^{-1})\sigmin{D_X}\\
=& \frac{1}{\n{A}^2}\sigmin{D_X}\\
\geq& \frac{1}{ P_1^2}\sigmin{D_X}.
\end{align*}
Since $D_X$ is a diagonal matrix, its smallest singular value equals the smallest absolute value of its diagonal element. Recall each diagonal element of $D_X$ is $\inner{q_i}{R\zeta_1}$, which follows a Gaussian distribution whose standard deviation is at least $\n{q_i}\geq \beta^2$. By anti-concentration property of Gaussian (see Lemma~\ref{lm:anti-concentration}), we know $|{\inner{q_i}{R\zeta_2}}| \ge \Omega(\delta\beta^2/k)$ for all $i$ with probability $1-\delta/4$. Thus, we have $\sigmin{X}\geq \mbox{poly}(1/d, 1/ P_1, \beta, \delta)$.
Similarly we have the same conclusion for $Y$. For small enough $E_Y$, we have $\sigmin{\hat{Y}}\geq \sigmin{Y}-\n{E_Y}.$ 

Thus, for small enough $\epsilon$, we have $\n{F}\leq \mbox{poly}(d, P_1,1/\beta, \epsilon,1/\delta)$ and $\n{G}\leq \mbox{poly}(d, P_1,1/\beta, \epsilon,1/\delta)$. In order to apply Lemma~\ref{lm:eigen_perturbe}, we also need to bound $\kappa(A^{-\top} )$ and $\n{XY^{-1}}$. Since $\sigmin{A^{-\top} }\geq \frac{1}{ P_1}$ and $\n{A^{-\top} }\leq 1/\beta$, we have $\kappa(A^{-\top} )\leq  P_1/\beta.$ For the norm of $XY^{-1}$, we have 
\begin{align*}
\n{XY^{-1}}\leq \frac{\n{X}}{\sigmin{Y}}
\leq \n{X}\mbox{poly}(d, P_1,1/\beta,1/\delta),
\end{align*}
where the second inequality holds because $\sigmin{Y}\geq \mbox{poly}(1/d,1/ P_1,\beta,\delta).$ Recall that $X=\mbox{mat}^*(SR\zeta_1)$. It's not hard to verify that with probability at least $\hp$, we have $\n{X}\leq \mbox{poly}(d)$. Thus, we know $\n{XY^{-1}}\leq \mbox{poly}(d, P_1,1/\beta,1/\delta)$. Similarly, we can also prove that $\n{D_X D_Y^{-1}}\leq \mbox{poly}(d, P_1,1/\beta,1/\delta)$

Overall, we have
\begin{align*}
\kappa(A^{-\top} )(\n{XY^{-1}F}+\n{G})\leq & \kappa(A^{-\top} )(\n{XY^{-1}}\n{F}+\n{G})\\
\leq & \frac{ P_1}{\beta}\Big(\mbox{poly}(d, P_1,1/\beta,1/\delta) \mbox{poly}(d, P_1,1/\beta, \epsilon,1/\delta)+\mbox{poly}(d, P_1,1/\beta, \epsilon,1/\delta)\Big)\\
\leq & \mbox{poly}(d, P_1,1/\beta, \epsilon,1/\delta).
\end{align*}

According to Lemma~\ref{lm:sep}, we know with probability at least $1-\delta/4$, $\text{sep}(D_X D_Y^{-1})\geq \mbox{poly}(1/d, 1/ P_1, \beta, \delta).$ 
Thus, by union bound, we know for small enough $\epsilon$, with probability at least $1-\delta$,
$$\kappa(A^{-\top} )(\n{XY^{-1}F}+\n{G})< \text{sep}(D_X D_Y^{-1})/(2k).$$
According to Lemma~\ref{lm:eigen_perturbe}, we know there exists a permutation $\pi[i]\in [k]$, such that
\begin{align*}
\n{\hat{z}_i'-z_i'}\leq& 3\frac{\n{F}\n{D_X D_Y^{-1}}+\n{G}}{\sigmin{A^{-\top} }\text{sep}(D_X D_Y^{-1})}\\
\leq& 3\frac{\mbox{poly}(d, P_1,1/\beta, \epsilon,1/\delta)\mbox{poly}(d, P_1,1/\beta,1/\delta) +\mbox{poly}(d, P_1,1/\beta, \epsilon,1/\delta)}{1/ P_1 \mbox{poly}(1/d, 1/ P_1, \beta,1/\delta) }\\
\leq& \mbox{poly}(d, P_1,1/\beta, \epsilon,1/\delta).
\end{align*}
with probability at least $1-\delta.$

According to Lemma~\ref{lem:simdiag}, the eigenvectors of $XY^{-1}$ (after sign flip) are exactly the normalized rows of $A^{-1}$ (up to permutation). Now the only issue is the sign of $\hat{z}_i'$. By the robustness of sign flip step (see Lemma~\ref{lm:robust_signflip}), we know for small enough $\epsilon$, with $O(\frac{(\qpp)^2\log(d/\delta)}{\epsilon^2})$ number of i.i.d. samples in $\hat{\E}[\hat{z}_i^\top y]$, with probability at least $1 - \delta$, the sign flip of $\hat{z}_i'$ is consistent with the sign flip of $z_i'$.
\end{proof}

In the following lemma, we show that the sign flip step of $\hat{z}_i$ is robust.
\begin{lemma}\label{lm:robust_signflip}
Suppose that $\n{x}\leq \Gamma, \n{A}\leq P_1, \n{\xi}\leq P_2.$
Let $z_1',\cdots,z_k'$ be the eigenvectors of $XY^{-1}$ (before sign flip step). Similarly define $\hat{z}_1',\cdots,\hat{z}_k'$ for $\hat{X}\hat{Y}^{-1}$. Suppose for each $i$, $\n{z_i'-\hat{z}_i'}\leq \epsilon$, where $\epsilon\leq \frac{\beta\gamma}{4\Gamma(1+\qpp)}$. We know, for any $\delta<1$, with $O(\frac{(\qpp)^2\log(d/\delta)}{\epsilon^2})$ number of i.i.d. samples,
$$\Pr\Big[\mbox{sign}(\E[\inner{z_i'}{y}])\neq \mbox{sign}(\hat{\E}[\inner{\hat{z}_i'}{y}])\Big]\leq \delta.$$
\end{lemma}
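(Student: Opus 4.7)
The plan is to fix an index $i$ and show that both $\E[\inner{z_i'}{y}]$ and $\hat{\E}[\inner{\hat{z}_i'}{y}]$ have magnitude at least $\beta\gamma/(2\Gamma)$, while their difference is strictly smaller than this lower bound with probability $1-\delta$; this forces the two signs to agree. Concretely, I would write
\[
\bigl|\E[\inner{z_i'}{y}]-\hat{\E}[\inner{\hat{z}_i'}{y}]\bigr|\le\bigl|\E[\inner{z_i'-\hat{z}_i'}{y}]\bigr|+\bigl|\E[\inner{\hat{z}_i'}{y}]-\hat{\E}[\inner{\hat{z}_i'}{y}]\bigr|,
\]
control the first term using $\|z_i'-\hat{z}_i'\|\le\epsilon$ and a deterministic bound on $\|y\|$, and control the second term by a standard Hoeffding-type concentration argument on the scalar samples $\inner{\hat{z}_i'}{y_j}$ (using that $\hat{z}_i'$ is independent of the fresh second-half samples, thanks to the algorithm's sample splitting).

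The delicate piece is the lower bound on $|\E[\inner{z_i'}{y}]|$. Since $z_i'=\pm z_j$ for some $j$ where $z_j$ is the normalized $j$-th row of $A^{-1}$, and since $z_j^\top A=\lambda_j e_j^\top$ with $\lambda_j=1/\|(A^{-1})_j\|\ge 1/\|A^{-1}\|\ge\beta$, the noise cancels in expectation and the quantity reduces to $\lambda_j\,\E[\sigma(w_j^\top x)]$. By symmetry of $\mathcal{D}$, $\E[\sigma(w_j^\top x)]=\tfrac12\E[|w_j^\top x|]$, and I would convert the variance bound $\E[(w_j^\top x)^2]\ge\gamma\|w_j\|^2=\gamma$ into a lower bound on the first absolute moment by using the almost-sure bound $|w_j^\top x|\le\|w_j\|\cdot\|x\|\le\Gamma$: indeed $\gamma\le\E[(w_j^\top x)^2]\le\Gamma\cdot\E[|w_j^\top x|]$, so $\E[|w_j^\top x|]\ge\gamma/\Gamma$ and hence $|\E[\inner{z_i'}{y}]|\ge\beta\gamma/(2\Gamma)$.

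For the two error terms, I would use the deterministic bound $\|y\|\le\|A\|\,\|\sigma(Wx)\|+\|\xi\|\le P_1\sqrt{k}\,\Gamma+P_2=\qpp$ (using $\|W\|\le\sqrt{k}$ because each row of $W$ has unit norm), yielding $|\E[\inner{z_i'-\hat{z}_i'}{y}]|\le\epsilon(\qpp)$ by Cauchy--Schwarz, and apply Hoeffding to the i.i.d.\ samples $\inner{\hat{z}_i'}{y_j}$, each bounded by at most $(1+\epsilon)(\qpp)\le 2(\qpp)$, to get concentration error at most $\epsilon$ with $O((\qpp)^2\log(d/\delta)/\epsilon^2)$ samples. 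The total error is then $\epsilon(1+\qpp)\le\beta\gamma/(4\Gamma)$ by the hypothesis $\epsilon\le\beta\gamma/(4\Gamma(1+\qpp))$, which is strictly smaller than the lower bound $\beta\gamma/(2\Gamma)$ established in Step~2, so the signs must agree. The main obstacle I anticipate is precisely the first-absolute-moment lower bound: it is not purely syntactic and is the only place where the almost-sure norm bound $\|x\|\le\Gamma$ plays an essential (rather than convenience) role; the remaining steps are routine triangle-inequality and concentration arguments.
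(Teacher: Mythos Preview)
Your proposal is correct and follows essentially the same approach as the paper's proof: lower-bound $|\E[\inner{z_i'}{y}]|$ by $\beta\gamma/(2\Gamma)$, split the deviation $|\E[\inner{z_i'}{y}]-\hat{\E}[\inner{\hat{z}_i'}{y}]|$ into a deterministic $\epsilon(\qpp)$ piece plus an $\epsilon$-concentration piece, and conclude via the hypothesis on $\epsilon$. The only cosmetic difference is in the first-moment lower bound: the paper routes through the vector quantity $\|\E[\sigma(w_i^\top x)x^\top]\|=\tfrac12\|w_i^\top\E[xx^\top]\|\ge\gamma/2$ and then uses $\|\E[\sigma(w_i^\top x)x^\top]\|\le\Gamma\,\E[\sigma(w_i^\top x)]$, whereas you work directly with the scalar second moment $\E[(w_i^\top x)^2]\ge\gamma$ and the pointwise bound $(w_i^\top x)^2\le\Gamma\,|w_i^\top x|$; both arguments yield the identical bound $\E[\sigma(w_i^\top x)]\ge\gamma/(2\Gamma)$.
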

\begin{proof}
We first show that $\E[\inner{z_i'}{y}]$ is bounded away from zero. Let $Z'$ be a $k\times k$ matrix, whose rows are $\{z_i'\}$. Without loss of generality, assume that $Z'=\mbox{diag}(\pm \lambda)A^{-1}$. Since $\n{A^{-1}}\leq 1/\beta$, we know $\lambda_i\geq \beta$, for each $i$. Thus, we have
\begin{align*}
|\E[\inner{z_i'}{y}]|=\lambda_i \E[\sigma(w_i^\top x)]\geq \beta \E[\sigma(w_i^\top x)].
\end{align*}
In order to lowerbound $\E[\sigma(w_i^\top x)]$, let's first look at $\E[\sigma(w_i^\top x)x^\top]$.
\begin{align*}
\n{\E[\sigma(w_i^\top x)x^\top]}=&\frac{1}{2}\n{\E[w_i^\top xx^\top]}\\
\geq & \frac{1}{2}\n{w_i}\sigmin{\E[xx^\top]}\\
\geq & \frac{\gamma}{2}.
\end{align*}
Now, let's connect $\n{\E[\sigma(w_i^\top x)x^\top]}$ with $\E[\sigma(w_i^\top x)]$. 
\begin{align*}
\n{\E[\sigma(w_i^\top x)x^\top]} \leq &\E[\n{\sigma(w_i^\top x)x^\top}]\\
= &\E[\sigma(w_i^\top x)\n{x}]\\
\leq &\Gamma\E[\sigma(w_i^\top x)].
\end{align*}
Thus, we have $\E[\sigma(w_i^\top x)]\geq \frac{\gamma}{2\Gamma}$, and further $|\E[\inner{z_i'}{y}]|\geq \frac{\beta\gamma}{2\Gamma}$.

Now, let's bound $\n{\E[\inner{z_i'}{y}]-\hat{\E}[\inner{\hat{z}_i'}{y}]}$.
\begin{align*}
\n{\E[\inner{z_i'}{y}]-\hat{\E}[\inner{\hat{z}_i'}{y}]} =&\n{\E[\inner{z_i'}{y}]- \E[\inner{\hat{z}_i'}{y}]+\E[\inner{\hat{z}_i'}{y}]-\hat{\E}[\inner{\hat{z}_i'}{y}]}\\
\leq &\n{\E[\inner{z_i'-\hat{z}_i'}{y}]}+\n{\E[\inner{\hat{z}_i'}{y}]-\hat{\E}[\inner{\hat{z}_i'}{y}]}\\
\leq &(\qpp) \epsilon +\n{\E[\inner{\hat{z}_i'}{y}]-\hat{\E}[\inner{\hat{z}_i'}{y}]}.
\end{align*}
Note that we reserve fresh samples for this step, thus $\hat{z}_i'$ is independent with samples in $\hat{\E}[\inner{\hat{z}_i'}{y}]]$. Since $\n{\inner{\hat{z}_i'}{y}}\leq \qpp$, we know with $O(\frac{(\qpp)^2\log(d/\delta)}{\epsilon^2})$ number of samples, we have
$$\n{\E[\inner{\hat{z}_i'}{y}]-\hat{\E}[\inner{\hat{z}_i'}{y}]]}\leq \epsilon.$$
Overall, we have $\n{\E[\inner{z_i'}{y}]-\hat{\E}[\inner{\hat{z}_i'}{y}]]}\leq (1+\qpp)\epsilon$. Combined with the fact that $|\E[\inner{z_i'}{y}]|\geq \frac{\beta\gamma}{2\Gamma}$, we know as long as $\epsilon\leq \frac{\beta\gamma}{4\Gamma(1+\qpp)}$, with $O(\frac{(\qpp)^2\log(d/\delta)}{\epsilon^2})$ number of samples, we have
$$\Pr[\mbox{sign}(\E[\inner{z_i'}{y}])\neq \mbox{sign}(\hat{\E}[\inner{\hat{z}_i'}{y}])]\leq \delta.$$
\end{proof}

\subsection{Robust Analysis for Recovering First Layer Weights}\label{sec:robust_recoverW}

We will first show that Algorithm~\ref{alg:1layer} is robust.

\begin{theorem}\label{thm:robust_1layer}
Assume that $\n{x}\leq \Gamma , \n{w}\leq 1, |\xi|\leq  P_2$ and $\sigma_{\min}(\E[xx^\top])\geq \gamma$. Then for any $\epsilon\leq \gamma/2$, for any $1>\delta>0$, given $O(\frac{(  \Gamma ^2 +  P_2 \Gamma )^4\log(\frac{d}{\delta})}{\gamma^4\epsilon^2})$ number of i.i.d. samples, we know with probability at least $1-\delta$,
$$\n{\hat{w}-w}\leq \epsilon,$$
where $\hat{w}$ is the learned weight vector.
\end{theorem}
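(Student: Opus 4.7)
\begin{proofof}{Theorem~\ref{thm:robust_1layer} (proposal)}
The plan is to translate the exact identity from Lemma~\ref{lem:symmetricmoment}, namely $w = 2\,\E[xx^\top]^{-1}\E[yx]$, into a robust statement about its empirical counterpart $\hat{w} = 2\hat{C}^{-1}\hat{v}$, where $\hat{C}=\tfrac{1}{n}\sum_i x_ix_i^\top$ and $\hat{v}=\tfrac{1}{n}\sum_i y_ix_i$. Writing $C := \E[xx^\top]$ and $v := \E[yx]$, I will bound
\[
\n{\hat{w}-w} \;=\; 2\,\bn{\hat{C}^{-1}\hat{v} - C^{-1}v} \;\le\; 2\bn{\hat{C}^{-1}}\bn{\hat{v}-v} + 2\bn{\hat{C}^{-1}-C^{-1}}\n{v}.
\]
So the three quantities to control are $\|\hat{v}-v\|$, $\|\hat{C}-C\|$ (which feeds into $\|\hat{C}^{-1}-C^{-1}\|$), and the a priori sizes $\|v\|$ and $\|\hat{C}^{-1}\|$.

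First I would collect the deterministic norm bounds coming from the hypotheses. Since $\n{w}\le 1$ and $\n{x}\le \Gamma$, we have $|\sigma(w^\top x)|\le \Gamma$, hence $|y|\le \Gamma+P_2$, and therefore $\n{yx}\le (\Gamma+P_2)\Gamma$ pointwise, while $\n{xx^\top}\le \Gamma^2$. Consequently $\n{v}\le (\Gamma+P_2)\Gamma$ and $\n{C^{-1}}\le 1/\gamma$.

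Next I would apply the matrix Bernstein inequality (Lemma~\ref{lm:bernstein}) twice. Given $n=O\big((\Gamma^2+P_2\Gamma)^2\log(d/\delta)/\epsilon'^2\big)$ samples, with probability at least $1-\delta/2$ we get $\n{\hat{v}-v}\le\epsilon'$; the same number of samples (it suffices because $\Gamma^2\le \Gamma^2+P_2\Gamma$) also gives $\n{\hat{C}-C}\le\epsilon'$. Choosing $\epsilon'\le \gamma/2$ forces $\sigmin{\hat{C}}\ge \gamma/2$, hence $\n{\hat{C}^{-1}}\le 2/\gamma$, and Lemma~\ref{lm:inverse_perturb} yields $\n{\hat{C}^{-1}-C^{-1}}\le 2\sqrt{2}\,\epsilon'/\gamma^2$. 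Plugging into the display above,
\[
\n{\hat{w}-w} \;\le\; \frac{4\epsilon'}{\gamma} + \frac{4\sqrt{2}\,(\Gamma+P_2)\Gamma}{\gamma^2}\,\epsilon' \;=\; O\!\Big(\tfrac{(\Gamma^2+P_2\Gamma)\,\epsilon'}{\gamma^2}\Big).
\]
Setting $\epsilon'=\Theta\big(\gamma^2\epsilon/(\Gamma^2+P_2\Gamma)\big)$ converts this into $\n{\hat{w}-w}\le\epsilon$, and back-substituting into the sample count gives the claimed $O\big((\Gamma^2+P_2\Gamma)^4\log(d/\delta)/(\gamma^4\epsilon^2)\big)$ bound, with a union bound handling the two Bernstein events. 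I do not anticipate a genuine obstacle here: the argument is a standard ``empirical moments $\Rightarrow$ inverse-closed-form is robust'' calculation, and the only subtlety is making sure $\epsilon'\le \gamma/2$ so that $\hat{C}$ is invertible with a good spectral lower bound, which is exactly the role of the hypothesis $\epsilon\le\gamma/2$ after the rescaling of $\epsilon'$.
\end{proofof}
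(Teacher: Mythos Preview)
Your proposal is correct and is essentially the same argument as the paper's proof: both bound the deviation of each empirical moment via matrix Bernstein, invoke Lemma~\ref{lm:inverse_perturb} for the inverse, and then propagate these into $\n{\hat{w}-w}$ before rescaling the accuracy parameter. The only cosmetic difference is that you use the two-term decomposition $\hat{C}^{-1}\hat{v}-C^{-1}v=\hat{C}^{-1}(\hat{v}-v)+(\hat{C}^{-1}-C^{-1})v$ (requiring the bound $\n{\hat{C}^{-1}}\le 2/\gamma$), whereas the paper expands into the three-term $C^{-1}E_2+E_1 v+E_1E_2$; both yield the same $O((\Gamma^2+P_2\Gamma)\epsilon'/\gamma^2)$ estimate.
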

\begin{proof}
We first show that given polynomial number of i.i.d. samples, $\n{\hat{\E}[yx]-\E[yx]}$ is upper bounded with high probability, where $\hat{\E}[yx]$ is the empirical estimate of $\E[yx]$. Due to the assumption that $\n{x}\leq \Gamma , \n{w}\leq 1, \n{\xi}\leq  P_2$, we have 
\begin{align*}
\n{yx}=&\n{(\sigma(w^\top x)+\xi)x}\\
\leq& \n{w^\top x x}+\n{\xi x}\\
\leq& \n{w}\ns{x}+|\xi|\n{x}\\
\leq&   \Gamma ^2 +  P_2 \Gamma 
\end{align*}
According to Lemma~\ref{lm:bernstein}, we know given $O(\frac{(  \Gamma ^2 +  P_2 \Gamma )^2\log(\frac{d}{\delta})}{\epsilon^2})$ number of samples, with probability at least $1-\delta$, we have $\n{\hat{\E}[yx]-\E[yx]}\leq \epsilon.$

Since $\n{xx^\top}\leq \Gamma ^2$, we know that given $O(\frac{\Gamma ^4\log(\frac{d}{\delta})}{\epsilon^2})$ number of samples, $\n{\hat{\E}[xx^\top]-\E[xx^\top]}\leq \epsilon$ with probability at least $1-\delta$. Suppose that $\epsilon\leq \gamma/2\leq \sigmin{\E[xx^\top]}/2,$ we know $\hat{\E}[xx^\top]$ has full rank. According to Lemma~\ref{lm:inverse_perturb}, we have 
$$\Big\lVert \hat{\E}\big[xx^\top \big]^{-1}-\E\big[xx^\top \big]^{-1} \Big\rVert
\leq 2\sqrt{2}\frac{\bn{\hat{\E}\big[xx^\top \big]-\E\big[xx^\top \big]}}{\sigma_{\min}^2(\E[xx^\top])}
\leq 2\sqrt{2}\epsilon/\gamma^2,$$
with probability at least $1-\delta$.

By union bound, we know for any $\epsilon\leq \gamma/2$, given $O(\frac{(  \Gamma ^2 +  P_2 \Gamma )^2\log(\frac{d}{\delta})}{\epsilon^2})$ number of samples, with probability at least $1-\delta$, we have
\begin{align*}
&\n{\hat{\E}[xx^\top]^{-1}-\E[xx^\top]^{-1}}\leq 2\sqrt{2}\epsilon/\gamma^2\\
&\n{\hat{\E}[yx]-\E[yx]}\leq \epsilon
\end{align*}

Denote 
\begin{align*}
E_1:=& \hat{\E}[xx^\top]^{-1}-\E[xx^\top]^{-1}\\
E_2:=& \hat{\E}[yx]-\E[yx].
\end{align*}

Then, we have 
\begin{align*}
\n{\hat{w}-w}=& 2\n{\hat{\E}[xx^\top]^{-1}\hat{\E}[yx]-\E[xx^\top]^{-1}\E[yx]}\\
\leq& 2\n{E_1}\n{E_2}+2\n{E_1}\n{\E[yx]}+2\n{\E[xx^\top]^{-1}}\n{E_2}\\
\leq& \frac{4\sqrt{2}\epsilon^2}{\gamma^2}+\frac{4\sqrt{2}( \Gamma ^2+ P_2\Gamma )\epsilon}{\gamma^2}+\frac{2\epsilon}{\gamma}\\
\leq& O(\frac{( \Gamma ^2+ P_2\Gamma )\epsilon}{\gamma^2}).
\end{align*}
Thus, given $O(\frac{(  \Gamma ^2 +  P_2 \Gamma )^4\log(\frac{d}{\delta})}{\gamma^4\epsilon^2})$ number of samples, with probability at least $1-\delta$, we have 
$$\n{\hat{w}-w}\leq \epsilon.$$
\end{proof}

Now let's go back to the call to Algorithm~\ref{alg:1layer} in Algorithm~\ref{\alg}. Let $z_i$'s be the normalized rows of $A^{-1}$, and let $\hat{z}_i$'s be the eigenvectors of $\hat{X}\hat{Y}^{-1}$ (with correct sign). From Lemma~\ref{lm:robust_getZ}, we know $\{\hat{z}_i\}$ are close to $\{z_i\}$ with permutation. Without loss of generality, we assume the permutation here is just an identity mapping, which means $\n{z_i-\hat{z}_i}$ is small for each $i$.

For each $z_i$, let $v_i$ be the output of Algorithm~\ref{alg:1layer} given infinite number of inputs $(x,z_i^\top y)$. For each $\hat{z}_i$, let $\hat{v}_i$ be the output of Algorithm~\ref{alg:1layer} given only finite number of samples $(x,\hat{z}_i^\top y).$ In this section, we show that suppose $\n{z_i-\hat{z}_i}$ is bounded, with polynomial number of samples, $\n{v_i-\hat{v}_i}$ is also bounded.

The input for Algorithm~\ref{alg:1layer} is $(x,\hat{z}_i^\top y)$. We view $\hat{z}_i^\top y$ as the summation of $z_i^\top y$ and a noise term $(\hat{z}_i-z_i)^\top y$. Here, the issue is that the noise term $(\hat{z}_i-z_i)^\top y$ is not independent with the sample $(x,\hat{z}_i^\top y)$, which makes the robust analysis in Theorem~\ref{thm:robust_1layer} not applicable. On the other hand, since we reserve a separate set of samples for Algorithm~\ref{alg:1layer}, the estimate $\hat{z}_i$ is independent with the samples $(x,y)$'s used by Algorithm~\ref{alg:1layer}. Thus, the samples $(x,\hat{z_i}^\top y)$'s here are still i.i.d., which enables us to use matrix concentration bounds to show the robustness here. 

\robustrecoverW*
\begin{proof}
For any $i$, let's bound $\n{v_i-\hat{v}_i}$. As we know, $v_i=2\E[xx^\top]^{-1} \E[z_i^\top y x]$ and $\hat{v}_i=2\hat{\E}[xx^\top]^{-1} \hat{\E}[\hat{z}_i^\top y x]$. Thus, in order to bound $\n{v_i-\hat{v}_i}$, we only need to show 
$$\bn{\E[xx^\top]^{-1} \E[\hat{z}_i^\top y x]- \E[xx^\top]^{-1} \E[z_i^\top y x]} \mbox{ and } \bn{\E[xx^\top]^{-1} \E[\hat{z}_i^\top y x]- \hat{\E}[xx^\top]^{-1} \hat{\E}[\hat{z}_i^\top y x]}$$
are both bounded. 

The first term can be bounded as follows.
\begin{align*}
\bn{\E[xx^\top]^{-1} \E[\hat{z}_i^\top y x]- \E[xx^\top]^{-1} \E[z_i^\top y x]} 
=& \bn{\E[xx^\top]^{-1} \E[(\hat{z}_i-z_i)^\top y x]}\\
\leq& \bn{\E[xx^\top]^{-1}}\bn{\hat{z}_i-z_i}\bn{A\sigma(Wx)+\xi}\bn{x}\\
\leq& \frac{\tau(\qpp)\Gamma }{\gamma}
\end{align*}

We can use standard matrix concentration bounds to upper bound the second term. By similar analysis of Theorem~\ref{alg:1layer}, we know given $O(\frac{(  \Gamma ^2 +  P_2 \Gamma )^4\log(\frac{d}{\delta})}{\gamma^4\epsilon^2})$ number of i.i.d. samples, with probability at least $1-\delta$,
$$\bn{\E[xx^\top]^{-1} \E[\hat{z}_i^\top y x]- \hat{\E}[xx^\top]^{-1} \hat{\E}[\hat{z}_i^\top y x]}\leq \epsilon.$$

Overall, we have 
\begin{align*}
\n{v_i-\hat{v}_i} = & \bn{2\E[xx^\top]^{-1} \E[z_i^\top y x]- 2\hat{\E}[xx^\top]^{-1} \hat{\E}[\hat{z}_i^\top y x]}\\
\leq & 2\bn{\E[xx^\top]^{-1} \E[\hat{z}_i^\top y x]- \E[xx^\top]^{-1} \E[z_i^\top y x]} +2 \bn{\E[xx^\top]^{-1} \E[\hat{z}_i^\top y x]- \hat{\E}[xx^\top]^{-1} \hat{\E}[\hat{z}_i^\top y x]}\\
\leq &\frac{2\tau(\qpp)\Gamma }{\gamma}+2\epsilon.
\end{align*}

By union bound, we know given $O(\frac{(  \Gamma ^2 +  P_2 \Gamma )^4\log(\frac{d}{\delta})}{\gamma^4\epsilon^2})$ number of i.i.d. samples, with probability at least $1-\delta$,
$$\n{v_i-\hat{v}_i}\leq \frac{2\tau(\qpp)\Gamma }{\gamma}+2\epsilon.$$ 
for any $1\leq i\leq k.$
\end{proof}

\subsection{Proof of Theorem~\ref{thm:robust_2layer}}\label{sec:robust_proofmaintheorem}
\begin{proofof}{Theorem~\ref{thm:robust_2layer}}
Combining Lemma~\ref{lm:robust_nullspace}, Lemma~\ref{lm:robust_getZ} and Lemma~\ref{lm:robust_recoverW}, we know given \\$\mbox{poly}\big(\Gamma , P_1, P_2,d,1/\epsilon,1/\gamma,1/\alpha,1/\beta,1/\delta\big)$ number of i.i.d. samples, with probability at least $1-\delta$, 
$$\n{z_i-\hat{z}_i}\leq \epsilon, \n{v_i-\hat{v_i}}\leq \epsilon$$
for each $1\leq i\leq k$.

Let $V$ be a $k\times d$ matrix whose rows are $v_i$'s. Similarly define matrix $\hat{V}$ for $\hat{v}_i$'s. Since $\n{v_i-\hat{v}_i}\leq \epsilon$ for any $i$, we know every row vector of $V-\hat{V}$ has norm at most $\epsilon$, which implies $\n{V-\hat{V}}\leq \sqrt{k}\epsilon$. 

Let $Z$ be a $k\times k$ matrix whose rows are $z_i$'s. Similarly define matrix $\hat{Z}$ for $\hat{z}_i$'s. Again, we have $\n{Z-\hat{Z}}\leq \sqrt{k}\epsilon$. In order to show $\n{Z^{-1}-\hat{Z}^{-1}}$ is small using standard matrix perturbation bounds (Lemma~\ref{lm:inverse_perturb}), we need to lower bound $\sigmin{Z}$. Notice that $Z$ is just matrix $A^{-1}$ with normalized row vectors. As we know, $\sigmin{A^{-1}}\geq 1/ P_1$, and $\n{A^{-1}}\leq 1/\beta$, which implies that every row vector of $A^{-1}$ has norm at most $1/\beta$. Let $D_z$ be the diagonal matrix whose $i,i$-th entry is the norm of $i$-th row of $A^{-1}$, then $Z = D_z^{-1} A^{-1}$, and we know $\sigma_{min}(Z) \ge \sigma_{min}(D_z^{-1}) \sigma_{min}(A^{-1}) \ge \beta/ P_1$. 

Then, according to Lemma~\ref{lm:inverse_perturb}, as long as $\epsilon\leq \frac{\beta}{2 P_1}$, we have 
$$\n{Z^{-1}-\hat{Z}^{-1}}\leq 2\sqrt{2}\frac{\n{Z-\hat{Z}}}{\sigma_{\min}^2(Z)}\leq 2\sqrt{2}  P_1^2 \sqrt{k}\epsilon/\beta^2.$$

Define 
\begin{align*}
E_1 := &Z^{-1}-\hat{Z}^{-1}\\
E_2 := &V-\hat{V}.
\end{align*}
We know $\n{E_1}\leq 2\sqrt{2}  P_1^2\sqrt{k} \epsilon/\beta^2$ and $\n{E_2}\leq \sqrt{k}\epsilon.$ In order to bound $\n{V}$, we can bound the norm of its row vectors. We have,
\begin{align*}
\n{v_i}=&\n{2\E[xx^\top]^{-1}\E[z_i^\top yx]}\\
\leq& \frac{2\Gamma (\qpp)}{\gamma},
\end{align*}
which implies $\n{V}\leq \frac{2\sqrt{k}\Gamma (\qpp)}{\gamma}.$
Now we can bound $\n{Z^{-1}\sigma(Vx)-\hat{Z}^{-1}\sigma(\hat{V}x)}$ as follows.
\begin{align*}
\n{Z^{-1}\sigma(Vx)-\hat{Z}^{-1}\sigma(\hat{V}x)} \leq &\n{E_1}\n{E_2 x}+\n{E_1}\n{Vx}+\n{Z^{-1}}\n{E_2 x}\\
\leq &\frac{2\sqrt{2}  P_1^2 \Gamma k \epsilon^2}{\beta^2} + \frac{4\sqrt{2}  P_1^2 \Gamma ^2 (\qpp) k\epsilon}{\beta^2\gamma} + \frac{ P_1\Gamma \sqrt{k}\epsilon}{\beta},
\end{align*}
where the first inequality holds since $\n{\sigma(Vx)}\leq \n{Vx}$ and $\n{\sigma(\hat{V}x)-\sigma(Vx)}\leq \n{\hat{V}x-Vx}$. 

Thus, we know given $\mbox{poly}\big(\Gamma, P_1, P_2,d,1/\epsilon,1/\gamma,1/\alpha,1/\beta,1/\delta\big)$ number of i.i.d. samples, with probability at least $1-\delta$,
\begin{align*}
\n{A\sigma(Wx)-\hat{Z}^{-1}\sigma(\hat{V}x)} =  \n{Z^{-1}\sigma(Vx)-\hat{Z}^{-1}\sigma(\hat{V}x)}
\leq \epsilon,
\end{align*}
where the first equality holds because $A\sigma(Wx)=Z^{-1}\sigma(Vx)$, as shown in Theorem~\ref{thm:general2layer}.
\end{proofof}
\newpage
\newcommand{\eone}{e_1^{(i,j)}}
\newcommand{\etwo}{e_2^{(i,j)}}
\newcommand{\phiij}{\phi_{ij}}
\newcommand{\mdprime}{M^{\mathcal{D}'}}
\newcommand{\md}{M^{\mathcal{D}}}
\newcommand{\mpgaussian}{M^{\mathcal{D}_Q}}
\newcommand{\mgaussian}{M^{WQ}}

\section{Smoothed Analysis for Distinguishing Matrices}\label{sec:smooth}

In smoothed analysis, it's clear that after adding small Gaussian perturbations, matrix $A$ and $W$ will become robustly full rank with reasonable probability (Lemma~\ref{lm:perturbedmatrix_rudelson}). In this section, we will focus on the tricky part, using smoothed analysis framework to show that it is natural to assume the \dm is robustly full rank. We will consider two settings. In the first case, the input distribution is the Gaussian distribution $\mathcal{N}(0,I_d)$, and the weights for the first layer matrix $W$ is perturbed by a small Gaussian noise. In this case we show that the \adm $M$ has smallest singular value $\sigmin{M}$ that depends polynomially on the dimension and the amount of perturbation. This shows that for the Gaussian input distribution, $\sigmin{M}$ is lower bounded as long as $W$ is in general position. In the second case, we will fix a full rank weight matrix $W$, and consider an arbitrary symmetric input distribution $\mathcal{D}$. There is no standard way of perturbing a symmetric distribution, we give a simple perturbation $\mathcal{D}'$ that can be arbitrarily close to $\mathcal{D}$, and prove that $\sigmin{M^{\mathcal{D}'}}$ is lowerbounded.


\paragraph{Perturbing $W$ for Gaussian Input} 
We first consider the case when the input follows standard Gaussian distribution $\mathcal{N}(0,I_d)$. The weight matrix $W$ is perturbed to $\td{W}$ where 
\begin{equation}
\td{W} = W + \rho E. \label{eq:perturbW}
\end{equation}
Here $E \in \R^{k\times d}$ is a random matrix whose entries are i.i.d. standard Gaussians. We will use $\td{M}$ to denote the perturbed version of the \adm $M$. Recall that the columns of $\td{M}$ has the form:
$$\td{M}_{ij}=\E\big[(\td{w}_i^\top x)(\td{w}_j^\top x)(x\otimes x)\indictd\big],$$
where $\td{w}_i^\top$ is the $i$-th row of $\td{W}$. Also, since $\td{M}$ is the \adm it has a final column $\td{M}_0 = \vvv{I_d}$. 
We show that the smallest singular value of $\td{M}$ is lower bounded with high probability.

\begin{restatable}{theorem}{gaussiansmooth}
\label{thm:gaussian_smooth}
Suppose that $k\leq d/5$, and the input follows standard Gaussian distribution $\mathcal{N}(0,I_d)$. Given any weight matrix $W$ with $\n{w_i} \le \tau$ for each row vector, let $\td{W}$ be a perturbed version of $W$ according to Equation \eqref{eq:perturbW} and $\td{M}$ be the perturbed augmented distinguishing matrix. With probability at least $\hp$, we have
$$\sigmin{\td{M}}\geq \mbox{poly}(1/\tau, 1/d, \rho).$$ 
\end{restatable}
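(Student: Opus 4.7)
The plan is to exploit the rotational symmetry of the standard Gaussian to obtain a closed-form structural description of each column of $\td M$, then factor $\td M=U\Lambda$ and lower-bound the two factors separately. This decouples the problem into an essentially algebraic non-degeneracy check for $\Lambda$ (cheap) and a smoothed bound on a symmetric Khatri--Rao-type matrix $U$ (the main obstacle).

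Writing $\td w_i=s_i\hat w_i$ with $s_i=\n{\td w_i}$, observe that $\mat(\td M_{ij})=s_is_j\,\E[(\hat w_i^\top x)(\hat w_j^\top x)\mathbbm{1}\{\hat w_i^\top x\hat w_j^\top x\le 0\}\,xx^\top]$ is invariant under the orthogonal group that fixes $\hat w_i$ and $\hat w_j$ (since the standard Gaussian is invariant under this action). The commutant of this action on symmetric matrices is spanned by $I_d,\,\hat w_i\hat w_i^\top,\,\hat w_j\hat w_j^\top,\,\hat w_i\hat w_j^\top+\hat w_j\hat w_i^\top$, which forces
\[
\mat(\td M_{ij})=s_is_jA(\theta_{ij})I_d+\tfrac{s_j}{s_i}B(\theta_{ij})\td w_i\td w_i^\top+\tfrac{s_i}{s_j}B(\theta_{ij})\td w_j\td w_j^\top+D(\theta_{ij})(\td w_i\td w_j^\top+\td w_j\td w_i^\top),
\]
where $\theta_{ij}=\angle(\td w_i,\td w_j)$ and $A,B,D$ are explicit two-dimensional Gaussian integrals. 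A direct computation (e.g., projecting against $\hat w_i\hat w_j^\top+\hat w_j\hat w_i^\top$ and using Isserlis for the constrained second moment) gives $D(\pi/2)=\tfrac12$ and, more generally, $|D(\theta)|\gtrsim\sin^2\theta$, with the degenerate values $\theta\in\{0,\pi\}$ exactly matching the case where $V_{ij}=\mathrm{span}(\td w_i,\td w_j)$ collapses.

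Next I factor $\td M=U\Lambda$ where $U\in\R^{d^2\times(1+k+k_2)}$ collects the columns $\vvv{I_d}$, $\{\vvv{\td w_i\td w_i^\top}\}_{i\le k}$, $\{\vvv{\td w_i\td w_j^\top+\td w_j\td w_i^\top}\}_{i<j}$, and $\Lambda\in\R^{(1+k+k_2)\times(1+k_2)}$ is the coefficient matrix from the display (with the $\td M_0$ column being the first basis vector). Then $\sigmin{\td M}\ge\sigmin{U}\cdot\sigmin{\Lambda}$. Controlling $\sigmin{\Lambda}$ is elementary: the submatrix obtained by keeping the $I_d$ row and the $k_2$ ``symmetric cross'' rows is, after reordering, lower-triangular with diagonal entries $\bigl(1,D(\theta_{12}),\ldots,D(\theta_{k-1,k})\bigr)$ and therefore $\sigmin{\Lambda}$ is bounded below by a polynomial in $\min_{i<j}|D(\theta_{ij})|$, $\min_i s_i$, $\max_i s_i$, and $\max_{i<j}|A(\theta_{ij})|,|B(\theta_{ij})|$. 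Standard Gaussian anti-concentration applied to $\td W=W+\rho E$ (Lemma~\ref{lm:anti-concentration}) then shows $|\sin\theta_{ij}|\ge\mathrm{poly}(\rho,1/\tau,1/d)$ and $s_i\in[\mathrm{poly}(\rho,1/d),\,\tau+\tilde O(\sqrt d\,\rho)]$ with probability $\hp$, which yields the desired polynomial lower bound on $\sigmin{\Lambda}$.

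The main obstacle is lower-bounding $\sigmin{U}$. The assumption $k\le d/5$ guarantees $1+k+k_2\le d(d+1)/2=\dim\R_{sym}^{d\times d}$ with room to spare, so generically the $1+k+k_2$ columns of $U$ are linearly independent inside the space of vectorized symmetric $d\times d$ matrices; we need a robust version in the smoothed setting. I would follow the leave-one-out strategy of \cite{bhaskara2014smoothed} and the smoothed second-Khatri--Rao bounds of \cite{ma2016polynomial}: fix an index $i^\star$, condition on every row of $E$ other than the $i^\star$-th, and use the $d$-dimensional Gaussian randomness in $\td w_{i^\star}=w_{i^\star}+\rho e_{i^\star}$ to project the column $\vvv{\td w_{i^\star}\td w_{i^\star}^\top}$ and the $k-1$ cross columns involving $\td w_{i^\star}$ onto the orthogonal complement of the span of the (at most $k_2+k-1$) frozen columns inside $\R_{sym}^{d\times d}$; Gaussian anti-concentration within this $\Omega(d)$-dimensional orthogonal complement gives each such column a component of size $\mathrm{poly}(\rho,1/d,1/\tau)$ orthogonal to the rest. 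Iterating over $i^\star$ and combining with Weyl-type perturbation inequalities yields $\sigmin{U}\ge\mathrm{poly}(\rho,1/d,1/\tau)$ with probability $\hp$, which together with the bound on $\sigmin{\Lambda}$ gives the theorem.
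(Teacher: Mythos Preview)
Your structural opening (rotational invariance forces $\mat(\td M_{ij})$ into the four-dimensional commutant) and the factorization $\td M=U\Lambda$ are correct, and your treatment of $\sigmin{\Lambda}$ via the block-triangular $(1+k_2)\times(1+k_2)$ submatrix is clean; indeed from the paper's Lemma~\ref{lm:Mij_char} one has $D(\theta)=\theta/\pi$, so the diagonal entries are exactly $1,\td\phi_{12}/\pi,\ldots$ and Lemma~\ref{lm:angle} gives the needed angle lower bound. So this half of your argument is fine and arguably more transparent than the paper's.

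The gap is in your sketch for $\sigmin{U}$. Fixing $i^\star$ and conditioning on the other rows of $E$ does \emph{not} decouple the $k$ columns of $U$ that all involve $\td w_{i^\star}$ (namely $\vvv{\td w_{i^\star}\td w_{i^\star}^\top}$ and the $k-1$ cross columns $\vvv{\td w_{i^\star}\td w_j^\top+\td w_j\td w_{i^\star}^\top}$): once you condition, they are all deterministic functions of the single random vector $\td w_{i^\star}$, and ``anti-concentration in an $\Omega(d)$-dimensional complement'' does not by itself separate them from one another. The fix is exactly the machinery the paper deploys \emph{directly on $\td M$}: partition $[d]=L_1\cup L_2$, restrict to rows $L_1\times L_2$ (which also kills the $\vvv{I_d}$ column---a case your sketch does not address and which needs the separate treatment of Lemma~\ref{lem:strongerleaveoneout}), and for each target column enlarge the span of the remaining columns to an \emph{oblivious} subspace $\hat V$ of the form $\mathrm{span}\{\td w_{l,L_1}\otimes x,\;x\otimes\td w_{l,L_2}\}$ so that exactly one tensor factor $\td w_{i,L_1}\otimes\td w_{j,L_2}$ survives and is independent of $\hat V$.

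So your route is viable, but the hard step ($\sigmin U$) needs the same $L_1/L_2$ partition, subspace-enlargement decoupling, and special-column handling that the paper uses on $\td M$ itself. Since your $U$ has $1+k+k_2$ columns versus $\td M$'s $1+k_2$, you are proving a strictly stronger independence statement and the leave-one-out has \emph{more} cases (you must also separate the $k$ diagonal columns $\vvv{\td w_i\td w_i^\top}$), so the factorization is a detour rather than a shortcut. The paper's direct leave-one-out on $\td M$ is the shorter path; your approach would reach the same destination with the same tools plus extra bookkeeping.
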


We will prove this Theorem in Section~\ref{sec:smoothgaussian}.

\paragraph{Perturbing the Input Distribution} Our algorithm works for a general symmetric input distribution $\mathcal{D}$. However, we cannot hope to get a result like Theorem~\ref{thm:gaussian_smooth} for every symmetric input distribution $\mathcal{D}$. As a simple example, if $\mathcal{D}$ is just concentrated on $0$, then we do not get any information about weights and the problem is highly degenerate. Therefore, we must specify a way to perturb the input distribution.

We define a perturbation that is parametrized  by a random Gaussian matrix $Q$ and a parameter $\lambda \in (0,1)$. The random matrix $Q$ is used to generate a Gaussian distribution $\mathcal{D}_Q$ with a random covariance matrix. To sample a point in $\mathcal{D}_Q$, first sample $n\sim \mathcal{N}(0,I_d)$, and then output $Qn$. The $(Q,\lambda)$ perturbation of a distribution $\mathcal{D}$, which we denote by $\mathcal{D}_{Q,\lambda}$ is a mixture between the distribution $\mathcal{D}$ and the distribution $\mathcal{D}_Q$. More precisely, to sample $x$ from $\mathcal{D}_{Q,\lambda}$, pick $z$ as a Bernoulli random variable where $\Pr[z = 1] = \lambda$ and $\Pr[z = 0] = 1-\lambda$; pick $x'$ according to $\mathcal{D}$ and pick $x'' = Qn$ according to distribution $\mathcal{D}_Q$, then let 
\[
x = \left\{\begin{array}{ll}x' & z = 0 \\x'' & z = 1\end{array}\right.
\]

Intuitively, the $(Q,\lambda)$ perturbation of a distribution $\mathcal{D}$ mixes the distribution $\mathcal{D}$ with a Gaussian distribution $\mathcal{D}_Q$ with covariance matrix $QQ^\top$. Since both $\mathcal{D}$ and $\mathcal{D}_Q$ are symmetric, their mixture is also symmetric. Also, the TV-distance between $\mathcal{D}$ and $\mathcal{D}_{Q,\lambda}$ is bounded by $\lambda$. Throughout this section we will use $\mathcal{D}'$ to denote the perturbed distribution $\mathcal{D}_{Q,\lambda}$


We show that given any input distribution, after applying $(Q,\lambda)$-perturbation with a random Gaussian matrix $Q$, the smallest singular value of the \adm $\mdprime$ is lower bounded. Recall that $\mdprime$ is defined as 
$$\mdprime_{ij}=\E_{x\sim \mathcal{D}'}\big[(w_i^\top x)(w_j^\top x)(x\otimes x)\indic\big],$$
as the first ${k\choose 2}$ columns and has $\E_{x\sim \mathcal{D}'}[x\otimes x]$ as the last column. 
\begin{restatable}{theorem}{generalsmooth}
\label{thm:generalsmooth}
Given weight matrix $W$ with $\n{w_i}\leq \tau$ for each row vector and symmetric input distribution $\mathcal{D}$. Suppose that $k\leq d/7$ and $\sigmin{W}\geq \rho$, after applying $(Q,\lambda)$-perturbations to yield perturbed input distribution $\mathcal{D}'$, where $Q$ is a $d\times d$ matrix whose entries are i.i.d. Gaussians, we have with probability at least $\hp$ over the randomness of $Q$, 
$$\sigmin{\mdprime} \geq \mbox{poly}(1/\tau, 1/d,\rho,\lambda).$$
\end{restatable}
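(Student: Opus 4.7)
The plan is to reduce to the Gaussian case (Theorem~\ref{thm:gaussian_smooth}) and handle the adversarial (non-Gaussian) contribution by anti-concentration in $Q$. By linearity of expectation over the mixture $\mathcal{D}' = \mathcal{D}_{Q,\lambda}$, each column of $\mdprime$ splits as $\mdprime = (1-\lambda)\md + \lambda\mpgaussian$. Substituting $x = Qn$ with $n \sim \mathcal{N}(0,I_d)$ into the definition of $\mpgaussian$ and using $(Qn)\otimes(Qn) = (Q\otimes Q)(n\otimes n)$, we rewrite $\mpgaussian = (Q\otimes Q)\mgaussian$, where $\mgaussian$ is the augmented distinguishing matrix for standard Gaussian input with \emph{effective} weight matrix $\td W := WQ$ (and its last column $\vvv{I_d}$ matches $\E_{\mathcal{D}_Q}[x\otimes x]=(Q\otimes Q)\vvv{I_d}$). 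Thus $\mdprime = (1-\lambda)\md + \lambda(Q\otimes Q)\mgaussian$, and $Q$ is the only source of randomness in the second summand.

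\textbf{Lower bound on the Gaussian component.} The first step is to show $\sigma_{\min}(\mgaussian) \geq \mbox{poly}(1/\tau,1/d,\rho)$ w.h.p.\ over $Q$. By rotational invariance of the Gaussian—take an SVD $W = U\Sigma V^\top$; then $\td W = WQ \disequal U\Sigma Q$—each row of $\td W$ is Gaussian with standard deviation at least $\rho$, so $\td W$ is a genuine Gaussian perturbation of the zero matrix scaled by the singular values of $W$. This puts us in the regime of Theorem~\ref{thm:gaussian_smooth} (whose $k\leq d/5$ hypothesis is implied by our $k\leq d/7$), yielding the desired bound on $\sigma_{\min}(\mgaussian)$. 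Combined with the standard bound $\sigma_{\min}(Q)\geq \mbox{poly}(1/d)$, this gives $\sigma_{\min}(\lambda\mpgaussian) \geq \lambda\cdot \mbox{poly}(1/\tau,1/d,\rho)$.

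\textbf{Combining with the fixed adversarial part.} The remaining difficulty is that the deterministic matrix $(1-\lambda)\md$ could, in principle, cancel singular directions of $\lambda\mpgaussian$, so a triangle-inequality bound like $\sigma_{\min}(\mdprime) \geq \lambda\sigma_{\min}(\mpgaussian) - (1-\lambda)\|\md\|$ is not enough. Instead, use anti-concentration with an $\epsilon$-net. For each fixed unit $v\in\R^{k_2+1}$, choose a test direction $u_v$ along which the scalar $\langle \mpgaussian v,\, u_v\rangle$, viewed as a function of $Q$, has a density bounded by some polynomial in $d, 1/\rho$. Then for the deterministic offset $c_v := (1-\lambda)\langle\md v, u_v\rangle$, Gaussian anti-concentration gives
$$\Pr\big[|\langle \mdprime v,\, u_v\rangle| \leq \lambda\epsilon\big] \leq \mbox{poly}(d,1/\rho)\cdot\epsilon,$$
so $\|\mdprime v\| \geq \lambda\epsilon$ with probability $\hp$. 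A union bound over a $\epsilon$-net of the unit sphere in $\R^{k_2+1}$ (log-cardinality $O(k^2\log(1/\epsilon))\leq O(d^2\log(1/\epsilon))$) promotes this to a uniform bound $\sigma_{\min}(\mdprime)\geq \lambda\cdot\mbox{poly}(1/\tau,1/d,\rho)$, matching the target.

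\textbf{Main obstacle.} The principal challenge is exhibiting the test direction $u_v$ and the accompanying anti-concentration bound. The dependence of $\mgaussian$ on $Q$ is highly nonlinear: $Q$ enters both through the effective weights $\td W = WQ$ and through the indicators $\mathbbm{1}\{\td w_i^\top n\cdot\td w_j^\top n\leq 0\}$ defining every column. The natural strategy is a leave-one-out argument on $Q$: condition on all but one Gaussian coordinate (or one column) of $Q$, show that $\mpgaussian v$ depends on the remaining coordinate in an (effectively) linear way with a non-trivial derivative, and pick $u_v$ aligned with that derivative. Theorem~\ref{thm:gaussian_smooth} applied to the conditioned matrix provides quantitative nondegeneracy of the derivative, while the leftover Gaussian coordinate supplies the required density bound. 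The bookkeeping is delicate because all columns of $\mdprime$ share the \emph{same} $Q$, so the ``$\|\mdprime v\|$ small'' events are coupled across $v$; this forces a worst-case net argument rather than an expectation bound, and is where the slightly stronger assumption $k\leq d/7$ (versus the $d/5$ needed by Theorem~\ref{thm:gaussian_smooth}) is spent, leaving room for the net's dimension count.
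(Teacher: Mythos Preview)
Your setup and Step~1 are fine and essentially match the paper: the mixture decomposition $\mdprime=(1-\lambda)\md+\lambda(Q\otimes Q)\mgaussian$ is correct, and bounding $\sigmin{\mgaussian}$ via Theorem~\ref{thm:gaussian_smooth} applied to $\td W=WQ$ works (the paper does this through a ``noise domination'' argument rather than an SVD, but the conclusion is the same).

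The genuine gap is Step~2, the $\epsilon$-net plus anti-concentration argument. Quantitatively it cannot close. You claim that for each fixed unit $v$ one has $\Pr[|\langle \mdprime v,u_v\rangle|\le\lambda\epsilon]\le \mbox{poly}(d,1/\rho)\cdot\epsilon$, and then union-bound over an $\epsilon$-net of the unit sphere in $\R^{k_2+1}$, whose log-cardinality is $\Theta(k^2\log(1/\epsilon))$. The total failure probability is then of order $(1/\epsilon)^{k_2}\cdot\epsilon$, which blows up as soon as $k\ge 2$. Equivalently: to achieve success probability $\hp$ you would need $\epsilon=\exp(-d^{\Omega(1)})$, but then your singular-value bound $\lambda\epsilon$ is exponentially small, not polynomial. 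A linear-in-$\epsilon$ anti-concentration bound (which is all Carbery--Wright or a one-coordinate leave-one-out gives) is simply too weak to survive a net of this size. Your own ``Main obstacle'' paragraph flags the nonlinear and non-smooth dependence of $\mgaussian$ on $Q$, but the proposed fix---conditioning on all but one column of $Q$---does not decouple things: that remaining column still enters both $WQ$ (hence the indicators) and the $Q\otimes Q$ factor.

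The paper avoids the net entirely via a \emph{subspace decoupling} idea you are missing. Write $Q=\proj{W}Q+\proj{W^\perp}Q$; the first summand determines $\td W=WQ$ (and hence all of $\mgaussian$), while $\bar Q:=\proj{W^\perp}Q$ is an \emph{independent} Gaussian matrix even after conditioning on $\td W$. Project $\mdprime$ onto $W^\perp\otimes W^\perp$: the Gaussian part becomes $\lambda\,\bar Q\otimes\bar Q\,[\td W^\top\otimes\td W^\top,\vvv{I_d}]R$ with $R$ now fixed, and the adversarial part $(1-\lambda)\proj{W^\perp\otimes W^\perp}\md$ is absorbed as a \emph{fixed} additive shift inside the bracket. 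One is then bounding $\sigmin{H}$ for a matrix $H$ that is a fixed matrix plus fresh Gaussian structure in $\bar Q$, and this is done by \emph{leave-one-out distance} on the $k^2+1$ columns (union bound over $O(k^2)$ events, each with failure probability $\exp(-d^{\Omega(1)})$ from Lemma~\ref{lm:largeprojection}), not by a net over the sphere. The extra slack $k\le d/7$ is used in these column-wise dimension counts, not to pay for a net.
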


We will prove this later in Section~\ref{sec:generalsmooth}.

\subsection{Smoothed Analysis for Gaussian Inputs}
\label{sec:smoothgaussian}
In this section, we will prove Theorem~\ref{thm:gaussian_smooth}, as restated below:

\gaussiansmooth*

To prove this theorem, recall the definition of $M_{ij}$:
\begin{align*}
\mat{M_{ij}}&=\E\big[(w_i^\top x)(w_j^\top x)(xx^\top)\indic\big].
\end{align*}

Since Gaussian distribution is highly symmetric, for every direction $u$ that is orthogonal to both $w_i$ and $w_j$, we have $u^\top \mat{M_{ij}} u$ be a constant. We can compute this constant as 
\begin{align*}
m_{ij}&:=\E\big[(w_i^\top x)(w_j^\top x)\indic\big].
\end{align*}

This implies that if we consider $\mat{M_{ij}} - m_{ij} I_d$, it is going to be a matrix whose rows and columns are in span of $w_i$ and $w_j$. In fact we can compute the matrix explicitly as the following lemma:


\begin{lemma}\label{lm:Mij_char}
Suppose input $x$ follows standard Gaussian distribution $\mathcal{N}(0,I_d)$, and suppose weight matrix $W$ has full-row rank, then for any $1\leq i<j\leq k$, we have 
\begin{align*}
\mat{M_{ij}}=&\frac{1}{\pi}\big(\phi_{ij}\cos(\phi_{ij})-\sin(\phi_{ij})\big)\n{w_i}\n{w_j}I_d+\frac{\phi_{ij}}{\pi}(w_i w_j^\top +w_j w_i^\top)
\\&-\frac{\sin(\phi_{ij})}{\pi}(\frac{\n{w_i}}{\n{w_j}}w_j w_j^\top+\frac{\n{w_j}}{\n{w_i}}w_i w_i^\top),
\end{align*}
where $0<\phi_{ij}<\pi$ is the angle between weight vectors $w_i$ and $w_j$.
\end{lemma}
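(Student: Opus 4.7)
The plan is to use the rotational symmetry of the standard Gaussian to reduce everything to a two-dimensional integral over the subspace $V := \mathrm{span}(w_i, w_j)$. Let $P_V$ be the orthogonal projection onto $V$ and decompose $x = u + v$ where $u = P_V x$ and $v = (I_d - P_V)x$. Since $x \sim \mathcal{N}(0, I_d)$, $u$ and $v$ are independent Gaussians with $\E[v] = 0$ and $\E[vv^\top] = I_d - P_V$. The scalar factor $g(x) := (w_i^\top x)(w_j^\top x)\mathbbm{1}\{w_i^\top x \cdot w_j^\top x \le 0\}$ depends only on $u$, so expanding $xx^\top = uu^\top + uv^\top + vu^\top + vv^\top$ and using independence plus $\E[v]=0$ collapses four terms into two: $\mat{M_{ij}} = \E[g(u)uu^\top] + m_{ij}(I_d - P_V)$, where $m_{ij} := \E[g(u)]$. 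The claim therefore reduces to computing one scalar and one rank-two symmetric matrix on $V$.

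Next I would set up an orthonormal basis of $V$ with $e_1 = w_i/\|w_i\|$ and $e_2$ chosen so that $w_j = \|w_j\|(\cos\phi_{ij}\, e_1 + \sin\phi_{ij}\, e_2)$. Writing $u$ in polar coordinates $u = r(\cos\theta\, e_1 + \sin\theta\, e_2)$ with density $\tfrac{r}{2\pi}e^{-r^2/2}$, we get $w_i^\top u = \|w_i\| r\cos\theta$ and $w_j^\top u = \|w_j\| r\cos(\theta - \phi_{ij})$, so the sign condition becomes $\cos\theta\cos(\theta - \phi_{ij}) \le 0$. For $\phi_{ij} \in (0,\pi)$ this selects the two intervals $(\pi/2, \pi/2 + \phi_{ij})$ and $(3\pi/2, 3\pi/2 + \phi_{ij})$, and the $\theta \mapsto \theta + \pi$ symmetry lets us integrate over one interval and double. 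The radial integrals separate cleanly: $\int_0^\infty r^3 e^{-r^2/2}\, dr = 2$ for $m_{ij}$, and $\int_0^\infty r^5 e^{-r^2/2}\, dr = 8$ for the entries of $\E[g(u) uu^\top]$.

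The angular integrals are then routine applications of product-to-sum identities. The scalar one yields $m_{ij} = \tfrac{1}{\pi}(\phi_{ij}\cos\phi_{ij} - \sin\phi_{ij})\|w_i\|\|w_j\|$, which matches the coefficient of $I_d$ in the claim. The three independent entries of $\E[g(u) uu^\top]$ in the basis $(e_1, e_2)$ become explicit trigonometric polynomials in $\phi_{ij}$. Finally I would re-express the resulting $2\times 2$ matrix in the non-orthogonal rank-one basis $\{w_i w_i^\top,\, w_j w_j^\top,\, w_i w_j^\top + w_j w_i^\top\}$ using $w_i w_i^\top = \|w_i\|^2 e_1 e_1^\top$, $w_j w_j^\top = \|w_j\|^2(\cos^2\phi_{ij}\, e_1 e_1^\top + \sin\phi_{ij}\cos\phi_{ij}(e_1 e_2^\top + e_2 e_1^\top) + \sin^2\phi_{ij}\, e_2 e_2^\top)$, and the analogous expansion of $w_i w_j^\top + w_j w_i^\top$. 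Combining with $m_{ij}(I_d - P_V) = m_{ij} I_d - m_{ij}(e_1 e_1^\top + e_2 e_2^\top)$ collects the desired coefficients.

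The conceptual content is all in the first paragraph; the main obstacle is the algebraic bookkeeping in the last step, since the natural answer comes out in the orthonormal basis $(e_1, e_2)$ but the target formula is written in the non-orthogonal basis $\{w_i w_i^\top, w_j w_j^\top, w_i w_j^\top + w_j w_i^\top\}$. A convenient shortcut is to verify the claimed formula by checking it agrees with $\mat{M_{ij}}$ on three independent bilinear forms---say $w_i^\top \mat{M_{ij}} w_i$, $w_j^\top \mat{M_{ij}} w_j$, and $w_i^\top \mat{M_{ij}} w_j$---plus the already-established equality $u^\top \mat{M_{ij}} u = m_{ij}\|u\|^2$ for $u \perp V$. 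Each of these three scalar identities is a moment of a bivariate Gaussian with correlation $\cos\phi_{ij}$ against a simple polynomial and the indicator, sidestepping any need to deal with $e_2$ explicitly.
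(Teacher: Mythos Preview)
Your proposal is correct and follows essentially the same route as the paper: reduce to the two-dimensional subspace $V=\mathrm{span}(w_i,w_j)$ via Gaussian independence, compute $m_{ij}$ and the $2\times 2$ block $\E[g(u)uu^\top]$ by polar integration, then change basis from $(e_1,e_2)$ to $\{w_iw_i^\top,w_jw_j^\top,w_iw_j^\top+w_jw_i^\top\}$. Your first paragraph is a cleaner way to establish the decomposition $\mat{M_{ij}}=\E[g(u)uu^\top]+m_{ij}(I_d-P_V)$ than the paper's bilinear-form check, and your suggested shortcut of verifying the target formula against $w_i^\top\mat{M_{ij}}w_i$, $w_j^\top\mat{M_{ij}}w_j$, $w_i^\top\mat{M_{ij}}w_j$ is a nice alternative to the explicit change of basis the paper carries out, but the substance of the two arguments is the same.
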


Of course, the same lemma would be applicable to $\td{W}$, so we have an explicit formula for $\td{M}_{ij}$. We will bound the smallest singular value using the idea of leave-one-out distance (as previously used in \cite{rudelson2009smallest}).

\paragraph{Leave-one-out Distance} Leave-one-out distance is a metric that is closely related to the smallest singular value but often much easier to estimate.

\begin{definition}\label{def:leave-one-out}
For a matrix $A\in \R^{d\times n} (d\ge n)$, the leave-one-out distance $d(A)$ is defined to be the smallest distance between a column of $A$ to the span of other columns. More precisely, let $A_i$ be the $i$-th column of $A$ and $S_{-i}$ be the span of all the columns except for $A_i$, then
\[
d(A): = \min_{i\in[n]} \n{(I_d-\proj{S_{-i}})A_i}.
\]
\end{definition}
 
\cite{rudelson2009smallest} showed that one can lowerbound the smallest singular value of a matrix by its leave-one-out distance.
\begin{lemma}[\cite{rudelson2009smallest}]\label{lm:leaveoneout}
For matrix $A\in \R^{d\times n} (d\geq n),$ 
we always have $d(A) \geq \sigmin{A}\geq \frac{1}{\sqrt{n}}d(A)$. 
\end{lemma}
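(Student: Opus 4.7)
The plan is to prove the two inequalities separately, both by explicit constructions relating vectors in $\mathbb{R}^n$ to vectors in the column span of $A$.

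For the easier direction $\sigma_{\min}(A) \geq d(A)/\sqrt{n}$, I would start with an arbitrary unit vector $v \in \mathbb{R}^n$ and aim to lower bound $\|Av\|$. By pigeonhole, there is some index $i$ with $|v_i| \geq 1/\sqrt{n}$. Writing $Av = v_i A_i + \sum_{j \neq i} v_j A_j$ and applying the projection $P_i := I_d - \text{Proj}_{S_{-i}}$, every term in the sum is annihilated, leaving $P_i A v = v_i \, P_i A_i$. Since projection is a contraction, $\|Av\| \geq \|P_i A v\| = |v_i|\cdot \|P_i A_i\| \geq (1/\sqrt{n}) \, d(A)$. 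Minimizing over unit $v$ gives the inequality.

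For the reverse direction $d(A) \geq \sigma_{\min}(A)$, I would fix any $i \in [n]$ and exhibit a vector $v$ with $Av = P_i A_i$ and $\|v\| \geq 1$. Since $A_i - P_i A_i$ lies in $S_{-i}$, we can write it as $\sum_{j \neq i} c_j A_j$ for some scalars $c_j$. Then setting $v_i = 1$ and $v_j = -c_j$ for $j \neq i$ yields $A v = P_i A_i$, with $\|v\| \geq |v_i| = 1$. Hence $\|P_i A_i\| = \|Av\| \geq \sigma_{\min}(A) \|v\| \geq \sigma_{\min}(A)$. Minimizing over $i$ gives $d(A) \geq \sigma_{\min}(A)$.

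Both directions are short and entirely elementary, so there is no real technical obstacle; the only subtlety is matching each projection argument to the correct side of the inequality. The key conceptual point is that the leave-one-out distance and the smallest singular value are tight up to a $\sqrt{n}$ factor because testing the minimum over unit vectors $v$ can always be reduced (via pigeonhole) to a test against a near-standard basis vector, and conversely each standard-basis-supported test vector $e_i$ can be completed to a witness for the singular value bound by adding the best linear combination of the other columns.
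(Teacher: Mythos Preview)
Your proof is correct in both directions. The paper does not actually prove this lemma; it simply cites it as a known result from \cite{rudelson2009smallest}, so there is no ``paper's approach'' to compare against, but your argument is the standard elementary one and would serve perfectly well as a self-contained proof.
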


Therefore, to bound $\sigmin{\td{M}}$ we just need to lowerbound $d(\td{M})$. We use the ideas similar to \cite{bhaskara2014smoothed} and \cite{ma2016polynomial}. Since every column of $\td{M}$ (except for $\td{M}_0$) is random, we will try to show that even if we condition on all the other columns, because of the randomness in $\td{M}_{ij}$, the distance between $\td{M}_{ij}$ to the span of other columns is large. However, there are several obstacles in this approach:

\begin{enumerate}
\item The \adm $\td{M}$ has a special column $\td{M}_0 = \vvv{I_d}$ that does not have any randomness.
\item The closed form expression for $\td{M}$ (as in Lemma~\ref{lm:Mij_char}) has complicated coefficients that are not linear in the vectors $\td{w}_i$ and $\td{w}_j$. 
\item The columns of $\td{M}_{ij}$ are not independent with each other, so if we condition on all the other columns, $\td{M}_{ij}$ is no longer random. 
\end{enumerate}

To address the first obstacle, we will prove a stronger version of Lemma~\ref{lm:leaveoneout} that allows a special column.

\begin{lemma}\label{lem:strongerleaveoneout} Let $A \in \R^{d\times (n+1)}\ (d\geq n+1)$ be an arbitrary matrix whose columns are $A_0,A_1,...,A_n$. For any $i = 1,2,...,n$, let $S_{-i}$ be the subspace spanned by all the other columns (including $A_0$) except for $A_i$, and let $d'(A) := \min_{i=1,...,n} \|(I_d-\proj{S_{-i}})A_i\|$. Suppose the column $A_0$ has norm $\sqrt{d}$ and $A_1,...,A_n$ has norm at most $C$, then 
\[
\sigmin{A} \ge \min\Big(\sqrt{\frac{nC^2d}{4nC^2+d}}, \sqrt{\frac{d}{4n^2C^2+nd}} d'(A)\Big).
\]
\end{lemma}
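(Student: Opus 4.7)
The plan is to reduce to showing that $\|Av\| \ge \text{RHS}$ for every unit vector $v \in \R^{n+1}$, write such a $v$ as $(v_0, v_1, \ldots, v_n)^\top$, and split into two cases according to whether the weight $|v_0|$ on the special (non-random) column is large or small. The two cases naturally produce the two terms in the stated minimum, and the split threshold $t$ for $v_0^2$ is chosen so as to balance them.

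In the ``small $v_0$'' case, I would assume $v_0^2 \le t$ for a threshold $t$ to be determined. Then $\sum_{i=1}^n v_i^2 \ge 1-t$, so some index $i^\star \in \{1,\ldots,n\}$ satisfies $|v_{i^\star}| \ge \sqrt{(1-t)/n}$. The standard leave-one-out inequality, exactly as used in Lemma~\ref{lm:leaveoneout}, gives $\|Av\| \ge |v_{i^\star}|\cdot \mathrm{dist}(A_{i^\star}, S_{-i^\star}) \ge \sqrt{(1-t)/n}\cdot d'(A)$ (note that $S_{-i^\star}$ contains $A_0$, which is exactly how the definition of $d'(A)$ in the lemma is set up).

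In the ``large $v_0$'' case $v_0^2 > t$, I would use a triangle-inequality bound
$\|Av\| \ge |v_0|\,\|A_0\| - \Bigl\|\sum_{i=1}^n v_i A_i\Bigr\| \ge |v_0|\sqrt{d} - C\sqrt{n(1-v_0^2)}$,
where the last estimate uses $\|A_i\|\le C$ and Cauchy--Schwarz on $(v_1,\ldots,v_n)$. This lower bound is a monotonically increasing function of $|v_0|$ (its derivative is $\sqrt d + Cv_0\sqrt{n/(1-v_0^2)} > 0$), so on the range $v_0^2 \ge t$ its minimum is attained at $v_0^2 = t$, giving the bound $\sqrt{td} - C\sqrt{n(1-t)}$. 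Setting $t = 4nC^2/(d+4nC^2)$ (which is exactly twice the zero of the analogous expression, and is what makes everything clean), a direct computation yields $\sqrt{td} - C\sqrt{n(1-t)} = \sqrt{nC^2 d/(4nC^2+d)}$, matching the first term of the minimum; and with the same $t$, $(1-t)/n = d/(4n^2C^2 + nd)$, so the Case A bound becomes $\sqrt{d/(4n^2C^2+nd)}\,d'(A)$, matching the second term. Combining the two cases gives the claim.

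The only real obstacle is finding the right threshold $t$. A naive guess such as $t=1/2$ would give messy constants and, more importantly, would not ensure $\sqrt{td} \ge C\sqrt{n(1-t)}$ in Case B so that the triangle-inequality bound is nonvacuous. The bound in Case B is nonnegative exactly when $t \ge nC^2/(d+nC^2)$, and doubling this zero to $t = 4nC^2/(d+4nC^2)$ both makes Case B strictly positive and collapses the two expressions to the clean forms appearing in the lemma, with no further optimization needed. Apart from this choice, the rest of the argument is routine triangle inequality and the ordinary leave-one-out identity $\mathrm{dist}(A_{i^\star}, S_{-i^\star}) \le \|Av\|/|v_{i^\star}|$ (which follows because $A_{i^\star} = (Av - \sum_{j\ne i^\star} v_j A_j)/v_{i^\star}$ and the subtracted sum lies in $S_{-i^\star}$).
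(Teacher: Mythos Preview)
Your proposal is correct and follows essentially the same argument as the paper: split on whether $|v_0|^2$ exceeds the threshold $t=4nC^2/(4nC^2+d)$, use the triangle inequality with Cauchy--Schwarz in the large-$v_0$ case, and pigeonhole plus the leave-one-out distance in the small-$v_0$ case. The only cosmetic difference is that you explicitly invoke monotonicity of $|v_0|\sqrt d - C\sqrt{n(1-v_0^2)}$ in $|v_0|$, whereas the paper bounds each term separately at the threshold; the resulting constants are identical.
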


This lemma shows that if we can bound the leave-one-out distance for all but one column, then the smallest singular value of the matrix is still lowerbounded as long as the columns do not have very different norms. We defer the proof to Section~\ref{sec:smoothgaussianproof}.

For the second obstacle, we show that these coefficients are lowerbounded with high probability. Therefore we can condition on the event that all the coefficients are large enough.

\begin{lemma}\label{lm:angle}
Given weight vectors $w_i$ and $w_j$ with norm $\n{w_i},\n{w_j}\leq \tau$, let $\td{w}_i=w_i + \rho \varepsilon_i, \td{w}_j= w_j+\rho \varepsilon_j$ where $\varepsilon_i,\varepsilon_j$ are i.i.d. Gaussian random vectors. With probability at least $\hp$, we know $\n{\td{w}_i} \le \tau + \sqrt{3\rho^2 d/2}$, $\n{\td{w}_j}\le \tau + \sqrt{3\rho^2 d/2}$ and 
$$\td{\phi}_{ij} \geq  \frac{\sqrt{\rho^2(d-2)}}{\sqrt{2}\tau+\sqrt{3\rho^2 d}},$$
where $\td{\phi}_{ij}$ is the angle between $\td{w}_i$ and $\td{w}_j$. In particular, if $\td{W} = W+\rho E$ where $E$ is an i.i.d. Gaussian random matrix, with probability at least $\hp$, for all $i$, $\n{\td{w}_i}\le \tau + \sqrt{3\rho^2 d/2}$, and for all $i<j$, the coefficient $\td{\phi}_{ij}/\pi$ in front of the term $\td{w}_i\td{w}_j^\top + \td{w}_j\td{w}_i^\top$ is at least $\frac{\sqrt{\rho^2(d-2)}}{(\sqrt{2}\tau+\sqrt{3\rho^2 d})\pi}$. 
\end{lemma}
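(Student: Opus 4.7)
The plan is to prove the lemma in two main stages, bounding the norms of the perturbed weights and then bounding the angle from below, and finally combine them with a union bound for the ``in particular'' statement.

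For the norm bound, I will use the standard tail bound for chi-squared random variables: since $\|\varepsilon_i\|^2 \sim \chi^2_d$ concentrates around $d$, with probability at least $1-\exp(-\Omega(d))$ we have $\|\varepsilon_i\|^2 \le 3d/2$, and by the triangle inequality $\n{\td{w}_i} \le \n{w_i} + \rho\n{\varepsilon_i} \le \tau + \sqrt{3\rho^2 d/2}$, and likewise for $\td{w}_j$. This handles the norm claim directly.

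The key step is lower-bounding $\sin(\td{\phi}_{ij})$, since $\td{\phi}_{ij} \ge \sin(\td{\phi}_{ij})$ holds on all of $[0,\pi]$ (either $\td{\phi}_{ij}\le \pi/2$ so $\sin\td{\phi}_{ij}\le \td{\phi}_{ij}$, or $\td{\phi}_{ij} \ge \pi/2 \ge \sin\td{\phi}_{ij}$). Observe that $\sin(\td\phi_{ij}) = \n{P_\perp \td{w}_j}/\n{\td{w}_j}$, where $P_\perp$ is the orthogonal projector onto the $(d-1)$-dimensional complement of $\td{w}_i$. I will condition on $\td{w}_i$ (so $P_\perp$ becomes deterministic), and use that $\varepsilon_j$ is independent of $\varepsilon_i$, so $P_\perp \varepsilon_j$ is a standard Gaussian on a fixed $(d-1)$-dim subspace. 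Decomposing that subspace into the direction $\hat v := P_\perp w_j/\n{P_\perp w_j}$ (arbitrary if $P_\perp w_j = 0$) and its $(d-2)$-dimensional orthogonal complement, we can write $P_\perp \td{w}_j = (\n{P_\perp w_j}+\rho\alpha)\hat v + \rho\eta$, where $\alpha\sim \mathcal N(0,1)$ and $\eta$ is standard Gaussian on the $(d-2)$-dim subspace, both independent. Hence $\n{P_\perp \td{w}_j}^2 \ge \rho^2\n{\eta}^2$, and since $\n{\eta}^2\sim \chi^2_{d-2}$ concentrates around $d-2$, a chi-squared tail bound gives $\n{P_\perp\td{w}_j} \ge \rho\sqrt{d-2}$ with probability at least $1-\exp(-\Omega(d))$. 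This yields $\sin(\td\phi_{ij})\ge \rho\sqrt{d-2}/\n{\td{w}_j}$.

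Combining with the upper bound $\n{\td{w}_j} \le \tau+\sqrt{3\rho^2 d/2} = (\sqrt 2\tau + \sqrt{3\rho^2 d})/\sqrt 2$ gives
$\td\phi_{ij} \ge \sin(\td\phi_{ij}) \ge \sqrt{\rho^2(d-2)}/(\sqrt 2\tau+\sqrt{3\rho^2 d})$ (in fact a factor of $\sqrt 2$ stronger) on the intersection of the three high-probability events, which has measure $1-\exp(-d^{\Omega(1)})$. The ``in particular'' version then follows by applying the single-pair argument to each $(i,j)$ with $1\le i<j\le k$ and each $i$, and taking a union bound over the $O(k^2)$ events; since $k\le d$, this only multiplies the failure probability by a polynomial factor in $d$, which is absorbed into the $\exp(-d^{\Omega(1)})$ bound.

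The argument is mostly a concatenation of standard Gaussian concentration facts; the only subtlety is the conditioning step, where one must be careful that after conditioning on $\td{w}_i$ the noise $\varepsilon_j$ remains Gaussian in the orthogonal complement, and that the deterministic shift $P_\perp w_j$ (which depends on $\td{w}_i$) does not cancel the noise. This is handled by projecting further into the $(d-2)$-dim subspace orthogonal to that shift, after which the Gaussian is centered and its norm concentrates from below.
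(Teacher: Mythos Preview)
Your proof is correct and follows essentially the same route as the paper: project one perturbed vector onto the orthogonal complement of the other, then further project away the deterministic shift so that what remains is $\rho$ times a standard Gaussian on a $(d-2)$-dimensional subspace, whose squared norm is $\chi^2_{d-2}$; combine with the norm upper bound and $\td\phi_{ij}\ge \sin\td\phi_{ij}$. The only slip is the claim that $\|\eta\|^2 \ge d-2$ with probability $1-\exp(-\Omega(d))$: since $d-2$ is the \emph{mean} of $\chi^2_{d-2}$, the tail bound only gives $\|\eta\|^2 \ge (d-2)/2$ with that probability, which is exactly what is needed for the stated inequality (and explains why your ``factor of $\sqrt 2$ stronger'' does not actually materialize).
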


This lemma intuitively says that after the perturbation $\td{w}_i$ and $\td{w}_j$ cannot be close to co-linear. We defer the detailed proof to Section~\ref{sec:smoothgaussianproof}.

For the final obstacle, we use ideas very similar to \cite{ma2016polynomial} which decouples the randomness of the columns.

\begin{proofof}{Theorem~\ref{thm:gaussian_smooth}}
Let $E_1$ be the event that Lemma~\ref{lm:angle} does not hold. Event $E_1$ will be one of the bad events (but note that we do not condition on $E_1$ not happening, we use a union bound at the end).

We partition $[d]$ into two disjoint subsets $L_1, L_2$ of size $d/2$. Let $\td{M}'$ be the set of rows of $\td{M}$ indexed by $L_1\times L_2$. That is, the columns of $\td{M}'$ are 
$$\td{M}_{ij}'=\frac{\td{\phi}_{ij}}{\pi}(\td{w}_{i,L_1}\otimes  \td{w}_{j,L_2} +\td{w}_{j,L_1}\otimes \td{w}_{i,L_2})-\frac{\sin(\td{\phi}_{ij})}{\pi}(\frac{\n{\td{w}_i}}{\n{\td{w}_j}}\td{w}_{j,L_1}\otimes \td{w}_{j,L_2}+\frac{\n{\td{w}_j}}{\n{\td{w}_i}}\td{w}_{i,L_1}\otimes \td{w}_{i,L_2}),$$
for $i<j$, where $\td{w}_{i,L}$ denotes the restriction of vector $\td{w}_i$ to the subset $L$. Note that the restriction of $\vvv{I_d}$ to the rows indexed by $L_1\times L_2$ is just an all zero vector. 

We will focus on a column $\td{M}_{ij}'$ with $i<j$ and try to prove it has a large distance to the span of all the other columns. Let $V_{ij}$ be the span of all other columns, which is equal to $V_{ij}=\mbox{span}\{\td{M}_{kl}':k<l \wedge (k,l)\neq (i,j)\}$ (note that we do not need to consider $\td{M}_0$ because that column is 0 when restricted to $L_1\times L_2$. 

It's clear that $V_{ij}$ is correlated with $\td{M}_{ij}'$, which is bad for the proof. To get around this problem, we follow the idea of \cite{ma2016polynomial} and define the following subspace that contains $V_{ij}$,
$$\hat{V}_{ij}=\mbox{span}\Big\{ \td{w}_{k,L_1}\otimes x, x\otimes \td{w}_{k,L_2}, \td{w}_{j,L_1}\otimes x, x\otimes \td{w}_{i,L_2} \Big| k\notin \{i,j\}, x\in \R^{d/2} \Big\}.$$
By definition $V_{ij}\subset \hat{V}_{ij}$, and thus $\hat{V}_{ij}^\perp\subset V_{ij}^\perp$, where $V_{ij}^\perp$ denotes the orthogonal subspace of $V_{ij}$. Observe that $\td{w}_{j,L_1}\otimes \td{w}_{i,L_2}, \td{w}_{j,L_1}\otimes \td{w}_{j,L_2},\td{w}_{i,L_1}\otimes \td{w}_{i,L_2}\in \hat{V}_{ij}$, thus 
$$\proj{\hat{V}_{ij}^\perp}\td{M}_{ij}'= \frac{\td{\phi}_{ij}}{\pi}\proj{\hat{V}_{ij}^\perp}\big(\td{w}_{i,L_1}\otimes  \td{w}_{j,L_2}\big).$$
Note that $\td{w}_{i,L_1}\otimes  \td{w}_{j,L_2}$ is independent with $\hat{V}_{ij}$. Moreover, subspace $\hat{V}_{ij}$ has dimension at most $(k-2)d/2 +(k-2)d/2+d/2+d/2=(k-1)d<\frac{4}{5}\cdot\frac{d^2}{4}.$ Then by Lemma~\ref{lm:largeprojection}, we know that with probability at least $1-\exp(-d^{\Omega(1)})$, 
$$\proj{\hat{V}_{ij}^\perp}\big(\td{w}_{i,L_1}\otimes  \td{w}_{j,L_2}\big)\geq \mbox{poly}(1/d, \rho).$$
Let $E_2$ be the event that this inequality does not hold for some $i,j$. 

Let $S_{ij}=\mbox{span}\{\td{M}_0,\td{M}_{kl}:k<l \wedge (k,l)\neq (i,j)\}$. Now we know when neither bad events $E_1$ or $E_2$ happens, for every pair $i < j$, 
\begin{align*}
\proj{S_{ij}^\perp} \td{M}_{ij}\geq &\proj{V_{ij}^\perp}\td{M}_{ij}'\\
\geq & \proj{\hat{V}_{ij}^\perp}\td{M}_{ij}'\\
=&\frac{\td{\phi}_{ij}}{\pi}\proj{\hat{V}_{ij}^\perp}\big(\td{w}_{i,L_1}\otimes  \td{w}_{j,L_2}\big)\\
\geq &\mbox{poly}(1/\tau, 1/d,\rho).
\end{align*}

Currently, we have proved that for any $i<j$, the distance between column $\td{M}_{ij}$ and the span of other columns is at least inverse polynomial. To use Lemma~\ref{lem:strongerleaveoneout} we just need to give a bound on the norms of these columns. 
By Lemma~\ref{lm:angle}, we know when $E_1$ does not happen
$$\forall i,\ \n{\td{w}_i}\leq \tau+\sqrt{\frac{3\rho^2 d}{2}},$$
where $\tau$ is the uniform upper bound of the norm of every row vector of $W$. Let $\td{\tau}=\tau+\sqrt{\frac{3\rho^2 d}{2}}$, we know $\td{\tau}=\mbox{poly}(\tau, d,\rho)$.

Thus, we have
\begin{align*}
\n{\td{M}_{ij}}\leq & \frac{1}{\pi}\big(\td{\phi}_{ij}|\cos(\td{\phi}_{ij})|+\sin(\td{\phi}_{ij})\big)\n{\td{w}_i}\n{\td{w}_j}\sqrt{d}\\
&+\frac{\td{\phi}_{ij}}{\pi}(\n{\td{w}_i}\n{\td{w}_j} +\n{\td{w}_j}\n{\td{w}_i})\\
&+\frac{\sin(\td{\phi}_{ij})}{\pi}(\n{\td{w}_i}\n{\td{w}_j} +\n{\td{w}_j}\n{\td{w}_i})\\
\leq &\frac{\td{\tau}^2}{\pi}(\pi+1)\sqrt{d}+2\td{\tau}^2+\frac{2\td{\tau}^2}{\pi}.
\end{align*}
Thus, there exists $C=\mbox{poly}(\tau, d,\rho)$, such that $\n{\td{M}_{ij}}\leq C$ for every $i<j$. Now applying Lemma~\ref{lem:strongerleaveoneout} immediately gives the result.
\end{proofof}

\subsection{Proof of Auxiliary Lemmas for Section~\ref{sec:smoothgaussian}}
\label{sec:smoothgaussianproof}

We will first prove the characterization for columns in the augmented distinguishing matrix.

\begin{proofof}{Lemma~\ref{lm:Mij_char}}
For simplicity, we start by assuming that every weight vector $w_i$ has unit norm. At the end of the proof we will discuss how to incorporate the norms of $w_i$, $w_j$. Also throughout the proof we will abuse notation to use $M_{ij}$ as its matrix form $\mat{M_{ij}}$. 

Let $\mathcal{S}_{ij}$ be the subspace spanned by $w_i$ and $w_j$. Let $\mathcal{S}_{ij}^\perp$ be the orthogonal subspace of $\mathcal{S}_{ij}.$
Let $\{\eone, \etwo\}$ be a set of orthonormal basis for $\mathcal{S}_{ij}$ such that $\eone=w_i$ and $\inner{\etwo}{w_j}>0$. We use matrix $S_{ij}\in \R^{d\times 2}$ to represent subspace $\mathcal{S}_{ij}$, which matrix has $\eone$ and $\etwo$ as two columns. Also, let $S_{ij}^\perp$ be a $d\times (d-2)$ matrix, whose columns constitute an orthonormal basis of $\mathcal{S}_{ij}^\perp$.

Let $\proj{S_{ij}}=S_{ij}S_{ij}^\top,$ and $\proj{S_{ij}^\perp}=I_d-S_{ij}S_{ij}^\top.$ Then, we have 
\begin{align*}
M_{ij}&= \proj{S_{ij}^\perp}M_{ij}+\proj{S_{ij}}M_{ij}\\
&= (\proj{S_{ij}^\perp}M_{ij}+m_{ij}\proj{S_{ij}}I_d)+(\proj{S_{ij}}M_{ij}-m_{ij}\proj{S_{ij}}I_d)\\
&= (\proj{S_{ij}^\perp}M_{ij}+m_{ij}\proj{S_{ij}}I_d)+(\proj{S_{ij}}M_{ij}-m_{ij}S_{ij}S_{ij}^\top)
\end{align*}

First, we show that 
$$\proj{S_{ij}^\perp}M_{ij}+m_{ij}\proj{S_{ij}}I_d=m_{ij}I_d,$$
which is equivalent to proving that $\proj{S_{ij}^\perp}M_{ij}=m_{ij}\proj{S_{ij}^\perp}I_d$. It's obvious that the column span of $\proj{S_{ij}^\perp}M_{ij}$ belongs to the subspace $\mathcal{S}_{ij}^\perp$. Actually, the row span of $\proj{S_{ij}^\perp}M_{ij}$ also belongs to the subspace $\mathcal{S}_{ij}^\perp$. To show this, let's consider $u^\top (\proj{S_{ij}^\perp}M_{ij})v$, where $u\in \mathcal{S}_{ij}^\perp$ and $v\in \mathcal{S}_{ij}$.
\begin{align*}
u^\top (\proj{S_{ij}^\perp}M_{ij})v&=u^\top (I_d-S_{ij}S_{ij}^\top)M_{ij}v\\
&= (u^\top-u^\top S_{ij}S_{ij}^\top)M_{ij}v\\
&= u^\top M_{ij}v,
\end{align*}
where the last equality holds since $u\in \mathcal{S}_{ij}^\perp$ is orthogonal to $\eone$ and $\etwo$. We also know that
\begin{align*}
u^\top M_{ij} v &= u^\top \E\big[(w_i^\top x)(w_j^\top x)(xx^\top)\indic\big]v\\
&= \E\big[(w_i^\top x)(w_j^\top x)(u^\top x)(v^\top x)\indic\big]\\
&= \E\big[(w_i^\top x)(w_j^\top x)(v^\top x)\indic\big]\E[(u^\top x)]\\
&= 0,
\end{align*}
where the third equality holds because $u^\top x$ is independent with $w_i^\top x, w_j^\top x$ and $v^\top$. Note since $u$ is orthogonal with $w_i, w_j, v$, we know for standard Gaussian vector $x$, random variable $u^\top x$ is independent with $w_i^\top x, w_j^\top x, v^\top x$.

Since the column span and row span of $\proj{S_{ij}^\perp}M_{ij}$ both belong to the subspace $\mathcal{S}_{ij}^\perp$, there must exist a $(d-2)\times (d-2)$ matrix $C$, such that $\proj{S_{ij}^\perp}M_{ij}=S_{ij}^\perp C (S_{ij}^\perp)^\top$. We only need to show this matrix $C$ must be $m_{ij}I_{d-2}$. In order to show this, we prove for any $u, v \in \mathcal{S}_{ij}^\perp,\ u^\top (\proj{S_{ij}^\perp}M_{ij})v=m_{ij}u^\top v.$
\begin{align*}
u^\top (\proj{S_{ij}^\perp}M_{ij})v &= u^\top M_{ij} v \\
&= u^\top \E\big[(w_i^\top x)(w_j^\top x)(xx^\top)\indic\big]v\\
&= \E\big[(w_i^\top x)(w_j^\top x)(u^\top x)(v^\top x)\indic\big]\\
&= \E\big[(w_i^\top x)(w_j^\top x)\indic\big]\E[u^\top x v^\top x]\\
&= m_{ij}u^\top  \E[xx^\top]v\\
&= m_{ij}u^\top v,
\end{align*}
where the fourth equality holds because $u^\top x,v^\top x$ are independent with $w_i^\top x, w_j^\top x$.

Thus, we know 
\begin{align*}
M_{ij}=&(\proj{S_{ij}^\perp}M_{ij}+m_{ij}\proj{S_{ij}}I_d)+(\proj{S_{ij}}M_{ij}-m_{ij}S_{ij}S_{ij}^\top)\\
=&m_{ij}I_d+(\proj{S_{ij}}M_{ij}-m_{ij}S_{ij}S_{ij}^\top).
\end{align*}

Let's now compute the closed form for $m_{ij}$. Recall that 
$$m_{ij}:=\E\big[(w_i^\top x)(w_j^\top x)\indic\big].$$
Note, we only need to consider input $x$ within subspace $\mathcal{S}_{ij}$, which subspace has dimension two. Using the polar representation of two-dimensional Gaussian random variables ($r$ is the radius and $\theta$ is the angle), we have 
\begin{eqnarray}
m_{ij}=\frac{1}{2\pi}\int_{0}^{\infty}r^3\exp(-\frac{r^2}{2})dr\int_{\frac{\pi}{2}-\phi_{ij}}^{\frac{\pi}{2}}2\cos(\theta)\cos(\theta+\phi_{ij})d\theta=
\frac{1}{\pi}(\phi_{ij}\cos(\phi_{ij})-\sin(\phi_{ij})).\notag
\end{eqnarray}

Next, we compute the closed form of $\proj{S_{ij}}M_{ij}$. Note $\proj{S_{ij}}M_{ij}$ is symmetric, because $\proj{S_{ij}}M_{ij}=M_{ij}-m_{ij}I_d+m_{ij}S_{ij}S_{ij}^\top$, and $M_{ij}, I_d$ and $S_{ij}S_{ij}^\top$ are all symmetric. It's obvious that the column span of $\proj{S_{ij}}M_{ij}$ belongs to subspace $\mathcal{S}_{ij}$. Combined with the fact that $\proj{S_{ij}}M_{ij}$ is symmetric, we know the row span of $\proj{S_{ij}}M_{ij}$ also belongs to the subspace $\mathcal{S}_{ij}$. Thus, matrix $\proj{S_{ij}}M_{ij}$ can be represented as a linear combination of $(\eone)(\eone)^\top, \\(\eone)(\etwo)^\top,(\etwo)(\eone)^\top$ and $(\etwo)(\etwo)^\top$, which means 
$$\proj{S_{ij}}M_{ij}= c_{11}^{(i,j)}(\eone)(\eone)^\top+ c_{12}^{(i,j)}(\eone)(\etwo)^\top+c_{21}^{(i,j)}(\etwo)(\eone)^\top+c_{22}^{(i,j)}(\etwo)(\etwo)^\top,$$
where $c_{11}^{(i,j)},c_{12}^{(i,j)},c_{21}^{(i,j)}$ and $c_{22}^{(i,j)}$ are four coefficients. Now, we only need to figure out the four coefficients of this linear combination. Similar as the computation for $m_{ij}$, we use polar integration to show that,
\begin{align*}
c_{11}^{(i,j)}=&
\inner{Proj_{S_{ij}}M_{ij}}{(\eone)(\eone)^T}\\
=&\frac{1}{2\pi}\int_0^\infty r^5\exp(-\frac{r^2}{2})dr \int_{\frac{\pi}{2}-\phi_{ij}}^{\frac{\pi}{2}} \big(\cos ^3(\theta)\cos(\theta+\phi_{ij})+\cos (\theta)\cos ^3(\theta+\phi_{ij})\big)d\theta\\
=&\frac{1}{4\pi}(12\phi_{ij}\cos(\phi_{ij})-9\sin(\phi_{ij})-\sin(3\phi_{ij})),
\end{align*}
where the first equality holds because $\eone$ is orthogonal with $\etwo$. Similarly, we can show that 
\begin{align*}
c_{22}^{(i,j)}=&\inner{Proj_{S_{ij}}M_{ij}}{(\etwo)(\etwo)^T}\\
=&\frac{1}{2\pi}\int_0^\infty r^5\exp(-\frac{r^2}{2})dr \int_{\frac{\pi}{2}-\phi_{ij}}^{\frac{\pi}{2}}\big( \cos (\theta)\cos(\theta+\phi_{ij})\sin ^2(\theta)+\cos (\theta)\cos (\theta+\phi_{ij})\sin ^2(\theta+\phi_{ij})\big)d\theta\\
=&\frac{1}{4\pi}(4\phi_{ij}\cos(\phi_{ij})-7\sin(\phi_{ij})+\sin(3\phi_{ij})),
\end{align*}
and 
\begin{align*}
c_{21}^{(i,j)}=&\inner{Proj_{S_{ij}}M_{ij}}{(\etwo)(\eone)^T}\\
=&\frac{1}{2\pi}\int_0^\infty r^5\exp(-\frac{r^2}{2})dr \int_{\frac{\pi}{2}-\phi_{ij}}^{\frac{\pi}{2}}\big(-\cos ^2(\theta)\cos(\theta+\phi_{ij})\sin (\theta)+\cos (\theta)\cos ^2(\theta+\phi_{ij})\sin (\theta+\phi_{ij})\big)d\theta\notag\\
=&\frac{1}{\pi}(\phi_{ij}\sin(\phi_{ij})-\cos(\phi_{ij})\sin ^2(\phi_{ij})).
\end{align*}
It's easy to check that $c_{12}^{(i,j)}=c_{21}^{(i,j)}$. Let $M_{ij}'$ be $\proj{S_{ij}}M_{ij}-m_{ij}S_{ij}S_{ij}$. Then, according to above computation, we know 
\begin{align*}
M_{ij}'=&\proj{S_{ij}}M_{ij}-m_{ij}S_{ij}S_{ij}\\
=&(c_{11}^{(i,j)}-m_{ij})(\eone)(\eone)^\top+ c_{12}^{(i,j)}(\eone)(\etwo)^\top+c_{21}^{(i,j)}(\etwo)(\eone)^\top+(c_{22}^{(i,j)}-m_{ij})(\etwo)(\etwo)^\top\notag\\
=&\frac{1}{4\pi}\big(8\phi_{ij}\cos(\phiij)-5\sin(\phiij)-\sin(3\phiij)\big)(\eone)(\eone)^\top\\
&+\frac{1}{4\pi}\big(-3\sin(\phiij)+\sin(3\phiij)\big)(\etwo)(\etwo)^\top\\
&+\frac{1}{\pi}\big(\phi_{ij}\sin(\phi_{ij})-\cos(\phi_{ij})\sin ^2(\phi_{ij})\big)\big( (\eone)(\etwo)^\top+(\etwo)(\eone)^\top\big).
\end{align*}

Since $\eone=w_i$ and $\etwo=\frac{1}{\sin(\phiij)}w_j-\cot(\phiij)w_i$, we can also express $M_{ij}'$ as a linear combination of $w_i w_i^\top, w_jw_j^\top, w_iw_j^\top$ and $w_jw_i^\top.$
\begin{align*}
M_{ij}'
=&\frac{1}{4\pi}\big(8\phi_{ij}\cos(\phiij)-5\sin(\phiij)-\sin(3\phiij)\big)w_iw_i^\top\\
&+\frac{1}{4\pi}\big(-3\sin(\phiij)+\sin(3\phiij)\big)(\frac{w_j}{\sin(\phiij)}-\cot(\phiij)w_i)(\frac{w_j}{\sin(\phiij)}-\cot(\phiij)w_i)^\top\\
&+\frac{1}{\pi}\big(\phi_{ij}\sin(\phi_{ij})-\cos(\phi_{ij})\sin ^2(\phi_{ij})\big)\big( w_i(\frac{w_j}{\sin(\phiij)}-\cot(\phiij)w_i)^\top+(\frac{w_j}{\sin(\phiij)}-\cot(\phiij)w_i)w_i^\top\big)\notag\\
=&\frac{\phi_{ij}}{\pi}(w_i w_j^\top +w_j w_i^\top)-\frac{\sin(\phi_{ij})}{\pi}(w_j w_j^\top+w_i w_i^\top).
\end{align*}
Thus, 
\begin{align*}
M_{ij}=&m_{ij}I_d+M_{ij}'\\
=&\frac{1}{\pi}(\phi_{ij}\cos(\phi_{ij})-\sin(\phi_{ij}))I_d+\frac{\phi_{ij}}{\pi}(w_i w_j^\top +w_j w_i^\top)-\frac{\sin(\phi_{ij})}{\pi}(w_j w_j^\top+w_i w_i^\top).
\end{align*}

Finally, if the rows $w_i,w_j$ do not have unit norm, let $\bar{w}_i = w_i/\|w_i\|, \bar{w}_j = w_j/\|w_j\|$, we know
\begin{align*}
m_{ij} &= \E\big[(w_i^\top x)(w_j^\top x)\indic\big]\\ &= \|w_i\| \|w_j\| \E\big[(\bar{w}_i^\top x)(\bar{w}_j^\top x)\indic\big] \\ & = \frac{1}{\pi}(\phi_{ij}\cos(\phi_{ij})-\sin(\phi_{ij}))\n{w_i}\n{w_j}.
\end{align*}
Here we used the fact that the indicator variable does not change whether we use $w_i,w_j$ or $\bar{w}_i,\bar{w}_j$. Similarly,
\begin{align*}
M_{ij} & = \E\big[(w_i^\top x)(w_j^\top x)xx^\top \indic\big] \\
& = \|w_i\| \|w_j\|\E\big[(\bar{w}_i^\top x)(\bar{w}_j^\top x)xx^\top \indic\big]\\
& = \frac{1}{\pi}(\phi_{ij}\cos(\phi_{ij})-\sin(\phi_{ij}))\n{w_i}\n{w_j}I_d+\frac{\phi_{ij}}{\pi}(w_i w_j^\top +w_j w_i^\top)-\frac{\sin(\phi_{ij})}{\pi}(\frac{\n{w_i}}{\n{w_j}}w_j w_j^\top+\frac{\n{w_j}}{\n{w_i}}w_i w_i^\top).
\end{align*}
%
%
%
\end{proofof}

Now we can prove the lemmas used to handle the two obstacles. First we give the stronger leave-one-out distance bound.

\begin{proofof}{Lemma~\ref{lem:strongerleaveoneout}}
The smallest singular value of $A$ can be defined as follows:
$$\sigmin{A}:=\min_{u:\n{u}=1}\n{Au}.$$
Suppose $u^*\in \mbox{argmin}_{u:\n{u}=1}\n{Au}.$
Let $u^*_i$ be the coordinate corresponding to the column $A_i$, for $0\leq i\leq n$. We consider two cases here. If $|u^*_0|\geq \sqrt{\frac{4nC^2}{4nC^2+d}}$, then we have 
\begin{align*}
\sigmin{A}=& \n{Au^*}\\
=& \bn{u^*_0 A_0+\sum_{1\leq i\leq n}u^*_i A_i}\\
\geq &\bn{u^*_0 A_0}-\bn{\sum_{1\leq i\leq n}u^*_i A_i}\\
\geq &\sqrt{\frac{4nC^2}{4nC^2+d}}\sqrt{d}-(\sum_{1\leq i\leq n}|u^*_i|)C\\
\geq &\sqrt{\frac{4nC^2d}{4nC^2+d}}-\sqrt{n}\sqrt{\frac{d}{4nC^2+d}}C\\
=&\sqrt{\frac{nC^2d}{4nC^2+d}},
\end{align*}
where the third inequality uses Cauchy-Schwarz inequality.

If $|u^*_0|< \sqrt{\frac{4nC^2}{4nC^2+d}}$, we know $\sum_{1\leq i\leq n}|u^*_i|^2\geq \frac{d}{4nC^2+d}$. Let $k\in \mbox{argmax}_{1\leq i\leq n}|u^*_i|$. We know that $|u^*_k|\geq \sqrt{\frac{d}{4n^2C^2+nd}}$. Thus,
\begin{align*}
\sigmin{A}\geq& \bn{u^*_k A_k+\sum_{i:i\neq k}u^*_i A_i}\\
=& |u^*_k|\bn{ A_k+\sum_{i:i\neq k}\frac{u^*_i}{u^*_k} A_i}\\
\geq& |u^*_k|\n{(I_d-\proj{S_{-k}})A_k}\\
\geq& |u^*_k|d'(A)\\
\geq& \sqrt{\frac{d}{4n^2C^2+nd}} d'(A).\\
\end{align*} 

Above all, we know that the smallest singular value of $A$ is lower bounded as follows,
$$\sigmin{A}\geq \min\Big(\sqrt{\frac{nC^2d}{4nC^2+d}}, \sqrt{\frac{d}{4n^2C^2+nd}} d'(A)\Big).$$
\end{proofof}

Next we give the bound on the angle between two perturbed vectors $\td{w}_i$ and $\td{w}_j$.

\begin{proofof}{Lemma~\ref{lm:angle}}
According to the definition of $\rho$-perturbation, we know $\td{w}_i=w_i+\rho \varepsilon_i,\td{w}_j=w_j+\rho \varepsilon_j$, where $\varepsilon_i,\varepsilon_j$ are i.i.d. standard Gaussian vectors.
First, we show that with high probability, the projection of $\td{w}_i$ on the orthogonal subspace of $\td{w}_j$ is lower bounded. Denote the subspace spanned by $\td{w}_j$ as $\mathcal{S}_{\td{w}_j}$, and denote the subspace spanned by $\{\td{w}_j, w_i\}$ as $\mathcal{S}_{\td{w}_j\cup w_i}$. Thus, we have
\begin{align*}
\bn{\proj{\mathcal{S}_{\td{w}_j}^\perp}\td{w}_i}\geq &\bn{\proj{\mathcal{S}_{\td{w}_j\cup w_i}^\perp}(w_i+\rho \varepsilon_i)}\\
=& \rho \bn{\proj{\mathcal{S}_{\td{w}_j\cup w_i}^\perp} \varepsilon_i}
\end{align*} 
where $\mathcal{S}_{\td{w}_j}^\perp$ is the orthogonal subspace of $\mathcal{S}_{\td{w}_j}$.

Fix $\varepsilon_j$, then $\mathcal{S}_{\td{w}_j\cup w_i}^\perp$ is a fixed subspace of $\R^d$ with dimension $d-2$. Let $U$ be a $d\times (d-2)$ matrix, whose columns constitute a set of orthonormal basis for the subspace $\mathcal{S}_{\td{w}_j\cup w_i}^\perp$. Thus, it's not hard to check that $\proj{\mathcal{S}_{\td{w}_j\cup w_i}^\perp} \varepsilon_i\disequal U\varepsilon$, where $\varepsilon \in\R^{d-2}$ is a standard Gaussian vector. Denote $Y:=\ns{\proj{\mathcal{S}_{\td{w}_j\cup w_i}^\perp}\varepsilon_i}\disequal \ns{U\varepsilon }=\ns{\varepsilon}$, which is a chi-squared random variable with $(d-2)$ degrees of freedom. According to the tail bound for chi-squared random variable, we have 
$$\Pr[|\frac{1}{d-2}Y-1|\geq t]\leq 2e^{\frac{-(d-2)t^2}{8}},\ \forall t\in(0,1).$$
Let $t=\frac{1}{2}$, we know that with probability at least $1-2\exp(\frac{-(d-2)}{32})$,
$$Y\geq \frac{d-2}{2}.$$
Thus, we have $\n{\proj{\mathcal{S}_{\td{w}_j}^\perp}\td{w}_i}\geq\rho \n{\proj{\mathcal{S}_{\td{w}_j\cup w_i}^\perp}\varepsilon_i}\geq \sqrt{\frac{\rho^2(d-2)}{2}}.$ Recall that
$$\n{\proj{\mathcal{S}_{\td{w}_j}^\perp}\td{w}_i}=\sin(\td{\phi}_{ij})\n{\td{w}_i}.$$
We also know 
\begin{align*}
\n{\td{w}_i}=&\n{w_i+\rho\varepsilon_i}\\
\leq &\n{w_i} + \rho\n{\varepsilon_i}\\
=&\tau + \rho\n{\varepsilon_i},
\end{align*}
where the last equality holds since $\n{w_i}\leq \tau$. Note $\ns{\varepsilon_i}$ is another chi-squared random variable with $d$ degrees of freedom. Similar as above, we can show that with probability at least $1-2\exp(\frac{-d}{32})$,
$$\ns{\varepsilon_i}\leq \frac{3 d}{2}.$$
By union bound, we know with probability at least $1-2\exp(\frac{-d}{32})-2\exp(\frac{-(d-2)}{32}),$
\begin{align*}
\sin(\td{\phi}_{ij})\n{\td{w}_i}\geq& \sqrt{\frac{\rho^2(d-2)}{2}},\\
\n{\td{w}_i}\leq& \tau+\sqrt{\frac{3\rho^2 d}{2}}.
\end{align*}
Combined with the fact that $\td{\phi}_{ij} \geq \sin(\td{\phi}_{ij})$ when $\td{\phi}_{ij}\in [0,\pi]$, we know with probability at least $1-2\exp(\frac{-d}{32})-2\exp(\frac{-(d-2)}{32}),$
\begin{align*}
\td{\phi}_{ij}\geq& \sin(\td{\phi}_{ij})\\
= & \frac{\sin(\td{\phi}_{ij})\n{\td{w}_i}}{\n{\td{w}_i}}\\
\geq & \frac{\sqrt{\frac{\rho^2(d-2)}{2}}}{\tau+\sqrt{\frac{3\rho^2 d}{2}}}\\
=& \frac{\sqrt{\rho^2(d-2)}}{\sqrt{2}\tau+\sqrt{3\rho^2 d}} 
\end{align*}

Given $\td{W}=W+\rho E$, where $E$ is an i.i.d. Gaussian matrix, by union bound, we know with probability at least $\hp$,
\begin{align*}
\forall i, \n{\td{w}_i}&\leq \tau+\sqrt{3\rho^2 d/2}\\
\forall i<j, \frac{\td{\phi}_{ij}}{\pi}&\geq \frac{\sqrt{\rho^2(d-2)}}{(\sqrt{2}\tau+\sqrt{3\rho^2 d})\pi} 
\end{align*}
\end{proofof}  

\subsection{Smoothed Analysis for General Inputs}
\label{sec:generalsmooth}
In this section, we show that starting from any well-conditioned weight matrix $W$, and any symmetric input distribution $\mathcal{D}$, how to perturb the distribution locally to $\mathcal{D}'$ so that the  smallest singular value of $M^{\mathcal{D}'}$ is at least inverse polynomial.

Recall the definition of $(Q,\lambda)$-perturbation: we mix the original distribution $\mathcal{D}$ with a distribution $\mathcal{D}_Q$ which is just a Gaussian $\mathcal{N}(0,QQ^\top)$. To create a sample $x$ in $\mathcal{D}'$, with probability $1-\lambda$ we draw a sample from $\mathcal{D}$; otherwise we draw a standard Gaussian $n\sim \mathcal{N}(0,I_d)$ and let $x = Qn$. We will prove Theorem~\ref{thm:generalsmooth} which we restate below: 

\generalsmooth*

To prove this, let us first take a look at the structure of \adm for these distributions. Let $\md$, $\mpgaussian$, $\mdprime$ be the augmented distinguishing matrices for distributions $\mathcal{D}$, $\mathcal{D}_Q$ and $\mathcal{D}'$ respectively. Since $\mathcal{D}'$ is a mixture of $\mathcal{D}$ and $\mathcal{D}_Q$, and the \adm is defined as expectations over samples, we immediately have
$$\mdprime=(1-\lambda)\md+\lambda\mpgaussian.$$

Our proof will go in two steps. First we will show that $\sigmin{\mpgaussian}$ is large. Then we will show that even mixing with $\md$ will not significantly reduce the smallest singular value, so $\sigmin{\mdprime}$ is also large. In addition to the techniques that we developed in Section~\ref{sec:smoothgaussian}, we need two ideas that we call {\em noise domination} and {\em subspace decoupling} to solve the new challenges here.

\paragraph{Noise Domination} First let us focus on $\sigmin{\mpgaussian}$. This instance has weight $W$ and input distribution $\mathcal{N}(0,QQ^\top)$. Let $M^{WQ}$ be the \adm for an instance with weight $WQ$ and input distribution $\mathcal{N}(0,I_d)$. Our first observation shows that $\mpgaussian$ and $M^{WQ}$ are closely related, and we only need to analyze the smallest singular value of $M^{WQ}$. The problem now is very similar to what we did in Theorem~\ref{thm:gaussian_smooth}, except that the weight $WQ$ is not an i.i.d. Gaussian matrix. However, we will still be able to use Theorem~\ref{thm:gaussian_smooth} as a black-box because the amount of noise in $WQ$ is in some sense dominating the noise in a standard Gaussian. More precisely, we use the following simple claim:

\begin{claim}\label{clm:dominant}
Suppose property $\mathcal{P}$ holds for $\mathcal{N}(\mu,I_d)$ for any $\mu$, and the property $\mathcal{P}$ is convex (in the sense that if $\mathcal{P}$ holds for two distributions it also holds for their mixture), then for any covariance matrix $\Sigma \succeq I_d$, we know $\mathcal{P}$ also holds for $\mathcal{N}(\mu, \Sigma)$. 
\end{claim}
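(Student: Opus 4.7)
The plan is to exhibit $\mathcal{N}(\mu,\Sigma)$ as a mixture of identity-covariance Gaussians whose means vary. Since $\Sigma \succeq I_d$, the matrix $\Sigma' := \Sigma - I_d$ is positive semi-definite, so it is a legitimate covariance matrix. Take independent random vectors $Y \sim \mathcal{N}(0,\Sigma')$ and $Z \sim \mathcal{N}(\mu, I_d)$. Then by the standard fact that sums of independent Gaussians are Gaussian with summed covariances, $X := Y+Z \sim \mathcal{N}(\mu,\Sigma)$. Conditioning on $Y=y$ gives $X\mid Y=y \sim \mathcal{N}(\mu+y, I_d)$, so the marginal law of $X$ is exactly the mixture of the family $\{\mathcal{N}(\mu+y, I_d)\}_{y\in\mathbb{R}^d}$ integrated against the density of $\mathcal{N}(0,\Sigma')$.

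By the hypothesis that $\mathcal{P}$ holds for $\mathcal{N}(\mu',I_d)$ at every mean $\mu'$, every component of this mixture satisfies $\mathcal{P}$. Applying the convexity hypothesis to this mixture then yields that $\mathcal{N}(\mu,\Sigma)$ satisfies $\mathcal{P}$, which is the desired conclusion.

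The only delicate point, and where I would spend most of the write-up, is the gap between a finite mixture (the literal reading of ``convex'') and the continuous mixture used above. Two easy ways to close this gap: (i) simply interpret the convexity hypothesis as closed under arbitrary mixtures (which is how it is used elsewhere in the paper for mixtures of $\mathcal{D}$ with $\mathcal{D}_Q$); or (ii) use an approximation argument, approximating $\mathcal{N}(0,\Sigma')$ in total variation by a finitely-supported distribution $\nu_n$, concluding via finite convexity that each finite mixture $\int \mathcal{N}(\mu+y,I_d)\,d\nu_n(y)$ satisfies $\mathcal{P}$, and then using continuity of the singular-value-type property $\mathcal{P}$ under weak convergence to pass to the limit. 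In either version, the conceptual content is just the additive decomposition $X = Y + Z$ of Gaussians, and nothing deeper is needed.
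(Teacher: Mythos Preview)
Your proof is correct and is essentially identical to the paper's argument: both write $\mathcal{N}(\mu,\Sigma)$ as the convolution of $\mathcal{N}(0,\Sigma-I_d)$ with an identity-covariance Gaussian, thereby exhibiting it as a mixture of $\mathcal{N}(\mu',I_d)$'s, and then invoke the convexity of $\mathcal{P}$. The paper does not address the finite-versus-continuous mixture issue you raise and simply treats ``convex'' as closed under arbitrary mixtures (your option (i)), so your treatment is, if anything, slightly more careful.
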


Intuitively the claim says that if the property holds for a Gaussian distribution with smaller variance regardless of the mean, then it will also hold for a Gaussian distribution with larger variance. The proof is quite simple:

\begin{proof}
Let $\Sigma' = \Sigma - I_d$, by assumption we know $\Sigma'$ is still a positive semidefinite matrix. Let $x\sim \mathcal{N}(\mu, \Sigma)$, $x'\sim \mathcal{N}(\mu, \Sigma')$ and $\delta \sim \mathcal{N}(0, I_d)$, by property of Gaussians it is easy to see that $x \disequal x'+\delta$. Let $d_x$, $d_{x'}$ and $d_\delta$ be the density function for $x,x',\delta$ respectively, then we know for any point $u$
$$
d_x(u) = \E_{x'\sim \mathcal{N}(\mu,\Sigma')}[d_\delta(u - x')].
$$
That is, $\mathcal{N}(\mu, \Sigma)$ is a mixture of $\mathcal{N}(x', I)$. Since property $\mathcal{P}$ is true for all $\mathcal{N}(x', I)$, it is also true for $\mathcal{N}(\mu, \Sigma)$.
\end{proof}

With this claim we can immediately use the result of Theorem~\ref{thm:gaussian_smooth} to show $\sigmin{\mpgaussian}$ is large.

\paragraph{Subspace Decoupling} Next we need to consider the mixture $\mdprime$. The worry here is that although $\sigmin{\mpgaussian}$ is large, mixing with $\mathcal{D}$ might introduce some cancellations and make $\sigmin{\mdprime}$ much smaller. To prove that this cannot happen with high probability, the key observation is that in the first step, to prove $\sigmin{M^{WQ}}$ is large we have only used the property of $WQ$. If we let $\bar{Q}$ be the projection of $Q$ to the orthogonal space of row span of $W$, then $\bar{Q}$ is still a Gaussian random matrix even if we condition on the value of $WQ$! Therefore in the second step we will use the additional randomness in $\bar{Q}$ to show that the cancellation cannot happen. The idea of partitioning the randomness of Gaussian matrices has been widely used in analysis of approximate message passing algorithms\cite{}. The actual proof is more involved and we will need to partition the Gaussian matrix $Q$ into more parts in order to handle the special column in the \adm. 

Now we are ready to give the full proof of Theorem~\ref{thm:generalsmooth}

\begin{proofof}{Theorem~\ref{thm:generalsmooth}}
Let us first recall the definition of augmented distinguishing matrix: $\mdprime$ is a $d^2$ by $(k_2+1)$ matrix, where the first $k_2$ columns consist of 
$$\mdprime_{ij}:=\E_{x\sim \mathcal{D}'}\big[(w_i^\top x)(w_j^\top x)(x\otimes x)\indic\big],$$
and the last column is $\E_{x\sim \mathcal{D}'}[x\otimes x]$. 
 According to the definition of $(Q,\lambda)$-perturbation, if we let $\mathcal{D}_Q$ be $\mathcal{N}(0,QQ^\top)$, then we have 
$$\mdprime=(1-\lambda)\md+\lambda\mpgaussian.$$

In the first step, we will try to analyze $\mpgaussian$. The first $k_2$ columns of this matrix $\mpgaussian$ can be written as:
\begin{align*}
\mpgaussian_{ij}&=\E_{x\sim \mathcal{D}_Q}\big[(w_i^\top x)(w_j^\top x)(x\otimes x)\indic\big]\\
&= \E_{n\sim \mathcal{N}(0,I_d)}\big[(w_i^\top Qn)(w_j^\top Qn)(Qn\otimes Qn)\mathbbm{1}\{w_i^\top Qn w_j^\top Qn\leq 0\}\big]\\
&= Q\otimes Q \E_{n\sim \mathcal{N}(0,I_d)}\big[(w_i^\top Qn)(w_j^\top Qn)(n\otimes n)\mathbbm{1}\{w_i^\top Qn w_j^\top Qn\leq 0\}\big]
\end{align*}
for any $i<j$, and the last column is 
\begin{align*}
\E_{x\sim \mathcal{D}_Q}[x\otimes x]&=\E_{n\sim \mathcal{N}(0,I_d)}[Qn\otimes Qn]\\
&=Q\otimes Q\E_{n\sim \mathcal{N}(0,I_d)}[n\otimes n].
\end{align*}

Except for the factor $Q\otimes Q$, the remainder of these columns are exactly the same as the \adm of a network whose first layer weight matrix is $WQ$ and input distribution is $\mathcal{N}(0,I_d)$. We use $\mgaussian$ to denote the \adm of such a network, then we have
$$\mpgaussian=Q\otimes Q\mgaussian.$$

Therefore we can first analyze the smallest singular value of $\mgaussian$. Let $\td{W} = WQ$. Note that $Q$ is a Gaussian matrix, and $W$ is fixed, so $WQ$ is also a Gaussian random matrix except its entries are not i.i.d. More precisely, there are only correlations within columns of $WQ$, and for any column of $WQ$, the covariance matrix is $WW^\top$. Since the smallest singular value of $W$ is at least $\rho$, we know $\sigmin{WW^\top}\geq \rho^2$. Let the covariance matrix of $WQ$ be $\Sigma_{WQ}\in \R^{kd\times kd}$, which has smallest singular value at least $\rho^2$. Therefore we know $\Sigma_{WQ} \succeq \rho^2 I_{kd}$. It's not hard to verify that with probability at least $\hp$, the norm of every row of $WQ$ is upper bounded by $\mbox{poly}(\tau, d).$ By Claim~\ref{clm:dominant}, any convex property that holds for any $\mathcal{N}(0,\rho^2 I_{kd})$ perturbation must also hold for $\Sigma_{WQ}$ \footnote{The conclusion of Theorem~\ref{thm:gaussian_smooth} is clearly convex because it is a probability, and probabilities are linear in terms of mixture of distributions.}. Thus, we know with probability at least $\hp$, 
$$\sigmin{\mgaussian}\geq \mbox{poly}(1/\tau, 1/d, \rho).$$


To prepare for the next step, we will rewrite $\mgaussian$ as the product of two matrices. According to the closed form of $\mgaussian_{ij}$ in Lemma~\ref{lm:Mij_char}, we know each column of $\mgaussian$ can be expressed as a linear combination of $\td{w}_i\otimes \td{w}_j$'s and $\vvv{I_d}$. Therefore:
$$\mgaussian = \Big[\td{W}^\top\otimes \td{W}^\top, \vvv{I_d}\Big]R,$$
where matrix $R$ has dimension $(k^2+1)\times (k_2+1).$ It's not hard to verify that
$$\sigmin{\mgaussian}\leq \bbn{\Big[\td{W}^\top\otimes \td{W}^\top, \vvv{I_d}\Big]}\sigmin{R}.$$ Thus,
$$\sigmin{R}\geq \frac{\sigmin{\mgaussian}}{\bbn{\Big[\td{W}^\top\otimes \td{W}^\top, \vvv{I_d}\Big]}}.$$
Note that $W$ is a $k\times d$ matrix with $\n{w_i}\leq \tau$ for every row vector, and $\td{W}=WQ$, where $Q$ is an standard Gaussian matrix. Thus, similar as the proof in Lemma~\ref{lm:angle}, we can show that with probability at least $\hp,$
$$\bbn{\Big[\td{W}^\top\otimes \td{W}^\top, \vvv{I_d}\Big]}\leq \bbn{\Big[\td{W}^\top\otimes \td{W}^\top, \vvv{I_d}\Big]}_F\leq \mbox{poly}(\tau, d).$$
Thus, we know
$$\sigmin{R}\geq \mbox{poly}(1/\tau, 1/d,\rho).$$

Now we will try to perform the second step using the idea of subspace decoupling. Let $\proj{W}$ be the projection matrix to the row span of $W$, and let $\proj{W^\perp}=I_d-\proj{W}$. Let $\bar{Q}=\proj{W^\perp}Q$. Let the columns of $U\in \R^{d\times (d-k)}$ be a set of orthonormal basis for the orthogonal subspace $W^\perp$. 
By symmetry of Gaussian, $\proj{W}Q$ is independent with $\proj{W^\perp}Q$. Thus, from now on we will condition on $\proj{W}Q$, and still treat $\proj{W^\perp}Q$ as a Gaussian random matrix. More precisely, $\proj{W^\perp}Q$ has the same distribution as $UP$, where $P\in \R^{(d-k)\times d}$ is a standard Gaussian matrix. 

We further decouple the $P$ part into two subspaces (this is done mostly to handle the special column in augmented distinguishing matrix). Let the rows of $V\in \R^{k\times d}$ be a set of orthonormal basis for the row span of $\td{W} = WQ$. And let the rows of $V^\perp  \in \R^{(d-k)\times d}$ be a a set of orthonormal basis for the orthogonal subspace $\td{W}^\perp$. 
We can then decompose $UP$ into the row span of $V$ and $V^\perp$ as follows,
$$UP\disequal UP_1V^\perp + UP_2 V ,$$  
where $P_1\in \R^{(d-k)\times (d-k)}$ and $P_2\in \R^{(d-k)\times k}$ are two independent standard Gaussian matrices. After this decomposition, we have 
\begin{align*}
&\bar{Q}\otimes \bar{Q} \Big[\td{W}^\top\otimes \td{W}^\top, \vvv{I_d}\Big]R\\
\disequal& UP\otimes UP \Big[\td{W}^\top\otimes \td{W}^\top, \vvv{I_d}\Big]R\\
\disequal& (UP_1V^\perp + UP_2 V )\otimes (UP_1V^\perp + UP_2 V) \Big[\td{W}^\top\otimes \td{W}^\top, \vvv{I_d}\Big]R\\
=& U\otimes U(P_1V^\perp + P_2 V )\otimes (P_1V^\perp + P_2 V) \Big[\td{W}^\top\otimes \td{W}^\top, \vvv{I_d}\Big]R\\
=&U\otimes U\Big[(P_1 V^\perp + P_2 V)\td{W}^\top\otimes (P_1 V^\perp + P_2 V)\td{W}^\top, \vvv{(P_1 V^\perp + P_2 V)(P_1 V^\perp + P_2 V)^\top}\Big]R\\
=&U\otimes U\Big[P_2 V\td{W}^\top\otimes P_2 V\td{W}^\top, \vvv{P_1P_1^\top + P_2P_2^\top}\Big]R,
\end{align*}
where the last equality holds because the row span of $V^\perp$ is orthogonal to the column span of $\td{W}^\top$. 

Now, we go back to matrix $\mdprime$. Let $\proj{W^\perp\otimes W^\perp}$ be $\proj{W^\perp}\otimes \proj{W^\perp}.$ We have,
\begin{align*}
\sigmin{\mdprime}=&\bsigmin{(1-\lambda)\md +\lambda\mpgaussian}\\
=&\bbsigmin{(1-\lambda)\md +\lambda Q\otimes Q \Big[\td{W}^\top\otimes \td{W}^\top, \vvv{I_d}\Big]R}\\
\geq&\bbsigmin{(1-\lambda)\proj{W^\perp\otimes W^\perp}\md +\lambda \proj{W^\perp\otimes W^\perp}Q\otimes Q \Big[\td{W}^\top\otimes \td{W}^\top, \vvv{I_d}\Big]R}\\
=&\bbsigmin{(1-\lambda)\proj{W^\perp\otimes W^\perp}\md +\lambda \bar{Q}\otimes \bar{Q} \Big[\td{W}^\top\otimes \td{W}^\top, \vvv{I_d}\Big]R}\\
=&\bbsigmin{(1-\lambda)\proj{W^\perp\otimes W^\perp}\md +\lambda U\otimes U\Big[P_2 V\td{W}^\top\otimes P_2 V\td{W}^\top, \vvv{P_1P_1^\top + P_2P_2^\top}\Big]R}
\end{align*}
Since $R$ has full row rank, we know that the row span of $\proj{W^\perp\otimes W^\perp}\md$ belongs to the row span of $R$. According to the definition of $U$, it's also clear that the column span of $\proj{W^\perp\otimes W^\perp}\md$ belongs to the column span of $U\otimes U$. Thus, there exists matrix $C\in R^{(d-k)^2\times (k^2+1)}$ such that
$$\proj{W^\perp\otimes W^\perp}\md=U\otimes U C R.$$ 
Thus,
\begin{align*}
\sigmin{\mdprime}=&\bsigmin{(1-\lambda)\md +\lambda\mpgaussian}\\
\geq &\bbsigmin{(1-\lambda)\proj{W^\perp\otimes W^\perp}\md +\lambda U\otimes U\Big[P_2 V\td{W}^\top\otimes P_2 V\td{W}^\top, \vvv{P_1P_1^\top + P_2P_2^\top}\Big]R}\\
=&\bbsigmin{\lambda U\otimes U\Big(\frac{1-\lambda}{\lambda}C+\big[P_2 V\td{W}^\top\otimes P_2 V\td{W}^\top, \vvv{P_1P_1^\top + P_2P_2^\top}\big]\Big)R} \label{eq:sigmin}.
\end{align*}
Note that $C$ only depends on $U$ and $R$, $U$ only depends on $W$, and $R$ only depends on $WQ$. With $WQ$ fixed, $C$ is also fixed. Clearly, $C$ is independent with $P_1$ and $P_2$. For convenience, denote 
$$H:=\frac{1-\lambda}{\lambda}C+\big[P_2 V\td{W}^\top\otimes P_2 V\td{W}^\top, \vvv{P_1P_1^\top + P_2P_2^\top}\big].$$ 

Now, let's prove that the smallest singular value of matrix $H\in \R^{(d-k)^2\times (k^2+1)}$
is lower bounded using leave-one-out distance. Let's first consider its submatrix $\hat{H}$ which consists of the first $k^2$ columns of $H$. Note that within random matrix $P_2 V\td{W}^\top$, every row are independent with each other. Within each row, the covariance matrix is $\td{W}\td{W}^\top$. Recall that $\td{W}$ is a random matrix whose covariance $\Sigma_{WQ}\succeq \rho^2 I_{kd}$, we can again apply Claim~\ref{clm:dominant} with the property proved in Lemma~\ref{lm:perturbedmatrix}. As a result, with probability at least $\hp$,
$$\sigmin{\td{W}}\geq \mbox{poly}(1/d,\rho).$$ 
Thus the covariance matrix of each row of $P_2 V\td{W}^\top$ has smallest singular value at least $\gamma := \mbox{poly}(1/d,\rho).$

We can view $P_2 V\td{W}^\top$ as the summation of two independent Gaussian matrix, one of which has covariance matrix $\gamma I_{(d-k)k}$. For this matrix, we will do something very similar to Theorem~\ref{thm:gaussian_smooth} in order to lowerbound its smallest singular value.

\begin{claim}\label{clm:decompose}
For a random matrix $K\in \R^{(d-k)\times k}$ that is equal to $K^o + E$ where $E$ is a Gaussian random matrix whose entries have variance $\gamma$. If $d\ge 7k$, for any subspace $S_C$ that is independent of $K$ and has dimension at most $k^2+1$, the leave-one-out distance $d(\mbox{Proj}_{S_C^\perp} K\otimes K)$ is at least $\mbox{poly}(\gamma, 1/d)$.
\end{claim}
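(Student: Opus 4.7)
The strategy is to mimic the row-partition and subspace-enlargement argument from the proof of Theorem~\ref{thm:gaussian_smooth}, adapted to the rank-one tensor columns of $K \otimes K$. Let $K_a$ denote the $a$-th column of $K$ and $K_{a, L}$ its restriction to rows in $L$. First I would partition $[d-k]$ into two disjoint halves $L_1, L_2$ with $|L_1| = |L_2| = \lfloor (d-k)/2 \rfloor$. For each column index $(i, j) \in [k]^2$, the restriction of $K_i \otimes K_j$ to the row block $L_1 \times L_2$ equals $K_{i, L_1} \otimes K_{j, L_2}$. Since restricting to a subset of rows can only shrink distances to subspaces, it suffices to lower-bound $\| \mbox{Proj}_{(V'_{ij})^\perp} (K_{i, L_1} \otimes K_{j, L_2}) \|$, where $V'_{ij}$ is the span of the restriction of $S_C$ together with $\{K_{a, L_1} \otimes K_{b, L_2} : (a, b) \neq (i, j)\}$.

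Next I would enlarge $V'_{ij}$ to
$$\hat V_{ij} := \tilde S_C + \sum_{a \neq i} \big(K_{a, L_1} \otimes \R^{|L_2|}\big) + \sum_{b \neq j} \big(\R^{|L_1|} \otimes K_{b, L_2}\big),$$
where $\tilde S_C$ denotes the restriction of $S_C$ to the rows in $L_1 \times L_2$. Any column $K_{a, L_1} \otimes K_{b, L_2}$ with $(a, b) \neq (i, j)$ has either $a \neq i$ or $b \neq j$, hence lies in $\hat V_{ij}$, so $V'_{ij} \subseteq \hat V_{ij}$. The crucial point is that $\hat V_{ij}$ is determined only by $S_C$, $\{K_{a, L_1}\}_{a \neq i}$, and $\{K_{b, L_2}\}_{b \neq j}$; since $L_1 \cap L_2 = \emptyset$ and the entries of $E$ are independent, these involve entries of $K$ disjoint from those of $K_{i, L_1}$ and $K_{j, L_2}$, so $\hat V_{ij}$ is independent of the Gaussian pair $(K_{i, L_1}, K_{j, L_2})$. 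A direct dimension count gives $\dim(\hat V_{ij}) \le (k^2 + 1) + (k-1)(d-k)$, which the hypothesis $d \ge 7k$ makes strictly smaller than the ambient dimension $|L_1| \cdot |L_2| \ge 9k^2$, leaving codimension of order $d^2$.

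The remaining task, and the main obstacle, is to lower-bound $\| \mbox{Proj}_{\hat V_{ij}^\perp} (K_{i, L_1} \otimes K_{j, L_2}) \|$ for a rank-one Gaussian outer product against an independent subspace of this controlled codimension. Decomposing $K_{i, L_1} = u_1 + u_2$ and $K_{j, L_2} = v_1 + v_2$ with $u_1 \in \mbox{span}\{K_{a, L_1}\}_{a \neq i}$ and $v_1 \in \mbox{span}\{K_{b, L_2}\}_{b \neq j}$, the cross terms $u_1 v_1^\top$, $u_1 v_2^\top$, $u_2 v_1^\top$ all lie inside the tensor-span parts of $\hat V_{ij}$, so the projection is governed by $u_2 v_2^\top$ modulo $\tilde S_C$. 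Chi-squared concentration shows $\|u_2\|, \|v_2\| \ge \mbox{poly}(\gamma, 1/d)$ with probability $\hp$, since $u_2, v_2$ are Gaussian with per-coordinate variance $\gamma$ projected onto subspaces of dimension $\Omega(d)$. Because $u_2 \otimes v_2$ is not an isotropic Gaussian, standard Gaussian anti-concentration does not apply directly; the idea is to condition on one factor, reducing the problem to anti-concentration of a single Gaussian vector against a fixed subspace of dimension at most $k^2 + 1$ inside a Gaussian ambient space of dimension $\Omega(d)$, which is handled by Lemma~\ref{lm:largeprojection} exactly as at the end of the proof of Theorem~\ref{thm:gaussian_smooth}. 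A union bound over the $k^2$ pairs $(i, j)$ then yields the claimed lower bound on the leave-one-out distance.
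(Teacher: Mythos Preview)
Your setup in the first two paragraphs is exactly the paper's: partition $[d-k]$ into $L_1,L_2$, enlarge the span of the other columns (together with $\tilde S_C$) to the subspace $\hat V_{ij}$ built from $\{K_{a,L_1}\}_{a\neq i}$ and $\{K_{b,L_2}\}_{b\neq j}$, and observe that $\hat V_{ij}$ is independent of $(K_{i,L_1},K_{j,L_2})$. The dimension count also matches.

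The gap is in your final paragraph. Once you have established that $\hat V_{ij}$ is independent of the pair $(K_{i,L_1},K_{j,L_2})$, the paper simply invokes Lemma~\ref{lm:largeprojection} \emph{directly} on the rank-one tensor $K_{i,L_1}\otimes K_{j,L_2}$ against the subspace $\hat V_{ij}^\perp$, which has dimension at least a fixed fraction of the ambient $((d-k)/2)^2$. That lemma is stated precisely for a tensor product of two independently perturbed vectors, so no further reduction is needed. Your decomposition $K_{i,L_1}=u_1+u_2$, $K_{j,L_2}=v_1+v_2$ is harmless (the cross terms do lie in $\hat V_{ij}$, so the projection equals that of $u_2\otimes v_2$), but the subsequent ``condition on one factor'' step does not go through as you describe. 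After fixing $v_2$, the vector $u_2\otimes v_2$ lives in a subspace of dimension $\dim(U_2^\perp)=(d-k)/2-(k-1)$, which under the hypothesis $d\ge 7k$ is only $\Theta(k)$. You then need this single Gaussian to avoid a subspace of dimension up to $k^2+1$; for $k$ even moderately large (say $k\ge 7$ with $d=7k$) one has $k^2+1>\dim(U_2^\perp)$, so the pulled-back subspace can swallow the entire ambient and the projection can vanish. In short, conditioning on one tensor factor collapses the ambient from $\Theta(d^2)$ to $\Theta(d)$, and the claim's assumption $d\ge 7k$ is not strong enough to beat a $k^2$-dimensional obstruction in that regime.

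The fix is exactly what your paragraph~2 already sets up: skip the decomposition and the conditioning, and apply Lemma~\ref{lm:largeprojection} to the pair $(K_{i,L_1},K_{j,L_2})$ of $\sqrt{\gamma}$-perturbed vectors in $\R^{(d-k)/2}$ against the independent subspace $\hat V_{ij}^\perp$, whose dimension is at least $(1/5)\cdot((d-k)/2)^2$. This keeps the $\Theta(d^2)$-dimensional ambient and yields $\|\mbox{Proj}_{\hat V_{ij}^\perp}(K_{i,L_1}\otimes K_{j,L_2})\|\ge \mbox{poly}(\gamma,1/d)$ with probability $1-\exp(-d^{\Omega(1)})$; a union bound over the $k^2$ columns finishes the proof. (The paper also treats the diagonal columns $i=j$ separately with a slightly smaller enlarged subspace, which you should add.)
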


The proof idea is similar as Theorem~\ref{thm:gaussian_smooth}, and we try to apply Lemma~\ref{lm:largeprojection} to $K\otimes K$. In the proof we should think of $K := P_2 V\td{W}^\top$,  
and denote $i$-th column of $K$ as $K_i$. We also think of the space $S_C$ as the column span of $C$. 

As we did in Theorem~\ref{thm:gaussian_smooth}, we partition $[d-k]$ into 2 disjoint subsets $L_1$ and $L_2$ of size $(d-k)/2$. Let $\hat{H}'$ be the set of rows of $\hat{H}$ indexed by $L_1\times L_2.$ 

We fix a column $\hat{H}'_{ij},\ i\neq j\in [k].$ Let $S=\mbox{span}\{\hat{H}'_{kl}:(k,l)\neq (i,j)\}$. It's clear that $S$ is correlated with $\hat{H}'_{ij}$. Let $C'$ be the set of rows of $C$ indexed by $L_1\times L_2$. Let $S_{C'}$ be the column span of $C'$, which has dimension at most $k^2+1$.  
We define the following subspace that contains $S$,
$$\hat{S}=S_{C'}\cup \mbox{span}\{K_{j,L_1}\otimes x, x\otimes K_{i,L_2}, K_{l,L_1}\otimes x, x\otimes K_{l,L_2}\Big| x\in \R^{(d-k)/2}, l \notin \{i,j\}\}$$
Therefor by definition $S\subset \hat{S}$, and thus $\hat{S}^\perp \subset S^\perp$, where $S^\perp$ denotes the orthogonal subspace of $S$. Notice that $\hat{H}'_{ij}=K_{i,L_1}\otimes K_{j,L_2}+\frac{\lambda}{1-\lambda}C'_{ij}$ is independent with $\hat{S}$, assuming $C$ is fixed. Moreover, $\hat{S}$ has dimension at most 
$$(d-k)/2+(d-k)/2+(k-2)(d-k)/2+(k-2)(d-k)/2+k^2+1\leq \frac{4}{5}\frac{(d-k)^2}{4},$$
if $k\leq d/7.$ Then, according to Lemma~\ref{lm:largeprojection}, we know with probability at least $\hp$, 
$$\bn{\proj{\hat{S}^\perp}\hat{H}'_{ij}}\geq \mbox{poly}(1/d, \rho).$$

For the column $\hat{H}'_{ii},i\in[k]$, we define subspace $\hat{S}$ slightly different, 
$$\hat{S}=S_{C'}\cup \mbox{span}\{K_{l,L_1}\otimes x, x\otimes K_{l,L_2}\Big| x\in \R^{(d-k)/2}, l \neq i\}.$$ 
Here the dimension of $\hat{S}$ is also smaller than $(d-k)^2/5$, assuming that $k\leq d/7.$ We can similarly show that with probability at least $\hp$, 
$$\bn{\proj{\hat{S}^\perp}\hat{H}'_{ii}}\geq \mbox{poly}(1/d, \rho).$$

Thus, by union bound, we know that the leave-one-out distance of matrix $\hat{H}'$ is lower bounded by $\mbox{poly}(1/d, \rho)$.

Now, let's add the additional column $\vvv{P_1 P_1^\top +P_2P_2^\top}$ into consideration. For convenience we denote this column by $b$. We will first prove that the vector $b$ has large norm when projected to the orthogonal subspace of columns in $\hat{H}$, then we will combine this with the fact that $\sigmin{\hat{H}}$ is large to show that $\sigmin{H}$ is also large (this last step is very similar to Lemma~\ref{lem:strongerleaveoneout}).

We know matrix $\hat{H}$ only depends on the randomness of $P_2$. Thus, with $P_2$ fixed, all columns in $H$ are fixed except for $b$. Now, we rely on the randomness in $P_1$ to show that the distance between $b$ and the span of other columns in $H$ is lower bounded. In order to get ride of the correlation within column $b$, we also need to consider a subset of its rows indexed by $L_1\times L_2$, denoted by $b'$. Let the first column of $P_1$ be $p\in\R^{d-k}$, and the submatrix consisting of other columns be $\hat{P}_1$. Let $S_{\hat{H}'}$ be the column span of $\hat{H}'$. Let 
$$\hat{S}_{\hat{H}'}=S_{\hat{H}'}\cup S_{C'} \cup \mbox{span}(\vvv{\hat{P}_1\hat{P}_1^\top}', \vvv{P_2 P_2^\top}'),$$
where $\vvv{\hat{P}_1\hat{P}_1^\top}'$ is the restriction of $\vvv{\hat{P}_1\hat{P}_1^\top}$ to the rows indexed by $L_1\times L_2$. The dimension of $\hat{S}_{\hat{H}'}$ is at most $k^2+k^2+1+2\leq (d-k)^2/12$, assuming that $k\leq d/7.$ Clearly,
\begin{align*}
\proj{S_{\hat{H}'}^\perp}b'\geq& \proj{\hat{S}_{\hat{H}'}^\perp}b'\\
=& \proj{\hat{S}_{\hat{H}'}^\perp}p_{L_1}\otimes p_{L_2},
\end{align*}
where $p_{L}$ is the restriction of $p$ to rows indexed by $L$.
Note that $p_{L_1}$ and $p_{L_2}$ are two independent standard Gaussian vectors.  
Thus, according to Lemma~\ref{lm:largeprojection}, we know with probability at least $\hp$, the distance between $b'$ and the column span of $\hat{H}'$ is at least $\mbox{poly}(1/d)$. 

\begin{claim}
For any matrix $A\in \R^{\frac{(d-k)^2}{4}\times k^2}$ and a vector $v\in \R^{\frac{(d-k)^2}{4}}$, if the leave-one-out distance $d(A) \ge \delta$, $\|\mbox{Proj}_{A^\perp} b\| \ge \zeta$, and $\|v\|\le C_1$, let $B\in \R^{\frac{(d-k)^2}{4}\times (k^2+1)}$ be the matrix that is the concatenation of $A$ and $v$, then the leave-one-out distance $d(B) \ge \mbox{poly}(\zeta,\delta,1/C_1)$. 
\end{claim}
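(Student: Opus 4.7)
\begin{proofsketch}
This is a purely linear-algebraic statement, so the plan is to analyze the two types of ``leave-one-out'' distances for $B$ separately and combine them.

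First, consider the column $v$ itself. When we leave out $v$, the span of the remaining columns of $B$ is exactly the column span of $A$, and by hypothesis the distance from $v$ to this span is $\|\mbox{Proj}_{A^\perp}v\|\ge\zeta$. So this case contributes a bound of $\zeta$ immediately.

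Next, consider an arbitrary column $A_i$ of $A$, and let $S_{-i}$ be the span of the other columns of $A$. Decompose both $A_i$ and $v$ with respect to $S_{-i}$: write $A_i=a_\parallel+a_\perp$ and $v=v_\parallel+v_\perp$ with $a_\perp,v_\perp\in S_{-i}^\perp$. The hypothesis $d(A)\ge\delta$ gives $\|a_\perp\|\ge\delta$, and since $A^\perp\subset S_{-i}^\perp$ we have $\|v_\perp\|\ge\|\mbox{Proj}_{A^\perp}v\|\ge\zeta$. The span of the ``other'' columns of $B$ (i.e., $S_{-i}$ together with $v$) decomposes as the orthogonal direct sum $S_{-i}\oplus\mbox{span}(v_\perp)$. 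Consequently
\[
\bn{\mbox{Proj}_{(S_{-i}\oplus\mbox{span}(v_\perp))^\perp}A_i}^2
=\|a_\perp\|^2-\frac{\langle a_\perp,v_\perp\rangle^2}{\|v_\perp\|^2}
=\frac{\|a_\perp\|^2\|v_\perp\|^2-\langle a_\perp,v_\perp\rangle^2}{\|v_\perp\|^2}.
\]
The key identity will come from applying exactly the same orthogonal decomposition to the hypothesis $\|\mbox{Proj}_{A^\perp}v\|\ge\zeta$: since $A^\perp=S_{-i}^\perp\cap\mbox{span}(a_\perp)^\perp$ and $v_\perp\in S_{-i}^\perp$, one gets
\[
\zeta^2\le\|\mbox{Proj}_{A^\perp}v\|^2=\|v_\perp\|^2-\frac{\langle a_\perp,v_\perp\rangle^2}{\|a_\perp\|^2}=\frac{\|a_\perp\|^2\|v_\perp\|^2-\langle a_\perp,v_\perp\rangle^2}{\|a_\perp\|^2}.
\]
The numerator $\|a_\perp\|^2\|v_\perp\|^2-\langle a_\perp,v_\perp\rangle^2$ is common to both expressions. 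Substituting the lower bound $\zeta^2\|a_\perp\|^2$ into the previous display and using $\|v_\perp\|\le\|v\|\le C_1$ yields
\[
\bn{\mbox{Proj}_{(S_{-i}\oplus\mbox{span}(v_\perp))^\perp}A_i}^2\ge\frac{\zeta^2\|a_\perp\|^2}{\|v_\perp\|^2}\ge\frac{\zeta^2\delta^2}{C_1^2}.
\]

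Finally, taking the minimum of the two cases gives $d(B)\ge\min\!\big(\zeta,\,\zeta\delta/C_1\big)=\mbox{poly}(\zeta,\delta,1/C_1)$, as claimed. There is no real obstacle here beyond being careful with the orthogonal decompositions; the only subtle point is recognizing that the same quantity $\|a_\perp\|^2\|v_\perp\|^2-\langle a_\perp,v_\perp\rangle^2$ (essentially a $2\times 2$ Gram determinant) appears in both the hypothesis $\|\mbox{Proj}_{A^\perp}v\|$ and the quantity to be lower bounded, which is what lets us avoid any dependence on $\|A_i\|$ in the final bound.
\end{proofsketch}
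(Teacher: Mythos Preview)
Your argument is correct and takes a genuinely different route from the paper. The paper does a case analysis on the optimal coefficient $c_b^*$ of $v$ in the minimizing linear combination $A_i+\sum_{l\neq i}c_l A_l+c_b v$: if $c_b^*=0$ the bound is $\delta$; if $|c_b^*|\ge \delta/(2C_1)$ one factors out $|c_b^*|$ and uses the distance-$\zeta$ hypothesis for $v$; and if $0<|c_b^*|<\delta/(2C_1)$ one uses the triangle inequality $\|\cdot\|\ge \delta-|c_b^*|\,\|v\|\ge \delta/2$. This yields $\min(\zeta,\delta/2,\delta\zeta/(2C_1))$ overall.

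Your orthogonal-decomposition approach is cleaner: by recognizing that both the hypothesis $\|\proj{A^\perp}v\|^2$ and the target quantity $\|\proj{(S_{-i}\cup v)^\perp}A_i\|^2$ share the same Gram numerator $\|a_\perp\|^2\|v_\perp\|^2-\langle a_\perp,v_\perp\rangle^2$, you avoid the case split entirely and obtain the slightly tighter bound $\min(\zeta,\zeta\delta/C_1)$. The paper's argument is perhaps more robust in that it only uses norms and the triangle inequality (so it generalizes painlessly to, say, appending several extra columns), whereas yours exploits the two-dimensional geometry of $S_{-i}^\perp\cap\mathrm{span}(a_\perp,v_\perp)$ more directly.
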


The proof idea is similar as the proof in Lemma~\ref{lem:strongerleaveoneout}. In the proof, we should think of $A:= \hat{H}', v:=b'$ and $B:= H'$, where $H'$ is the subset of rows of $H$ indexed by $L_1\times L_2$. We know that $d(\hat{H}')\geq \delta =\mbox{poly}(1/d,\rho),\|\mbox{Proj}_{A^\perp} b\| \ge \zeta=\mbox{poly}(1/d)$. It's not hard to show that with probability at least $\hp,$
\begin{align*}
\n{b'}\leq & \mbox{poly}(d).
\end{align*}
Thus, there exists $C_1=\mbox{poly}(d)$, such that $\n{b'}\leq C_1$. We already proved that the leave-one-out distance of $b'$ in $H'$ is lower bounded. We only need to show the leave-one-out distance for the first $k^2$ columns, which are $\hat{H}'_{ij},i,j\in[k]$.

For any $i,j\in[k]$, the leave-one-out distance for $\hat{H}'_{ij}$ within matrix $H'$ can be expressed as follows 
\begin{align*}
\min_{c_{kl},c_b}\n{\hat{H}'_{ij}+\sum_{(k,l)\neq (i,j)}c_{kl} \hat{H}'_{kl}+c_b b'}
\end{align*}
 
Let $\{c_{kl}^*,c_b^*\}$ be one set of the optimal solutions to $\min_{c_{kl},c_b}\n{\hat{H}'_{ij}+\sum_{(k,l)\neq (i,j)} \hat{H}'_{kl}+c_b b'}$. If $c_b^*=0$, we immediately have 
\begin{align*}
&\n{\hat{H}'_{ij}+\sum_{(k,l)\neq (i,j)}c_{kl}^* \hat{H}'_{kl}+c_b^* b'}\\
= & \n{\hat{H}'_{ij}+\sum_{(k,l)\neq (i,j)}c_{kl}^* \hat{H}'_{kl}}\\
\geq &\min_{c_{kl}}\n{\hat{H}'_{ij}+\sum_{(k,l)\neq (i,j)}c_{kl} \hat{H}'_{kl}}\\
\geq &\delta,
\end{align*}
where the last inequality holds because the leave-one-out distance of matrix $\hat{H}'$ is lower bounded by $\delta$. 

If $c_b^*\neq 0$, we need to be more careful. In this case, we have,
\begin{align*}
&\n{\hat{H}'_{ij}+\sum_{(k,l)\neq (i,j)}c_{kl}^* \hat{H}'_{kl}+c_b^* b'}\\
=&|c_b^*|\bbn{\frac{1}{c_b^*}\hat{H}'_{ij}+\sum_{(k,l)\neq (i,j)}\frac{c_{kl}^*}{c_b^*} \hat{H}'_{kl}+ b'}\\
\geq&|c_b^*|\zeta,
\end{align*}
where the last inequality holds because the distance of $b'$ to the column span of $\hat{H}'$ is lower bounded. 
If $|c_b^*|\geq \frac{\delta}{2C_1}$, we have $\n{\hat{H}'_{ij}+\sum_{(k,l)\neq (i,j)}c_{kl}^* \hat{H}'_{kl}+c_b^* b'}\geq \frac{\delta\zeta}{2C_1}.$

If $|c_b^*|< \frac{\delta}{2C_1}$, we have
\begin{align*}
&\n{\hat{H}'_{ij}+\sum_{(k,l)\neq (i,j)}c_{kl}^* \hat{H}'_{kl}+c_b^* b'}\\
\geq& \n{\hat{H}'_{ij}+\sum_{(k,l)\neq (i,j)}c_{kl}^* \hat{H}'_{kl}}-\n{c_b^* b'}\\
\geq& \min_{c_{kl}}\n{\hat{H}'_{ij}+\sum_{(k,l)\neq (i,j)}c_{kl} \hat{H}'_{kl}}-|c_b^*|\n{ b'}\\
\geq& \delta-\frac{\delta}{2C_1}C_1\\
=&\delta/2.
\end{align*}

Thus, the leave-one-out distance of $H'$ is lower bounded by $\mbox{poly}(\zeta,\delta,1/C_1)$. Recall that with probability at least $\hp$, we have $\delta=\mbox{poly}(1/d,\rho), \zeta=\mbox{poly}(1/d), C_1=\mbox{poly}(d)$. Thus, we have 
$$d(H')\geq \mbox{poly}(1/d,\rho).$$ 
According to Lemma~\ref{lm:leaveoneout}, we have
\begin{align*}
\sigmin{H'}\geq& \frac{1}{\sqrt{k^2+1}}d(H')\\
\geq& \mbox{poly}(1/d,\rho).
\end{align*}

Finally, we put everything together. Since $H'$ is a full column rank matrix, we know that $\sigmin{H}\geq \sigmin{H'}\geq \mbox{poly}(1/d,\rho)$. By union bound, we know with probability at least $\hp$, 
\begin{align*}
\sigmin{H}\geq& \mbox{poly}(1/d,\rho)\\
\sigmin{R}\geq& \mbox{poly}(1/\tau, 1/d,\rho).
\end{align*}
Since $U$ is an orthonormal matrix, we know $\sigmin{U\otimes U}=1$. According to Eq.~\ref{eq:sigmin}, we know with probability at least $\hp$,
\begin{align*}
\sigmin{\mdprime}\geq& \sigmin{\lambda U\otimes U HR}\\
\geq &\lambda\sigmin{U\otimes U}\sigmin{H}\sigmin{R}\\
\geq&\mbox{poly}(1/\tau, 1/d,\rho,\lambda)
\end{align*}
where the second inequality holds since all of $U\otimes U$, $H$ and $R$ have full column rank.
\end{proofof}

\newpage
\section{Tools}
In this section, we collect some known results on matrix perturbations and concentration bounds. Basically, we used matrix concentration bounds to do the robust analysis and used matrix perturbation bounds to do the smoothed analysis. We also proved several corollaries that are useful in our setting. 

\subsection{Matrix Concentration Bounds}
Matrix concentration bounds tell us that with enough number of independent samples, the empirical mean of a random matrix can converge to the mean of this matrix.

\begin{lemma}[Matrix Bernstein; Theorem 1.6 in \cite{tropp2012user}]\label{lm:bernstein_original}
Consider a finite sequence $\{Z_k\}$ of independent, random matrices with dimension $d_1\times d_2$. Assume that each random matrix satisfies
$$\E[Z_k]=0\ and \ \n{Z_k}\leq R \ almost \ surely. $$
Define 
$$\sigma^2:= \max\Big\{\big\lVert\sum_k \E[Z_k Z_k^*]\big\rVert, \big\lVert \sum_k \E[Z_k^* Z_k]\big\rVert \Big\}.$$
Then, for all $t\geq 0$, 
$$\Pr\Big\{\big\lVert\sum_k Z_k\big\rVert\geq t \Big\}\leq (d_1+d_2) \exp\Big(\frac{-t^2/2}{\sigma^2+Rt/3}\Big).$$
\end{lemma}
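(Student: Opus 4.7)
The stated result is the non-commutative Bernstein inequality of Ahlswede--Winter / Tropp, so the proposal here is essentially to reconstruct the standard matrix-Laplace-transform proof. My plan is to first reduce to the Hermitian case via the self-adjoint dilation $\mathcal{H}(Z) = \begin{pmatrix} 0 & Z \\ Z^* & 0 \end{pmatrix}$, which has spectral norm equal to $\|Z\|$, zero mean, and satisfies $\mathcal{H}(Z)^2 = \mathrm{diag}(ZZ^*, Z^*Z)$. After this reduction it suffices to control $\lambda_{\max}\bigl(\sum_k \mathcal{H}(Z_k)\bigr)$ under the assumptions $\|\mathcal{H}(Z_k)\|\le R$ and $\lVert \sum_k \E \mathcal{H}(Z_k)^2\rVert \le \sigma^2$, and this yields the factor $d_1 + d_2$ in the prefactor.

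Next I would apply the matrix Laplace transform bound: for Hermitian $Y = \sum_k X_k$ with mean-zero independent summands,
\begin{equation*}
\Pr[\lambda_{\max}(Y) \ge t] \le \inf_{\theta > 0} e^{-\theta t} \E \tr \exp(\theta Y).
\end{equation*}
The critical step is to peel the sum inside the trace exponential. This is where Lieb's concavity theorem enters: it implies the subadditivity inequality
\begin{equation*}
\E \tr \exp\Bigl(\sum_k \theta X_k\Bigr) \le \tr \exp\Bigl(\sum_k \log \E e^{\theta X_k}\Bigr),
\end{equation*}
reducing the problem to bounding the individual cumulant generating functions $\log \E e^{\theta X_k}$ in the semidefinite order.

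The remaining step is the scalar-type Bernstein MGF bound lifted to matrices: for $\|X_k\|\le R$ and $\E X_k = 0$, one shows $\log \E e^{\theta X_k} \preceq \frac{\theta^2/2}{1-\theta R/3}\, \E X_k^2$ for $0 < \theta < 3/R$, using the numeric inequality $e^a - 1 - a \le \frac{a^2/2}{1-|a|/3}$ applied spectrally to $X_k$ and taking expectations. Substituting back, using $\tr \exp(M) \le d\,\lambda_{\max}(e^M) = d\, e^{\lambda_{\max}(M)}$ on a $d_1 + d_2$-dimensional space, and optimizing $\theta = t/(\sigma^2 + Rt/3)$ gives the advertised tail
\begin{equation*}
\Pr\Bigl[\Bigl\lVert \sum_k Z_k\Bigr\rVert \ge t\Bigr] \le (d_1 + d_2) \exp\!\left(\frac{-t^2/2}{\sigma^2 + Rt/3}\right).
\end{equation*}
The main technical obstacle is the Lieb-concavity step; everything else is either a dilation trick, a scalar Taylor-type bound applied via the spectral calculus, or a routine optimization over $\theta$. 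Since this is a verbatim restatement of Tropp's Theorem~1.6, in the paper itself I would simply cite \cite{tropp2012user} rather than reproduce the argument.
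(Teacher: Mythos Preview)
Your proposal is correct: the proof sketch you outline (Hermitian dilation, matrix Laplace transform, Lieb concavity for the subadditivity of the log-mgf, and the scalar Bernstein mgf bound lifted spectrally) is exactly the argument in Tropp's paper. As you anticipated in your final sentence, the paper does not reproduce any of this and simply states the lemma with a citation to \cite{tropp2012user}, so your approach and the paper's coincide.
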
 

As a corollary, we have:
\begin{lemma}\label{lm:bernstein}
Consider a finite sequence $\{Z_1, Z_2\cdots Z_m\}$ of independent, random matrices with dimension $d_1\times d_2$. Assume that each random matrix satisfies
$$\n{Z_k}\leq R,\ 1\leq k\leq m. $$
Then, for all $t\geq 0$, 
$$\Pr\Big\{\big\lVert\sum_{k=1}^m (Z_k-\E[Z_k])\big\rVert\geq t \Big\}\leq (d_1+d_2) \exp\Big(\frac{-t^2/2}{4mR^2+(2Rt)/3}\Big).$$
\end{lemma}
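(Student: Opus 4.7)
The plan is to derive this as a direct corollary of Lemma~\ref{lm:bernstein_original}, the master matrix Bernstein inequality. The only work required is to center the sequence and then compute the two parameters ($R'$ and $\sigma^2$) that appear in Lemma~\ref{lm:bernstein_original}, after which the claimed tail bound follows by substitution.

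First I would define the centered sequence $Z_k' := Z_k - \E[Z_k]$ for $k = 1, \ldots, m$. These matrices are independent (since the $Z_k$ are) and mean-zero, so they meet the hypotheses of Lemma~\ref{lm:bernstein_original}. The next step is an almost-sure norm bound on $Z_k'$: by the triangle inequality followed by Jensen,
$$
\n{Z_k'} \;\le\; \n{Z_k} + \n{\E[Z_k]} \;\le\; \n{Z_k} + \E\n{Z_k} \;\le\; 2R,
$$
so we may take $R' = 2R$ in the master inequality.

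Next I would bound the matrix variance $\sigma^2$. For each $k$, the operator norm of the (PSD) matrix $\E[Z_k'(Z_k')^*]$ is at most $\E\n{Z_k'(Z_k')^*} = \E\n{Z_k'}^2 \le 4R^2$, and the analogous bound holds for $\E[(Z_k')^* Z_k']$. Summing over $k$ and using the triangle inequality for the spectral norm gives
$$
\max\Big\{\bbn{\textstyle\sum_k \E[Z_k' (Z_k')^*]},\; \bbn{\textstyle\sum_k \E[(Z_k')^* Z_k']}\Big\} \;\le\; 4mR^2,
$$
so we may take $\sigma^2 = 4mR^2$.

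Plugging $R' = 2R$ and $\sigma^2 = 4mR^2$ into the tail bound of Lemma~\ref{lm:bernstein_original} produces exactly
$$
\Pr\Big[\bbn{\textstyle\sum_{k=1}^m (Z_k-\E[Z_k])} \ge t\Big] \;\le\; (d_1+d_2)\exp\!\left(\frac{-t^2/2}{4mR^2 + (2R)t/3}\right),
$$
which is the desired statement. There is no real obstacle here: the argument is a one-line specialization of matrix Bernstein, and the only bookkeeping subtlety is tracking the factor of $2$ that appears when passing from the uncentered $\n{Z_k}\le R$ hypothesis to the centered norm bound $\n{Z_k'}\le 2R$, which in turn produces the $4mR^2$ variance and the $2R$ coefficient of $t$ in the denominator.
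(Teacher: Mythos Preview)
Your proposal is correct and matches the paper's own proof essentially line for line: center the sequence, bound $\n{Z_k'}\le 2R$, bound each variance term by $4R^2$ to get $\sigma^2\le 4mR^2$, and substitute into Lemma~\ref{lm:bernstein_original}. If anything, you supply a bit more justification (explicitly invoking Jensen for $\n{\E[Z_k]}\le \E\n{Z_k}$ and treating both orderings $Z_k'(Z_k')^*$ and $(Z_k')^*Z_k'$) than the paper does.
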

\begin{proof}
For each $k$, let $Z_k'=Z_k-\E[Z_k]$ be the new random matrices. It's clear that $\E[Z_k']=0$ and $\n{Z_k'}\leq 2R.$ For the variance, 
\begin{align}
\sigma^2 \leq& \sum_{k=1}^m \n{\E[Z_k' Z_k'^*]} \\
\leq& \sum_{k=1}^m 4R^2\\
=& 4mR^2.\\
\end{align}
Thus, according to Lemma~\ref{lm:bernstein_original}, we have 
$$\Pr\Big\{\big\lVert\sum_{k=1}^m (Z_k-\E[Z_k])\big\rVert\geq t \Big\}
=\Pr\Big\{\big\lVert\sum_{k=1}^m (Z_k')\big\rVert\geq t \Big\}
\leq (d_1+d_2) \exp\Big(\frac{-t^2/2}{4mR^2+(2Rt)/3}\Big).$$
\end{proof}

\subsection{Matrix Perturbation Bounds}\label{sec:tool_matrix_perturbe}
\paragraph{Perturbation Bound for Singular Vectors}
For singular vectors, the perturbation is bounded by Wedin's Theorem. 
\begin{lemma}[Wedin's theorem; Theorem 4.1, p.260 in~\cite{stewart1990computer}.]\label{lm:wedin} 
Given matrices $A,E\in\R^{m\times n}$ with $m\geq n$. Let $A$ have the singular value decomposition
$$A=\begin{bmatrix}
U_1,U_2,U_3
\end{bmatrix}
\begin{bmatrix}
\Sigma_1 & 0\\
0 & \Sigma_2\\
0 & 0
\end{bmatrix}
\begin{bmatrix}
V_1, V_2
\end{bmatrix}^\top$$
Let $\hat{A}=A+E$, with analogous singular value decomposition. Let $\Phi$ be the matrix of canonical angles between the column span of $U_1$ and that of $\hat{U}_1$, and $\Theta$ be the matrix of canonical angles between the column span of $V_1$ and that of $\hat{V}_1$. Suppose that there exists a $\delta$ such that
$$\min_{i,j}|[\Sigma_1]_{i,i}-[\Sigma_2]_{j,j}|>\delta\ \mbox{and }\ \min_{i}|[\Sigma_1]_{i,i}|>\delta,$$
then
$$\ns{\sin(\Phi)}+\ns{\sin(\Theta)}\leq 2\frac{\ns{E}}{\delta^2}.$$
\end{lemma}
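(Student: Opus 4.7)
The plan is to invoke the classical Sylvester-equation technique behind sin-theta theorems. Parameterize the canonical angles through the blocks $X_1 := \hat U_2^\top U_1$, $X_2 := \hat U_3^\top U_1$, and $X_3 := \hat V_2^\top V_1$, so that (using the standard CS-decomposition characterization of principal angles) $\ns{\sin(\Phi)}$ is the squared spectral norm of $[X_1^\top, X_2^\top]^\top$ and $\ns{\sin(\Theta)} = \ns{X_3}$. The first step is to derive the residual identities: from $\hat A V_1 = A V_1 + E V_1 = U_1 \Sigma_1 + E V_1$, left-multiplying by $\hat U_2^\top$ (using $\hat U_2^\top \hat A = \hat\Sigma_2 \hat V_2^\top$) and by $\hat U_3^\top$ (using $\hat U_3^\top \hat A = 0$) yields $\hat\Sigma_2 X_3 - X_1 \Sigma_1 = \hat U_2^\top E V_1$ and $-X_2 \Sigma_1 = \hat U_3^\top E V_1$. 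Symmetrically, from $\hat A^\top U_1 = V_1 \Sigma_1 + E^\top U_1$ together with $\hat V_2^\top \hat A^\top = \hat\Sigma_2 \hat U_2^\top$, one obtains the coupled partner $\hat\Sigma_2 X_1 - X_3 \Sigma_1 = \hat V_2^\top E^\top U_1$.

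Next I would invert these Sylvester-type relations. The equation for $X_2$ is already decoupled, and since $\sigmin{\Sigma_1} > \delta$ it gives $\ns{X_2} \le \ns{\hat U_3^\top E V_1}/\delta^2$ directly. For the coupled pair in $X_1$ and $X_3$, the key pointwise inequality is
\[
(\hat\sigma_{2,i}\, a - \sigma_{1,j}\, b)^2 + (\hat\sigma_{2,i}\, b - \sigma_{1,j}\, a)^2 \ \ge\ (\hat\sigma_{2,i} - \sigma_{1,j})^2 (a^2 + b^2),
\]
obtained by AM-GM on the cross term. Substituting $a = (X_3)_{ij}$, $b = (X_1)_{ij}$ and summing converts the two residual identities into a lower bound proportional to the squared gap on $\n{X_1}^2 + \n{X_3}^2$ (Frobenius); the spectral-norm statement then follows by casting the coupled system as a single $2\times 2$ block Sylvester operator whose two spectra lie at distance at least $\delta$, and invoking the standard norm bound on its inverse.

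The main technical obstacle will be reconciling the separation condition, which is stated in terms of the unperturbed $\Sigma_1,\Sigma_2$, with the identities, which naturally involve $\hat\Sigma_2$. I would resolve this via Weyl's inequality $|\hat\sigma_{2,i} - \sigma_{2,i}| \le \n{E}$ to absorb the discrepancy into a slightly adjusted constant (in the regime $\n{E} \ll \delta$ where the theorem is meaningful) -- or, equivalently, by reorganizing the residual equations so that only the unperturbed singular values enter on the diagonal. Once this is handled, combining the three bounds and using the Pythagorean estimates $\ns{\hat U_2^\top E V_1} + \ns{\hat U_3^\top E V_1} \le \ns{E V_1} \le \ns{E}$ together with the analogous estimate for the $E^\top$ contribution produces $\ns{\sin(\Phi)} + \ns{\sin(\Theta)} \le 2\ns{E}/\delta^2$, as claimed.
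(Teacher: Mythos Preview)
The paper does not supply its own proof of this lemma; it is quoted as a known result from Stewart and Sun's textbook and used only as a black box (through the corollary Lemma~\ref{lm:perturbenullspace}). Your sketch is essentially the standard Sylvester-equation argument that underlies Wedin's original proof and the Stewart--Sun presentation, so there is nothing in the paper to compare against.

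One remark on the obstacle you correctly flag: the residual identities you derive naturally involve $\hat\Sigma_2$, not $\Sigma_2$, and the clean inequality with constant $2$ follows when the gap hypothesis is phrased between $\Sigma_1$ and $\hat\Sigma_2$ (this is how Stewart--Sun actually state it). The version quoted in this paper, with the gap between the \emph{unperturbed} $\Sigma_1$ and $\Sigma_2$, is slightly nonstandard; your Weyl-inequality patch recovers it but only at the cost of either an extra constant factor or an implicit assumption such as $\n{E}\le \delta/2$. You should not expect to ``reorganize the residual equations so that only the unperturbed singular values enter'' without paying some such price. Since the paper invokes the lemma only qualitatively (any polynomial dependence suffices for Lemma~\ref{lm:perturbenullspace} and the downstream robustness analysis), this discrepancy is immaterial for its purposes.
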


In order to show the robustness of least $k$ right singular vectors of $T$, we combine Wedin's theorem with the following Lemma.
\begin{lemma}[Theorem 4.5, p.92 in~\cite{stewart1990computer}.] Let $\Phi$ be the matrix of canonical angles between the column span of $U$ and that of $\hat{U}$, then 
$$\n{\proj{\hat{U}}-\proj{U}}=\n{\sin(\Phi)}.$$
\end{lemma}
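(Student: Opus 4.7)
The plan is to reduce to a canonical form via the CS decomposition of the pair $(U,\hat U)$ of orthonormal bases, so that the projectors $\proj{U}=UU^\top$ and $\proj{\hat U}=\hat U\hat U^\top$ become block-diagonal with entries written directly in terms of $\cos\phi_i$ and $\sin\phi_i$. Once in that form, computing the difference is a direct block calculation whose singular values are easy to read off.

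Concretely, I would first recall the definition: the canonical angles $\phi_1\le\cdots\le\phi_k$ are obtained from the SVD of $U^\top\hat U$, with $U^\top\hat U=Y\cos(\Phi)Z^\top$ for orthogonal $Y,Z$. The CS decomposition then gives orthogonal matrices $Q\in\mathrm{O}(n)$ and $R_1,R_2\in\mathrm{O}(k)$ such that, after possibly padding to ensure $n\ge 2k$,
\begin{equation*}
Q^\top U R_1=\begin{pmatrix}I\\ 0\\ 0\end{pmatrix},\qquad Q^\top \hat U R_2=\begin{pmatrix}C\\ S\\ 0\end{pmatrix},
\end{equation*}
with $C=\cos(\Phi)$ and $S=\sin(\Phi)$ diagonal. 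Since conjugation by $Q$ preserves singular values and since $R_1,R_2$ disappear in forming the projectors, I may replace $U,\hat U$ by these canonical forms without changing $\n{\proj{U}-\proj{\hat U}}$.

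Next I would simply compute the difference of projectors in block form:
\begin{equation*}
\proj{U}-\proj{\hat U}=\begin{pmatrix}I-C^2 & -CS & 0\\ -SC & -S^2 & 0\\ 0 & 0 & 0\end{pmatrix}=\begin{pmatrix}S^2 & -CS & 0\\ -SC & -S^2 & 0\\ 0 & 0 & 0\end{pmatrix}.
\end{equation*}
The nontrivial $2k\times 2k$ block factors as
\begin{equation*}
\begin{pmatrix}S^2 & -CS\\ -SC & -S^2\end{pmatrix}=\begin{pmatrix}S & 0\\ 0 & S\end{pmatrix}\begin{pmatrix}S & -C\\ -C & -S\end{pmatrix},
\end{equation*}
and the second factor is symmetric orthogonal (using $S^2+C^2=I$ and the commutativity of the diagonal $S,C$). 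Hence the singular values of $\proj{U}-\proj{\hat U}$ are exactly the singular values of $\operatorname{diag}(S,S)$, namely $\sin\phi_1,\ldots,\sin\phi_k$ each with multiplicity two (plus zeros). Taking the spectral norm gives $\n{\proj{U}-\proj{\hat U}}=\max_i\sin\phi_i=\n{\sin(\Phi)}$.

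The only subtlety is justifying the CS decomposition in the regime $n<2k$ or when $U,\hat U$ are not both of full column rank $k$; both are handled by the usual trick of padding with orthogonal complements or by working with the SVD of $U^\top\hat U$ together with $(I-\hat U\hat U^\top)U$ directly, whose singular values are $\sin\phi_i$. I expect the block computation to be entirely routine; essentially all the work is packed into invoking the CS decomposition cleanly, which is the main conceptual step.
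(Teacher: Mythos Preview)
Your proof is correct and is essentially the standard argument via the CS decomposition, which is what Stewart's textbook does. Note, however, that the paper itself does not prove this lemma at all: it is stated as a citation to \cite{stewart1990computer} (Theorem 4.5, p.~92) and used as a black box, so there is no in-paper proof to compare against. Your block computation and the factorization showing the second factor is orthogonal are both fine, and the remark about handling $n<2k$ by padding is the right way to cover that edge case.
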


The exact lemma used in our proof is the following corollary in~\cite{ge2015learning}.
\begin{lemma}[Lemma G.5 in~\cite{ge2015learning}]\label{lm:perturbenullspace}
Given matrix $A,E\in \R^{m\times n}$ with $m\geq n$. Suppose that the $A$ has rank $k$. Let $\mathcal{S}$ and $\hat{\mathcal{S}}$ be the subspaces spanned by the first $k$ right singular vectors of $A$ and $\hat{A}=A+E$, respectively. Then, we have:
$$\n{\proj{\hat{S}^\perp}-\proj{S^\perp}}\leq \frac{\sqrt{2}\n{E}_F}{\sigma_k (A)}.$$
\end{lemma}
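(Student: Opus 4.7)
The plan is to apply Wedin's theorem (Lemma~\ref{lm:wedin}) to matrices $A$ and $\hat{A} = A + E$, identify $\mathcal{S}$ with the column span of $V_1$ in that theorem, and then convert the canonical-angle bound into a bound on the difference of projection operators. The key conceptual move is that the orthogonal complement of $\mathcal{S}$ (and of $\hat{\mathcal{S}}$) has the same canonical angles with its counterpart as $\mathcal{S}$ does with $\hat{\mathcal{S}}$, so the projection-difference identity $\|\proj{\hat{S}^\perp} - \proj{S^\perp}\| = \|\proj{\hat{S}} - \proj{S}\|$ reduces our target to bounding $\|\proj{\hat{S}} - \proj{S}\|$.

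First, I would set up Wedin with $\Sigma_1 = \mathrm{diag}(\sigma_1(A),\dots,\sigma_k(A))$ and let $\Sigma_2$ be the (empty or zero) block of the remaining singular values; since $A$ has rank $k$, all entries of $\Sigma_2$ are $0$, so the separation parameter required by Wedin satisfies both $\min_{i,j}|[\Sigma_1]_{ii}-[\Sigma_2]_{jj}| = \sigma_k(A)$ and $\min_i [\Sigma_1]_{ii} = \sigma_k(A)$. Hence we may take $\delta = \sigma_k(A)$. With this identification, $V_1$ spans $\mathcal{S}$ and $\hat{V}_1$ spans $\hat{\mathcal{S}}$, and $\Theta$ is the matrix of canonical angles between $\mathcal{S}$ and $\hat{\mathcal{S}}$.

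Next, I would invoke the Frobenius-norm version of Wedin's theorem (which holds for every unitarily invariant norm, including $\|\cdot\|_F$) to obtain
\[
\|\sin(\Phi)\|_F^2 + \|\sin(\Theta)\|_F^2 \;\le\; \frac{2\|E\|_F^2}{\sigma_k(A)^2}.
\]
In particular $\|\sin(\Theta)\|_F \le \sqrt{2}\,\|E\|_F / \sigma_k(A)$, and since the spectral norm is dominated by the Frobenius norm, $\|\sin(\Theta)\| \le \|\sin(\Theta)\|_F$. Combining with the projection-vs.-sine identity cited immediately above Lemma~\ref{lm:perturbenullspace} (applied to the right singular subspace), we get $\|\proj{\hat{S}} - \proj{S}\| = \|\sin(\Theta)\| \le \sqrt{2}\,\|E\|_F / \sigma_k(A)$. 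The conclusion follows from the identity $\|\proj{\hat{S}^\perp} - \proj{S^\perp}\| = \|(I - \proj{\hat{S}}) - (I - \proj{S})\| = \|\proj{\hat{S}} - \proj{S}\|$.

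The only nontrivial step is invoking the Frobenius version of Wedin, since the statement of Lemma~\ref{lm:wedin} in the excerpt is written with spectral norms. This is standard but should be flagged: Wedin's result extends verbatim to any unitarily invariant norm, which is what allows $\|E\|_F$ to appear on the right-hand side of the final bound rather than $\|E\|$. Everything else is bookkeeping about which singular subspace we are working with and the elementary fact that projection onto an orthogonal complement differs from projection onto the subspace only by the identity.
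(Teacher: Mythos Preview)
Your proof is correct and follows exactly the route the paper indicates: the paper does not give an independent proof of this lemma but presents it as a corollary obtained by combining Wedin's theorem (Lemma~\ref{lm:wedin}) with the projection--sine identity from Stewart--Sun (Theorem~4.5, p.~92), which is precisely what you do. Your flag about the Frobenius-norm version of Wedin is apt, since the paper's transcription of Lemma~\ref{lm:wedin} uses $\|\cdot\|$ while the target bound has $\|E\|_F$; the original Stewart--Sun Theorem~4.1 is in fact stated in Frobenius norm, so no extra work is needed beyond noting this.
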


\paragraph{Perturbation Bound for pseudo-inverse}
With a lowerbound on $\sigmin{A}$, we can get bounds for the perturbation of pseudo-inverse.
\begin{lemma}[Theorem 3.4 in~\cite{stewart1977perturbation}]
Consider the perturbation of a matrix $A\in \R^{m\times n}: B=A+E.$ Assume that $rank(A)=rank(B)=n,$ then
$$\n{B^\dagger - A^\dagger}\leq \sqrt{2}\n{A^\dagger}\n{B^\dagger}\n{E}.$$
\end{lemma}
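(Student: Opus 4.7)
\begin{proofsketch}
The plan is to follow the classical Wedin-style argument. First I would establish the purely algebraic identity
\[
B^\dagger - A^\dagger = -B^\dagger E A^\dagger + B^\dagger(I - AA^\dagger) - (I - B^\dagger B)A^\dagger,
\]
obtained by adding and subtracting $B^\dagger AA^\dagger$ and $B^\dagger B A^\dagger$. The full-column-rank hypothesis $\mathrm{rank}(A)=\mathrm{rank}(B)=n$ forces $A^\dagger A = B^\dagger B = I_n$, so the last term vanishes and only the two pieces $F_1 := -B^\dagger E A^\dagger$ and $F_2 := B^\dagger(I-AA^\dagger)$ remain.

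The first piece is trivially bounded: $\n{F_1} \le \n{B^\dagger}\n{E}\n{A^\dagger}$. For $F_2$, I would exploit the fact that $I - AA^\dagger$ is the orthogonal projection onto the orthogonal complement of the column space of $A$: for any unit $y$ in that subspace $A^\top y = 0$, so $B^\top y = E^\top y$, which gives $B^\dagger y = (B^\top B)^{-1} E^\top y$ and therefore $\n{F_2} \le \n{B^\dagger}^2 \n{E}$.

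Next I would observe that $F_1$ vanishes on $\mathrm{col}(A)^\perp$ and $F_2$ vanishes on $\mathrm{col}(A)$, so the two pieces act on complementary orthogonal subspaces of $\R^m$. Decomposing any unit input $x = x_1 + x_2$ along this split and combining triangle inequality in the output with Cauchy--Schwarz over $(\n{x_1},\n{x_2})$ yields
\[
\n{B^\dagger - A^\dagger}^2 \le \n{F_1}^2 + \n{F_2}^2 \le \n{B^\dagger}^2\n{E}^2\big(\n{A^\dagger}^2 + \n{B^\dagger}^2\big).
\]
Running the same derivation with the roles of $A$ and $B$ swapped (via $A = B + (-E)$) gives the twin bound with the outer $\n{B^\dagger}^2$ replaced by $\n{A^\dagger}^2$. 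Taking the smaller of the two and invoking the elementary inequality $\min(a,b)(a+b) \le 2ab$ with $a = \n{A^\dagger}^2$ and $b = \n{B^\dagger}^2$ produces the claimed $\sqrt{2}$ constant.

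The main obstacle I expect is the orthogonal-splitting step: although $F_1$ and $F_2$ act on orthogonal input subspaces, their outputs in $\R^n$ need not be orthogonal, so naively adding squared norms fails; one must instead combine triangle inequality in the output with Cauchy--Schwarz in the input to obtain the clean $\sqrt{\n{F_1}^2 + \n{F_2}^2}$ bound on the combined operator norm. Once this splitting is in place, the symmetric swap and the min-sum inequality are routine bookkeeping.
\end{proofsketch}
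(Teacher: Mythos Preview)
The paper does not prove this lemma at all; it is simply quoted from Stewart (1977) in the Tools appendix without argument. Your sketch correctly reconstructs the standard Wedin--Stewart derivation: the decomposition identity, the vanishing of the $(I-B^\dagger B)A^\dagger$ term under the full-column-rank hypothesis, the individual bounds on $F_1$ and $F_2$, the orthogonal-input-splitting plus Cauchy--Schwarz to get $\sqrt{\n{F_1}^2+\n{F_2}^2}$, and the symmetric swap combined with $\min(a,b)(a+b)\le 2ab$ to extract the $\sqrt{2}$ constant are all sound and match the classical proof.
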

The following corollary is particularly useful for us.
\begin{lemma}[Lemma G.8 in~\cite{ge2015learning}]\label{lm:inverse_perturb}
Consider the perturbation of a matrix $A\in \R^{m\times n}: B=A+E$ where $\n{E}\leq \sigma_{\min}(A)/2.$ Assume that $rank(A)=rank(B)=n,$ then
$$\n{B^\dagger-A^\dagger}\leq 2\sqrt{2}\n{E}/\sigma_{\min}(A)^2.$$
\end{lemma}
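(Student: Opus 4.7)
The plan is to derive this corollary directly from the preceding Stewart (1977) bound
$\n{B^\dagger - A^\dagger} \le \sqrt{2}\,\n{A^\dagger}\n{B^\dagger}\n{E}$,
together with a Weyl-type lower bound on $\sigma_{\min}(B)$. Since both $A$ and $B$ have full column rank $n$, we have $\n{A^\dagger} = 1/\sigma_{\min}(A)$ and $\n{B^\dagger} = 1/\sigma_{\min}(B)$, so the entire task reduces to showing $\sigma_{\min}(B) \ge \sigma_{\min}(A)/2$.

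First, I would invoke Weyl's inequality for singular values (or equivalently the variational characterization $\sigma_{\min}(M) = \min_{\n{u}=1}\n{Mu}$): for any unit vector $u$,
\[
\n{Bu} = \n{(A+E)u} \ge \n{Au} - \n{Eu} \ge \sigma_{\min}(A) - \n{E}.
\]
Taking the minimum over $u$ gives $\sigma_{\min}(B) \ge \sigma_{\min}(A) - \n{E}$. Using the hypothesis $\n{E} \le \sigma_{\min}(A)/2$, this yields $\sigma_{\min}(B) \ge \sigma_{\min}(A)/2$, and hence $\n{B^\dagger} \le 2/\sigma_{\min}(A)$.

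Plugging these two norm bounds into Stewart's inequality:
\[
\n{B^\dagger - A^\dagger} \le \sqrt{2}\cdot\frac{1}{\sigma_{\min}(A)}\cdot\frac{2}{\sigma_{\min}(A)}\cdot\n{E} = \frac{2\sqrt{2}\,\n{E}}{\sigma_{\min}(A)^2},
\]
which is exactly the desired bound. There is no real obstacle here: the content of the corollary is simply packaging Stewart's pseudo-inverse perturbation result under the convenient assumption $\n{E}\le \sigma_{\min}(A)/2$ so that $\n{B^\dagger}$ can be replaced by a quantity that depends only on $A$. The only point to be careful about is verifying that the full-column-rank hypothesis ($\operatorname{rank}(A)=\operatorname{rank}(B)=n$) is what allows us to identify $\n{A^\dagger}$ and $\n{B^\dagger}$ with reciprocals of smallest singular values; this is indeed included in the statement.
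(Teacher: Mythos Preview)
Your proposal is correct and matches the paper's intended approach: the paper states this lemma immediately after Stewart's bound $\n{B^\dagger - A^\dagger}\le \sqrt{2}\,\n{A^\dagger}\n{B^\dagger}\n{E}$ and explicitly calls it a corollary, so the derivation via $\sigma_{\min}(B)\ge \sigma_{\min}(A)-\n{E}\ge \sigma_{\min}(A)/2$ and $\n{A^\dagger}=1/\sigma_{\min}(A)$, $\n{B^\dagger}\le 2/\sigma_{\min}(A)$ is exactly what is intended.
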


\paragraph{Perturbation Bound for Tensor} 
To lowerbound the leave-one-out distance in \adm, we use the following Lemma as the main tool. 
\begin{lemma}[Theorem 3.6 in~\cite{bhaskara2014smoothed}]
For any constant $\delta\in(0,1),$ given any subspace $\mathcal{V}$ of dimension $\delta d^{l}$ in $\R^{d^l}$, there exist vectors $v_1, v_2, \cdots, v_r$ in $\mathcal{V}$ with unit norm, such that for random $(\rho/\sqrt{d})$-perturbations $\td{x}^{(1)},\td{x}^{(2)},\cdots, \td{x}^{(l)}\in \R^d $ of any vector $x^{(1)},x^{(2)},\cdots, x^{(l)}\in \R^d$, we know with probability at least $1-\exp(-\delta d^{1/(2l)^l})$,
$$\exists j\in [r],\ \inner{v_j}{\td{x}^{(1)}\otimes \td{x}^{(2)}\otimes \cdots\otimes  \td{x}^{(l)}} \geq \rho^l(\frac{1}{d})^{3^l}.$$
\end{lemma}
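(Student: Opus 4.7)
The plan is to induct on the tensor order $l$, reducing the $l$-tensor problem to an $(l-1)$-tensor problem by contracting one mode at a time. Throughout, both net/covering arguments (to make the chosen vectors $v_j$ independent of the perturbations) and Gaussian anti-concentration (to exploit randomness of the perturbations) will be needed.

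For the base case $l=1$, write $\widetilde{x}^{(1)} = x^{(1)} + (\rho/\sqrt{d}) g$ with $g \sim \mathcal{N}(0, I_d)$. The component $\mathrm{Proj}_{\mathcal{V}} g$ has squared norm distributed as a chi-square with $\delta d$ degrees of freedom, so $\lVert \mathrm{Proj}_{\mathcal{V}} \widetilde{x}^{(1)} \rVert \ge \rho\sqrt{\delta/2}$ with probability $1 - \exp(-\Omega(\delta d))$. Choose $\{v_j\}$ to be an $\varepsilon$-net on the unit sphere of $\mathcal{V}$ for $\varepsilon = \mathrm{poly}(1/d)$; the net has size $(C/\varepsilon)^{\delta d}$, and some $v_j$ is within $\varepsilon$ of the projection direction, yielding $\langle v_j, \widetilde{x}^{(1)} \rangle \ge \rho/d^{O(1)}$ as required.

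For the inductive step, reshape each element of $\mathcal{V} \subseteq \mathbb{R}^{d^l} \cong \mathbb{R}^d \otimes \mathbb{R}^{d^{l-1}}$ as a linear map from $\mathbb{R}^d$ to $\mathbb{R}^{d^{l-1}}$. Contract with $\widetilde{x}^{(l)}$ in the last mode to obtain the image subspace $\mathcal{V}' := \{ V \cdot \widetilde{x}^{(l)} : V \in \mathcal{V} \} \subseteq \mathbb{R}^{d^{l-1}}$, and apply the inductive hypothesis (with $l-1$, some $\delta'$, and the remaining perturbations $\widetilde{x}^{(1)}, \ldots, \widetilde{x}^{(l-1)}$) to extract a unit vector $v' \in \mathcal{V}'$ with large inner product against $\widetilde{x}^{(1)} \otimes \cdots \otimes \widetilde{x}^{(l-1)}$. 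Lifting $v'$ back to an element $v \in \mathcal{V}$ and normalizing yields the desired bound because
\[
\langle v, \widetilde{x}^{(1)} \otimes \cdots \otimes \widetilde{x}^{(l)} \rangle \;=\; \langle v \cdot \widetilde{x}^{(l)},\, \widetilde{x}^{(1)} \otimes \cdots \otimes \widetilde{x}^{(l-1)} \rangle.
\]

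The main obstacle, and the reason for the rapid deterioration $1/(2l)^l$ in the exponent, is showing that $\dim \mathcal{V}' \ge \delta' d^{l-1}$ for some $\delta' = \mathrm{poly}(\delta)$ with high probability over $\widetilde{x}^{(l)}$. This requires a smoothed-analysis bound on the smallest singular value of the contraction map $V \mapsto V \cdot \widetilde{x}^{(l)}$ restricted to a suitable subspace: one picks an orthonormal basis $B_1, \ldots, B_{\delta d^l}$ of $\mathcal{V}$ and argues that, after a random contraction, the images $B_k \cdot \widetilde{x}^{(l)}$ retain a large-dimensional subspace with quantitative lower bound on the $\delta' d^{l-1}$-th singular value; this forces a union bound over a net whose size is again double-exponential in $d^{l-1}$, and this telescopes through the induction to yield the stated probability $1 - \exp(-\delta d^{1/(2l)^l})$ and the bound $\rho^l (1/d)^{3^l}$.

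Since this lemma is quoted directly from \cite{bhaskara2014smoothed} as Theorem 3.6, in this paper we only need to invoke it; the above induction is exactly the strategy used in that reference, and no new argument is required beyond verifying that its hypotheses (a subspace of dimension proportional to the ambient space, and Gaussian perturbations of a fixed base point) hold in our application.
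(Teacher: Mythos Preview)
Your proposal is correct in its conclusion: this lemma is stated in the paper purely as a citation (Theorem~3.6 of \cite{bhaskara2014smoothed}) in the Tools section and is not proved here at all, so there is nothing to compare against. Your final paragraph captures exactly what the paper does, and the inductive sketch you give is indeed the strategy of the cited reference, but it is extraneous for the present paper.
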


For second-order tensor, we have the following corollary.
\begin{lemma}\label{lm:largeprojection}
For any constant $\delta\in(0,1),$ given any subspace $\mathcal{V}$ of dimension $\delta d^2$ in $\R^{d^2}$, there exist vectors $v_1, v_2, \cdots, v_r$ in $\mathcal{V}$ with unit norm, such that for random $(\rho/\sqrt{d})$-perturbations $\td{x}^{(1)},\td{x}^{(2)}\in \R^d $ of any vector $x^{(1)},x^{(2)}\in \R^d$, we know with probability at least $1-\exp(-\delta d^{1/16})$,
$$\exists j\in [r],\ \inner{v_j}{\td{x}^{(1)}\otimes \td{x}^{(2)}} \geq \rho^2(\frac{1}{d})^9.$$
\end{lemma}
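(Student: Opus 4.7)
The plan is to derive Lemma~\ref{lm:largeprojection} as an immediate specialization of the preceding general-order tensor perturbation result from \cite{bhaskara2014smoothed} by substituting $l=2$. So no fresh machinery is needed, the proof really only has to verify that the two numerical exponents in our statement agree with what the general bound produces for the second-order case.

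First I would instantiate the general lemma with $l=2$, $x^{(1)},x^{(2)}\in\R^d$, and $(\rho/\sqrt{d})$-perturbations $\td{x}^{(1)},\td{x}^{(2)}$. The hypothesis requires a subspace $\mathcal V\subset \R^{d^l}=\R^{d^2}$ of dimension at least $\delta d^l=\delta d^2$, which matches what we assume. The conclusion of the general lemma then guarantees unit vectors $v_1,\ldots,v_r\in\mathcal V$ such that with probability at least $1-\exp(-\delta d^{1/(2l)^l})$ there exists $j\in[r]$ with
\[
\inner{v_j}{\td x^{(1)}\otimes\td x^{(2)}}\ \ge\ \rho^{l}\Big(\frac{1}{d}\Big)^{3^{l}}.
\]
Plugging $l=2$ on the probability side gives $(2l)^l=4^2=16$, hence the $1-\exp(-\delta d^{1/16})$ failure probability stated in our lemma. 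Plugging $l=2$ on the magnitude side gives $\rho^2(1/d)^{3^2}=\rho^2(1/d)^9$, which is exactly the lower bound we need.

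The only mildly subtle step is checking that our notion of perturbation (which in the surrounding smoothed-analysis context is an additive $\rho$-scaled standard Gaussian in $\R^d$) agrees with the ``$(\rho/\sqrt{d})$-perturbation'' convention used by \cite{bhaskara2014smoothed}; this is just a matter of matching normalization conventions for the perturbation variance, and absorbing a constant factor into $\rho$ if necessary. Once that convention is reconciled, the statement follows verbatim from the general lemma with $l=2$, so the main obstacle is purely bookkeeping rather than any substantive probabilistic argument, the heavy lifting having been done in \cite{bhaskara2014smoothed}.
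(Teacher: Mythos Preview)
Your proposal is correct and matches the paper's approach exactly: the paper presents this lemma as a direct corollary of the preceding general-order result from \cite{bhaskara2014smoothed} with $l=2$, and your arithmetic verification of the exponents $(2l)^l=16$ and $3^l=9$ is precisely the bookkeeping needed.
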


\paragraph{Perturbation Bound for Eigendecomposition}
Here, We restate some generic results from~\cite{bhaskara2014smoothed} on the stability of a matrix's eigendecomposition under perturbation. Let $M$ and $\hat{M}$ be two $n \times n$  mtrices such that $M = U D U^{-1}$ and $\hat{M} = M(I + E) + F$.

\begin{definition}[Definition A.1 in~\cite{bhaskara2014smoothed}]\label{def:sep}
Let $\mbox{sep}(D) = \min_{i \neq j} | D_{ii} - D_{jj} |$.
\end{definition}

The following Lemma guarantees that the eigenvalues of $\hat{M}$ are distinct if the perturbation are not too large. 
\begin{lemma}[Lemma A.2 in~\cite{bhaskara2014smoothed}]
If $\kappa (U) (\Vert M E \Vert + \Vert F \Vert) < {\text{sep}(D)}/{2n}$, then the eigenvalues of $\hat{M}$ are distinct and diagonalizable.
\end{lemma}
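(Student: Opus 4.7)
The plan is to bound the motion of the eigenvalues under the perturbation and run a continuity argument to rule out collisions. Write the perturbation as $\hat{M} = M + P$ where $P := ME + F$, so $\|P\| \le \|ME\| + \|F\|$. Since $M = UDU^{-1}$ is diagonalizable with distinct eigenvalues (the hypothesis $\mathrm{sep}(D) > 0$ is implicit in having a meaningful condition on $\mathrm{sep}(D)/(2n)$), I will use the Bauer--Fike theorem as the main perturbation tool: for any eigenvalue $\mu$ of $M + P$, there exists an eigenvalue $\lambda_i$ of $M$ with $|\mu - \lambda_i| \le \kappa(U)\,\|P\|$.

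Next, to upgrade ``close to some eigenvalue'' into ``distinct eigenvalues with a one-to-one matching,'' I would homotope between $M$ and $\hat{M}$ along $M(t) := M + tP$ for $t \in [0,1]$. The eigenvalues of $M(t)$ depend continuously on $t$ (they are roots of the characteristic polynomial, which has continuously varying coefficients). Applying Bauer--Fike to $M(t)$ shows each eigenvalue of $M(t)$ stays within distance $t\,\kappa(U)\|P\| \le \kappa(U)(\|ME\|+\|F\|) < \mathrm{sep}(D)/(2n)$ of some eigenvalue $\lambda_i$ of $M$. Since the closed disks of radius $\mathrm{sep}(D)/(2n)$ around the $\lambda_i$'s are pairwise disjoint (the pairwise distance is at least $\mathrm{sep}(D)$, and the disks have radius strictly less than half that), the $n$ eigenvalue trajectories cannot enter the same disk or cross boundaries. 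In particular, each disk contains exactly one eigenvalue of $M(t)$ throughout the homotopy, so $\hat{M} = M(1)$ has $n$ distinct eigenvalues.

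Finally, an $n \times n$ matrix with $n$ distinct eigenvalues is automatically diagonalizable (its eigenvectors corresponding to distinct eigenvalues are linearly independent, yielding a full eigenbasis), which gives the conclusion.

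The only mildly subtle point is the counting step during the homotopy: one must argue that at $t = 0$ each disk contains exactly one eigenvalue of $M(0) = M$, and that this count is preserved for $t \in [0,1]$ because eigenvalues vary continuously and cannot escape a disk without crossing a region that, by the Bauer--Fike bound, contains no eigenvalue of $M(t)$. This is the main obstacle technically, but it reduces to a standard application of continuity plus the disjointness of the disks guaranteed by the hypothesis $\kappa(U)(\|ME\|+\|F\|) < \mathrm{sep}(D)/(2n)$; the $1/n$ factor is the conservative slack used to cleanly guarantee separation along the whole path.
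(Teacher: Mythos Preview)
The paper does not give its own proof of this statement; it is quoted verbatim as Lemma~A.2 from \cite{bhaskara2014smoothed} and used as a black-box tool. Your argument---Bauer--Fike to trap each eigenvalue of $M+tP$ in a disk of radius $\kappa(U)\|P\|$ around some $\lambda_i$, then continuity of roots of the characteristic polynomial to keep one eigenvalue per disk along the homotopy---is correct and is the standard way this kind of result is established.

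One minor remark: as you observe, the factor $1/(2n)$ is more than needed for distinctness alone; disks of radius strictly less than $\mathrm{sep}(D)/2$ would already be pairwise disjoint. The extra $1/n$ is not needed for this lemma but is the slack that makes the eigenvector bound in the companion Lemma~A.3 work, so your reading of it as conservative for the present statement is right.
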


The following Lemma further upperbound the difference between corresponding eigenvectors.
\begin{lemma}[Lemma A.3 in~\cite{bhaskara2014smoothed}]\label{lm:eigen_perturbe}
Let $u_1,...,u_n$ and $\hat{u}_1,...,\hat{u}_n$ respectively be the eigenvectors of $M$ and $\hat{M}$, ordered by their corresponding eigenvalues. If $\kappa (U)(\Vert ME \Vert + \Vert F \Vert) < {\text{sep}(D)}/{2n}$, then for all $i$ we have $\Vert \hat{u}_i - u_i \Vert \leq 3\frac{\sigmax{E}\sigmax{D} + \sigmax{F}}{\sigmin{U} \text{sep}(D)}$.
\end{lemma}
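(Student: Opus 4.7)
I plan to prove this by the standard first-order eigenvector perturbation argument, expanding each $\hat u_i$ in the eigenbasis of $M$ and using the left-eigenvector structure of $M$ to extract the precise numerator $\sigma_{\max}(E)\sigma_{\max}(D)+\sigma_{\max}(F)$ (rather than a coarser $\|M\|\|E\|+\|F\|$). Set $\Delta := ME + F$, so that $\hat M = M + \Delta$. The preceding Lemma already guarantees, under the hypothesis, that $\hat M$ is diagonalizable with $n$ distinct eigenvalues, so each normalized $\hat u_i$ is well-defined up to sign. Write $V := U^{-1}$, denote its rows by $v_k^\top$, and recall $v_k^\top M = \lambda_k v_k^\top$ together with $v_k^\top u_j = \delta_{kj}$.

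First, expand $\hat u_i = \sum_j c_{ij} u_j$ with $c_{ij} = v_j^\top \hat u_i$. Applying $\hat M \hat u_i = \hat\lambda_i \hat u_i$ and projecting onto $v_k^\top$ for $k\ne i$ yields the standard relation $c_{ik}(\hat\lambda_i - \lambda_k) = v_k^\top \Delta \hat u_i$. The key algebraic step is $v_k^\top \Delta = v_k^\top ME + v_k^\top F = \lambda_k v_k^\top E + v_k^\top F$, so
\[
|v_k^\top \Delta \hat u_i| \;\le\; \bigl(|\lambda_k|\sigma_{\max}(E) + \sigma_{\max}(F)\bigr)\|v_k\|\|\hat u_i\| \;\le\; \bigl(\sigma_{\max}(D)\sigma_{\max}(E) + \sigma_{\max}(F)\bigr)/\sigma_{\min}(U),
\]
using $\|v_k\| \le \|V\| = 1/\sigma_{\min}(U)$ and $\|\hat u_i\|=1$. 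This produces both the numerator structure and the $1/\sigma_{\min}(U)$ factor of the target bound without introducing any $\kappa(U)$.

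Next, I lower-bound the denominator $|\hat\lambda_i - \lambda_k|$ via Bauer-Fike: each eigenvalue of $\hat M$ lies within $\kappa(U)\|\Delta\|$ of some eigenvalue of $M$, so the hypothesis $\kappa(U)(\|ME\|+\|F\|) < \text{sep}(D)/(2n)$ together with the sep definition gives $|\hat\lambda_i - \lambda_k| \ge \text{sep}(D) - \text{sep}(D)/(2n) \ge \text{sep}(D)/2$ for $k\ne i$. Hence each off-diagonal coefficient satisfies $|c_{ik}| \le 2(\sigma_{\max}(D)\sigma_{\max}(E) + \sigma_{\max}(F))/(\sigma_{\min}(U)\,\text{sep}(D))$.

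Finally, I convert coefficient bounds into $\|\hat u_i - u_i\|$. The slack factor $1/(2n)$ in the hypothesis makes the total off-diagonal mass much smaller than $1$, forcing $c_{ii}$ close to $\pm 1$; fixing the sign so $c_{ii}\ge 0$, I write $\hat u_i - u_i = (c_{ii}-1)u_i + \sum_{k\ne i} c_{ik} u_k$ and bound each piece separately. The main obstacle is this last step: extracting a tight Euclidean bound on $\hat u_i - u_i$ whose matrix factor is $1/\sigma_{\min}(U)$ rather than $\sigma_{\max}(U)$. A naive triangle inequality $\|\sum_{k\ne i} c_{ik} u_k\| \le \sigma_{\max}(U)(\sum_{k\ne i}|c_{ik}|^2)^{1/2}$ would produce the wrong factor, so one instead measures the perturbation through the dual coordinates, using the normalization identity $\|U c_i\| = \|\hat u_i\|=1$ to control the induced distortion and absorb the diagonal correction $|1-c_{ii}|$ into the off-diagonal mass. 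Once this bookkeeping is in place, collecting constants yields the stated multiplicative $3$.
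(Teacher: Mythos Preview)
The paper does not prove this lemma; it is quoted verbatim from \cite{bhaskara2014smoothed} (as Lemma~A.3 there) and listed in the Tools section without argument. So there is no ``paper's own proof'' to compare against.

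On the merits of your sketch: the first two-thirds are the standard and correct route. Expanding $\hat u_i$ in the $U$-basis, projecting the eigenvalue equation onto the left eigenvectors $v_k^\top$, and using $v_k^\top M = \lambda_k v_k^\top$ to turn $v_k^\top(ME)$ into $\lambda_k v_k^\top E$ is exactly how one isolates the numerator $\sigma_{\max}(D)\sigma_{\max}(E)+\sigma_{\max}(F)$ together with the factor $\|v_k\|\le 1/\sigma_{\min}(U)$. The Bauer--Fike step is also fine and gives $|\hat\lambda_i-\lambda_k|\ge \tfrac12\,\mathrm{sep}(D)$ for $k\ne i$ under the hypothesis.

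The part you flag as ``the main obstacle'' is indeed where your write-up is incomplete. Your description (``measure the perturbation through the dual coordinates, using $\|Uc_i\|=1$'') does not by itself deliver a bound proportional to $1/\sigma_{\min}(U)$ with constant $3$: a naive triangle inequality over the $n-1$ off-diagonal terms introduces an extra factor of $n$, and switching to the $\ell_2$ norm of $c_i-e_i$ and then applying $\|U(\cdot)\|\le\sigma_{\max}(U)\|\cdot\|$ reinserts a $\kappa(U)$. To close the argument cleanly you need one additional observation: since the columns of $U$ are unit vectors, $\|\hat u_i-u_i\|\le |c_{ii}-1|+\sum_{k\ne i}|c_{ik}|$, and the hypothesis $\kappa(U)(\|ME\|+\|F\|)<\mathrm{sep}(D)/(2n)$ (applied via the coarser bound $|c_{ik}|\le 2(\|ME\|+\|F\|)/(\sigma_{\min}(U)\,\mathrm{sep}(D))$) forces $\sum_{k\ne i}|c_{ik}|<1/\sigma_{\max}(U)\le 1$, which in turn pins $|c_{ii}-1|$ to the same order. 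One then re-inserts the sharper per-coefficient bound to read off the stated numerator. You should make this two-tier use of the hypothesis explicit; as written, the last paragraph asserts the conclusion rather than deriving it.
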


In the setting of simultaneous diagonalization, let $N_a=T_a +E_a$ and $N_b=T_b+E_b$, we have
$$N_a N_b^{-1}=T_a T_b^{-1}(I+F)+G,$$
where $F=-E_b(I+T_b^{-1}E_b)^{-1}T_b^{-1}$ and $G=E_a N_b^{-1}.$ The following lemma bound the maximum singular value of perturbation matrix $F$ and $G$.
\begin{lemma}[Claim A.5 in~\cite{bhaskara2014smoothed}]\label{lm:FG}
$\sigmax{F} \leq \frac{\sigmax{E_b}}{\sigmin{T_b} - \sigmax{E_b}}$ and $\sigmax{G} \leq \frac{\sigmax{E_a}}{\sigmin{N_b}}$
\end{lemma}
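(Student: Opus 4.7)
The plan is to prove both bounds directly by manipulating the expressions for $F$ and $G$ and invoking standard submultiplicativity plus Weyl's inequality.

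First, I would tackle the bound on $\sigmax{G}$, which is the easier of the two. Since $G = E_a N_b^{-1}$, submultiplicativity of the spectral norm immediately gives $\sigmax{G} \le \sigmax{E_a}\,\sigmax{N_b^{-1}}$, and $\sigmax{N_b^{-1}} = 1/\sigmin{N_b}$ (assuming $N_b$ is invertible, which is implicit in the setup since $N_b^{-1}$ is written down). This yields the stated bound with no further work.

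For $\sigmax{F}$, the key algebraic observation is that $F$ simplifies considerably. Starting from $F = -E_b(I + T_b^{-1}E_b)^{-1}T_b^{-1}$, I would factor $T_b^{-1}$ out on the left of the parenthesized inverse: $(I + T_b^{-1}E_b)^{-1}T_b^{-1} = \bigl(T_b(I + T_b^{-1}E_b)\bigr)^{-1} = (T_b + E_b)^{-1} = N_b^{-1}$. Hence $F = -E_b N_b^{-1}$. Now the same submultiplicativity argument as above gives $\sigmax{F} \le \sigmax{E_b}/\sigmin{N_b}$. The final step is to replace $\sigmin{N_b}$ in the denominator by the quantity $\sigmin{T_b} - \sigmax{E_b}$ that appears in the claim. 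This is exactly Weyl's inequality for singular values applied to $N_b = T_b + E_b$: $\sigmin{N_b} \ge \sigmin{T_b} - \sigmax{E_b}$, which gives $\sigmax{F} \le \sigmax{E_b}/(\sigmin{T_b} - \sigmax{E_b})$ as desired.

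There is essentially no main obstacle here beyond recognizing the simplification $F = -E_b N_b^{-1}$; the bound on $F$ in the claim is slightly weaker than the bound $\sigmax{E_b}/\sigmin{N_b}$ one obtains directly, but stated in terms of $T_b$ and $E_b$ only, which is presumably more convenient for the calling lemma (Lemma \ref{lm:eigen_perturbe}) because $T_b$ is the ``unperturbed'' object whose smallest singular value is assumed to be controlled. Implicitly one needs $\sigmax{E_b} < \sigmin{T_b}$ for the denominator to be positive, but this is guaranteed under the hypotheses of the ambient perturbation lemma where this claim is applied. No randomness or concentration is involved; the entire argument is a two-line manipulation of operator norms plus Weyl's inequality.
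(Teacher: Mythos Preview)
Your argument is correct. The paper does not give its own proof of this claim; it is simply quoted as Claim~A.5 from \cite{bhaskara2014smoothed}, so there is nothing further to compare against.
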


\paragraph{Alignment of Subspace Basis.}
Due to the rotation issue, we cannot conclude that $\n{S-\hat{S}}$ is small even we know $\n{SS^\top-\hat{S}\hat{S}^\top}$ is bounded. The following Lemma shows that after appropriate alignment, $S$ is indeed close to $\hat{S}$.
\begin{lemma}[Lemma 6 in~\cite{ge2017no}]\label{lm:alignment_tool}
Given matrices $S,\hat{S} \in \R^{d\times r}$, we have 
$$\min_{Z^\top Z=ZZ^\top=I_r} \ns{\hat{S}-SZ}_F \leq \frac{\ns{SS^\top-\hat{S}\hat{S}^\top}_F}{2(\sqrt{2}-1) \sigma_r(SS^\top)}$$ 
\end{lemma}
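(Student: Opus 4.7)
The plan is to reduce the lemma to a matrix Sylvester/Lyapunov-type inequality via an orthogonal Procrustes normalization, and then finish by an elementary but delicate scalar optimization that produces the constant $\sqrt 2 - 1$. The starting point is the expansion
\[
\|\hat S - SZ\|_F^2 \;=\; \|\hat S\|_F^2 + \|S\|_F^2 - 2\,\mathrm{tr}(Z^\top S^\top\hat S),
\]
whose minimum over orthogonal $Z$ is attained at $Z^\star = UV^\top$ when $S^\top \hat S = U\Sigma V^\top$ is the SVD. Since $\hat S\hat S^\top$ is invariant under right-orthogonal transformations of $\hat S$, one may replace $\hat S$ by $\hat S(Z^\star)^\top$: this leaves the right-hand side of the target inequality unchanged, forces $S^\top \hat S$ to be symmetric positive semidefinite, and reduces the left-hand side to $\|\hat S - S\|_F^2$.

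Setting $E := \hat S - S$, the key structural input is that $S^\top E = S^\top\hat S - S^\top S$ is symmetric (though in general indefinite). Using the identity $\hat S\hat S^\top - SS^\top = SE^\top + ES^\top + EE^\top$, the lemma reduces to showing
\[
\|SE^\top + ES^\top + EE^\top\|_F^2 \;\geq\; 2(\sqrt 2 - 1)\,\sigma_r(SS^\top)\,\|E\|_F^2
\]
for every $E$ with $S^\top E$ symmetric. A direct trace computation that exploits this symmetry yields the Lyapunov-type lower bound
\[
\|SE^\top + ES^\top\|_F^2 \;=\; 2\|S^\top E\|_F^2 \;+\; 2\,\mathrm{tr}(S^\top S\cdot E^\top E) \;\geq\; 2\sigma_r(S)^2\|E\|_F^2,
\]
which already handles the regime where $\|E\|_F$ is small relative to $\sigma_r(S)$.

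The main obstacle, and the source of the sub-optimal constant $\sqrt 2 - 1 < 1$, is the indefinite cross term $4\,\mathrm{tr}(S^\top E\cdot E^\top E)$ produced when expanding the square: because $S^\top E$ need not be PSD, this term can partially cancel the linear piece. The plan to close the argument is a weighted split: introduce a parameter $\alpha \in (0,1)$, combine $\alpha$ times the linear Lyapunov lower bound $2\sigma_r(S)^2\|E\|_F^2$ with $(1-\alpha)$ times a quadratic lower bound on $\|EE^\top\|_F^2$ (using $\|EE^\top\|_F \geq \|E\|_F^2/\sqrt{r}$ from Cauchy--Schwarz on the singular values of $E$), and absorb the cross term with $|\mathrm{tr}(S^\top E\cdot E^\top E)| \le \|S^\top E\|_F\|E\|_F^2$. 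The resulting bound is a concave quadratic in $\alpha$ whose optimum is precisely $2(\sqrt 2 - 1)\sigma_r(S)^2\|E\|_F^2$; using $\sigma_r(S)^2 = \sigma_r(SS^\top)$ and undoing the initial orthogonal reduction then yields the lemma.
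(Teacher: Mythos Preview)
First, note that the paper does not prove this lemma: it is quoted verbatim from \cite{ge2017no} as a black-box tool, so there is no ``paper's proof'' to compare against. Your Procrustes reduction, the observation that $S^\top E$ is symmetric, and the Lyapunov identity
\[
\|SE^\top+ES^\top\|_F^2 = 2\|S^\top E\|_F^2 + 2\,\mathrm{tr}(S^\top S\,E^\top E)\ \ge\ 2\sigma_r(S)^2\|E\|_F^2
\]
are all correct and are exactly the standard opening moves.

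The gap is in your final ``weighted split.'' With your stated ingredients --- the cross-term bound $|\mathrm{tr}(S^\top E\cdot E^\top E)|\le \|S^\top E\|_F\|E\|_F^2$ and the estimate $\|EE^\top\|_F\ge \|E\|_F^2/\sqrt r$ --- the argument cannot close, and no optimization over $\alpha$ will repair it. Concretely, take $r=1$, $S=e_1$, $\hat S=3e_1$ (so $S^\top\hat S=3\succeq 0$ after Procrustes). Then $E=2e_1$, $\|S^\top E\|_F=2$, $\|E\|_F^2=4$, $\|EE^\top\|_F^2=16$, and your lower bound
\[
2\|S^\top E\|_F^2 + 2\sigma_r(S)^2\|E\|_F^2 + \|EE^\top\|_F^2 - 4\|S^\top E\|_F\|E\|_F^2 \;=\; 8+8+16-32 \;=\; 0,
\]
whereas the target is $2(\sqrt 2-1)\cdot 4\approx 3.31>0$. (The true value $\|SS^\top-\hat S\hat S^\top\|_F^2=64$ is of course much larger.) For $r\ge 2$ the situation is worse: at the genuinely tight instance $S=[e_1,e_2]$, $\hat S=\sqrt{\sqrt2-1}\,[e_3,e_4]$ (where the lemma holds with equality), your intermediate bound is $4(\sqrt2-2)<0$. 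Separately, the ingredient $\|EE^\top\|_F\ge \|E\|_F^2/\sqrt r$ injects an $r$-dependence that cannot appear in the dimension-free constant $2(\sqrt2-1)$, so it is a red flag on its own.

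What is missing is a second symmetry you have not used: after Procrustes, $\hat S^\top S=S^\top\hat S$, so $\hat S^\top E=\hat S^\top\hat S-\hat S^\top S$ is \emph{also} symmetric. Exploiting this (or, equivalently, the PSD constraint $S^\top\hat S\succeq 0$ on $S^\top E$, not just its symmetry) is what controls the cross term sharply; the purely scalar Cauchy--Schwarz bounds you list are too lossy at exactly the extremal configurations.
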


\subsection{Smallest Singular Value of Random Matrices}
For a random rectangular matrix where each element is an inpdependent Gaussian variable,~\cite{rudelson2009smallest} gives the following result:
\begin{lemma}[Theorem 1.1 in~\cite{rudelson2009smallest}]\label{lm:perturbedmatrix_rudelson}
Let $A\in R^{m\times n}$ and suppose that $m\geq n$. Assume that the entries of $A$ are independent standard Gaussian variable, then for every $\epsilon>0$, with probability at least $1-(C\epsilon)^{m-n+1}+e^{-C'n}$, where $C,C'$ are two absolute constants, we have:
$$\sigma_n(A)\geq \epsilon(\sqrt{m}-\sqrt{n-1}).$$ 
\end{lemma}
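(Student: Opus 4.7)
The plan is to follow the classical compressible/incompressible decomposition developed by Rudelson and Vershynin. Writing $\sigma_n(A)=\inf_{x\in S^{n-1}}\|Ax\|$, I would partition the unit sphere $S^{n-1}\subset\R^n$ into a set $\text{Comp}$ of vectors that are close to being $\delta n$-sparse with sparsity parameter chosen appropriately, and a complementary set $\text{Incomp}$ of incompressible vectors. The bound on the smallest singular value will then come from separately controlling $\inf_{x\in\text{Comp}}\|Ax\|$ and $\inf_{x\in\text{Incomp}}\|Ax\|$, and taking the worse of the two estimates; the unusual probability $(C\epsilon)^{m-n+1}+e^{-C'n}$ is naturally split, with the term $e^{-C'n}$ coming from the compressible part and $(C\epsilon)^{m-n+1}$ coming from the incompressible part.

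For the compressible vectors, the standard tool is an $\eta$-net argument. One first shows that an $\eta$-net $\mathcal{N}$ of $\text{Comp}$ has cardinality at most $\bigl(C/\delta\bigr)^{\delta n}\cdot(C/\eta)^{\delta n}$, which is subexponential in $n$ if $\delta$ is a sufficiently small absolute constant. For each fixed $x\in \mathcal{N}$, $\|Ax\|$ is distributed as a chi random variable with $m$ degrees of freedom, so $\|Ax\|\ge c\sqrt{m}$ with probability at least $1-e^{-cm}$. A union bound over $\mathcal{N}$, together with the trivial bound $\|A\|\le C(\sqrt{m}+\sqrt{n})$ with overwhelming probability (used to transfer the estimate from the net to $\text{Comp}$), yields $\inf_{x\in\text{Comp}}\|Ax\|\ge c\sqrt{m}\ge\epsilon(\sqrt{m}-\sqrt{n-1})$ except with probability $e^{-C'n}$.

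The main obstacle is the incompressible part, which requires the small-ball estimate with the sharp exponent $m-n+1$. Here I would use the distance identity: for any $x\in S^{n-1}$ and any column index $i$, $\|Ax\|\ge |x_i|\cdot\mathrm{dist}(A_i,H_i)$, where $H_i:=\mathrm{span}\{A_j:j\ne i\}$. Incompressibility guarantees that a constant fraction of coordinates satisfy $|x_i|\ge c/\sqrt{n}$, so by averaging
\[
\Pr\bigl[\inf_{x\in\text{Incomp}}\|Ax\|\le t\bigr]\le \frac{1}{cn}\sum_{i=1}^n\Pr\bigl[\mathrm{dist}(A_i,H_i)\le t\sqrt{n}/c\bigr].
\]
Because $A_i$ is an i.i.d.\ standard Gaussian vector independent of $H_i$, the conditional distribution of $\mathrm{dist}(A_i,H_i)$ given $H_i$ is exactly a chi random variable with $m-\dim(H_i)=m-n+1$ degrees of freedom (with probability one, $H_i$ has dimension $n-1$). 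The small-ball probability for such a chi distribution is bounded by $(Cu)^{m-n+1}$ for $u\ge 0$.

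Combining the two estimates with $t=\epsilon(\sqrt{m}-\sqrt{n-1})$ and choosing the absolute constants appropriately, one obtains the desired probability bound $(C\epsilon)^{m-n+1}+e^{-C'n}$. The heart of the argument, and the part where I expect to spend the most care, is verifying that the incompressibility of $x$ can indeed be used to convert the single-column distance estimate into a uniform bound over the whole sphere without picking up spurious factors that would destroy the sharp exponent $m-n+1$; this is exactly the step where the Rudelson--Vershynin analysis requires the incompressible/compressible dichotomy rather than a naive net.
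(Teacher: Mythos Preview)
The paper does not give its own proof of this lemma: it is stated in the Tools appendix purely as a citation of Theorem~1.1 in Rudelson--Vershynin (2009), with no argument supplied. So there is nothing in the paper to compare your proposal against.

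That said, your sketch is faithful to the original Rudelson--Vershynin proof: the compressible/incompressible decomposition of the sphere, the $\epsilon$-net plus operator-norm bound handling the compressible part with failure probability $e^{-C'n}$, and the reduction of the incompressible part to the distances $\mathrm{dist}(A_i,H_i)$ via the ``invertibility via distance'' lemma, with the key observation that in the Gaussian case this distance is exactly $\chi_{m-n+1}$-distributed. This is precisely the architecture of their argument, and the exponent $m-n+1$ arises for the reason you identify. One minor point: the probability in the statement as written, $1-(C\epsilon)^{m-n+1}+e^{-C'n}$, is a typo carried over from the paper; the intended bound is of course $1-(C\epsilon)^{m-n+1}-e^{-C'n}$, and your decomposition of the failure probability into two additive pieces is the correct reading.
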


However, in our setting, we are more interested in fixed matrices perturbed by Gaussian variables. The smallest singular value of these ``perturbed rectangular matrices'' can be bounded as follows.
\begin{lemma}[Lemma G.16 in~\cite{ge2015learning}]\label{lm:perturbedmatrix}
Let $A\in \R^{m\times n}$ and suppose that $m\geq 3n.$ If all the entries of $A$ are independently $\rho$-perturbed to yield $\td{A}$, then for any $\epsilon>0$, with probability at least $1-(C\epsilon)^{0.25m}$, for some absolute constant $C$, the smallest singular value of $\td{A}$ is bounded below by:
$$\sigma_{n}(\td{A})\geq \epsilon \rho\sqrt{m}.$$
\end{lemma}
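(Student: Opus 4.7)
The plan is to control $\sigma_n(\td{A}) = \min_{v \in S^{n-1}} \|\td{A} v\|$ by combining Gaussian small-ball anti-concentration for a single direction with an $\epsilon$-net argument over $S^{n-1}$, exploiting the condition $m \geq 3n$ to make the union bound go through. Writing $\td{A} = A + \rho E$ for a Gaussian perturbation $E \in \R^{m \times n}$ with i.i.d.\ $\mathcal{N}(0,1)$ entries, note that for any fixed unit $v \in S^{n-1}$, $E v \sim \mathcal{N}(0, I_m)$, so $\td{A} v \sim \mathcal{N}(A v, \rho^2 I_m)$.

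First, I would establish the single-direction anti-concentration bound: for any fixed $u \in \R^m$ and any $s > 0$,
\[
\Pr\big[\n{u + \rho g} \leq s \rho \sqrt{m}\big] \leq (C s)^m,
\]
where $g \sim \mathcal{N}(0, I_m)$ and $C$ is an absolute constant. This follows by bounding the Gaussian density by $(2\pi \rho^2)^{-m/2}$ and the volume of the Euclidean ball of radius $s \rho \sqrt{m}$ by $(C' s \rho)^m$ via Stirling (since $\mathrm{vol}(B_r) = \pi^{m/2} r^m / \Gamma(m/2+1)$).

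Second, I would choose a minimal $\epsilon$-net $\mathcal{N} \subset S^{n-1}$ of size at most $(3/\epsilon)^n$, apply the single-vector estimate to each $v \in \mathcal{N}$, and union-bound to conclude that with probability at least $1 - (3/\epsilon)^n (C s)^m$, every $v \in \mathcal{N}$ satisfies $\n{\td{A} v} \geq s \rho \sqrt{m}$. To promote this to a uniform bound over $S^{n-1}$, a standard net-refinement step gives $\sigma_n(\td{A}) \geq s \rho \sqrt{m} - \epsilon \n{\td{A}}$; since $\n{\td{A}} \leq \n{A} + O(\rho \sqrt{m})$ with probability $1 - e^{-\Omega(m)}$ by standard Gaussian matrix norm bounds, choosing $\epsilon$ of order $s$ (say $\epsilon = c s$ for a small constant) loses only a constant factor in the lower bound.

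The main obstacle is balancing the two competing exponents: the net size contributes $(3/\epsilon)^n$, and the anti-concentration contributes $(C s)^m$. With $\epsilon$ of order $s$ and $s = \epsilon'$ for the parameter $\epsilon'$ in the statement, the union-bound failure probability behaves like $(C_1 \epsilon')^{m - n}$. The hypothesis $m \geq 3n$ lets me absorb the $n$ loss from the net and still get a failure probability of $(C \epsilon')^{\Theta(m)}$; tracking constants carefully yields the stated $(C \epsilon')^{0.25 m}$ exponent. A minor bookkeeping step at the end handles the $O(\rho \sqrt{m})$ slack coming from the operator-norm tail and the net refinement, which only affects the hidden absolute constant $C$.
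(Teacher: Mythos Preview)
The paper does not prove this lemma; it is quoted as a tool from \cite{ge2015learning}. So there is no in-paper proof to compare against. That said, your proposal has a genuine gap.

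The problem is in the net-refinement step. You write
\[
\sigma_n(\td{A}) \;\geq\; s\rho\sqrt{m} \;-\; \epsilon\,\n{\td{A}},
\]
and then claim that since $\n{\td{A}} \leq \n{A} + O(\rho\sqrt{m})$ with high probability, choosing $\epsilon$ of order $s$ ``loses only a constant factor.'' But $A$ is an \emph{arbitrary} fixed matrix with no norm assumption whatsoever, so $\n{A}$ can be arbitrarily large relative to $\rho\sqrt{m}$. With $\epsilon = cs$, the subtracted term $\epsilon\n{\td{A}}$ is of order $s\n{A}$, which can swamp $s\rho\sqrt{m}$ entirely. To make the refinement work you would have to take the net mesh of order $s\rho\sqrt{m}/\n{A}$, which blows up the net cardinality by a factor depending on $\n{A}$, and the resulting failure probability would then depend on $\n{A}$ --- contradicting the statement, whose bound $1-(C\epsilon)^{0.25m}$ is uniform in $A$.

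The standard way around this is to avoid the operator norm altogether and argue column by column via the leave-one-out distance (the paper's Definition~\ref{def:leave-one-out} and Lemma~\ref{lm:leaveoneout}). For each column $i$, condition on the remaining columns; the projection $\Pi_i^\perp$ onto their orthogonal complement has rank $m-n+1$, and $\Pi_i^\perp \td{A}_i = \Pi_i^\perp A_i + \rho\,\Pi_i^\perp g_i$ with $\Pi_i^\perp g_i$ a standard Gaussian in $m-n+1$ dimensions, independent of the other columns. Your small-ball density-times-volume bound then applies in this $(m-n+1)$-dimensional subspace, giving $\Pr[\n{\Pi_i^\perp \td{A}_i}\le \epsilon\rho\sqrt{m}] \le (C\epsilon)^{m-n+1}$ regardless of $A_i$. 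Union-bound over the $n$ columns and use $m\ge 3n$ (so $m-n+1 \ge 2m/3$) to get the $(C\epsilon)^{\Theta(m)}$ failure probability; the factor $1/\sqrt{n}$ from Lemma~\ref{lm:leaveoneout} is absorbed into the constant. This route never touches $\n{A}$.
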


\subsection{Anti-Concentration}
We use the anti-concentration property for Gaussian random variables in our proof of Lemma~\ref{lm:sep}.
\begin{lemma}[Anti-concentration in~\cite{carbery2001distributional}]\label{lm:anti-concentration}
Let $x\in\R^n$ be a Gaussian variable $x\in N(0,I)$, for any polynomial $p(x)$ of degree $d$, there exists a constant $\kappa$ such that
$$\Pr\big[|p(x)|\leq \epsilon\sqrt{Var[p(x)]}\big]\leq \kappa \epsilon^{1/d}.$$
\end{lemma}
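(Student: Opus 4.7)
The plan is to follow the classical Carbery--Wright strategy, which combines hypercontractivity of the Gaussian measure with induction on degree via the one--dimensional sublevel set bound. Since this is a cited black-box result, the proposal below sketches the ingredients rather than optimizing constants.

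First I would normalize: rescale $p$ so that $\text{Var}[p(x)] = 1$, reducing the claim to $\Pr[|p(x)| \le \epsilon] \le \kappa \epsilon^{1/d}$. The key structural fact I will exploit is Gaussian hypercontractivity (Nelson--Gross): for any polynomial $q$ of degree $d$ in i.i.d.\ standard Gaussians and any $r \ge 2$, one has $\|q\|_r \le (r-1)^{d/2} \|q\|_2$. As a corollary, $\|q\|_2 \le C_d \|q\|_1$ with $C_d$ depending only on $d$. This is what converts an $L^2$ lower bound on $p$ into effective lower bounds on $|p|$ in a set of non-trivial Gaussian mass, which is the engine behind any anti-concentration statement for polynomials of Gaussians.

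Next I would handle the base case $n = 1$: if $p(t)$ is a degree-$d$ polynomial in a scalar standard Gaussian, then $p$ has at most $d$ real roots and factors as $p(t) = a\prod_i (t - t_i)^{m_i} \cdot g(t)$ where $g$ has no real zeros (contributing a bounded-below factor on any compact set). Combining this factorization with the boundedness of the Gaussian density on $\mathbb{R}$ gives the one-dimensional sublevel-set estimate $\Pr[|p(t)| \le \epsilon] \le \kappa_d \,\epsilon^{1/d}$ directly from Lebesgue measure of sublevel sets of $\prod_i |t - t_i|^{m_i}$ (each factor contributes $\epsilon^{1/m_i}$ and the worst case is one factor of full degree $d$).

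For the inductive step on dimension I would expand $p(x_1, y) = \sum_{k=0}^{d} c_k(y)\, x_1^k$ with $y = (x_2,\dots,x_n)$. Conditional on $y$, the one-dimensional bound gives $\Pr_{x_1}[|p(x_1,y)| \le \epsilon \mid y] \le \kappa_d (\epsilon / |c_{k^*}(y)|)^{1/d}$ where $c_{k^*}$ is an appropriately chosen coefficient (largest in a suitable sense). Integrating in $y$ and applying the inductive hypothesis to control $\Pr_y[|c_{k^*}(y)| \text{ is small}]$ then closes the induction. Hypercontractivity is used at this step to ensure $\|c_{k^*}\|_2$ is a nontrivial fraction of $\|p\|_2$, so that the coefficient we peel off does not itself have degenerate variance.

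The main obstacle is obtaining a dimension-free constant $\kappa$: a naive induction blows up with $n$, while the Carbery--Wright statement is dimension-free. The fix, which is the heart of their paper, is a careful interpolation that uses the hypercontractive estimate not just as a crude norm-comparison but as a distributional regularity statement about the density of $p(x)$ near zero. For our applications this refinement is not strictly needed, since we only invoke the lemma with a fixed small degree and can absorb any dimension-dependent constant into the $\mbox{poly}(1/d)$ factors appearing in Lemma~\ref{lm:sep} and Lemma~\ref{lm:robust_getZ}; nevertheless, quoting the sharp Carbery--Wright statement is cleaner and is what we do in the paper.
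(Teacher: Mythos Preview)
The paper does not prove this lemma at all: it is stated in the Tools section as a black-box citation to Carbery and Wright, with no argument given. Your proposal correctly identifies this and goes further by sketching the standard Carbery--Wright machinery (hypercontractivity plus induction on degree via one-dimensional sublevel-set bounds), which is a faithful outline of how that result is actually obtained; there is nothing to compare against in the paper itself.
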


\end{document}